\theoremstyle{plain}
\newtheorem{theorem}{Theorem}[section]
\newtheorem{lemma}[theorem]{Lemma}
\newtheorem{corollary}[theorem]{Corollary}
\theoremstyle{definition}
\newtheorem{assumption}[theorem]{Assumption}
\theoremstyle{remark}
\def\eqref#1{(\ref{#1})}
\def\1{\bm{1}}
\def\eps{{\epsilon}}
\def\vzero{{\bm{0}}}
\def\va{{\bm{a}}}
\def\vb{{\bm{b}}}
\def\vc{{\bm{c}}}
\def\vd{{\bm{d}}}
\def\ve{{\bm{e}}}
\def\vf{{\bm{f}}}
\def\vg{{\bm{g}}}
\def\vh{{\bm{h}}}
\def\vr{{\bm{r}}}
\def\vu{{\bm{u}}}
\def\vv{{\bm{v}}}
\def\vw{{\bm{w}}}
\def\vx{{\bm{x}}}
\def\vy{{\bm{y}}}
\def\vz{{\bm{z}}}
\def\mA{{\bm{A}}}
\def\mB{{\bm{B}}}
\def\mD{{\bm{D}}}
\def\mG{{\bm{G}}}
\def\mI{{\bm{I}}}
\def\mQ{{\bm{Q}}}
\def\mR{{\bm{R}}}
\def\mV{{\bm{V}}}
\def\mW{{\bm{W}}}
\def\mX{{\bm{X}}}
\DeclareMathAlphabet{\mathsfit}{\encodingdefault}{\sfdefault}{m}{sl}
\SetMathAlphabet{\mathsfit}{bold}{\encodingdefault}{\sfdefault}{bx}{n}
\def\gA{{\mathcal{A}}}
\def\gB{{\mathcal{B}}}
\def\gD{{\mathcal{D}}}
\def\gE{{\mathcal{E}}}
\def\gF{{\mathcal{F}}}
\def\gG{{\mathcal{G}}}
\def\gH{{\mathcal{H}}}
\def\gL{{\mathcal{L}}}
\def\gN{{\mathcal{N}}}
\def\gR{{\mathcal{R}}}
\def\gS{{\mathcal{S}}}
\def\gV{{\mathcal{V}}}
\def\gW{{\mathcal{W}}}
\def\sE{{\mathbb{E}}}
\def\sN{{\mathbb{N}}}
\def\sP{{\mathbb{P}}}
\def\sR{{\mathbb{R}}}
\newcommand{\R}{\mathbb{R}}
\begin{document}
\runningtitle{The effect of Leaky ReLUs on the training and generalization of overparameterized networks}
\twocolumn[

\aistatstitle{The effect of Leaky ReLUs on the training \\and generalization of overparameterized networks}

\aistatsauthor{Yinglong Guo \And Shaohan Li \And  Gilad Lerman}

\aistatsaddress{ School of Mathematics\\
  University of Minnesota\\
  Minneapolis, MN 55455 \And School of Mathematics\\
  University of Minnesota\\
  Minneapolis, MN 55455 \And School of Mathematics\\
  University of Minnesota\\
  Minneapolis, MN 55455} ]

\begin{abstract}
We investigate the training and generalization errors of overparameterized neural networks (NNs) with a wide class of leaky rectified linear unit (ReLU) functions. More specifically, we carefully upper bound both the convergence rate of the training error and the generalization error of such NNs and investigate the dependence of these bounds on the Leaky ReLU parameter, $\alpha$. We show that $\alpha =-1$, which corresponds to the absolute value activation function, is optimal for the training error bound. Furthermore, in special settings, it is also optimal for the generalization error bound. Numerical experiments empirically support the practical choices guided by the theory. 
\end{abstract}

\section{INTRODUCTION}\label{sec:intro}

Deep neural networks (DNNs) have demonstrated remarkable success in diverse fields, including image classification and text recognition. Despite their achievements, a comprehensive understanding of these networks remains elusive. Theoretical justifications for their performance have primarily centered around the overparameterized setting and mainly considered a rectified linear unit (ReLU). This paper aims to extend and generalize insights gained from recent theoretical works to any Leaky ReLU and provide practical guidance on selecting the most suitable Leaky ReLU for overparameterized networks. By doing so, we offer valuable insights for practitioners seeking optimal performance in real-world scenarios.

To address our aim, we begin by reviewing two recent theoretical trends. The first centers around a fundamental convergence theory for the training error of overparameterized neural networks (NNs). 
Its pioneering work by \citet{jacot2018neural} studied the training dynamics using the neural tangent kernel and showed that the training error goes to zero in the asymptotic regime where the width of the layers goes to infinity. 
A more reasonable regime assumes a sufficiently large lower bound on the width. In such overparameterized regime,
\citep{Goodfellow} empirically noticed that the corresponding NNs can avoid local minima and converge to their global optimal solutions. \citep{du2019gradient} proved the convergence of gradient descent (GD) for NNs with smooth and Lipschitz continuous activation functions whose width exponentially depends on the depth of the networks and polynomially depends on the number of samples. For 2-layer NNs with a ReLU,  \citet{li2018learning} proved the convergence of the training error, \citet{oymak2020toward} reduced the width requirement for training convergence, and \citet{song2021subquadratic}  established convergence whenever the width sub-quadratically depends on the number of samples and the activation functions are sufficiently smooth.

For DNNs, it has become common to consider the polynomial regime of overparameterization,  where the NN widths polynomially depend both on the numbers of samples and the depths. 
\citet{allen2019convergence} established the first convergence result for the training error in this polynomial regime, while assuming ReLU activation functions. 
They separately analyzed training by gradient descent and stochastic gradient descent (SGD). 
\citet{zou2019improved} improved the estimates of  \citet{allen2019convergence} by enhancing the lower bound of the gradient. 
\citet{Chen2019HowMO} further improved the polynomial dependence of the width on the number of samples that was established in  \citet{zou2019improved}, but on the other hand, their polynomial dependence on the depth is worse. 
\citet{banerjee2023neural} showed that for smooth activation functions a linear dependence of the width on the number of samples is sufficient to guarantee convergence.

Another recent progress involves bounding the generalization error of overparameterized NNs.
\citet{chizat2020implicit} 
established a generalization bound of 
infinitely wide two-layer NNs with homogeneous activation functions for classification and showed that the probability of the misclassification bound goes to $0$ as the size of the training samples increases. 
\citet{arora2019fine} bounded the generalization error of 2-layer overparameterized NNs for classification. They also analyzed 
the class of 
functions that are learnable by two-layer NNs.
\citet{allen2019learning} studied the generalization error of two-layer and three-layer NN with a non-negative, convex, and 1-Lipschitz smooth loss function using stochastic gradient descent. They showed that overparameterization improves generalization. 
\citet{cao2020generalization} further established the generalization error of deep NNs for classification using gradient descent. 
\citet{zhu2022generalization} extended the latter work for classification by using some other activation functions, including leaky ReLU with $\alpha\in(0, 1)$.

However, these foundational and important works  have not yet provided much practical guidance for designing NNs. 
Practitioners often use variants of ReLU for activation and this work aims to provide guidance on their choices.
Leaky ReLU is widely used in DNNs for supervised learning tasks \citep{redmon2016you, ridnik2021tresnet} and for generative tasks \citep{radford2015unsupervised, chen2016infogan, karras2019style, wang2021towards}. It is represented by the function $\sigma_\alpha(x)$, where $\sigma_\alpha(x) = x$ for $x > 0$ and $\sigma_\alpha(x) = \alpha x$ for $x \leq 0$, with $\alpha$ being a parameter. ReLU is a special case of Leaky ReLU when $\alpha=0$.
The Leaky ReLU function aims to prevent zero gradients for negative inputs, thus avoiding neurons from not activating. Empirical studies have demonstrated the advantage of using Leaky ReLU with small $\alpha > 0$ over ReLU \citep{xu2015empirical}. 
However, theoretical studies have primarily focused on ReLU and have not directly established the convergence theory and generalization for regression when using Leaky ReLU with any $\alpha < 1$. 
Moreover, the optimal choice of the Leaky ReLU parameter $\alpha$ to expedite the training process and enhance generalization remains unclear. Therefore, a theoretical study is needed to analyze the efficacy of leaky ReLU during training and to provide guidance on selecting the parameter $\alpha$ in practice.

This paper studies overparameterized DNNs with a wide class of leaky ReLU activation functions and develops theories for the convergence of the training error and the upper bound of the generalization error. It builds on  the proof framework and techniques introduced in previous studies, in particular, the ones of  \citet{allen2019convergence},  \citet{zou2019improved} and \citet{cao2020generalization}, but establishes the dependence of the convergence rate and the generalization error on the leaky ReLU parameter $\alpha$. It reveals that the optimal convergence rate bound is achieved at $\alpha=-1$ and the optimal bound of the generalization error is achieved at $\alpha=-1$ using small training epochs as long as the NN is sufficiently deep and the dataset is sufficiently large. This means that activation by the absolute value function may outperform activation by ReLU and the commonly used leaky ReLU (with small $\alpha>0$) in terms of faster training convergence and smaller generalization error. We are not aware of any prior use of the absolute value function for activating DNNs. We are only aware of using it for activating the scattering network    
\citep{scattering_Mallat_2012}  
due to its help with ``energy preservation'' \citep{Bruna_scattering2013}.

The main contributions are as follows:
\begin{enumerate}
\item We establish the convergence of the training errors in overparameterized NNs with any leaky ReLU using both GD and SGD. 
Our estimates clarify the effect of the Leaky ReLU parameter $\alpha$ on the network and its convergence rate bound. In particular, $\alpha=-1$, 
yields the optimal convergence rate bound. 

\item We upper bound the generalization error for overparameterized NNs for regression with leaky ReLUs. 
For sufficiently large datasets, deep NNs and small training epochs, the bound is optimal at $\alpha=-1$. 

\item
We improve previous results for ReLU  (see \S\ref{sec:technical_contribution}). In particular, we show that deep NNs achieve a similar convergence rate as a shallow NN.

\item 
Our predictions receive substantial support from a comprehensive set of numerical experiments
\end{enumerate}

The rest of the paper unfolds as follows:
\S\ref{sec:problem_setup} details the assumed setup of the NNs and the training algorithms; \S\ref{sec:main_res} presents the main theorems; \S\ref{sec:idea_proof} describes our technical contributions and sketches the proof of the main theorems; \S\ref{sec:numeric_exp} provides extensive numerical tests supporting our predictions from the theory on synthetic and real datasets; and \S\ref{sec:conclusion} concludes this work and discusses its limitations. 

\section{PROBLEM SETUP}\label{sec:problem_setup}

We follow the model of \citet{allen2019convergence}, while allowing a wide class of Leaky ReLU activation functions. We consider a dataset $\{\vx_i, \vy_i\}_{i=1}^n$, where $\vx_i\in \sR^p$, $\|\vx_i\| = 1$, $\vy_i\in \sR^d$, $\|\vy_i\|\leq O(1)$ and $d < O(1)$. We focus on a NN $\gN: \sR^p \to \sR^d$ with  $L$ hidden layers having $m$ neurons each and linear input and output layers. 
Its input layer produces
$\vh_0 = \mA \vx, \ \text{ where }, \mA\in \sR^{m\times p}$.   
For $l\in[L] := \{1,2,\dots L\}$, the output of the $l$th hidden layer, $\vh_l$,  
is inductively defined by 
\begin{equation}
\vh_l = \gH_l(\vh_{l-1}) = \sigma_\alpha(\mW_l \vh_{l-1}),
\label{eqn:hidden_layer}
\end{equation}
where $\mW_l\in \sR^{m\times m}$ and $\sigma_\alpha$ is the leaky ReLU activation function with $\alpha <1$: 
$$
\sigma_\alpha(x) = \left\{\begin{array}{cc}
x, &  \ \text{if}\  x \geq 0;\\
\alpha x, & \ \text{if}\   x < 0.
\end{array}\right. 
$$ 
The output layer produces 
$\hat{\vy} = \mB \vh_L, \text{ where } \mB\in \sR^{d\times m}$.  
Let $\mW := (\mW_1, \mW_2, \dots \mW_L)$ store all the trainable parameters
and we thus compactly denote $\hat{\vy} = \gN(\vx;\mW)$. For simplicity, we initialize $\mA$ and $\mB$ (see below), so they are fixed, and only train $\mW_{l},\ l \in [L]$.

We train the NN using the mean squared error (MSE):
    $\gL(\mW) =  \sum_{i=1}^n \|\vy_i - \gN(\vx_i;\mW)\|^2$. We denote its gradient by
$\nabla_\mW \gL(\mW) := (\nabla_{\mW_1} \gL(\mW), \dots \nabla_{\mW_L} \gL(\mW))$.  
Appendix \ref{subsec:append:exponential_loss} extends our theory to many other useful loss functions.   
We assume a specified upper bound $\eps>0$ on the training error and express our estimates in terms of this bound.

When discussing generalization, we assume that the set $\{\vx_i\}_{i=1}^n$ is i.i.d.~drawn from an arbitrary distribution $\gD_\mX$
and that for $1 \leq i \leq n$, $\vy_i=F(\vx_i)$ for an arbitrary measurable function $F$. The generalization error is thus 
$R(\mW) := \sE_{\vx\sim \gD_{\mX}} \|F(\vx) - \gN(\vx; \mW)\|^2$.

We assume the following data separation property: 
\begin{assumption}
\label{assump:separation}
There exists $0<\delta < c_0$, where  $c_0<1$, so that 
    $\min_{i, j \in [n]} \|\vx_i - \vx_j\| \geq \delta > 0$.
\end{assumption}
This assumption, suggested by \citet{allen2019convergence}, is reasonable. Indeed, if, on the other hand, there exists $i \neq j \in [n]$ such that $\vx_i=\vx_j$, then 
we can assume $\vy_i \neq \vy_i$ (otherwise we can combine these multiple instances into one single data point). It is then impossible to obtain a zero training error, which is needed for our convergence study.

\begin{algorithm}[h!]
   \caption{Rescaled initialization}
   \label{alg:rescale_initial}
\begin{algorithmic}
   \STATE {\bfseries Input:} Input dimension  $p$, width of hidden layer $m$, output dimension $d$, and leaky ReLU parameter $\alpha$.
   \STATE {\bfseries Initialize:}
      \vspace{-0.15in}
   \begin{align*}
&       \mA  \sim N\left(0, \frac{1}{m}\right), \  \mB \sim N\left(0, \frac{1}{d}\right),\\  
& \mW_l^{(0)}  \sim N\left(0, \frac{2}{m}\right)  ,\  l \in [L]   
    \end{align*}
   \vspace{-0.15in}
   \STATE {\bfseries Activation function:}
   \vspace{-0.1in}
\begin{equation}
   \tilde{\sigma}_\alpha(x) = \left\{\begin{array}{cc}
   \frac{1}{\sqrt{1+\alpha^2}} x, & \ \text{ if } x\geq 0\\
   \frac{\alpha}{\sqrt{1+\alpha^2}} x, & \ \text{ if } x < 0\\
\end{array}
   \right.\label{eqn:leaky_relu_rescaled}
\end{equation}
\end{algorithmic}
\end{algorithm}

Following \citet{he2015delving}, we initialize the network parameters as follows: $\mA\sim N(0, 1/m)$, $\mB\sim N(0, 1/d)$ and $\mW_l^{(0)}\sim N(0, 2/(m (1+\alpha^2)))$ for $l\in[L]$. 
Note that the factor $1/(1+\alpha^2)$
ensures a constant variance for any choice of $\alpha$. 
We can move the factor $1/(1+\alpha^2)$ from the weight initialization to the activation function, and equivalently initialize with Algorithm~\ref{alg:rescale_initial}. The theoretical study of the latter formulation with its rescaled Leaky ReLU function, $\tilde{\sigma}_\alpha(x)$ (see \eqref{eqn:leaky_relu_rescaled}), turns out to be more tractable.

Algorithms~\ref{alg:training_gd} and~\ref{alg:training_sgd} formulate the training procedures with simple GD and SGD, respectively. 

\begin{algorithm}[!ht]
   \caption{Training (gradient descent)}
   \label{alg:training_gd}
\begin{algorithmic}
   \STATE {\bfseries Input:} Learning rate $\eta$.
   \STATE {\bfseries Initialize:} Apply Algorithm~\ref{alg:rescale_initial} to obtain $\mA, \mB$ and $\mW^{(0)}$
   \vskip -0.1in
   \FOR{$t=0$ {\bfseries to} $T$}
   \STATE
   \vspace{-0.2in}
   \begin{align*}
        \mW^{(t+1)} = \mW^{(t)} - \eta \nabla_{\mW}\gL(\mW^{(t)}).
    \end{align*}
   \vspace{-0.2in}
    \ENDFOR
\end{algorithmic}
\end{algorithm}

\begin{algorithm}[!ht]
   \caption{Training (stochastic gradient descent)}
   \label{alg:training_sgd}
\begin{algorithmic}
   \STATE {\bfseries Input} Learning rate $\eta$.
   \STATE {\bfseries Initialize:} Apply Algorithm~\ref{alg:rescale_initial} to obtain $\mA, \mB$ and $\mW^{(0)}$
      \vskip -0.1in
   \FOR{$t=0$ {\bfseries to} $T$}
   \STATE Randomly select batch $B\subset [n]$ with $|B| = b.$
   \begin{align*}
        \mW^{(t+1)} = \mW^{(t)} - \eta \nabla_{\mW}\gL_B(\mW^{(t)}),
    \end{align*}
    where $\gL_B(\mW^{(t)}) := \sum\limits_{i\in B}\| \vy_i - \gN(\vx_i; \mW^{(t)})\|^2$.
    \ENDFOR
\end{algorithmic}
\end{algorithm}

\section{MAIN RESULTS}\label{sec:main_res}

The two theorems below establish the convergence of the training error for overparameterized NNs using a Leaky ReLU function with $\alpha <1$. The first theorem pertains to training with gradient descent (GD) (Algorithm~\ref{alg:training_gd}), while the second applies to training with stochastic gradient descent (SGD) (Algorithm~\ref{alg:training_sgd}). 
Both theorems are formulated within the context outlined in \S\ref{sec:problem_setup}. This setup includes  Assumption~\ref{assump:separation} with a parameter $\delta$,  Algorithm~\ref{alg:rescale_initial} for the initialization of the parameters of the NN, $n$ training points, $\{\vx_i, \vy_i\}_{i=1}^n$, where $\|\vx_i\| = 1$, and $\|\vy_i\|\leq O(1)$, output dimension $d$ ($\vy_i\in \sR^d$), NN depth $L$, NN width $m$, Leaky ReLU parameter $\alpha$, learning rate $\eta$,  batch size $b$ (for Algorithm \ref{alg:training_sgd}) and a desired upper bound $\eps>0$ on the training error.

\begin{theorem}\label{thm:gd_main_thm}
Assume the setup of \S\ref{sec:problem_setup}, where both
$m/\ln^4 m > \frac{1+\alpha^2}{(1-\alpha)^2}\Omega(\frac{n^5 L^{15} d}{\delta^4})$
and 
$m > \Omega\left(\ln\ln\epsilon^{-1}\right)$, 
and the training is according to Algorithm~\ref{alg:training_gd} with learning rate  $\eta \leq O(\frac{d}{nL^2 m})$. Then, with probability at least $1-e^{-\Omega(\ln m)}$,
\begin{equation}
\gL(\mW^{(T)}) < \epsilon
 \text{ and } \
    \gL(\mW^{(t)}) \leq \gamma^t \gL(\mW^{(0)})
        \text{,}\ \forall t  \leq T,
    \label{eqn:gd_conv} 
\end{equation}
where 
\begin{equation}
\label{eq:gamma_1st_theorem}
\gamma = 1 - \Omega\left(\frac{(1-\alpha)^2}{1+\alpha^2}\frac{\eta\delta m}{nd}\right), \ 
T = \frac{\ln \left(\epsilon / \gL(\mW^{(0)})\right)}{\ln\gamma}.
\end{equation}
\end{theorem}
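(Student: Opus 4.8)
The plan is to follow the by-now-standard three-ingredient recipe for linear convergence of gradient descent in the overparameterized regime, as developed by \citet{allen2019convergence} and \citet{zou2019improved}, while carefully tracking the dependence on the leaky ReLU parameter $\alpha$ through the rescaled activation $\tilde\sigma_\alpha$ of Algorithm~\ref{alg:rescale_initial}. The three ingredients are: (i) a \emph{gradient lower bound} of Polyak--{\L}ojasiewicz type, $\norm{\nabla_\mW\gL(\mW)}^2 \geq \Omega\!\big(\frac{(1-\alpha)^2}{1+\alpha^2}\frac{\delta m}{nd}\big)\gL(\mW)$, valid for all $\mW$ in a small ball around the initialization $\mW^{(0)}$; (ii) a \emph{gradient upper bound} $\norm{\nabla_\mW\gL(\mW)}^2 \leq O\!\big(\frac{nL^2m}{d}\big)\gL(\mW)$ on the same ball; and (iii) a \emph{semi-smoothness} inequality controlling the second-order behavior of $\gL$ along the trajectory. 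Given these, one bounds the one-step decrease via a descent-lemma argument, extracts the contraction factor $\gamma$, and verifies inductively that the iterates never leave the ball, so that (i)--(iii) remain valid at every step.

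First I would analyze the network at initialization. Because $\mW_l^{(0)}\sim N(0,2/m)$ and $\tilde\sigma_\alpha$ carries the factor $1/\sqrt{1+\alpha^2}$, the forward signal satisfies $\norm{\vh_l}\approx\norm{\vh_0}$ for every layer $l$ with high probability, uniformly in $\alpha$---precisely the design goal of the rescaling. I would then establish concentration for the hidden representations, the backpropagated signals, and the diagonal activation-pattern matrices $\mD_l=\diag(\tilde\sigma_\alpha'(\mW_l\vh_{l-1}))$, whose entries are $1/\sqrt{1+\alpha^2}$ or $\alpha/\sqrt{1+\alpha^2}$. These feed into the gradient lower bound, which is the crux: using the data-separation Assumption~\ref{assump:separation}, distinct inputs $\vx_i\neq\vx_j$ induce sufficiently different activation patterns, and the leaky-ReLU \emph{jump} in the derivative, $\tilde\sigma_\alpha'(0^+)-\tilde\sigma_\alpha'(0^-)=(1-\alpha)/\sqrt{1+\alpha^2}$, converts this input separation into a lower bound on the Gram matrix of per-sample gradients. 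Squaring this jump produces the factor $(1-\alpha)^2/(1+\alpha^2)$, which is maximized at $\alpha=-1$; this is the source of the optimality claim.

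Next I would show the iterates stay close to initialization. The upper bound (ii) controls the per-step displacement $\norm{\mW^{(t+1)}-\mW^{(t)}}=\eta\norm{\nabla_\mW\gL(\mW^{(t)})}$, and, once the contraction is in force, summing a geometric series bounds the total movement $\norm{\mW^{(t)}-\mW^{(0)}}$ by a quantity that the hypothesis on $m$ keeps inside the ball on which (i)--(iii) hold. Combining the semi-smoothness inequality with the two gradient bounds and the step size $\eta\leq O(d/(nL^2m))$---chosen so the second-order and cross terms are dominated by the first-order decrease---yields, for every $t\leq T$, the one-step contraction $\gL(\mW^{(t+1)})\leq\gamma\,\gL(\mW^{(t)})$ with $\gamma=1-\Omega\!\big(\frac{(1-\alpha)^2}{1+\alpha^2}\frac{\eta\delta m}{nd}\big)$. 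Iterating gives the geometric decay in \eqref{eqn:gd_conv}, solving $\gamma^T\gL(\mW^{(0)})<\eps$ gives the stated $T$, and a union bound over the initialization events produces the failure probability $e^{-\Omega(\ln m)}$.

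The main obstacle is the gradient lower bound (i) together with the self-consistency of the induction. Establishing (i) requires simultaneously propagating the separation $\delta$ through $L$ nonlinear layers without it collapsing, and isolating the exact $\alpha$-dependence so that the jump $(1-\alpha)/\sqrt{1+\alpha^2}$---rather than some weaker $\alpha$-free quantity---governs the bound; this is what distinguishes the leaky-ReLU analysis from the ReLU case of \citet{allen2019convergence} and is where the improved depth dependence (\S\ref{sec:technical_contribution}) must be earned. The delicate point is that (i) holds only in a neighborhood of $\mW^{(0)}$ whose radius must be large enough to contain the whole trajectory yet small enough that the activation patterns, and hence the jump-based bound, do not degrade; closing this loop forces the quantitative width requirement $m/\ln^4 m>\frac{1+\alpha^2}{(1-\alpha)^2}\Omega(n^5L^{15}d/\delta^4)$, and matching the $\alpha$-factors on both sides of this inequality is the technically demanding bookkeeping step.
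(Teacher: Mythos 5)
Your three-ingredient plan (gradient lower bound, gradient upper bound, semi-smoothness, combined via a descent-lemma argument and an induction keeping the iterates in a ball around $\mW^{(0)}$) is exactly the paper's architecture: these are Lemmas~\ref{thm:gradient_bound} and~\ref{thm:semi-smooth}, the one-step contraction in \eqref{eqn:training_error_dynamic}, and Lemma~\ref{lemma:training_in_perturbation}. Your identification of the derivative jump $(1-\alpha)/\sqrt{1+\alpha^2}$ as the source of the $(1-\alpha)^2/(1+\alpha^2)$ factor, and of the propagation of $\delta$ through the layers as the crux of the lower bound, also matches the paper.

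The one step that would fail quantitatively is your control of the total displacement. You propose to bound $\|\mW^{(t)}-\mW^{(0)}\|$ by summing the geometric series $\sum_s \eta\|\nabla\gL(\mW^{(s)})\| \leq \eta\,O(\sqrt{mn/d})\sum_s\sqrt{\gL^{(s)}} \leq \eta\,O(\sqrt{mn/d})\,\sqrt{\gL^{(0)}}/(1-\sqrt{\gamma})$. Since $1-\gamma = \Omega\bigl(\tfrac{(1-\alpha)^2}{1+\alpha^2}\tfrac{\eta\delta m}{nd}\bigr)$, this gives a displacement of order $\tfrac{1+\alpha^2}{(1-\alpha)^2}\tfrac{n^{3/2}d^{1/2}}{\delta m^{1/2}}\sqrt{\gL^{(0)}}$, which is worse by a factor of $\tfrac{\sqrt{1+\alpha^2}}{1-\alpha}\cdot\tfrac{n}{\sqrt{\delta}}$ than what the paper obtains. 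The paper instead uses the telescoping identity (following \citet{zou2019improved}): from \eqref{eqn:training_error_dynamic} and the lower bound \eqref{eqn:lower_bnd}, $\sqrt{\gL^{(s)}}-\sqrt{\gL^{(s+1)}} = \tfrac{\gL^{(s)}-\gL^{(s+1)}}{\sqrt{\gL^{(s)}}+\sqrt{\gL^{(s+1)}}} \geq \Omega(1)\tfrac{\eta\|\nabla\gL^{(s)}\|_F^2}{\sqrt{\gL^{(s)}}} \geq \tfrac{1-\alpha}{\sqrt{1+\alpha^2}}\Omega\bigl(\sqrt{\tfrac{\delta m}{nd}}\bigr)\eta\|\nabla\gL^{(s)}\|_F$, so the sum of step lengths telescopes to $\tfrac{\sqrt{1+\alpha^2}}{1-\alpha}O\bigl(\sqrt{\tfrac{nd}{\delta m}}\bigr)\sqrt{\gL^{(0)}}$ independently of the number of steps and of the rate. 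Plugging your weaker bound into the requirement $\|\mW^{(t)}-\mW^{(0)}\| < O(\delta^{3/2}/(n^{3/2}L^{15/2}\ln^{3/2}m))$ together with $\gL(\mW^{(0)})=O(n\sqrt{\ln m})$ forces $m \gtrsim \tfrac{(1+\alpha^2)^2}{(1-\alpha)^4}\tfrac{n^{7}L^{15}d}{\delta^{5}}$ (up to logs), i.e.\ you would prove the convergence statement only under a strictly stronger overparameterization than the theorem's $m/\ln^4 m > \tfrac{1+\alpha^2}{(1-\alpha)^2}\Omega(n^5L^{15}d/\delta^4)$. You should replace the geometric-series step with the telescoping square-root argument to close this gap. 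A minor additional omission: you do not mention the high-probability bound $\gL(\mW^{(0)}) \leq O(n\sqrt{\ln m})$, which is needed both for the displacement bound and to control $T$ (and is why the hypothesis $m>\Omega(\ln\ln\epsilon^{-1})$ appears).
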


\begin{theorem}\label{thm:sgd_main_thm}
Assume the setup of \S\ref{sec:problem_setup}, where 
both
$\frac{m}{\ln^4 m} > \frac{(1+\alpha^2)^4}{(1-\alpha)^8} \Omega(\frac{n^{8} L^{15}d}{b\delta^5})$ and $m\ln m > \Omega\left(\ln\ln \epsilon^{-1}\right)$
and the NN is trained according to Algorithm~\ref{alg:training_sgd} with 
$\eta \leq O(\frac{d\delta}{mn^3 L^3\ln^2 m})$ and 
$t > \frac{(1+\alpha^2)^2}{(1-\alpha)^4}\Omega(\frac{n^5L^2}{b\delta^2 }\ln^2 m)$
. There exists a constant $C_0 >1$ such that 
\begin{equation}
\begin{split}
    &\gL(\mW^{(T)}) < \epsilon
 \ \text{ and } \
\gL(\mW^{(t)}) \leq C_0 \gamma^t \gL(\mW^{(0)}) 
\\ &\ \text{ for all } t \leq T 
        \ \text{ with probability } \ 1-e^{-\Omega(\ln m)},
    \label{eqn:sgd_conv}
\end{split}
\end{equation}
where 
\begin{equation}
        \gamma = 1 - \Omega\left(\frac{(1-\alpha)^2}{1+\alpha^2}\frac{\eta b \delta m}{n^2d}\right), \ 
    T = \frac{\ln \left(\epsilon / C_0\gL(\mW^{(0)})\right)}{\ln\gamma}.
    \label{eqn:gamma_2nd_theorem}
\end{equation}
\end{theorem}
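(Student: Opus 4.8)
The plan is to adapt the deterministic analysis behind Theorem~\ref{thm:gd_main_thm} to the mini-batch setting, preserving the same dependence on the Leaky ReLU parameter through the factor $(1-\alpha)^2/(1+\alpha^2)$, and then to tame the additional randomness of the stochastic gradient by a supermartingale argument that produces the constant $C_0$. Throughout I would work with the rescaled activation $\tilde\sigma_\alpha$ of \eqref{eqn:leaky_relu_rescaled}, so that the initialization variance is normalized independently of $\alpha$, and condition on the high-probability event (probability $1-e^{-\Omega(\ln m)}$) on which the Gaussian initialization behaves typically.

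First I would establish the two geometric ingredients that do not depend on which algorithm is used, inside a ball of radius $\omega$ about $\mW^{(0)}$. (i) A gradient lower bound of the form $\norm{\nabla_\mW\gL(\mW)}_F^2 \geq \Omega\!\big(\frac{(1-\alpha)^2}{1+\alpha^2}\frac{\delta m}{nd}\big)\gL(\mW)$; here the spread $(1-\alpha)$ between the two slopes of $\tilde\sigma_\alpha$ is what keeps the activation pattern from degenerating, the factor $1/(1+\alpha^2)$ is the initialization normalization, and the data-separation Assumption~\ref{assump:separation} supplies $\delta$. (ii) A semi-smoothness bound
\[
\gL(\mW') \leq \gL(\mW) + \langle \nabla_\mW\gL(\mW),\, \mW'-\mW\rangle + O\!\big(\tfrac{L^2 m}{d}\big)\norm{\mW'-\mW}_F^2 + (\text{lower order}),
\]
valid for $\mW,\mW'$ in that ball, which controls the second-order change of the loss after one parameter update.

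Next I would obtain a one-step contraction in conditional expectation. Writing $\mathcal{F}_t$ for the history through step $t$ and using that for a uniformly random batch of size $b$ the stochastic gradient is unbiased up to the sampling factor, $\E_B[\nabla_\mW\gL_B(\mW^{(t)})] = \tfrac{b}{n}\nabla_\mW\gL(\mW^{(t)})$, I would insert the update $\mW^{(t+1)} = \mW^{(t)} - \eta\nabla_\mW\gL_B(\mW^{(t)})$ into (ii) and take conditional expectation. The cross term becomes $-\eta\tfrac{b}{n}\norm{\nabla_\mW\gL}_F^2$, and applying (i) yields the advertised $\gamma = 1 - \Omega\!\big(\frac{(1-\alpha)^2}{1+\alpha^2}\frac{\eta b\delta m}{n^2 d}\big)$; the quadratic term, bounded through the second moment $\E_B[\norm{\nabla_\mW\gL_B}_F^2]$ and absorbed using the learning-rate restriction $\eta \leq O(\frac{d\delta}{mn^3 L^3\ln^2 m})$, leaves $\E[\gL(\mW^{(t+1)})\mid\mathcal{F}_t] \leq \gamma\,\gL(\mW^{(t)})$ on the good event.

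The hard part will be upgrading this expected per-step contraction to the trajectory bound $\gL(\mW^{(t)}) \leq C_0\gamma^t\gL(\mW^{(0)})$ while simultaneously ensuring that every iterate remains in the ball where (i) and (ii) hold; the two are coupled, since a single atypical batch could both inflate the loss and eject $\mW^{(t)}$ from the ball. I would address this with a stopping-time argument: let $\tau$ be the first step at which $\mW^{(t)}$ leaves the ball, show that $\gamma^{-t}\gL(\mW^{(t\wedge\tau)})$ is, up to a controlled drift, a supermartingale, and apply a Doob/Azuma-type maximal inequality to get $\sup_{t\leq T}\gamma^{-t}\gL(\mW^{(t)}) \leq C_0\,\gL(\mW^{(0)})$ with probability $1-e^{-\Omega(\ln m)}$. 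Bounding the displacement $\norm{\mW^{(t)}-\mW^{(0)}}_F \leq \eta\sum_{s<t}\norm{\nabla_\mW\gL_B(\mW^{(s)})}_F$ by summing the gradient upper bound geometrically against $\gamma^{s/2}$ would then force $\tau > T$ once $m$ is large; this is precisely where the strengthened width requirement $\frac{m}{\ln^4 m} > \frac{(1+\alpha^2)^4}{(1-\alpha)^8}\Omega(\frac{n^8 L^{15}d}{b\delta^5})$ enters, while the lower bound on $t$ marks the threshold beyond which the accumulated negative drift dominates the martingale fluctuations, so that the concentration delivers the stated probability. Finally, taking $T = \ln(\epsilon/(C_0\gL(\mW^{(0)})))/\ln\gamma$ and invoking the contraction gives $\gL(\mW^{(T)}) < \epsilon$, completing the proof.
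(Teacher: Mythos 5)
Your overall skeleton matches the paper's proof: a one-step contraction in conditional expectation obtained by combining the semi-smoothness inequality with the unbiasedness identity $\sE_B\langle\nabla_\mW\gL,\nabla_\mW\gL_B\rangle=\tfrac{b}{n}\|\nabla_\mW\gL\|_F^2$ and the gradient lower bound, followed by a martingale concentration argument to upgrade the expected decay to a whole-trajectory bound, and finally a displacement estimate showing the iterates never leave the perturbation ball (the paper proves this last point as a separate lemma under the strengthened width condition, much as you propose).

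The genuine gap is in your concentration step. You propose to show that $\gamma^{-t}\gL(\mW^{(t\wedge\tau)})$ is a supermartingale and then invoke a ``Doob/Azuma-type maximal inequality.'' Doob's maximal inequality for a nonnegative supermartingale only yields $\sP\bigl(\sup_{t\leq T}\gamma^{-t}\gL(\mW^{(t)})\geq C_0\gL(\mW^{(0)})\bigr)\leq 1/C_0$, which is a constant failure probability, not the claimed $e^{-\Omega(\ln m)}$; and Azuma cannot be applied to $\gamma^{-t}\gL(\mW^{(t)})$ because its increments are not uniformly bounded (they carry the factor $\gamma^{-t}$, and a single adversarial batch can inflate the loss). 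The missing ingredient is an \emph{almost-sure} one-step growth bound: using the gradient upper bound \eqref{eqn:upper_bnd} together with semi-smoothness, one shows deterministically that $\gL(\mW^{(t+1)})\leq\beta\,\gL(\mW^{(t)})$ with $\beta=1+O(\eta mL\sqrt{nb}/d)$. Only then does the martingale $X_t=\sum_{s\leq t}\bigl(\ln\gL^{(s)}-\ln\gL^{(s-1)}-\sE(\ln\gL^{(s)}-\ln\gL^{(s-1)}\mid\gF_{s-1})\bigr)$ have increments bounded by $\ln(\beta/\gamma)$, so that Azuma gives $|X_t|\leq\sqrt{t}\,\ln(\beta/\gamma)\ln m$ with probability $1-e^{-\Omega(\ln^2 m)}$, and the resulting bound $\ln\gL^{(t)}\leq\ln\gL^{(0)}+t\ln\gamma+\sqrt{t}\ln(\beta/\gamma)\ln m$ produces both the constant $C_0$ and, by balancing $t\ln\gamma$ against the $\sqrt{t}$ fluctuation term, exactly the stated lower-bound threshold on $t$. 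Without identifying this log-scale reduction and the deterministic $\beta$-bound, the high-probability trajectory control does not go through.
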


These theorems show that for any $\alpha < 1$ the training error linearly converges to zero when the NN width is sufficiently large and the learning rate $\eta$ is sufficiently small.

Moreover, these theorems reveal the dependence of the convergence rate bound on $\alpha$ and this information can guide one in selecting $\alpha$ for optimal training speed.  
We note that 
the typical choice of the leaky ReLU parameter $\alpha$ (e.g., $0.01$ or $0.05$) does not yield a better bound for the convergence speed than ReLU (i.e., $\alpha=0$); furthermore, the negative values of $\alpha$ yield better results than ReLU and the optimal choice of $\alpha$ is $-1$. 
We can prove that this observation is rather general as follows:
\begin{corollary}\label{coro:gd_main_coro}
Assume the setup of \S\ref{sec:problem_setup} with either Algorithms \ref{alg:training_gd} or \ref{alg:training_sgd} and that all parameters are chosen so that when $\alpha =0$, $\gamma <1$. Then $\alpha =-1$ minimizes the above convergence rate $\gamma$ among all $\alpha < 1$. Moreover, $\gamma$ is decreasing in $\alpha$ on  $(-\infty, -1)$ and increasing on $(-1, 1)$.
\end{corollary}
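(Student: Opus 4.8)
The plan is to reduce the corollary to a one-variable optimization, exploiting the fact that in both Theorem~\ref{thm:gd_main_thm} and Theorem~\ref{thm:sgd_main_thm} the dependence of the rate $\gamma$ on the activation parameter enters \emph{only} through the factor
\[
g(\alpha) := \frac{(1-\alpha)^2}{1+\alpha^2}.
\]
Writing $\gamma = 1 - c\, g(\alpha)$, where $c$ absorbs every remaining $\alpha$-independent quantity (a fixed positive multiple of $\eta\delta m/(nd)$ in the GD case, or of $\eta b\delta m/(n^2 d)$ in the SGD case, including the absolute constant hidden in the $\Omega(\cdot)$ notation), the problem of minimizing $\gamma$ over $\alpha < 1$ becomes the problem of maximizing $g$. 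The hypothesis that $\gamma < 1$ at $\alpha = 0$ is precisely what pins down the sign of $c$: since $g(0) = 1$, the inequality $1 - c\,g(0) < 1$ forces $c > 0$, so a larger value of $g$ genuinely corresponds to a smaller $\gamma$.

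First I would note that $g$ is smooth on all of $(-\infty,1)$, as the denominator $1+\alpha^2$ never vanishes, and then compute its derivative. A direct quotient-rule calculation gives
\[
g'(\alpha) = \frac{2(\alpha^2 - 1)}{(1+\alpha^2)^2} = \frac{2(\alpha-1)(\alpha+1)}{(1+\alpha^2)^2},
\]
after the cubic and linear terms cancel in the numerator. Since $(1+\alpha^2)^2 > 0$ and $\alpha - 1 < 0$ throughout the domain $\alpha < 1$, the sign of $g'(\alpha)$ equals the sign of $-(\alpha+1)$. Hence $g' > 0$ on $(-\infty,-1)$ and $g' < 0$ on $(-1,1)$, with $g'(-1)=0$, so $g$ strictly increases on $(-\infty,-1)$, strictly decreases on $(-1,1)$, and attains its unique maximum $g(-1) = 2$ at $\alpha=-1$ (a factor-two gain over $g(0)=1$, matching the narrative that the absolute-value activation outperforms ReLU).

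Translating back through $\gamma = 1 - c\,g(\alpha)$ with $c>0$ reverses the monotonicity: $\gamma$ strictly decreases on $(-\infty,-1)$, strictly increases on $(-1,1)$, and is minimized at $\alpha=-1$, which is exactly the assertion. The reasoning is identical for the two algorithms, since only the constant $c$ changes between the two expressions for $\gamma$. There is no serious obstacle here—the mathematical content is a single derivative-sign analysis—so the only point requiring care is the bookkeeping that lets us treat every factor other than $g(\alpha)$ as a fixed positive constant independent of $\alpha$. This is justified because the corollary fixes all parameters in advance, and the $\alpha=0$ normalization is exactly what fixes the sign of that constant.
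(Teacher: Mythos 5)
Your proposal is correct and matches the argument the paper intends (the paper treats the corollary as an immediate consequence of the form of $\gamma$ and does not spell out a separate proof): the only $\alpha$-dependence in either rate is through $g(\alpha)=(1-\alpha)^2/(1+\alpha^2)$, and your derivative computation $g'(\alpha)=2(\alpha^2-1)/(1+\alpha^2)^2$ correctly yields the monotonicity on $(-\infty,-1)$ and $(-1,1)$ and the maximum $g(-1)=2$. Your use of the $\alpha=0$ hypothesis to fix the sign of the absorbed constant is exactly the right reading of that assumption.
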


For $\alpha=0$, our result improves the previous analysis of both \citet{allen2019convergence} and \citet{zou2019improved}. 
We compare our bounds with the ones of \citet{zou2019improved}, since they improved the bounds of \citet{allen2019convergence}. 
For this purpose, we examine the difference in the setups. First, \citet{zou2019improved} divides the loss function $\gL(\mW)$ 
by $n$ and thus we need to convert their estimate by a factor of a power of $n$ accordingly. Second, our proof assumes that the hidden signals are separated by $\delta < O(1)$, whereas \citet{zou2019improved} assumes that $\delta < O(1/L)$. We establish this upper bound independently of $L$ with careful mathematical estimates; therefore, our setup eliminates implicit dependence on $L$ in the other formulas. At last, 
\citet{zou2019improved} enforces the initial scaled loss to be bounded by $O(1)$ (this amounts to a bound $O(n)$ on our loss) and their conclusion holds with probability at least $1-\Omega(1/n)$. On the other hand, we relax the initial unscaled loss to be bounded by $O(\sqrt{\ln m})$ and our conclusion holds with probability at least $1-e^{-\Omega(ln m)}$, which we find more natural for the overparameterized regime.

After converting to our setup, the convergence rate in \citet{zou2019improved} is $1- \Omega(\eta\delta m/(dnL))$ when using gradient descent, and our convergence rate improves to 
$1- \Omega(\eta\delta m/(dn))$; also, when using SGD the convergence rate in \citet{zou2019improved} is $1- \Omega(\eta\delta mb/(dn^2L))$ and we improve it to $1- \Omega(\eta\delta mb/(dn^2))$. The important finding is that in the overparameterized regime, a deeper NN does not lead to slower convergence, but rather achieves a similar convergence rate as a shallow NN. 
One can further note that we improved the bound of 
\citet{zou2019improved} on $m$ 
by the factor $n^{-3} L^{-1}$ for GD and 
$n^{-8}L^{-2} (n/b)^{-3}\delta^3$ for SGD.
Furthermore, our lower bound on the number of epochs $t$ in Theorem~\ref{thm:sgd_main_thm} improves the one of \citet{allen2019convergence} by a factor of order $n^{-2}L^{-2}$, where there is no explicit bound in \citet{zou2019improved}.


Appendix~\ref{subsec:append:exponential_loss} extends the above bounds to convex loss functions, which include the cross-entropy for classification and a special loss function proposed in \cite{kumar2023stochastic}. The convergence rate for these functions is different, but $\alpha=-1$ is still optimal for their bounds.

Next, we establish an upper bound of the generalization error of a NN trained using GD, where an analogous bound when using SGD is specified in Theorem~\ref{thm:generalization_in_training_sgd} in Appendix~\ref{appd:generalization_sgd}. We first follow the previous analysis of generalization in overparameterized NNs by \cite{cao2020generalization} and establish the corresponding bound for our setting with Leaky ReLU activation function.

\begin{theorem}
\label{thm:generalization_in_training_gd}
Assume the setup of \S\ref{sec:problem_setup} with GD, where $m = \Theta(\frac{n^{10+2\tau}L^{15 + 2\tau}d^{1+2\tau}}{ \delta^{4 - 2\tau}})$ for $\tau>0$ 
and $\eta = \Theta(\frac{d}{nL^2m})$. Assume further that $m$ is larger than its lower bound and $\eta$ is smaller than its upper bound in Theorem~\ref{thm:gd_main_thm} (by an appropriate choice of the hidden constants in $\Theta$ and compared to the constants hidden in the lower bound of $m$ and in the upper bound of $\eta$ in Theorem~\ref{thm:gd_main_thm}). 
Then at a given training epoch $t \leq T$ (see \eqref{eq:gamma_1st_theorem} for $T$), with probability at least $1-e^{-\Omega(\ln m)}$, the generalization error is bounded as follows 
\begin{multline}
         R(\mW^{(t)}) \leq \gamma^t \gL(\mW^{(0)}) 
+\min\left\{O\left(\frac{d^{3/2+\tau} \delta^\tau n^{1/2+\tau}}{L^{1/2 - \tau}\ln m}\right),\right. \\ 
\left.O\left(\frac{1-\alpha}{\sqrt{1+\alpha^2}}\frac{d^{1/3}t^{4/3}}{m^{1/6}n^{2/3}L^{2/3}}\right)\right\} +
\min\left\{O\left(\frac{\sqrt{d \ln m}~t }{nL}\right), \right. \\ \left.O\left(\frac{n^{1/2+\tau} L^{2+\tau} d^{1/2+\tau}}{\delta^{1/2 -\tau} \ln m}\right)\right\} + O\left(d\sqrt{\frac{\ln m}{n}}\right).
\label{eqn:generalization_error_gd}
\end{multline}
\end{theorem}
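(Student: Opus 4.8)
\section*{Proof proposal for Theorem~\ref{thm:generalization_in_training_gd}}

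The plan is to split the population error into three conceptual pieces that mirror the right-hand side of \eqref{eqn:generalization_error_gd}: an optimization error handed to us directly by Theorem~\ref{thm:gd_main_thm}, a generalization gap controlled by uniform convergence over the tiny neighborhood of initialization that gradient descent never leaves, and the usual concentration cost. Concretely, the leading term $\gamma^t\gL(\mW^{(0)})$ is exactly the training-error bound of Theorem~\ref{thm:gd_main_thm}, and its entire $\alpha$-dependence lives in $\gamma$, which Corollary~\ref{coro:gd_main_coro} shows is minimized at $\alpha=-1$; the closing term $O(d\sqrt{\ln m/n})$ is the standard price of a uniform-convergence bound over $n$ samples. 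The substance is thus to bound the gap $R(\mW^{(t)})-\tfrac1n\gL(\mW^{(t)})$ by the two bracketed minima.

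First I would show, from the weight-movement estimates underlying Theorem~\ref{thm:gd_main_thm}, that every iterate remains in a ball $\{\norm{\mW_l-\mW_l^{(0)}}\le\omega,\ l\in[L]\}$ of small radius $\omega$, and that $\omega$ admits two competing estimates: a crude, trajectory-free bound fixed by the overparameterization level $m=\Theta(\cdots)$, and a sharper bound proportional to the cumulative step sizes, hence to $t$. I would also record that the rescaled initialization of Algorithm~\ref{alg:rescale_initial} is chosen precisely so that the norms of $\gN(\vx;\mW^{(0)})$ and $\nabla_\mW\gN(\vx;\mW^{(0)})$ are of constant order independently of $\alpha$; this is what keeps the dominant (neural-tangent) part of the complexity essentially $\alpha$-free.

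Second, following \citet{cao2020generalization}, I would linearize around initialization, $\gN(\vx;\mW)=\gN(\vx;\mW^{(0)})+\ip{\nabla_\mW\gN(\vx;\mW^{(0)})}{\mW-\mW^{(0)}}+e(\vx,\mW)$, and bound the two pieces over the ball. The neural-tangent part has Rademacher complexity scaling with $\omega$ times the constant-order gradient norm, while the residual $e(\vx,\mW)$ is the semi-smoothness error. Because the derivative of $\tilde{\sigma}_\alpha$ jumps by exactly $(1-\alpha)/\sqrt{1+\alpha^2}$ across the origin, the bound on $e$ over the ball carries precisely this factor, which is the source of the $(1-\alpha)/\sqrt{1+\alpha^2}$ in the second argument of the first minimum. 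Feeding the two radius estimates into these bounds yields, in each bracket, a trajectory-free estimate (the $\tau$-dependent terms, where $\tau$ records the degree of overparameterization in $m=\Theta(\cdots)$) and a trajectory-dependent estimate (the $t^{4/3}$ and $t$ terms); keeping the smaller in each bracket produces the two minima, and substituting the prescribed $m$ and $\eta$ into the constants gives \eqref{eqn:generalization_error_gd}.

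The main obstacle is the bookkeeping of the competing $\alpha$-dependences. The semi-smoothness residual is \emph{largest} at $\alpha=-1$ (the absolute-value activation is the most non-smooth leaky ReLU), whereas the leading term $\gamma^t\gL(\mW^{(0)})$ is \emph{smallest} there by Corollary~\ref{coro:gd_main_coro}. One must therefore show that in the regime of the theorem --- large $n$, large depth $L$ and width $m$, and small $t$ --- the $m^{-1/6}n^{-2/3}t^{4/3}$ prefactor makes the residual's $\alpha$-penalty negligible against the $\alpha$-gain carried by $\gamma^t$, so that $\alpha=-1$ still minimizes the whole right-hand side. In parallel, keeping the depth dependence sharp requires checking that the improved, $L$-independent hidden-signal separation ($\delta<O(1)$ rather than $O(1/L)$) that sharpens Theorem~\ref{thm:gd_main_thm} also survives the linearization and the Rademacher-complexity estimates.
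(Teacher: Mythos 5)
Your proposal follows essentially the same route as the paper: the paper proves Lemma~\ref{thm:generalization_overall} by linearizing around $\mW^{(0)}$ and bounding the Rademacher complexity of the perturbation ball (with the linearization residual carrying the $(1-\alpha)/\sqrt{1+\alpha^2}$ factor and scaling as $\omega^{4/3}$), and then obtains the two minima in \eqref{eqn:generalization_error_gd} exactly as you propose, by combining a trajectory-free bound on $\|\mW^{(t)}-\mW^{(0)}\|$ from Lemma~\ref{lemma:training_in_perturbation} with the cruder $t$-dependent bound $\eta t\sqrt{nm\ln m/d}$. The only small imprecision is your claim that $\nabla_\mW\gN(\vx;\mW^{(0)})$ has constant-order norm --- it is of order $\sqrt{m}$ per layer, which is where the $\sqrt{m}$ factors in Lemma~\ref{thm:generalization_overall} come from --- but this does not change the argument.
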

\vskip -0.1in

In Appendix~\ref{append:sec:discussion_generalization_bound}, we clarify the above estimates for different regimes for the number of training epochs, $t$. 
In particular, we indicate a tradeoff between the first training term and 
the other NN-complexity terms (excluding the last term of data complexity) and show that we cannot make both of these kinds of terms sufficiently small.
Stopping at a sufficiently small number of epochs results in a bound of the generalization error of order $O(\ln(m))$, which is also of order $O(\ln(n))$. This bound is composed of several terms. The term which contributes $O(\ln(m))$ is due to the training error and one cannot expect a better bound for it when having a small number of epochs. The rest of the terms do converge when $n$ and $L$ are sufficiently large and in this latter regime the overall bound is minimized when $\alpha =-1$. On the other hand, for larger numbers of epochs overfitting is observed, which results in divergent generalization error. 
Exploring the dependence of the generalization bound on $t$ is advantageous to an epoch-independent bound, like the one pursued by \citet{cao2020generalization} for classification instead of regression. Indeed, the bound of \citet{cao2020generalization} is  $\Theta(\text{poly}(n)\cdot n^{-1/2})$, which is significantly larger than $O(\log(n))$. 

For very special datasets (e.g., single-layer ReLU NN separability) \citet{cao2020generalization} reduced the term $\text{poly}(n)$ so their overall bound is sufficiently small. A natural, but more complicated, extension of this to regression is to consider datasets well-approximated by $L$-layer leaky ReLU NNs. In Appendix~\ref{subsec:appendix:special_func_class}, we improve the convergence rate, the lower bound of $m$ (so its dependence on $n$ is linear) and the generalization error bound for such datasets. However, for a large number of epochs we still notice overfitting with divergent generalization error (with a smaller rate of increase to infinity than for general datasets). 

At last, 
\citet{kumar2023stochastic}  
claimed that when using the loss function discussed in \eqref{eqn:exp_loss} of Appendix~\ref{subsec:append:exponential_loss}, minimizing a particular generalization error bound is equivalent to minimizing the latter loss function for training. 
Therefore, if $\alpha=-1$ is optimal for the training error, then it is also optimal for the generalization error bound. Since we verified the optimality of $\alpha=-1$ for our upper bound of the convergence rate in Appendix~\ref{subsec:append:exponential_loss} 
and experimentally demonstrated instances where this bound is comparable to the actual convergence rate in Figure~\ref{fig:alphas}, we get some numerical evidence that for the latter instances $\alpha=-1$ is optimal for bounding the generalization error.

\section{IDEAS OF PROOFS}\label{sec:idea_proof}
Our proofs follow ideas of \citet{allen2019convergence},  \citet{zou2019improved} and \citet{cao2020generalization} and adapt them to the general case of Leaky ReLU with $\alpha<1$. It also adapts \citet{cao2020generalization} to regression.  
We first sketch in \S\ref{sec:proof_sketch} the basic ideas of our proof, while we supplement all details in the appendix. We then highlight some of the innovative ideas in \S\ref{sec:technical_contribution}.

\subsection{Proof Sketch}
\label{sec:proof_sketch}
We describe here a quick roadmap to verifying the theory. The proofs of Theorems~\ref{thm:gd_main_thm} and \ref{thm:sgd_main_thm} follow the initial framework of \citet{allen2019convergence}, which was later followed by \citet{zou2019improved}, but consider the effect of using any leaky RELU with $\alpha <1$. 

These proofs use the following two lemmas, which are proved in \S\ref{appd:semismooth} and \S\ref{appd:gradient}. 
Let us first clarify their notation. We denote by $\|\mX\|_2$ and $\|\mX\|$ the spectral and Frobenius norms of 
a matrix $\mX$.
For $\mW = (\mW_1\dots \mW_L)$ and $\mV=(\mV_1\dots \mV_L)$, we define 
$\|(\mW_1\dots \mW_L)\|_F^2:=\sum_{l\in [L]}\|\mW_l\|_F^2$, 
$\|(\mW_1\dots \mW_L)\|_2:=\max_{l\in [L]}\|\mW_l\|_2$ and $\langle\mW, \mV\rangle := \sum_{l\in [L]} \langle\mW_l, \mV_l\rangle$. We denote by $\mW'$ a perturbation of $\mW$.
\begin{lemma}[Semi-smoothness]\label{thm:semi-smooth}
Assume the setup of \S\ref{sec:problem_setup}. If $\|\mW - \mW^{(0)}\|_2 < \omega < O\left(\frac{1}{L^{9/2}\ln^{3/2}m}\right)$ and $\| \mW' \|_2 <\omega$, then with a probability at least $1-e^{-\Omega(m)}$
\begin{align}
    &\gL(\mW + \mW')  \leq \gL(\mW) + \langle\nabla_\mW \gL(\mW), \mW'\rangle \notag\\
    & \ + \frac{nL^2m}{d} O(\|\mW'\|_2^2)\notag\\
    & \ + \frac{(1-\alpha)\omega^{1/3}L^2\sqrt{m n \gL(\mW) \ln m}}{\sqrt{d(1+\alpha^2)}}O(\|\mW'\|_2) .\label{eqn:semi-smooth}
\end{align}

\end{lemma}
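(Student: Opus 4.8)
The plan is to treat the mean-squared loss as a generalized quadratic in the output perturbation and to reduce the entire statement to controlling the forward difference $\gN(\vx_i;\mW+\mW')-\gN(\vx_i;\mW)$ together with its first-order linearization. Writing $\vg_i:=\gN(\vx_i;\mW)$ and $\vg_i':=\gN(\vx_i;\mW+\mW')$ and expanding each squared residual gives
\[
\gL(\mW+\mW')=\gL(\mW)-2\sum_{i=1}^n\langle \vy_i-\vg_i,\ \vg_i'-\vg_i\rangle+\sum_{i=1}^n\|\vg_i'-\vg_i\|^2 .
\]
Once $\vg_i'-\vg_i$ is replaced by its part that is linear in $\mW'$, the first sum reproduces $\langle\nabla_\mW\gL(\mW),\mW'\rangle$; the last sum will furnish the quadratic term $\frac{nL^2m}{d}O(\|\mW'\|_2^2)$; and the leftover nonlinearity in the first sum will produce the final, $\alpha$-dependent error term.

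First I would record the forward-propagation estimates at the rescaled initialization of Algorithm~\ref{alg:rescale_initial} that survive an $\omega$-sized perturbation (these are part of the framework of \citet{allen2019convergence} adapted here to $\tilde{\sigma}_\alpha$): the hidden signals satisfy $\|\vh_{i,l}\|=\Theta(1)$, and the weight matrices and the backpropagated products $\mB(\prod \mD_{i,a}\mW_a)$ have controlled spectral norm, where the rescaling $1/\sqrt{1+\alpha^2}$ keeps all these bounds independent of $\alpha$. Here $\mD_{i,l}$ is the diagonal matrix of derivatives of $\tilde{\sigma}_\alpha$ at the layer-$l$ pre-activations of $\vx_i$ under $\mW$, with entries in $\{1/\sqrt{1+\alpha^2},\ \alpha/\sqrt{1+\alpha^2}\}$. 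Using these, I would write $\vg_i'-\vg_i$ as a sum over layers of the linear backpropagated contributions $\mB(\prod \mD_{i,a}\mW_a)\mD_{i,l}\mW_l'\vh_{i,l-1}$ plus a remainder $\vr_i$ that collects exactly the effect of the neurons whose activation sign changes between $\mW$ and $\mW+\mW'$.

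The main obstacle is bounding $\vr_i$. This remainder is supported on the "flipped" neurons, and at each such neuron the error is governed by the jump of the derivative of $\tilde{\sigma}_\alpha$ across zero, which equals $(1-\alpha)/\sqrt{1+\alpha^2}$; this is precisely where the $\alpha$-dependence of the statement enters. I would bound the number of flipped neurons per layer by an anti-concentration and counting argument: since $\|\mW-\mW^{(0)}\|_2<\omega$ and $\|\mW'\|_2<\omega$, only neurons with pre-activation magnitude of the relevant order can flip, their number is $O(\omega^{2/3}m)$, and so their aggregate contribution scales like $\omega^{1/3}\sqrt m$. Propagating these per-layer errors through the network (picking up $\sqrt{\ln m}$ from Gaussian tails and the powers of $L$ from the across-layer accumulation) yields $\|\vr_i\|\le \frac{(1-\alpha)}{\sqrt{1+\alpha^2}}\,\omega^{1/3}L^2\sqrt{m\ln m/d}\;O(\|\mW'\|_2)$. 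The smallness hypothesis $\omega<O(L^{-9/2}\ln^{-3/2}m)$ is exactly what keeps the perturbed activation pattern close to the reference one, so that these errors accumulate geometrically rather than exploding over the $L$ layers; this layerwise control, and isolating the sharp factor $(1-\alpha)/\sqrt{1+\alpha^2}$ uniformly across depth, is the delicate part.

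Finally I would assemble the three pieces. The linear term matches $\langle\nabla_\mW\gL(\mW),\mW'\rangle$ by identifying the backpropagated product with the gradient of the MSE. The term $\sum_i\|\vg_i'-\vg_i\|^2$ is bounded by $\frac{nL^2m}{d}O(\|\mW'\|_2^2)$ using the spectral estimates ($1/d$ from the $\mB\sim N(0,1/d)$ scaling, $m$ from the layer width, $L^2$ from the depth). The cross term $-2\sum_i\langle\vy_i-\vg_i,\vr_i\rangle$ is handled by Cauchy--Schwarz,
\[
2\Big(\sum_{i=1}^n\|\vy_i-\vg_i\|^2\Big)^{1/2}\Big(\sum_{i=1}^n\|\vr_i\|^2\Big)^{1/2}\le 2\sqrt{\gL(\mW)}\,\sqrt n\,\max_i\|\vr_i\| ,
\]
which collapses to the stated $\frac{(1-\alpha)\omega^{1/3}L^2\sqrt{mn\gL(\mW)\ln m}}{\sqrt{d(1+\alpha^2)}}O(\|\mW'\|_2)$. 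The high-probability qualifier $1-e^{-\Omega(m)}$ comes from the concentration events used for the forward bounds and for the flip count.
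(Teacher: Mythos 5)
Your proposal follows essentially the same route as the paper's proof: expand the quadratic loss to second order in the output perturbation, split the forward difference into its linearization plus a remainder, control the remainder by counting sign-flipped neurons ($O(m\omega^{2/3}L)$ per layer, each carrying the derivative gap $(1-\alpha)/\sqrt{1+\alpha^2}$), and assemble the three pieces via Cauchy--Schwarz. The one bookkeeping point to watch is that the remainder $\vr_i$ is not due solely to sign flips---it also contains cross-layer contributions that are genuinely quadratic in $\mW'$ and do not carry the $(1-\alpha)\omega^{1/3}\sqrt{\ln m}$ factor---but, exactly as in the paper, these are absorbed into the $\frac{nL^2m}{d}O(\|\mW'\|_2^2)$ term by invoking the a priori residual bound $\|\vy_i-\gN(\vx_i;\mW)\|\leq O(\sqrt{m/d})$ rather than $\sqrt{\gL(\mW)}$.
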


\begin{lemma}[Gradient bounds]\label{thm:gradient_bound}
Assume the setup of \S\ref{sec:problem_setup}. If $\|\mW - \mW^{(0)}\|_2 < \omega <  O\left(\frac{\delta^{3/2}}{n^{3/2}L^{15/2}\ln^{3/2} m}\right)$, then with a probability at least $1-e^{-\Omega(m\delta^2/L^3)}$
\begin{align}
    \|\nabla_{\mW_l} \gL(\mW)\|_F^2 & \leq \gL(\mW)O\left(\frac{mn}{d}\right) , \quad \text{ for }\ l \in [L]\label{eqn:upper_bnd}\\
    \|\nabla_{\mW} \gL(\mW)\|_F^2 & \geq \gL(\mW)\Omega\left(\frac{(1-\alpha)^2}{(1+\alpha^2)}\frac{\delta m}{n d}\right).
    \label{eqn:lower_bnd}
\end{align}
\end{lemma}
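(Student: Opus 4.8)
The plan is to derive both inequalities from the backpropagation formula for the gradient and then treat the upper bound as bookkeeping and the lower bound as the real work. Writing the per-sample residual as $\vr_i := \gN(\vx_i;\mW) - \vy_i$ and the rescaled activation-pattern matrices as $\mD_{l,i} := \diag(\tilde\sigma_\alpha'(\mW_l \vh_{l-1,i}))$, the layerwise gradient is a sum of rank-one terms,
\begin{align*}
\nabla_{\mW_l}\gL(\mW) &= 2\sum_{i=1}^n \vb_{l,i}\,\vh_{l-1,i}^\T, \\
\vb_{l,i} &:= \mD_{l,i}\Big(\prod_{r=l+1}^{L}\mW_r^\T \mD_{r,i}\Big)\mB^\T \vr_i .
\end{align*}
Everything reduces to controlling the forward features $\vh_{l,i}$, the backpropagated signals $\vb_{l,i}$, and — for the lower bound — the degree to which the rank-one terms fail to cancel. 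Throughout, the hypothesis $\omega < O(\delta^{3/2} n^{-3/2}L^{-15/2}\ln^{-3/2}m)$ is used only to guarantee that the quantities I establish at initialization (feature norms, activation patterns, and inter-sample separation) survive, up to constants, for every admissible $\mW$; this requires auxiliary forward- and backward-stability lemmas that I would prove first.

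For the upper bound \eqref{eqn:upper_bnd} I would use $\|\vh_{l-1,i}\| = O(1)$ (forward stability) together with a backward norm-preservation estimate — which relies on the variance-$2/m$ initialization and the $1/(1+\alpha^2)$ rescaling of $\tilde\sigma_\alpha$ making each layer map approximately isometric — to bound $\big\|\prod_{r=l+1}^L \mW_r^\T\mD_{r,i}\big\|_2 = O(1)$. Since every diagonal entry of $\mD_{l,i}$ has magnitude at most $\max(1,|\alpha|)/\sqrt{1+\alpha^2} < 1$ and $\|\mB^\T\vr_i\| = O(\sqrt{m/d})\|\vr_i\|$, this yields $\|\vb_{l,i}\| = O(\sqrt{m/d})\,\|\vr_i\|$ uniformly in $l$. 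Applying $\big\|\sum_i \vb_{l,i}\vh_{l-1,i}^\T\big\|_F^2 \le n\sum_i \|\vb_{l,i}\|^2\|\vh_{l-1,i}\|^2$ (Cauchy--Schwarz over the $n$ samples) and $\sum_i\|\vr_i\|^2 = \gL(\mW)$ gives $\|\nabla_{\mW_l}\gL\|_F^2 \le O(mn/d)\,\gL(\mW)$. This direction carries no $\alpha$-dependence precisely because the rescaled slopes are bounded by $1$ for every $\alpha$.

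The lower bound \eqref{eqn:lower_bnd} is the crux. I would discard all but the last-layer block, using $\|\nabla_\mW\gL\|_F^2 \ge \|\nabla_{\mW_L}\gL\|_F^2 = 4\big\|\sum_i \mD_{L,i}(\mB^\T\vr_i)\vh_{L-1,i}^\T\big\|_F^2$. The structural input is a separation lemma showing that the pre-activations $\{\mW_L\vh_{L-1,i}\}_i$ inherit the data separation and remain $\Omega(\delta)$-separated, so that for each sample $i$ one can isolate a set of $\Omega(\delta m/n)$ neurons on which sample $i$'s contribution does not cancel against the $j\ne i$ terms. On such a neuron the surviving signal is proportional to the gap between the two slopes of $\tilde\sigma_\alpha$, namely $(1-\alpha)/\sqrt{1+\alpha^2}$, while the fresh Gaussian randomness of the corresponding entry of $\mB$ contributes $\|\vr_i\|^2/d$ in expectation. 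Summing the per-sample contributions and then over the $n$ samples produces $\E\|\nabla_{\mW_L}\gL\|_F^2 \gtrsim \frac{(1-\alpha)^2}{1+\alpha^2}\frac{\delta m}{nd}\gL(\mW)$, after which a Bernstein/Chernoff bound over the effectively independent coordinates upgrades the expectation to the stated high-probability bound, the exponent $e^{-\Omega(m\delta^2/L^3)}$ reflecting the effective sample size of the concentration step.

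I expect the separation/isolation step to be the \emph{main obstacle}, for two reasons. First, it is where the $\alpha$-dependence is born: one must show that it is exactly the slope gap $1-\alpha$ (maximized in magnitude at $\alpha=-1$, the absolute value) that controls the surviving signal, and track the rescaling factor $1+\alpha^2$ consistently through both the initialization and the perturbation analysis. Second, establishing hidden-layer separation of order $\Omega(\delta)$ with a requirement on $\delta$ that is \emph{independent of $L$} — rather than the $\delta < O(1/L)$ of \citet{zou2019improved} — demands a sharper, depth-robust forward-separation estimate; controlling how the $\Omega(\delta m/n)$ good neurons and their approximate independence survive both depth-$L$ propagation and the weight perturbation of size $\omega$ is the most delicate part, and it is what dictates the powers of $n$, $L$, $\delta$, and $\ln m$ in the admissible range of $\omega$.
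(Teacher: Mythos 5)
Your overall architecture (prove both bounds at initialization, then show a perturbation of size $\omega$ cannot change them at leading order) and your isolation mechanism for the lower bound (disjoint sets of ``good'' neurons per sample, slope gap $(1-\alpha)/\sqrt{1+\alpha^2}$, fresh randomness of $\mB$, then concentration over the $m$ coordinates) match the paper's proof via the function $\mG_{i,l}(\vv;\mW)$ and Lemmas~\ref{lemma:gradient_upper_bound_init}--\ref{lemma:gradient_lower_bound_init}. The upper bound is essentially the paper's argument, with one small imprecision: the intermediate product $\prod_{r=l+1}^{L}\mW_r^\T\mD_{r,i}$ is only $O(\sqrt{L})$ at initialization (Lemma~\ref{lemma:initial_forward_matrix_bound}, statement 1), not $O(1)$; to avoid a spurious factor of $L$ you must bound the full backpropagation operator $\mB\mD_L\mW_L\cdots\mD_l$ as a single object by $O(\sqrt{m/d})$, as the paper does.

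The genuine gap is in the lower bound, and it is exactly a factor of $L$. You discard all layers but the last and rely on the hidden features at layer $L-1$ being ``$\Omega(\delta)$-separated.'' That separation is not available: the paper's Lemma~\ref{lemma:separation_layers} only yields $\langle\hat\vh_{i,l},\hat\vh_{j,l}\rangle^2\le 1-\Omega(\delta^2/L^2)$, i.e.\ separation $\delta_l=\delta/(2(l+1))=\Omega(\delta/L)$ that \emph{contracts} with depth — what is independent of $L$ in this paper is only the admissible range of the assumption ($\delta<O(1)$ instead of $\delta<O(1/L)$), not the conclusion. With the achievable separation, the fraction of good neurons per sample is $\Omega(\delta/(nL))$, not $\Omega(\delta/n)$, so your last-layer-only route delivers $\Omega\bigl(\frac{(1-\alpha)^2}{1+\alpha^2}\frac{\delta m}{ndL}\bigr)\gL(\mW)$, which is the Zou--Gu bound and falls short of \eqref{eqn:lower_bnd}. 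The paper recovers the missing $L$ differently: Lemma~\ref{lemma:gradient_lower_bound_init} proves the per-layer bound $\Omega\bigl(\frac{(1-\alpha)^2}{1+\alpha^2}\frac{\delta m}{ndL}\bigr)\sum_i\|\vv_i\|^2$ for \emph{every} layer $l\in[L]$ (which requires the extra work in its Part~3 of showing the backpropagated coefficients $a_{k,l}=\langle(\textbf{Back}_{l})_{\cdot,k},\vv\rangle$ stay of size $\Omega(\|\vv\|^2/d)$ at all depths, not just $l=L$), and then sums $\|\nabla_\mW\gL\|_F^2=\sum_{l=1}^L\|\nabla_{\mW_l}\gL\|_F^2$ over the $L$ layers to cancel the $1/L$. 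You would need to either adopt this summation over layers or prove a depth-uniform $\Omega(\delta)$ hidden-layer separation, which the paper's techniques do not provide.
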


We note that the factor $({1-\alpha})/{\sqrt{1+\alpha^2}}$ appears in the bounds of both lemmas, where it is squared in Lemma \ref{thm:gradient_bound}. This factor is the derivative gap in Leaky ReLU, i.e., $\tilde{\sigma}'_\alpha(0+) - \tilde{\sigma}'_\alpha(0-)$. Its value is larger for Leaky ReLU with $\alpha<0$ than for ReLU (with $\alpha=0$). 
We thus note the bound \eqref{eqn:semi-smooth} in Lemma \ref{thm:semi-smooth} is larger for Leaky ReLU with $\alpha<0$ than for ReLU. 
On the other hand, observing \eqref{eqn:lower_bnd} of Lemma \ref{thm:gradient_bound}, we note that the lower bound on the norm of the gradient is larger for Leaky ReLU with $\alpha <0$ than for ReLU.  
Our analysis below shows that when combining the two bounds, Leaky ReLU with $\alpha<0$ leads to better control of the decay of the loss function than ReLU.

Theorem~\ref{thm:gd_main_thm} can be proved as follows. Let $\mW : = \mW^{(t)}$ and $\mW':= -\eta\nabla_{\mW}\gL(\mW^{(t)})$ and note that by gradient descent, $\mW + \mW'= \mW^{(t+1)}$. Denoting $\gL^{(t)} := \gL(\mW^{(t)})$ and applying \eqref{eqn:semi-smooth} of Lemma \ref{thm:semi-smooth}, we can conclude that with a probability of at least $1-e^{-\Omega(m)}$, the following inequality holds
\begin{align}
    & \gL^{(t+1)} 
    \leq \gL^{(t)} - \eta \langle \nabla_{\mW} \gL^{(t)} , \nabla_{\mW} \gL^{(t)}\rangle \label{eqn:proof1} \\ 
     & \ + \frac{\eta(1-\alpha)\omega^{\frac{1}{3}}L^2\sqrt{m n \gL^{(t)} \ln m}}{\sqrt{d(1+\alpha^2)}}
          O\left(\|\nabla_\mW \gL^{(t)}\|_2\right)\label{eqn:proof2}\\
     & \ + \frac{\eta^2nL^2m}{d}O\left(\|\nabla_\mW \gL^{(t)}\|_2^2\right). \label{eqn:proof3}
\end{align}
Using \eqref{eqn:lower_bnd} we bound $\sqrt{\gL^{(t)}}$ as follows with probability at least $1-e^{-\Omega(m\delta^2/L^3)}$: 
\begin{equation}
\label{eq:bound_sqrt_L}
    \sqrt{\gL^{(t)}} \leq \frac{\sqrt{1+\alpha^2}}{1-\alpha} \, O\left(\sqrt{\frac{nd}{\delta m}}\right) \|\nabla_\mW \gL^{(t)}\|_F.
\end{equation}
Applying \eqref{eq:bound_sqrt_L},  
we control the term in \eqref{eqn:proof2}, with probability at least $1-e^{-\Omega(m\delta^2/L^3)}$, by
\begin{equation}
    \frac{\eta \omega^{1/3}nL^{2}\sqrt{\ln m}}{\sqrt{\delta}} O\left(\|\nabla_\mW \gL^{(t)}\|_F^2\right).\label{eqn:proof2_bd}
\end{equation}
Using 
$\omega < O(\frac{\delta^{3/2}}{n^{3/2}L^{15/2}\ln^{3/2} m})$, which is required by  Lemma~\ref{thm:gradient_bound},  we reduce \eqref{eqn:proof2_bd} to $\eta \|\nabla_\mW \gL^{(t)}\|_F^2/3$. Using $\eta < O({d}/{(nL^2 m)})$, which is required in Theorem \ref{thm:gd_main_thm}, we reduce the bound in \eqref{eqn:proof3} to $ \eta \|\nabla_\mW \gL^{(t)}\|_F^2/3$.

Next, we apply these bounds to the respective terms in \eqref{eqn:proof1} and use the identity
$ \langle \mX, \mX \rangle=\|\mX\|_F^2$ for a vector of matrices $\mX=(\mX_1,\ldots,\mX_L)$ to reduce \eqref{eqn:proof1} to 
\begin{equation}
    \gL^{(t+1)} \leq \gL^{(t)} - 1/3\eta \|\nabla_\mW \gL^{(t)}\|_F^2.\label{eqn:training_error_dynamic}
\end{equation}
Further application of the lower bound in \eqref{eqn:lower_bnd} to the above equation results in $\gL^{(t+1)} \leq \gamma \gL^{(t)}$
with $\gamma$ specified in \eqref{eq:gamma_1st_theorem} and we consequently conclude \eqref{eqn:gd_conv} of Theorem \ref{thm:gd_main_thm}.

The above argument holds for one training step with probability at least $1-e^{-\Omega(m)}$. This argument extends to $T$ steps with probability at least $1-Te^{-\Omega(m)}$. We note that the number of epochs $T$ can be bounded using the bound $\eps$ on the training error, the convergence rate in \eqref{eq:gamma_1st_theorem} and the estimate
$\gL(\mW^{(0)}) \leq O(n \sqrt{\ln m})$, which is shown in Appendix~\ref{appd:main_proof_gd}, as follows:
\begin{equation*}
    \begin{split}
    T &= \ln(\epsilon/\gL(\mW^{(0)}))/\ln\gamma\leq\Theta(\ln(\epsilon/n\sqrt{\ln m})/\ln\gamma) \\
    & \leq O\left(\frac{nd}{\eta\delta m} (\ln \epsilon^{-1} + \ln (n\sqrt{\ln m}))\right).
\end{split}
\end{equation*}
\vskip -0.1in

Thus the total probability to ensure $T$-steps training with training error lower than $\epsilon$ is at least $1-O(\frac{nd}{\eta\delta m}(\ln \epsilon^{-1} + \ln (n\sqrt{\ln m}))) e^{-\Omega(m)}$. Given that $m > \Omega(\text{poly}(n,L,d,\delta^{-1}))$ and $m > \Omega(\ln \ln \epsilon^{-1})$, this probability is of order $1-e^{-\Omega(m)}$.

In Appendix~\ref{appd:main_proof_gd}, we demonstrate that the inequality  $\|\mW^{(t)}-\mW^{(0)}\|_2 < \omega < O(\delta^{3/2}/(n^{3/2}L^{15/2}\ln^{3/2}m))$ holds
with probability at least $1-e^{-\Omega(\ln m)}$. Note that the latter bound implies the conditions for both Lemmas~\ref{thm:semi-smooth} and~\ref{thm:gradient_bound} and thus concludes the proof of Theorem \ref{thm:gd_main_thm}

The proof of Theorem~\ref{thm:sgd_main_thm} is detailed in \S\ref{appd:main_proof_sgd}. We briefly describe the proof idea as follows.  
First, we use a similar argument as in the proof of Theorem~\ref{thm:gd_main_thm} to bound the expectations of the loss functions at each step. Second, we use \eqref{eqn:upper_bnd} to find an absolute upper bound of the loss functions. By combining these two bounds and using Azuma's inequality, we derive the decay of the loss function in \eqref{eqn:sgd_conv} with the convergence rate in \eqref{eqn:gamma_2nd_theorem} in Theorem~\ref{thm:sgd_main_thm}. Finally, we verify that the conditions for Lemma~\ref{thm:semi-smooth} and Lemma~\ref{thm:gradient_bound} are satisfied when the NN width satisfies $m/\ln^4 m > (1+\alpha^2)^4/(1-\alpha)^8 \Omega(n^{8} L^{15}d/(b\delta^5))$  and thus conclude the theorem.

The proof of Theorem \ref{thm:generalization_in_training_gd}, which appears in \S\ref{appd:generalization_gd}, relies on the following lemma that bounds the generalization error for a class of NNs whose parameters are close to $\mW^{(0)}$.
\begin{lemma}[Generalization error with perturbation]\label{thm:generalization_overall}
Assume the setup of \S\ref{sec:problem_setup}, where $\alpha$ is the leaky ReLU parameter. 
If $ \|\mW - \mW^{(0)}\| < \omega < O\left(\frac{\delta^{3/2}}{n^{3/2}L^{15/2}\ln^{3/2} m}\right)$, then with probability at least $1-e^{-\Omega(\ln m)}$ 
\begin{equation*}
\begin{split}
    R(\mW) &\leq \frac{1}{n}\gL(\mW) + \frac{1-\alpha}{\sqrt{1+\alpha^2}}O(d (\ln m) \sqrt{m} L^2 \omega^{4/3}) +  \\
    & O(d  \sqrt{m(\ln m)/n} L \omega) + O\left(d \sqrt{\frac{\ln m}{n}}\right).\end{split}
\end{equation*}
\end{lemma}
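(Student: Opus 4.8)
The plan is to bound the generalization gap $R(\mW)-\frac1n\gL(\mW)$ uniformly over the ball $\gB:=\{\mW:\|\mW-\mW^{(0)}\|<\omega\}$ by a symmetrization/Rademacher-complexity argument, adapting the classification analysis of \citet{cao2020generalization} to regression with the squared loss and to general leaky ReLU. Writing $\ell_\mW(\vx):=\|F(\vx)-\gN(\vx;\mW)\|^2$, observe that $R(\mW)=\sE_{\vx\sim\gD_\mX}\ell_\mW(\vx)$ and $\frac1n\gL(\mW)=\frac1n\sum_{i=1}^n\ell_\mW(\vx_i)$ are the population and empirical means of the \emph{same} loss, so it suffices to control $\sup_{\mW\in\gB}(\sE\ell_\mW-\frac1n\sum_i\ell_\mW(\vx_i))$. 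The three extra terms in the statement will arise, respectively, from (i) the linearization error of $\gN$ on $\gB$ (the $\omega^{4/3}$ term), (ii) the Rademacher complexity of the tangent/NTK model (the term linear in $\omega$), and (iii) concentration of the empirical mean around the population mean (the last term).

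First I would linearize around initialization, setting $\gN^{\mathrm{lin}}(\vx;\mW):=\gN(\vx;\mW^{(0)})+\langle\nabla_\mW\gN(\vx;\mW^{(0)}),\mW-\mW^{(0)}\rangle$ and $\ell^{\mathrm{lin}}_\mW(\vx):=\|F(\vx)-\gN^{\mathrm{lin}}(\vx;\mW)\|^2$. Inserting $\ell^{\mathrm{lin}}_\mW$ splits the gap into $(\sE\ell_\mW-\sE\ell^{\mathrm{lin}}_\mW)+(\sE\ell^{\mathrm{lin}}_\mW-\frac1n\sum_i\ell^{\mathrm{lin}}_\mW(\vx_i))+(\frac1n\sum_i\ell^{\mathrm{lin}}_\mW(\vx_i)-\frac1n\sum_i\ell_\mW(\vx_i))$. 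The first and third brackets are each controlled pointwise by the uniform linearization error: since on $\gB$ both $\gN$ and $\gN^{\mathrm{lin}}$ are $O(1)$, the identity $|\,\|a\|^2-\|b\|^2|\le\|a-b\|\,(\|a\|+\|b\|)$ gives $|\ell_\mW(\vx)-\ell^{\mathrm{lin}}_\mW(\vx)|\le O(\sqrt d)\,\|\gN(\vx;\mW)-\gN^{\mathrm{lin}}(\vx;\mW)\|$, so these two brackets contribute the approximation term. The middle bracket is a genuine generalization gap for the class of tangent models and is handled by Rademacher complexity.

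The two ingredients I would then establish (by the layer-by-layer perturbation analysis underlying Lemmas~\ref{thm:semi-smooth} and~\ref{thm:gradient_bound}) are: a uniform linearization bound $\sup_{\vx}\|\gN(\vx;\mW)-\gN^{\mathrm{lin}}(\vx;\mW)\|\le \frac{1-\alpha}{\sqrt{1+\alpha^2}}O(\omega^{4/3}L^2\sqrt{m\ln m})$ for all $\mW\in\gB$, which reproduces the derivative-gap factor $\frac{1-\alpha}{\sqrt{1+\alpha^2}}$ exactly as in the semi-smoothness estimate and, after accounting for the squared-loss Lipschitz constant and the output dimension $d$, yields the stated $\omega^{4/3}$ term; and a feature-norm bound $\|\nabla_\mW\gN(\vx;\mW^{(0)})\|\le O(\sqrt{mL\ln m/d})$ (of the type giving \eqref{eqn:upper_bnd}), uniform over the support. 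With the latter, the empirical Rademacher complexity of the tangent class $\{\vx\mapsto\langle\nabla_\mW\gN(\vx;\mW^{(0)}),\mV\rangle:\|\mV\|\le\omega\}$ is at most $\frac{\omega}{\sqrt n}\max_i\|\nabla_\mW\gN(\vx_i;\mW^{(0)})\|$; composing with the Lipschitz squared loss via Talagrand contraction and summing over the $d$ output coordinates makes the dimensional and logarithmic factors combine to $O(d\sqrt{m(\ln m)/n}\,L\,\omega)$. Finally, bounding $\ell_\mW$ by $M=O(d)$ on $\gB$ (using that the spectral-norm bounds on $\mA,\mB,\mW^{(0)}_l$ at initialization force $\|\gN(\vx;\mW^{(0)})\|=O(1)$ uniformly over unit $\vx$) and applying McDiarmid's inequality with confidence $1-e^{-\Omega(\ln m)}$ produces the deviation term $O(d\sqrt{\ln m/n})$.

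The main obstacle will be the uniform linearization error: unlike the semi-smoothness lemma, which is applied at the training points, here the bound must hold for \emph{every} $\vx$ in the support of $\gD_\mX$ (since $R$ is an expectation) and simultaneously for all $\mW\in\gB$, with the correct $\omega^{4/3}$ rate and $\frac{1-\alpha}{\sqrt{1+\alpha^2}}$ dependence. I would obtain the uniform-in-$\vx$ control from deterministic operator-norm bounds on the initial weight matrices (which hold with probability $1-e^{-\Omega(m)}$ and are independent of $\vx$), propagated through the layers, so that the activation-pattern change induced by moving $\mW$ off $\mW^{(0)}$ is bounded without reference to a particular input; uniformity over $\mW\in\gB$ then follows because the estimate depends on $\mW$ only through $\omega$. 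Threading the $\alpha$-dependence correctly through this forward/backward perturbation analysis, and checking that all the high-probability events combine to the stated $1-e^{-\Omega(\ln m)}$, is where the bulk of the technical care will lie.
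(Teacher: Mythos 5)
Your overall architecture (Rademacher complexity of the ball around $\mW^{(0)}$, linearization onto the tangent/NTK class, and a separate concentration term) matches the paper's in spirit, and you correctly attribute the three extra terms to linearization error, tangent-class complexity, and deviation. However, the specific decomposition you choose creates a requirement that your proposed technique cannot meet. By inserting $\ell^{\mathrm{lin}}_\mW$ at the \emph{population} level, the bracket $\sE_{\vx\sim\gD_\mX}\ell_\mW-\sE_{\vx\sim\gD_\mX}\ell^{\mathrm{lin}}_\mW$ forces you to prove the linearization bound $\|\gN(\vx;\mW)-\gN^{\mathrm{lin}}(\vx;\mW)\|\leq \frac{1-\alpha}{\sqrt{1+\alpha^2}}O(\omega^{4/3}L^2\sqrt{m\ln m})$ for ($\gD_\mX$-almost) every $\vx$ in the support simultaneously. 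You acknowledge this is the main obstacle, but your proposed resolution --- ``deterministic operator-norm bounds on the initial weight matrices, propagated through the layers, without reference to a particular input'' --- cannot deliver the $\omega^{4/3}$ rate. That rate is not an operator-norm phenomenon: it comes from the sparsity of the activation-pattern change, $\|\mD'_l\|_0\leq O(m\omega^{2/3}L)$ (statement 1 of Lemma~\ref{lemma:forward_perturbation}), which is proved by Gaussian anti-concentration of the preactivations $g^{(0)}_{l,j}$ \emph{for a fixed input} followed by a Chernoff bound over the $m$ coordinates. For an adversarial $\vx$ adapted to the realized weights, the number of near-threshold preactivations need not be $O(m\omega^{2/3}L)$, and operator-norm control alone only yields an $O(\omega)$-type bound with a worse $\sqrt{m}$ dependence, which would destroy the $\omega^{4/3}$ term in the statement.

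The paper avoids this entirely by never linearizing at the population level: it applies the standard regression Rademacher bound (Theorem~11.3 of Mohri et al.) to the coordinate function classes $\gG_{k,\omega}$, so the population expectation is handled by symmetrization, and the decomposition $f_k(\mW)=\bigl[f_k(\mW)-f_k(\mW^{(0)})-\langle\nabla f_k(\mW^{(0)}),\mW-\mW^{(0)}\rangle\bigr]+f_k(\mW^{(0)})+\langle\nabla f_k(\mW^{(0)}),\mW-\mW^{(0)}\rangle$ is performed \emph{inside the empirical Rademacher complexity}, where only the $n$ training points appear. The linearization bound (Lemma~\ref{lemma:bound_f}) is therefore only ever invoked at the sample, where a union bound over $i\in[n]$ suffices. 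If you want to keep your population-level decomposition, you would need an additional argument (e.g., a Fubini/Markov argument showing the $\gD_\mX$-measure of inputs where the per-input linearization bound fails is $m^{-\Omega(1)}$, combined with a crude worst-case bound on the bad set), which is workable at the stated confidence $1-e^{-\Omega(\ln m)}$ but is not what you propose; the cleaner fix is to move the linearization inside the empirical complexity as the paper does.
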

\vskip -0.1in
The proof of Lemma~\ref{thm:generalization_overall}, which appears in Appendix~\ref{appd:generatlization_proof}, follows  similar ideas as those of \citet{cao2020generalization} 
but adapted to the different task of regression. 
Theorem~\ref{thm:generalization_in_training_gd} is a consequence of this lemma and two different estimates of the size of $\omega$ during training.  
The first estimate controls $\omega$ during the entire training with GD, regardless of how large the training epoch is, and is expressed in Lemma~\ref{lemma:training_in_perturbation}.  
The second estimate uses direct bounds of the learning steps and provides a better upper bound of $\omega$ when the training epoch is small. 

\subsection{Discussion of Innovation}\label{sec:technical_contribution}

While we followed, extended and improved an existing proof framework, we would like to emphasize some innovation in our proof techniques. 
To begin with, it is difficult to directly extend the previous methods to any leaky ReLU with $\alpha<1$. 
Our idea of rescaling the leaky ReLU activation function, along with the observation that, with rescaled initialization, it is equivalent to using the unscaled leaky ReLU, helped tremendously simplify our initial technical and complex effort. This allowed us to elegantly use the previous ideas
    and further improve them.
Nevertheless, we have made various notable improvements to previous estimates. 
In particular, we improved the lower bound for the gradient established by \citet{zou2019improved} by a factor of $L$. We also eliminated the previous dependence of the convergence rate on a negative power of $L$, which was undesirable as it implied that deeper networks might experience slower convergence. This demonstrates that the convergence rate of deep neural networks is at least comparable to that of shallow neural networks. 
Specifically, the later estimates can be found in the proof of  Lemma~\ref{thm:gradient_bound} in Appendix~\ref{appd:gradient}. 
They are motivated by a suggestion from \citet{allen2019convergence} to incorporate gradients from all layers' parameters, departing from previous estimates that solely relied on the gradients of parameters from the last layer. 
More specifically, improved lower bounds for the gradients from all layers' parameters can be found in Lemma~\ref{lemma:gradient_lower_bound_init} in Appendix~\ref{appd:gradient}. 
We also obtained a tighter bound for the spectral norm of $\mW^{(t)} - \mW^{(0)}$ when using SGD. This improved the lower bound on the width $m$ for training convergence by a factor of order $n^{-8} L^{-2} (n/b)^{-3} \delta^3$.

\begin{figure*}[ht]
\vskip -0.04in
\begin{center}
\includegraphics[width=0.64\columnwidth]{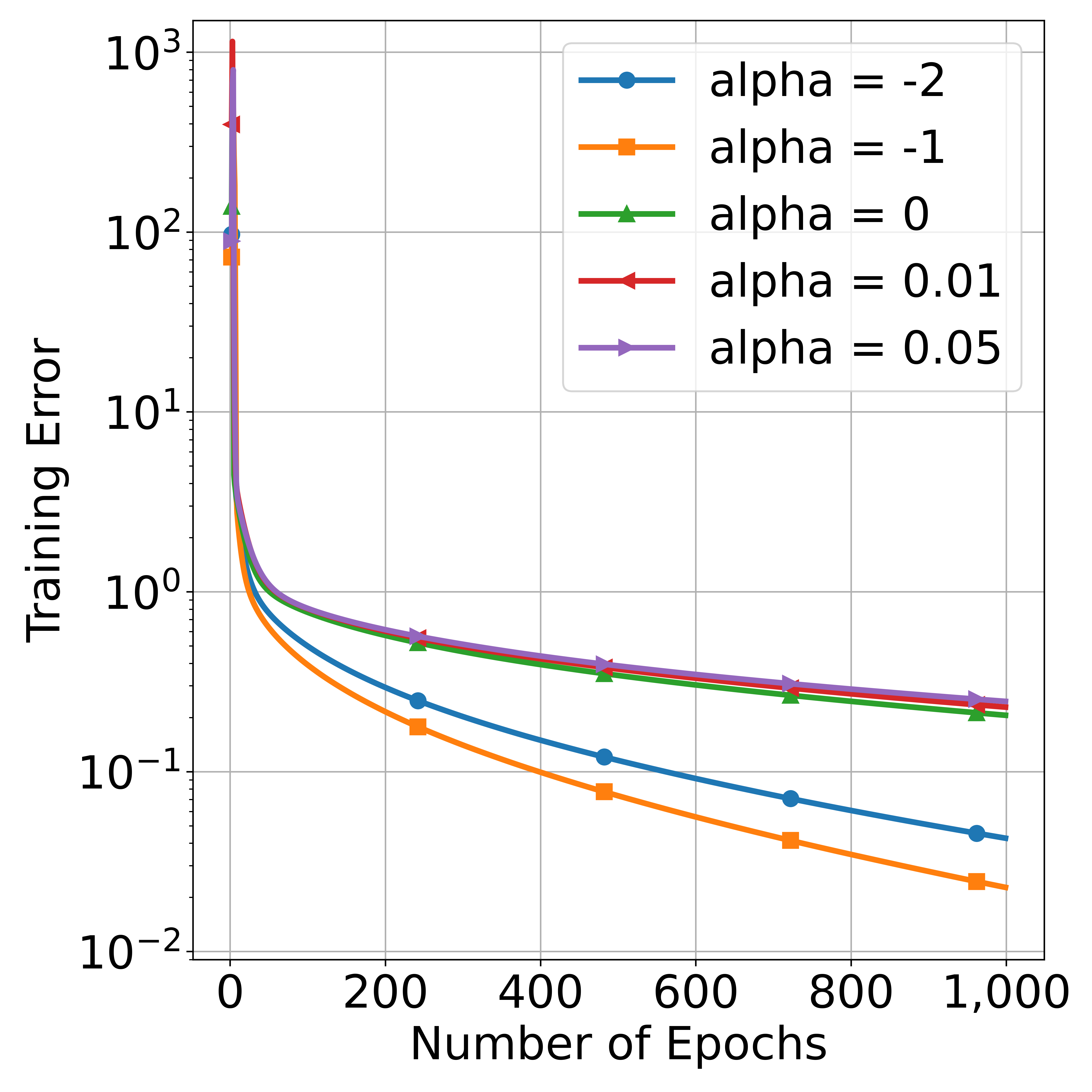}
\includegraphics[width=0.66\columnwidth]{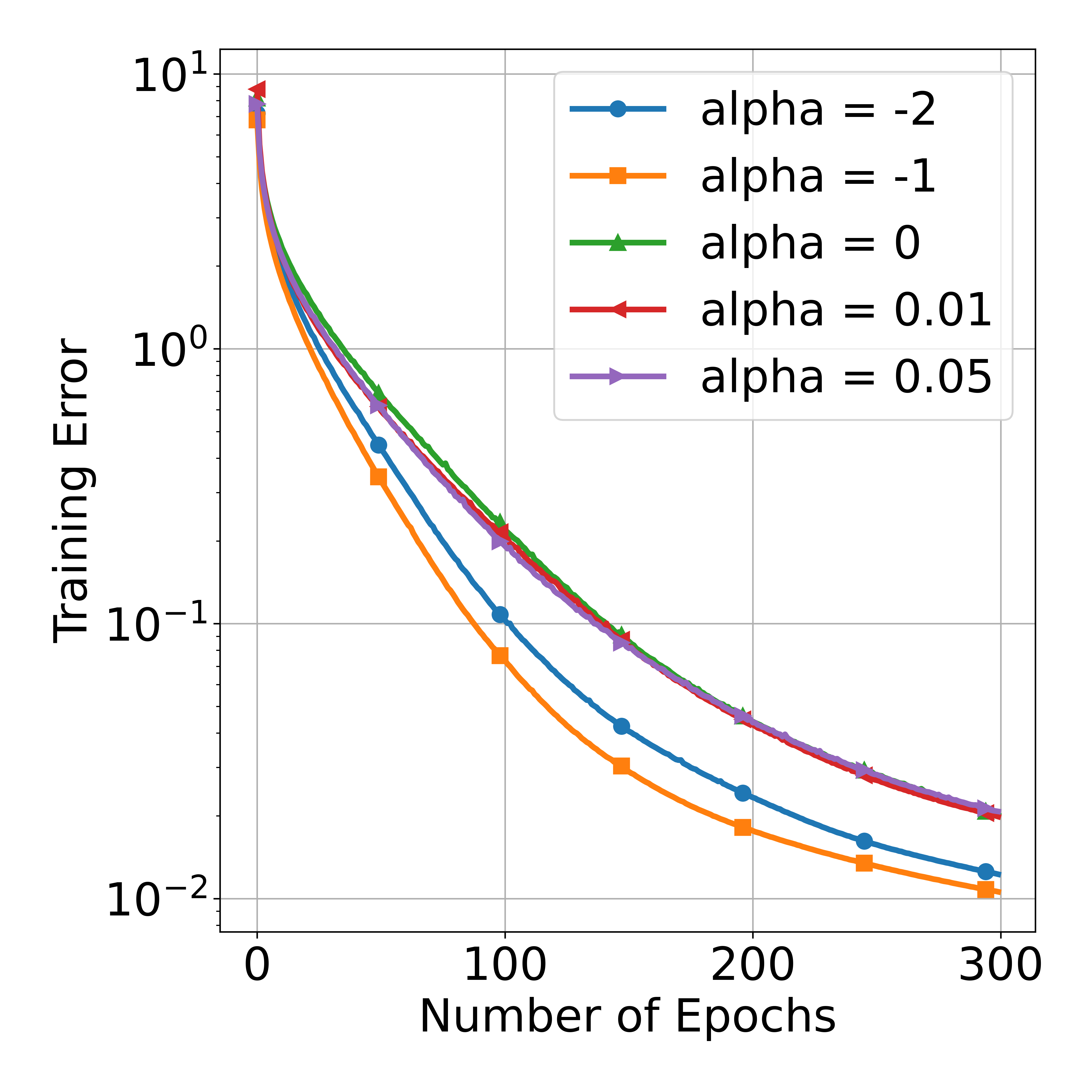}
\includegraphics[width=0.66\columnwidth]{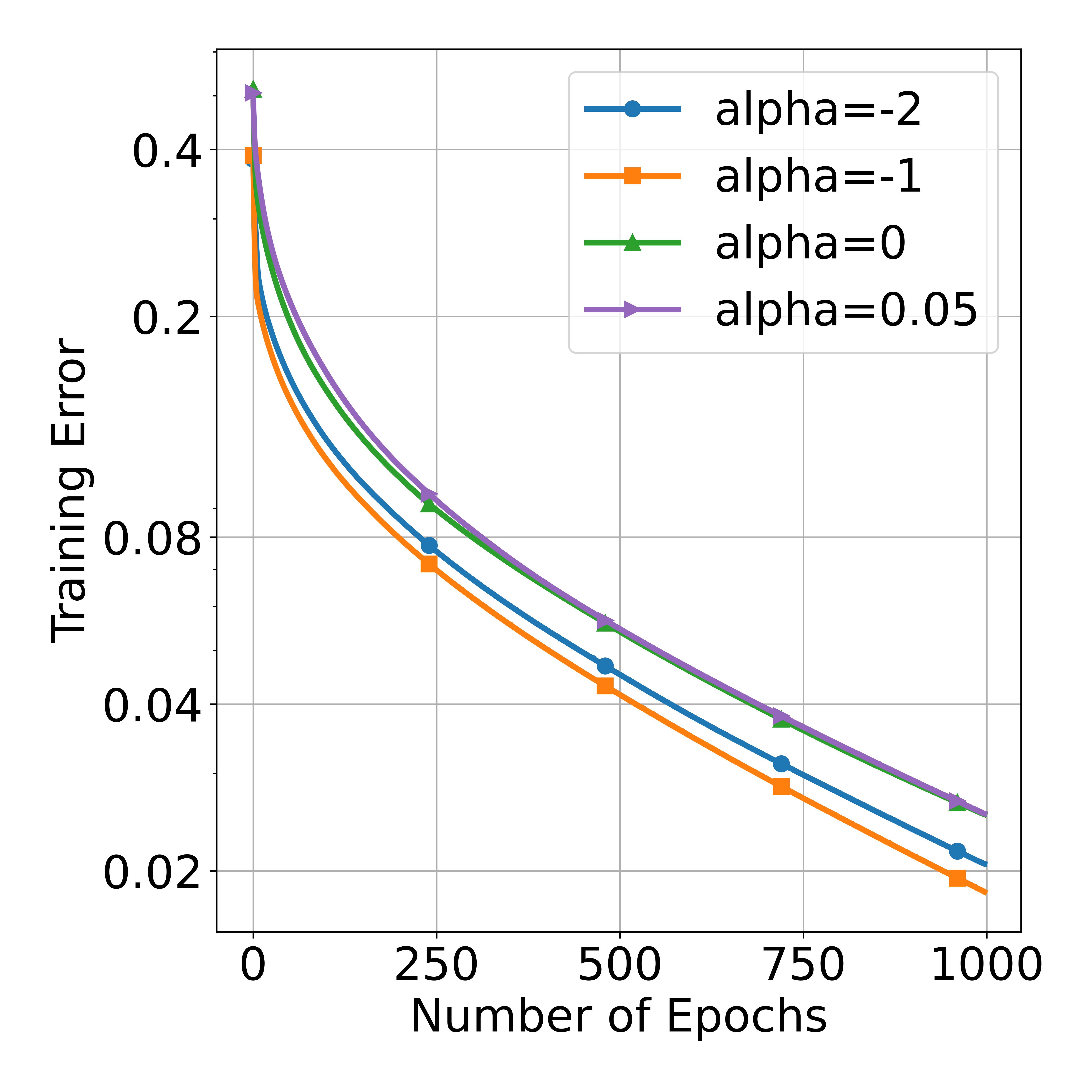}
\vskip -0.1in
\includegraphics[width=0.64\columnwidth]{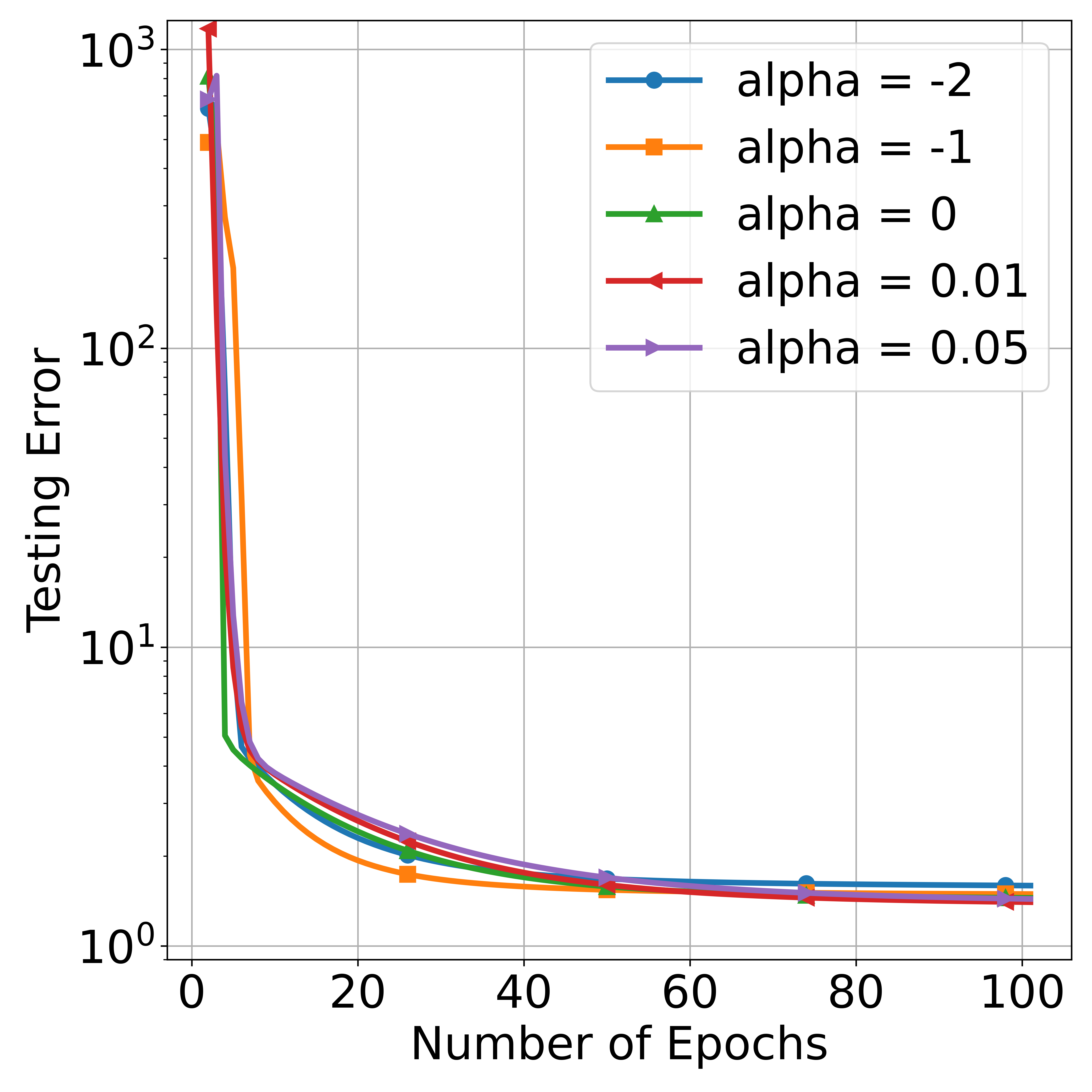}
\includegraphics[width=0.66\columnwidth]{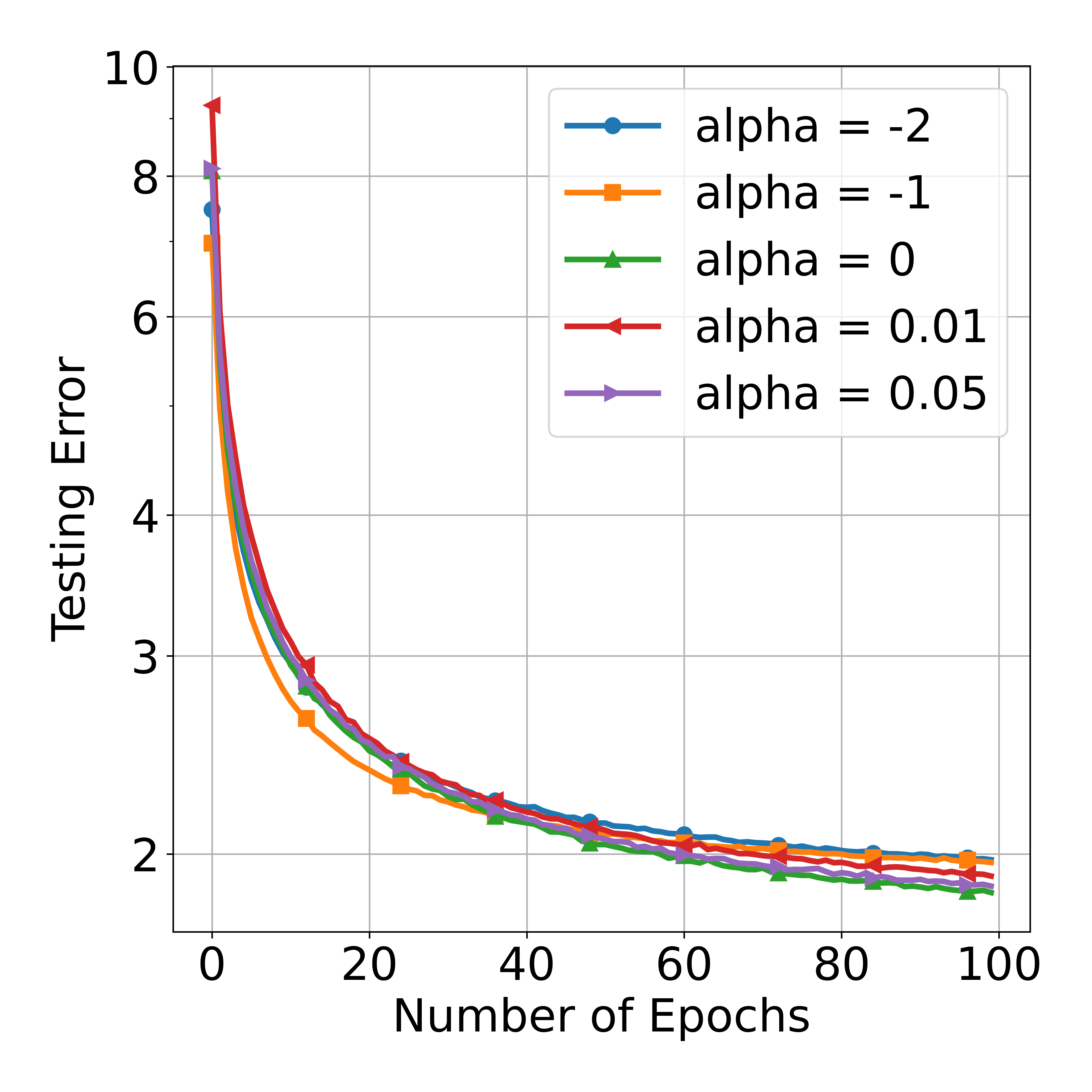}
\includegraphics[width=0.66\columnwidth]{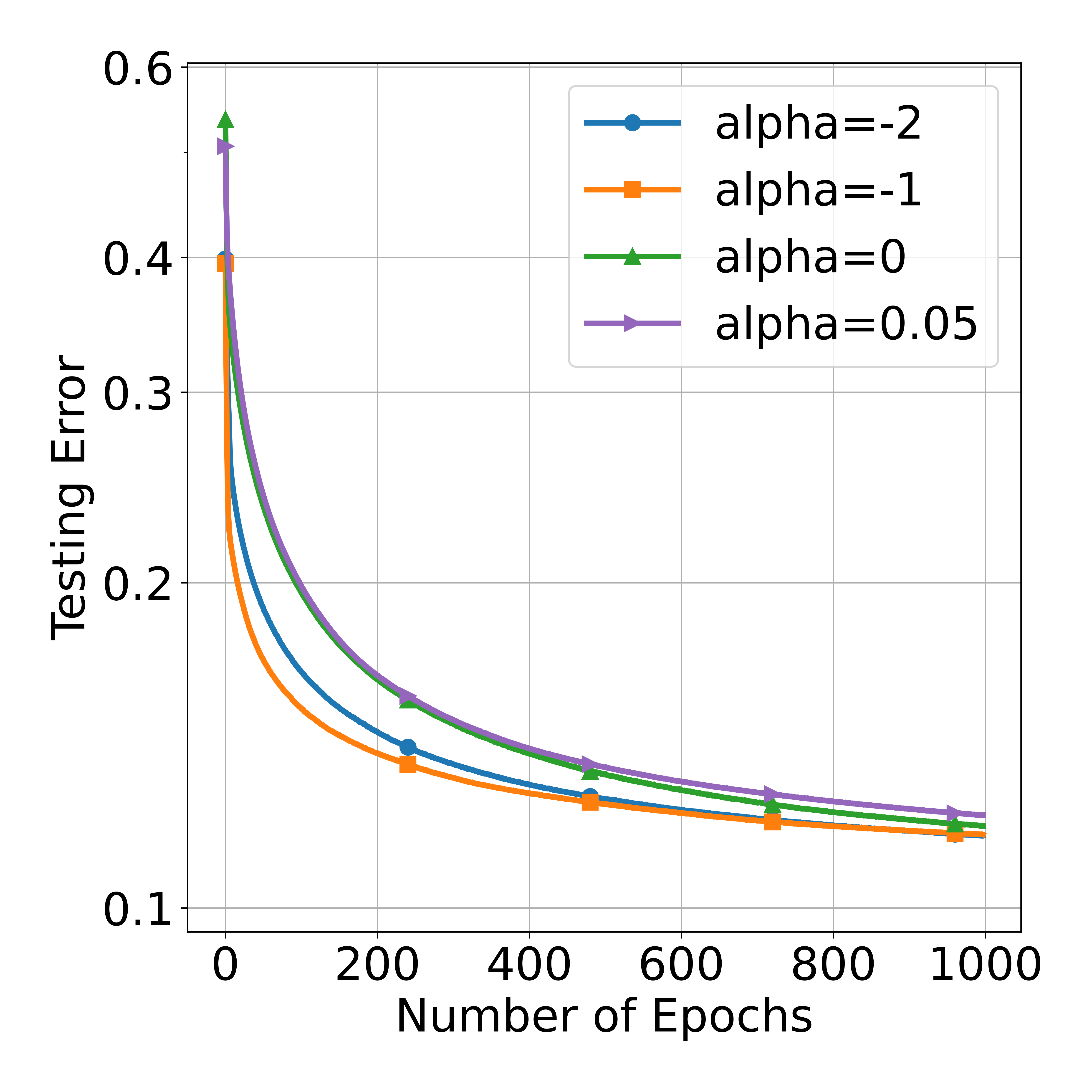}
\vskip -0.1in
\caption{Log-scale training and testing errors using different datasets and different $\alpha$'s. From left to right: synthetic dataset, F-MNIST and CIFAR-10. Top row: training errors. Bottom row: testing errors.}
\label{fig:train_err}
\end{center}
\vskip -0.2in
\end{figure*}

Additionally, a more careful and fresh look helped improve the interpretation of the results. In particular, noting the effect on the number of epochs $t$ on the generalization error, while developing tighter bounds when $t$ was sufficiently small, helped with a meaningful bound on the generalization error. 
Another example includes making all the probabilities dependent on $m$, a choice we deemed more suitable for the overparameterized regime. 
Furthermore, to avoid the hidden dependence of $\delta$ on $L$ in the previous works, we had to develop some careful mathematical estimates (see \eqref{eqn:append:upper_bound_inner_prod_hij} in the appendix), so we could explicitly identify the dependence on $L$ and relax the previous assumption $\delta < O({1}/{L})$ to $\delta < O(1)$.

\section{NUMERICAL EXPERIMENTS}\label{sec:numeric_exp}

As our theory deals only with upper bounds, we conduct numerical experiments to examine the dependence of the actual training convergence rate and generalization error, particularly at an early epoch, on the parameter $\alpha$. Our main goal is to determine whether $\alpha=-1$ is the optimal choice for convergence and generalization in overparameterized NNs with LeakyReLU activation functions. 
Appendix \ref{appd:numeric} provides additional experiments.

\subsection{Setup}\label{sec:numeric_setup}

We summarize our implementation for the following datasets. We provide additional details in \S\ref{appd:architecture_of_nns}.

\textbf{Synthetic dataset:} We simulate a dataset which contains 1,000 data points in $\R^5$ i.i.d.~sampled from a normalized Gaussian distribution, $N(0,\mI_5)$. 
We verified that Assumption~\ref{assump:separation} holds for the generated dataset with $\delta=0.21$. 
We generate real-valued labels, $y$, by the following noisy nonlinear function of $\vx$: 
\begin{align*}
y &= \sin (10x_1 + 20x_2^3) + \cos (3x_3 + 5x_4^2) \\ & \  + \frac{2}{(1+\textrm{ReLU}(0.05 + x_5))^{1/2}} + 2x_1x_5 + \varepsilon,
\end{align*}
where $\varepsilon \sim N(0, 0.01)$.
We construct NNs with five hidden layers, $m=5,000$ and leaky ReLUs with $\alpha\in\{-2,-1,0,0.01,0.05\}$. 
We initialize the NNs by Algorithm~\ref{alg:rescale_initial} and train them with GD using the MSE loss.  

\textbf{F-MNIST:}
This standard grayscale image classification benchmark consists of ten classes
\citep{fmnist}. We build NNs with two hidden layers and width $m=2,000$. We use leaky ReLUs with $\alpha \in \{-2,-1,0,0.01,0.05\}$. We initialize the NNs by  Algorithm~\ref{alg:rescale_initial} and train them using SGD with batch size $64$ and the cross entropy loss. 

\begin{table*}[ht!]
\caption{Training and testing errors for the three main datasets. The first three rows report the training error at the last epoch. The next ones report the testing error at an early epoch ($t=30$ for synthetic, $t=20$ for F-MNIST and $t=200$ for CIFAR-10). 
}
\label{tab:summary_result}
\begin{center}
\begin{small}
\begin{sc}
\begin{tabular}{llcccc}
\toprule
Metric & Dataset&  $\alpha=-2$ & $\alpha=-1$&$\alpha=0$&$\alpha=0.05$
\\
\midrule
\multirow{3}{*}{\begin{tabular}[c]{@{}l@{}}Final training \\error\end{tabular}} & Synthetic &  0.039$\pm 0.002$ & \textbf{0.022}$\pm0.002$ & 0.197$\pm0.013$&  0.245$\pm0.022$ \\
& F-MNIST& 0.096$\pm 0.009$& \textbf{0.076$\pm 0.008$}& 0.211$\pm 0.018$ & 0.229$\pm 0.032$\\
& CIFAR-10&  0.019$\pm 0.001$ & \textbf{0.018$\pm 0.001$}& 0.024$\pm 0.001$ & 0.027$\pm 0.001$ \\
\midrule
\multirow{3}{*}{
\begin{tabular}[c]{@{}l@{}}{Early Epoch} \\ {testing error} \end{tabular}
}& Synthetic & 1.914$\pm0.067$  & \textbf{1.775}$\pm0.065$ & 2.086$\pm0.173$ & 2.313$\pm0.218$ \\
& F-MNIST& 2.371$\pm 0.103$& \textbf{2.362$\pm 0.053$}& 2.442$\pm 0.067$ & 2.470$\pm 0.092$\\
& CIFAR-10& 0.146$\pm 0.004$& \textbf{0.143$\pm 0.005$}& 0.169$\pm 0.012$
& 0.173$\pm 0.007$ \\
\bottomrule\end{tabular}
\end{sc}
\end{small}
\end{center}
\vskip -0.1in
\end{table*}

\textbf{CIFAR-10:}
This is another standard dataset for image classification \citep{cifar10}. 
It consists of ten classes of RGB natural images. We modify the architecture of VGG19 \citep{vgg} with four convolutional layers (width 512) and two linear layers (width 512) using 
Leaky ReLUs with $\alpha \in \{-2,-1,0,0.05\}$. We use Algorithm~\ref{alg:rescale_initial} to initialize the NNs and train them using SGD with batch size $64$ and cross entropy loss.

\subsection{Results}\label{sec:numeric_res}

Figure~\ref{fig:train_err} demonstrates 
both training errors (top) and testing errors (bottom) for the synthetic dataset, F-MNIST and CIFAR-10 (from left to right) for different $\alpha$s. We remark that we use the testing error as an approximation of the generalization error. 
Observing the training errors in the top row we note that the convergence is fastest for the NN with $\alpha = -1$ and the ranking of $\alpha$ from fastest to slowest convergence corresponds to the one predicted by our theory; that is, if $\alpha$ obtains a lower estimate for $\gamma$ in  \eqref{eq:gamma_1st_theorem} than $\alpha'$, then it results in faster convergence in our experiments. 
Observing the testing errors, we note that around a small training epoch (e.g., 30 for the synthetic dataset, 20 for F-MNIST, and 200 for CIFAR-10), the testing error is smallest when $\alpha=-1$. However, at larger training epochs the gaps of the testing errors are small for most of the $\alpha$s.

To get a better quantitative idea, Table \ref{tab:summary_result} summarizes for the different data sets the training error at the last epoch and 
the testing error at an early epoch. 
We ran the experiments 10 times and reported the mean and standard deviations (std's). We note that the std's are small and for better visualization we did not include them in Figure \ref{fig:train_err}.
We observe that choosing $\alpha=-1$ gives the least final training error in all datasets. Compared to ordinary ReLU, our choice of $\alpha=-1$ reduces the final training error by at least $22\%$ (CIFAR-10) and at most $91\%$ (synthetic). At early training epoch, compared to ordinary ReLU, the choice of $\alpha=-1$ reduces the testing error by at least $4\%$ (F-MNIST) and at most $15\%$ (CIFAR-10). This correlates with the predictions we made by our theory that the optimal bounds of the convergence rate and generalization error (at a sufficiently small epoch) are achieved with $\alpha=-1$.

\begin{figure*}[ht]
\vskip -0.04in
\begin{center}
\includegraphics[width=0.99\columnwidth]{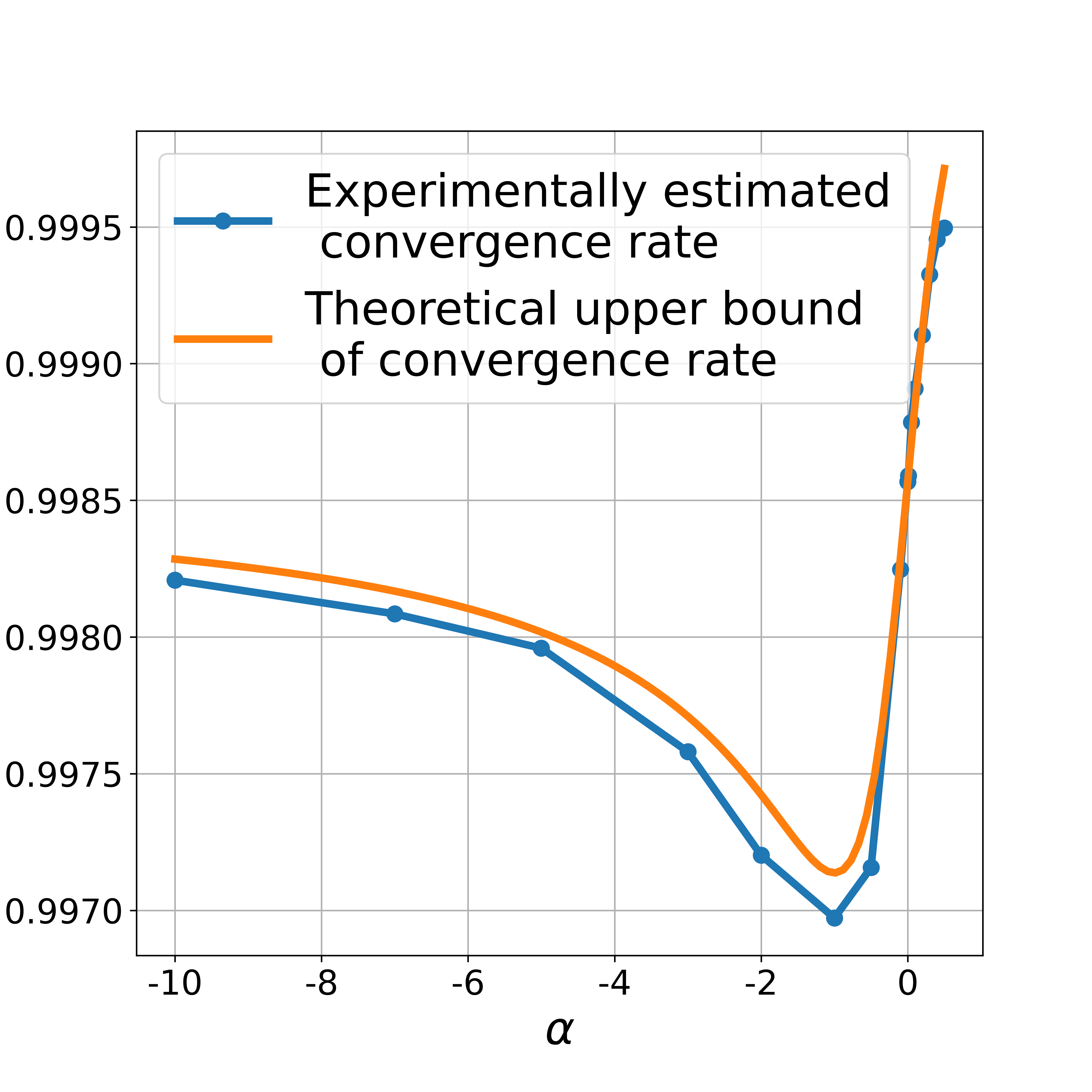}
\includegraphics[width=0.99\columnwidth]{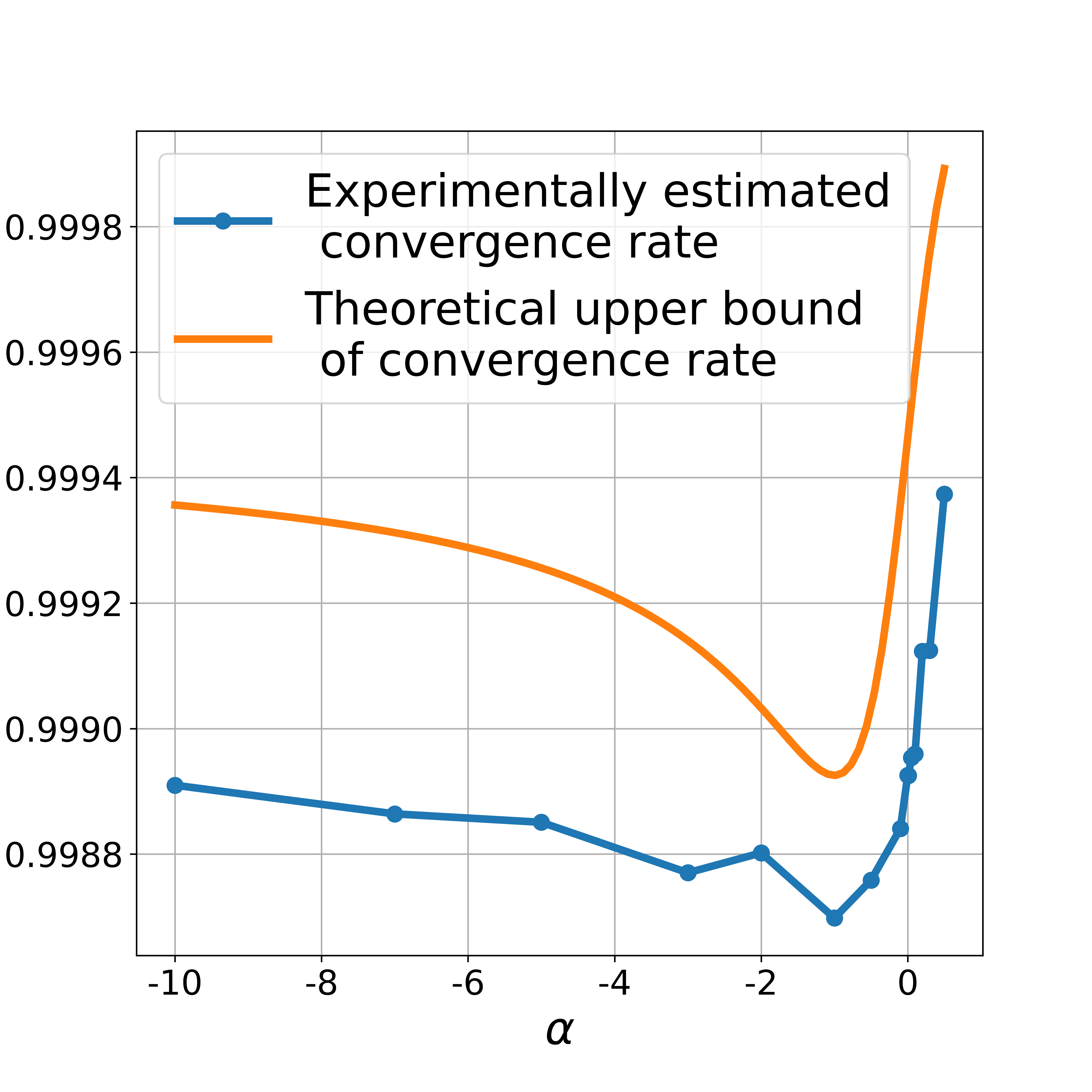}
\caption{Comparison of the ``shape'' of the theoretical upper bound of the training convergence rate  (orange line) with the calculated convergence rate  (blue dots). We used the synthetic dataset (left) and California housing dataset (right) with different values of $\alpha$'s.}
\label{fig:alphas}
\end{center}
\end{figure*}

Lastly, we compare the theoretically predicted upper bounds of the convergence rate and the empirical convergence rates with different $\alpha$s.
For this purpose, we ran experiments using the synthetic dataset and California housing (see its detailed description in Appendix~\ref{appd:architecture_of_nns}) with  choices of $\alpha$ from $[-10, 0.5]$. We approximate the convergence rate for each $\alpha$ using the training errors from the experiments at time steps $100$ (i.e., $\gL^{(100)}$) and $1,000$ (i.e., $\gL^{(1000)}$). The empirical convergence rate is calculated as $$\hat{\gamma}(\alpha) := (\gL^{(1000)}/\gL^{(100)})^{1/900}.$$
To simplify our upper bound, we denote the constant $\Omega\left(\frac{\eta\delta m}{nd} \right)$ in \eqref{eq:gamma_1st_theorem} by $C_\gamma$ and estimate its value based on the calculated convergence rate at $\alpha = 0$ as
\begin{equation}
    \label{eq:cgamma}
C_\gamma:=
C_0(1 - \hat{\gamma}(0)),
\end{equation}
where we choose $C_0=1$ for the synthetic dataset and $C_0=0.5$ for California housing.
Consequently, we obtain our theoretical upper bounds of the convergence rates
\begin{align*}
    \gamma(\alpha) &= 1 - 0.00143\frac{(1-\alpha)^2}{1+\alpha^2} \ \text{ for the synthetic dataset,} \\
    \gamma(\alpha) &= 1 - 0.000537\frac{(1-\alpha)^2}{1+\alpha^2} \ \text{ for California housing.} 
\end{align*}
Figure~\ref{fig:alphas} compares the theoretical upper bound of the convergence rate, $\gamma(\alpha)$, with the experimental convergence rate $\hat{\gamma}(\alpha)$ for the tested values of $\alpha$s.
It is interesting to note that the predicted upper bound dependence on $\alpha$ correlates very well with both numerical experiments.

Appendix~\ref{appd:additional_numerical_results}
includes additional details and numerical results. In particular, it performs experiments similar to the ones reported in Figure~\ref{fig:train_err}, while  incorporating the  datasets MNIST, California housing and 
IMDb movie reviews;  the architectures of  recurrent NNs and transformer NNs; and another loss function for regression.
It also demonstrates how the training and testing errors depend on the NN hyperparameters (e.g., depth and width). 

All codes are available at \url{https://github.com/sli743/leakyReLU}.

\section{DISCUSSION}\label{sec:conclusion}

We established a mathematical theory that clarifies the impact of the Leaky ReLU parameter on bounds of both the training error convergence rate  and the generalization error for overparameterized neural networks. We showed that the absolute value function yields the optimal convergence rate bound for the training error and also the optimal generalization error bound when the training epoch is sufficiently small, with a sufficiently large dataset and a deep NN. Our extensive empirical tests support using the absolute value function for effective training and for effective generalization with sufficiently small epochs and sufficiently large datasets and deep overparameterized NNs.

There are different possible extensions of our theory.  For example, it is useful to extend it to other structured NNs, such as convolutional NNs (CNNs), while allowing any Leaky ReLU. \citet{allen2019convergence} established convergence for overparameterized CNNs with ReLU and one can directly extend their analysis to any Leaky ReLU. Nevertheless, it still remains open to extend the generalization theory to other structured NNs.
Furthermore, it is useful to study the training convergence and generalization for larger classes of activation functions, such as the Gaussian error linear unit \citep{hendrycks2016gelu}.

Our work has three major limitations. First, our generalization error bound is not sufficiently small. Nevertheless, we believe it still indicates some interesting and relevant phenomena, in particular, the behavior when stopping at an early epoch. 
We further improved our estimates for a special class of datasets, although we observed that  it was not sufficiently small in general. This is likely due to the fact that the regression setting poses greater challenges than classification. We also highlighted the possible implications of \citet{kumar2023stochastic} to a generalization estimate given tight training error bounds. 

Second, the lower bound that we require on the width, $m$, is generally unrealistically large and we thus find it important to extend our  theory to lower values of $m$.  
Developing such a theory seems to require a careful analysis of nonlinear dynamical systems, given that current methods aim to linearize the underlying dynamical system. Nevertheless, for the special class of datasets discussed in Appendix~\ref{subsec:appendix:special_func_class}, we were able to provide a satisfying linear dependence of the lower bound of $m$ on $n$. 

Lastly, to theoretically guarantee the use of $\alpha = -1$, we need to develop respective lower bounds. We are not aware of useful and generic lower bounds and we find it rather difficult to develop them. Nevertheless, we still believe that making predictions based on the carefully developed upper bound and empirically testing them is valuable for practitioners.  Indeed, our numerical results indicate the optimality of $\alpha=-1$ in many scenarios of overparameterized networks. On the other hand, we are unaware of much practical guidance that stems from the many other important and fundamental estimates in the study of overparameterized NNs. 
Additionally, Figure~\ref{fig:alphas} shows cases where our upper bound for the convergence rate aligns with the observed convergence rate.

\subsubsection*{Acknowledgements}
This work was partially supported by NSF award DMS 2124913.

\bibliography{NNtheory}
\bibliographystyle{apalike}

\newpage
\appendix
\onecolumn

\begin{center}
{\bf \huge \center Appendix}
\end{center}

Section \ref{append:sec:discussion_generalization_bound} discusses the generalization error bound, established in Theorem~\ref{thm:generalization_in_training_gd}, under different regimes for the number of training epochs. Section \ref{sec:allproofs} completes the proofs of the theorems stated in the main text and establishes four additional theorems: Theorem \ref{thm:generalization_in_training_sgd}, which bounds the generalization error when applying SGD; Theorem \ref{thm:exp_loss:convergence}, which bounds the convergence rate when using another loss function for regression; and Theorems~\ref{append:thm:convergence_special_case} and ~\ref{append:thm:generalization_special_case}, which bound the convergence rate and generalization error, respectively, for a special class of datasets. 
Section \ref{appd:numeric} describes additional numerical experiments and
the full details of implementation for both the previous and the new experiments. 

\section{Discussion of the Generalization Error Bound}
\label{append:sec:discussion_generalization_bound}
In this section, we clarify the estimates for generalization error in \eqref{eqn:generalization_error_gd} for different regimes of the number of training epochs, $t$. 

We first note that the last term in \eqref{eqn:generalization_error_gd} can be sufficiently small for a sufficiently large sample size $n$, so we may ignore it. 
The first bounding term in \eqref{eqn:generalization_error_gd} reflects the training error and the middle two bounding terms represent the NN complexity. 
There is a tradeoff between the training and NN-complexity terms, as we explain below; in particular, we cannot make both of them sufficiently small.
We remark that the closest bound on the generalization error for overparameterized deep NNs was established in the context of classification using GD in \citet{cao2020generalization}. Their generalization bound is independent of the training epoch. Instead, their bound is of order $\Theta(\text{poly}(n)\cdot n^{-1/2})$ and is typically not small even for arbitrarily large $n$. For very special cases (e.g., linear separability) they reduced the term $\text{poly}(n)$ so their overall bound is sufficiently small. 
In this work, we investigate the dependence of the generalization bound on $t$ for regression without making assumptions about the data distribution. 
Nevertheless, one may consider similar special assumptions as in  \citet{cao2020generalization} and apply them to our theory in order to better control our generalization bound.

To better understand the bound in \eqref{eqn:generalization_error_gd}, we apply the bound on $\gamma$ from Theorem \ref{thm:gd_main_thm} and our choice of $m$. We first quickly show that $T$ is at order of $\Theta((nL)^2)$, from \S\ref{sec:idea_proof}, we know that 
    $$T = \ln(\epsilon/\gL(\mW^{(0)}))/\ln\gamma\leq\Theta(\ln(\epsilon/n\sqrt{\ln m})/\ln\gamma),
    $$
by using \eqref{eqn:gd_conv} and $\eta = \Theta(d/(nmL^2))$, this upper bound is $\Theta((nL)^2)$, and when $n$ is large, a lower bound with the same order can be achieved.
We observe two different regions of $t\leq\Theta((nL)^{2})$ (in \S\ref{sec:idea_proof}, we show that 
$\Theta((nL)^{2})$  approximates $T$). 
When  $t =  \Theta\left((nL)^{1-\kappa}\right)$, where $0<\kappa<1$, the first 3 terms of $R(\mW^{(t)})$ are bounded by
\begin{equation*}
 \exp\mleft(-\Omega\mleft(\frac{(1-\alpha)^2\delta}{(1+\alpha^2) (nL)^{1+\kappa}} \mright) \mright)O(\ln m) +  \frac{(1-\alpha)}{\sqrt{1+\alpha^2}} \, O\mleft(\frac{d^{1/6}\delta^{2/3}}{n L^{11/6} (nL)^{4\kappa/3}}\mright) + O\mleft(\frac{\sqrt{d\ln m}}{(nL)^{\kappa}}\mright).
\end{equation*}
The last two terms above are sufficiently small for sufficiently large $n$ or $L$ 
and the first training term is of the order $O(\ln m)$ and is thus the dominant one.
In practice, it can be reduced through careful initialization.
We note that this dominant term is minimized at $\alpha=-1$. When $n$ and $L$ are not sufficiently large and the second bounding term is comparable to the first term, then the bound is minimized at a certain $\alpha$ between $-1$ and $1$.
If, on the other hand, $t = \Omega((nL)^{(1+\kappa)})$, where $0 < \kappa \leq 1$, then the order of the NN-complexity terms of \eqref{eqn:generalization_error_gd} is $O(n^{\min\{\kappa, 1/2+\tau\}}L^{\min\{\kappa, 2+\tau\}})$, which becomes extremely large when $n$ and $L$ grow. 
This illustrates the overfitting phenomenon in neural network training, where the generalization error bound increases significantly as the training error approaches zero. 
Overall, we note that a smaller bound is obtained when $t=\Theta((nL)^{1-\kappa})$ and moreover overfitting occurs when $t=\Theta((nL)^{1+\kappa})$.  These observations support the benefit of early stopping. We remark that when $t=T$, which is roughly at $\Theta((nL)^{2})$, we can express the upper bound in \eqref{eqn:generalization_error_gd}, excluding its last term, in terms of $\epsilon$ as follows:
\begin{equation*}
\begin{split}
&\epsilon +\min\left\{\left(\frac{(1-\alpha)^{11/3}}{(1+\alpha^2)^{11/6}}\right) O\left(\frac{d^{1/3}\delta^{4/3}}{ m^{1/6}n^{10/3}L^{10/3}}  \ln^{4/3}(n\sqrt{\ln m}/\epsilon)\right), O\left(\frac{d^{3/2+\tau} \delta^\tau n^{1/2+\tau}}{L^{1/2 - \tau}\ln m}\right)\right\} \\ & \quad +
\min\left\{\left(\frac{(1-\alpha)^2}{1+\alpha^2}\right)O\left(\frac{d^{1/2} \delta\sqrt{\ln m} }{n^3L^3}\right)\ln(n\sqrt{\ln m}/\epsilon), O\left(\frac{n^{1/2+\tau} L^{2+\tau} d^{1/2+\tau}}{\delta^{1/2 -\tau} \ln m}\right)\right\}.
\end{split}
\end{equation*}

The examination of our above theoretical results on generalization error bounds reveals two weaknesses when compared to the convergence theorems, that is, Theorems~\ref{thm:gd_main_thm} and \ref{thm:sgd_main_thm}.
Firstly, unlike the convergence rate that guarantees the training error's convergence, the generalization error bound doesn't assure a convergence to zero. Consequently, this bound may not offer a precise guideline about the optimal choice of $\alpha$, especially when the number of epochs is large.
Secondly, $\alpha=-1$ is the optimal choice for the generalization error bound when training terminates early and both $n$ and $L$ are sufficiently large. In contrast, the convergence theorem asserts that $\alpha=-1$ consistently ensures the fastest convergence. Numerical results align with these observations.

\section{Proofs}
\label{sec:allproofs}

We detail the proofs of Lemmas~\ref{thm:semi-smooth}, ~\ref{thm:gradient_bound} and~\ref{thm:generalization_overall} and the conclusion of Theorems~\ref{thm:gd_main_thm}, ~\ref{thm:sgd_main_thm} and~\ref{thm:generalization_in_training_gd} from these lemmas. Moreover, we formulate and prove some the following additional theorems: a theorem that bounds the generalization error when using SGD, which is the analog of Theorem \ref{thm:generalization_in_training_gd} for SGD instead of GD; theorems that improve our estimates for for a special class of datasets; and a theorem for the convergence theory when using a different loss function.
Section~\ref{appd:notations} introduces notation needed for the proof, \S~\ref{appd:initialization} quantifies the bounds for the initial weights, \S~\ref{appd:perturbation} extends the latter bounds to weights within a small perturbation around the initialization, \S~\ref{appd:gradient} proves the lower and upper bounds for the gradient at initial weight and within a small perturbation (Lemma~\ref{thm:gradient_bound}), \S~\ref{appd:semismooth} shows the proof of semi-smoothness (Lemma~\ref{thm:semi-smooth}), \S~\ref{appd:main_proof_gd} and \S~\ref{appd:main_proof_sgd} conclude the main theorem for gradient descent and stochastic gradient descent (Theorem~\ref{thm:gd_main_thm} and~\ref{thm:sgd_main_thm}), 
\S\ref{appd:generatlization_proof} proves the upper bound of the generalization error for a class of NN functions (Lemma~\ref{thm:generalization_overall}), \S\ref{appd:generalization_gd} concludes the generalization error bound for GD (Theorem~\ref{thm:generalization_in_training_gd}), \S\ref{appd:generalization_sgd} formulates and clarifies an upper bound of the generalization error for SGD, \S\ref{subsec:appendix:special_func_class} introduces a special dataset and establishes theorems on the convergence rate bound and generalization error bound using this dataset, and \S\ref{subsec:append:exponential_loss} extends Theorem~\ref{thm:gd_main_thm} and provides bounds of the convergence rate for a special loss function.

For the study of training convergence, we follow the notation and proof framework of \citet{allen2019convergence}, while incorporating the improvements suggested by \citet{zou2019improved} and some additional ones. For the study of the generalization error, we follow the proof framework of \citet{cao2020generalization} while extending the latter work to the task of regression. 
Whenever previous ideas require adaptation to Leaky ReLUs or to some of our technical contributions (summarized in \S\ref{sec:technical_contribution}), we prefer to repeat and even add more details so the reader can fully follow the current text and will not need to switch between references. However, when we feel that the ideas of previous works directly extend to our setting we formulate the analogous lemmas without proving them.

\subsection{Notation}\label{appd:notations}
Throughout this appendix, we denote the entries of a vector $\vx\in\sR^{m}$ by $x_j$ or $(\vx)_j$, $j\in [m]$. We denote the entries of a matrix $\mA\in\sR^{m\times m}$ by $A_{ij}$ or $(\mA)_{ij}$, $i,j\in [m]$. For $i\in[m]$, the $i$th row vector of a matrix $\mA$ is denoted by $\mA_{i,\cdot}$ and its $i$th column vector is denoted by $\mA_{\cdot,i}$. The default norm $\|\cdot\|$ is the $\ell_2$ norm. We denote by $1_{E}$ the indicator function of the event $E$, which equals $1$ when $E$ occurs and $0$ otherwise. We denote by $\gB^m_1$ the unit ball in $\sR^m$. 

We use the rescaled leaky ReLU introduced in \eqref{eqn:leaky_relu_rescaled} as the activation function of the neural networks under consideration. 
When acting on each coordinate of a vector $\vx \in \sR^p$ we express its action using the following diagonal matrix $\mD_{\vx}$:
\begin{equation}
    \tilde{\sigma}_\alpha(\vx) = \mD_{\vx} \vx, \ \text{ where } (\mD_{\vx})_{jj} =  \frac{1_{x_j \geq 0}}{\sqrt{1+\alpha^2}} +  \frac{\alpha 1_{x_j < 0}}{\sqrt{1+\alpha^2}} \ \ \text{ and for } k \neq j \ (D_{\vx})_{kj}=0.
    \label{eqn:append:def_sigmaalpha}
\end{equation}
For $i \in [n]$ and a data point $\vx_i \in \sR^p$, We inductively define   
\begin{equation}
    \vg_{i, l}:= \mW_l \vh_{i, l-1},  \ \vh_{i, l} := \tilde{\sigma}_\alpha(\mW_l \vh_{i, l-1}) \equiv \tilde{\sigma}_\alpha (\vg_{i, l}), \ \vh_{i, 0} = \mA\vx_i \label{eqn:append:def_hg}
\end{equation}
and use the notation $h_{i,l,k}:=(\vh_{i,l})_k$ and $g_{i,l,k}:=(\vg_{i,l})_k$.
We denote 
$$\mD_{i,l} := \mD_{\vg_{i, l}} \ \text{ and } D_{i, l, jj} :=  (\mD_{i,l})_{jj} \equiv 
\frac{1_{g_{i, l,j} \geq 0}}{\sqrt{1+\alpha^2}} +  \frac{\alpha 1_{g_{i, l,j} < 0}}{\sqrt{1+\alpha^2}}.$$
We further denote $\mD_0 := \mI$ and use the new notation to express the outputs of all hidden layers via matrix products (where according to the notation of \S\ref{sec:problem_setup} $\mW_0 \equiv \mA$ and $\mW_{L+1} \equiv \mB$:
\begin{align*}
    &\vg_{i, 0}  = \vh_{i, 0} = \mA \vx_i,\\
    &\vg_{i, l}  = \mW_l \mD_{i, l-1} \mW_{l-1} \dots \mW_2 \mD_{i, 1} \mW_1\mA \vx_i, \\
    &\vh_{i, l}  = \mD_{i, l}\mW_l \mD_{i, l-1} \mW_{l-1} \dots \mW_2 \mD_{i, 1} \mW_1\mA \vx_i, \\
    &\vg_{i,L+1} :=\mB \vh_{i, L}  \equiv \mB \mD_{i, L}\mW_L \mD_{i, L-1} \mW_{L-1} \dots \mW_2 \mD_{i, 1} \mW_1\mA \vx_i.
\end{align*}

We denote the residual and its elements by 
$$\ve_i := \vg_{i, L+1}-\vy_i, \ e_{i,j}=(\ve_i)_j$$ and the loss function by 
$$\gL(\mW) := \sum_{i=1}^n \textrm{loss}(\vx_i, \vy_i; \mW) := \sum_{i=1}^n \frac{1}{2}\|\vy_i - \vg_{i, L+1}(\vx_i; \mW)\|^2 \equiv \frac{1}{2}\sum_{i=1}^n \|\ve_i\|^2.
$$ 

Section 5 in \citet{higham2019deep} presents a comprehensive derivation for the gradient of the loss function in a neural network. In our case, the activation function derivative can be written as 
$$
\frac{\partial h_{i, l, j}}{\partial g_{i, l, k}} = \delta_{jk}\cdot \left(\frac{1_{g_{i, l, k} \geq 0}}{\sqrt{1+\alpha^2}} +  \frac{\alpha 1_{g_{i, l, k} < 0}}{\sqrt{1+\alpha^2}}\right)\equiv D_{i, l, jk}, \ \text{for} \ l\in [L].
$$
Denoting $\textbf{Back}_{i, L+1}:= \mB$ and $\textbf{Back}_{i, l} := \mB \mD_{i, L} \mW_L \dots \mW_{l}$ (this is the backpropagation operator) we can express the derivative of the loss with respect to the $rt$ entry of $\mW_l$, where $r, t\in[m]$, as
$$
\nabla_{(\mW_{l})_{rt}} \textrm{loss}(\vx_i, \vy_i;\mW) =(\textbf{Back}^T_{i, l+1}\ve_i)_r D_{i, l, rr} \vh_{i, l-1, t}.
$$
Similarly, the gradient of the loss according to the matrix $\mW_l$ and according to its $k$th row vector,  $(\mW_{l})_{k,\cdot}$, can be expressed as 
\begin{align*}
    \nabla_{\mW_l} \text{loss}(\vx_i, \vy_i; \mW) & = \mD_{i, l}\textbf{Back}^T_{i, l+1}\ve_i  \vh_{l-1}^T(\vx_i),\\
    \nabla_{(\mW_{l})_{k,\cdot}} \text{loss}(\vx_i, \vy_i;\mW) & = D_{i, l, kk}\langle(\textbf{Back}_{i, l+1})_{\cdot, k}, \ve_i\rangle  \vh_{l-1}(\vx_i).    
\end{align*}

For a vector $\vv \in \sR^p$, we denote its $\ell_2$ norm by $\|\vv\|_2$ (where $\|\vv\|_2^2 = {\sum_{j \in [p]} v_j^2}$), $\ell_\infty$ norm by $\|\vv\|_\infty = \max_{j \in [p]} |v_j|$, and $\ell_0$ ``size" by $\|\vv\|_0 = |\{j \in [p]: v_j \neq 0\}|$. For a matrix $\mX \in \sR^{m \times m}$, we denote its spectral norm by $\|\mX\|_2=\max_{j \in [m]} |\lambda_j(\mX)|$, Frobenius norm by $\|\mW\|_F = \sqrt{\sum_{i,j \in [m]} W_{ij}^2}$, and $\ell_0$ ``size" by $\|\mD\|_0 = |\{(i, j) \in [m]^2: D_{ij}\neq 0\}|$.
For a vector of matrices $\mW =(\mW_1,\ldots,\mW_l)$, where $\mW_1$, $\ldots$, $\mW_l \in \sR^{m \times m}$, we define its $\ell_{2}$ norm by $\|\mW\|_{2} := \max_{l\in[L]} \|\mW_l\|_2$ and Frobenius norm by $\|\mW\|_F := \sqrt{\sum_{l=1}^L \|\mW_l\|_F^2}$. For simplicity of notation we use $\|\cdot\|$ instead of $\|\cdot\|_2$ for vectors, matrices and vectors of matrices.

Throughout this appendix, we apply Algorithm~\ref{alg:rescale_initial} to initialize the weights $\mW$, $\mA$, $\mB$ for the neural network.

We use the big $O$, $\Omega$ and $\Theta$ notation. That is,  $f=O(N)$ or $f=\Omega(N)$ if there exists $C>0$ and $N_0 \in \sN$ such that 
$f \leq CN$ or $f \geq CN$, respectively, for all $N>N_0$.
Also, $f=\Theta(N)$ if and only if $f=O(N)$ and $f=\Omega(N)$.

Throughout this appendix, we may neglect the  subscript $i$ or superscripts $(t)$ or $(0)$ when there is no confusion.

\subsection{Initialization}\label{appd:initialization}
In this section, we focus on properties of the weights initialized by  Algorithm~\ref{alg:rescale_initial} without training. We thus denote $\mW:= \mW^{(0)}$ and for any input vector $\vx \in \sR^p$ and $l \in [L]$ 
\begin{align*}
    \vg_0&=\vh_0 := \mA \vx, \\
    \vg_l & := \mW^{(0)}_l\mD_{\vg_{l-1}}\ldots \mW_2^{(0)} \mD_{\vg_1} \mW^{(0)}_1 \mA \vx,\\ 
    \vh_l & := \mD_{\vg_l} \vg_l.
\end{align*}
For simplicity, we denote $\mD_l:=\mD_{\vg_l}$.

We first establish Lemma~\ref{lemma:initial_vector_norm} which controls the norms of the outputs of the hidden layers with high probability.  
We then establish Lemma~\ref{lemma:separation_layers} that upper bounds $\max_{i\neq j\in [n]}\langle\vh_{i, l}/\|\vh_{i, l}\|, \vh_{j, l}/\|\vh_{j, l}\|\rangle$ for all $l\in [L]$. Lastly, Lemma~\ref{lemma:initial_forward_matrix_bound} summarizes useful bounds of the norms of some relevant matrices.

We remark that the proof of Lemma~\ref{lemma:initial_vector_norm} adapts ideas of \cite{allen2019convergence} to the setting of Leaky ReLUs. The proof of Lemma~\ref{lemma:separation_layers} follows ideas of \cite{zou2019improved}, while assuming that $\delta < O(1)$ instead of $\delta < O(1/L)$ and applying minor adaptation to Leaky ReLUs. At last, Lemma~\ref{lemma:initial_forward_matrix_bound} directly follows the same proof argument in \citet{allen2019convergence} (while using the conclusion of Lemma~\ref{lemma:initial_vector_norm}) and we thus omit its proof. 

\begin{lemma}
\label{lemma:initial_vector_norm}
Assume the setup of \S\ref{sec:problem_setup} and the above notation. If $\vx \in \sR^p$, $\|\vx\| = 1$ and $\epsilon$ is a fixed number in $(\Omega(\frac{L}{m}), 1)$, then  
\begin{align*}
    \|\vh_l\| \in [1-\epsilon, 1+\epsilon] \ \text{ for all } \ l \in \{0\}\cup[L]
    \ \text{ with probability at least } 1-e^{-\Omega(m\epsilon^2/L)}.
\end{align*}
\end{lemma}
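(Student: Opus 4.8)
The plan is to exploit that the rescaled activation $\tilde{\sigma}_\alpha$ is designed so that a single layer preserves the squared signal norm \emph{in expectation}, and then to show that the realized norm concentrates sharply around this expectation. First I would establish the one-step norm identity. Conditioning on $\vh_{l-1}$, rotational invariance of the i.i.d.\ $N(0,2/m)$ entries of $\mW_l^{(0)}$ gives $\vg_l = \mW_l^{(0)}\vh_{l-1} \sim N(0,\tfrac{2}{m}\|\vh_{l-1}\|^2\mI)$, so the coordinates of $\vg_l$ are i.i.d.\ centered Gaussians of variance $s^2 := \tfrac{2}{m}\|\vh_{l-1}\|^2$. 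For a single $Z\sim N(0,s^2)$, symmetry yields $\E[Z^2 1_{Z\geq 0}] = \E[Z^2 1_{Z<0}] = s^2/2$, whence
\begin{equation*}
\E\bigl[\tilde{\sigma}_\alpha(Z)^2\bigr] = \frac{1}{1+\alpha^2}\cdot\frac{s^2}{2} + \frac{\alpha^2}{1+\alpha^2}\cdot\frac{s^2}{2} = \frac{s^2}{2}.
\end{equation*}
Summing over the $m$ coordinates gives $\E[\|\vh_l\|^2\mid\vh_{l-1}] = m\cdot\tfrac{s^2}{2} = \|\vh_{l-1}\|^2$; this is precisely where the $1/(1+\alpha^2)$ rescaling earns its keep, since the identity holds for \emph{every} $\alpha$. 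Moreover, since $\mW_l^{(0)}v/\|v\|$ has the same law for every direction $v$, the ratio $R_l := \|\vh_l\|^2/\|\vh_{l-1}\|^2$ is independent of the earlier layers and distributed as $\|\tilde{\sigma}_\alpha(\bm{\xi})\|^2$ with $\bm{\xi}\sim N(0,\tfrac{2}{m}\mI)$; hence $R_1,\dots,R_L$ are i.i.d.\ with mean $1$, independent of $\|\vh_0\|^2$, and $\|\vh_l\|^2 = \|\vh_0\|^2\prod_{k=1}^l R_k$.

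Next I would prove concentration of each factor. The base case $\|\vh_0\|^2 = \|\mA\vx\|^2$ is a normalized $\chi^2_m$ variable (its $m$ coordinates are i.i.d.\ $N(0,1/m)$ since $\|\vx\|=1$), so $\Pr(|\|\vh_0\|^2-1|>t)\le 2e^{-\Omega(mt^2)}$ for $t\in(0,1)$. For a factor, writing each coordinate as $\tilde{\sigma}_\alpha((\vg_l)_j)^2 = c_j\,\tfrac{2}{m}\|\vh_{l-1}\|^2 Z_j^2$ with $Z_j\sim N(0,1)$ i.i.d.\ and $c_j\in\{1/(1+\alpha^2),\,\alpha^2/(1+\alpha^2)\}\subseteq[0,1]$ shows $R_l = \tfrac{2}{m}\sum_j c_j Z_j^2$ is an average of $m$ independent sub-exponential terms of mean $1/2$; a Bernstein bound then gives $\Pr(|R_l-1|>t)\le 2e^{-\Omega(m\min(t^2,t))}$, i.e.\ sub-Gaussian behavior with variance proxy $O(1/m)$ in the range $t<1$.

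Finally I would combine the layers multiplicatively through logarithms. Writing $\ln\|\vh_l\|^2 = \ln\|\vh_0\|^2 + \sum_{k=1}^l \ln R_k$, the summands $\ln R_k$ are i.i.d.\ with a small negative mean $\E[\ln R_k] = -\Theta(1/m)$ (the second-order Jensen gap, since $\Var(R_k)=O(1/m)$) and sub-Gaussian fluctuations of variance proxy $O(1/m)$. Thus $\sum_{k\le l}\ln R_k$ carries a deterministic drift of order $-L/m$ and, by Bernstein applied to the centered i.i.d.\ sum, deviates from its mean by more than $\epsilon/2$ with probability at most $e^{-\Omega(m\epsilon^2/L)}$. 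The hypothesis $\epsilon=\Omega(L/m)$ is used precisely to absorb the drift into $\epsilon/2$, after which $|\ln\|\vh_l\|^2|\le\epsilon$ yields $\|\vh_l\|\in[1-\epsilon,1+\epsilon]$; handling all $l\in\{0\}\cup[L]$ simultaneously costs only an extra $L+1$ union-bound factor, absorbed into the constant in the exponent.

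\textbf{Main obstacle.} The delicate point is the sharp $1/L$ (rather than $1/L^2$) scaling in the exponent. A crude layer-by-layer union bound would force each layer to preserve the norm within $\epsilon/L$ and deliver only $e^{-\Omega(m\epsilon^2/L^2)}$; obtaining $e^{-\Omega(m\epsilon^2/L)}$ requires using that the log-ratios are independent mean-near-zero increments, so their fluctuations accumulate in variance ($\sqrt{L/m}$) rather than in magnitude. Coupled with this, one must carefully control the systematic second-order drift $\sum_k\E[\ln R_k]=-\Theta(L/m)$ — the reason the statement restricts to $\epsilon=\Omega(L/m)$ — and prevent the genuinely sub-exponential (not sub-Gaussian) tails from degrading the $\epsilon^2$ exponent over the whole depth.
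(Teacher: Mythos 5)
Your outline follows essentially the same route as the paper's proof: the paper also writes $\ln\|\vh_b\|^2 = \ln\|\vh_0\|^2 + \sum_{l=1}^b \ln\Delta_l$ with $\Delta_l = \|\vh_l\|^2/\|\vh_{l-1}\|^2$, shows $\sE(\Delta_l\mid\vh_{l-1})=1$ and $\sE(\ln\Delta_l\mid\vh_{l-1})\in[-4/m,0]$, proves that $\ln\Delta_l$ is $O(m^{-1})$-sub-Gaussian, and concludes with Azuma's inequality for sub-Gaussian increments, using $\epsilon=\Omega(L/m)$ to absorb the drift — exactly your plan, including the reason the exponent scales as $m\epsilon^2/L$ rather than $m\epsilon^2/L^2$. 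Your observation that the ratios $R_l$ are genuinely i.i.d.\ (via positive homogeneity of $\tilde\sigma_\alpha$ and rotational invariance of $\mW_l^{(0)}$) is a clean simplification of the paper's conditional/martingale formulation, and it is correct.

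There is, however, one step that fails as literally stated. You assert that $\ln R_k$ has finite mean $-\Theta(1/m)$ and sub-Gaussian fluctuations with variance proxy $O(1/m)$. For $\alpha=0$ (ReLU, which the lemma must cover since it is the case $\alpha=0$ of the setup), the event that every coordinate of $\vg_l$ is negative has probability $2^{-m}>0$, and on that event $R_k=0$, so $\sE[\ln R_k]=-\infty$ and $\Pr(\ln R_k<-t)\geq 2^{-m}$ for all $t$, which is incompatible with a variance proxy of order $1/m$. The fix is the truncation the paper performs: condition on the number of positive preactivations lying in $[0.4m,0.6m]$ (which holds with probability $1-e^{-\Omega(m)}$ by a Chernoff bound), define a surrogate $\tilde\Delta_l$ equal to $\Delta_l$ on this event and to $1$ otherwise, and only then invoke the $\chi^2$ log-moment bound $\sE\ln\frac{2}{m}H\geq -4/m$ and the sub-Gaussian tail. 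With that modification (and noting the discarded event costs only $e^{-\Omega(m)}$, negligible against $e^{-\Omega(m\epsilon^2/L)}$), your argument goes through and matches the paper's.
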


\begin{proof}
We first prove the lemma for $l=0$. Due to the initialization of the input layer by Algorithm~\ref{alg:rescale_initial}, $\vh_0 = \mA \vx\sim N(0, \|\vx\|^2/m) = N\left(0, 1/m \right)$. Therefore, $m\|\vh_0\|^2 \sim \chi^2 (m)$, where $\chi^2(m)$ denotes the chi-square distribution with $m$ degrees of freedom. Using the tail bound for this sub-Gaussian distribution
\begin{equation}
    \sP\left(\left| \|\vh_0\|^2 - 1 \right| > \frac{\epsilon}{2}\right) \leq 2 e^{-m\epsilon^2/32} \leq e^{-\Omega(m\epsilon^2)}.\label{eqn:append:initial_layer}
\end{equation}

We next prove the lemma for $l \geq 1$. For each layer $l$, we analyze the distribution of each entry of $\vh_l$, and denote by $h_{l, j}:=(\vh_l)_j$, $j\in [m]$, conditioned on the output from the former layer $\vh_{l-1}$. We note that the randomness of $\vh_l$ comes from $\mW_l$ given the fixed $\vh_{l-1}$.

We note the following expression for $h_{l,j}$, which follows from \eqref{eqn:append:def_sigmaalpha} and  \eqref{eqn:append:def_hg}: 
\begin{equation}
\label{eq:hij}
h_{l, j} = \tilde{\sigma}_\alpha (g_{l,j}) = 1_{g_{l, j}>0} \frac{g_{l, j}}{\sqrt{1+\alpha^2}} + 1_{g_{l, j}\leq 0} \frac{\alpha g_{l, j}}{\sqrt{1+\alpha^2}} .
\end{equation}
We remark that unlike previous analyses \citep{allen2019convergence,zou2019improved}, we need to deal with two different terms in the sum in order to address Leaky ReLU and note just ReLU. We observe that due to the initialization of $\mW_l$ and \eqref{eqn:append:def_hg}, $g_{l, j}\sim N (0, {2\sum h_{l-1,k}^2}/{m}) = N(0, {2\|\vh_{l-1}\|^2}/{m})$. By the symmetry of the normal distribution, $g_{l, j}$ is positive with probability 0.5.  Thefore, the random variable
$$B_j := 1_{g_{l, j}>0}$$ is Bernoulli with probability 0.5, that is, $B_j\sim \text{B}(0.5)$. We further note that $B_j g_{l, j} = B_j g_{l, j}| g_{l, j} > 0$. We thus rewrite \eqref{eq:hij} as
\begin{equation}
\label{eq:hij_2}
h_{l, j} = \frac{1}{\sqrt{1+\alpha^2}} B_j g_{l, j}|\{g_{l, j} > 0\} - \frac{\alpha}{\sqrt{1+\alpha^2}} (1-B_j) (-g_{l, j})|\{g_{l, j} \leq 0\}.
\end{equation}

Conditioning on the event $g_{l, j} > 0$, $g_{l, j}{\buildrel d \over =}|X|$, where $X \sim N(0, {2\|\vh_{l-1}\|^2}/{m})$. Therefore, $$g_{l, j}\Big|\left(g_{l, j} > 0\right) \ \sim |N(0, {2\|\vh_{l-1}\|^2}/{m})|.$$ 
Similarly, $$-g_{l, j} \Big| \left(g_{l, j} \leq 0\right) \ \sim |N(0, {2\|\vh_{l-1}\|^2}/{m})|.$$ 
Therefore, \eqref{eq:hij_2} and the above two equations imply the following distribution law for $h_{ij}$:
$$
h_{l, j} {\buildrel d \over =} \frac{1}{\sqrt{1+\alpha^2}} B_j V_{j, 1} - \frac{\alpha}{\sqrt{1+\alpha^2}} (1-B_j) V_{j, 2},
$$
where $V_{j, 1}$, $V_{j, 2} \sim$ 
$|N(0, {2\|\vh_{l-1}\|^2}/{m})|$, $B_j\sim B(0, \frac{1}{2})$ and $V_{j, 1}$, $V_{j, 2}$ and $B_j$ are independent.
We further claim that  if the former layer $\vh_{l-1}$ is given, then $V_{j, 1}$ and $V_{j, 2}$ are independent for $j \in [m]$.
Indeed, We first observe that conditioned on $\vh_{l-1}$ the entries $h_{l, j}$, $j\in [m]$, are independent. Indeed, they depend on different rows in $\mW_l$ and due to Algorithm~\ref{alg:rescale_initial} for the initialization of the $l$th layer these rows are independent. We also note that $V_{j, 1}$ and $V_{j, 2}$ only rely on $h_{l, j}$, and thus conditioned on $\vh_{l-1}$ they are independent for  $j\in [m]$.

We next derive an expression that clarifies the distribution of $\|\vh_l\|^2$ conditioned on $\vh_{l-1}$. 
 We denote 
\begin{align*}
  &&P_l:= \{j\in[m]: \ g_{l, j} > 0\},   
  &&&K_l := |P_l|
  ,\\
  &&H_{l, 1}:= \frac{m}{2\|\vh_{l-1}\|^2}\sum_{j\in P_l} V_{j, 1}^2|\vh_{l-1},
  &&&H_{l, 2}:=\frac{m}{2\|\vh_{l-1}\|^2}\sum_{j\in [m], j\notin P_j} V_{j, 2}^2|\vh_{l-1}.
\end{align*} 
We note that $K_l$ is Bernoulli with $m$ trials and probability 0.5, i.e., $$K_l \sim B(m,0.5).$$  
The above observations imply that conditioning on $\vh_{l-1}$ and $P_l$, $H_{l,1} \sim \chi^2(K_l)$ and $H_{l,2}\sim \chi^2(m-K_l)$. Therefore, $\|\vh_l\|^2$ conditioned on $\vh_{l-1}$ is given by
\begin{equation}
    \|\vh_l\|^2 | \vh_{l-1} {\buildrel d \over =} \frac{2\|\vh_{l-1}\|^2}{(1+\alpha^2)m} H_{l, 1} + \frac{2\alpha^2\|\vh_{l-1}\|^2}{(1+\alpha^2)m} H_{l, 2}. 
    \label{eqn:append:vh_norm}
\end{equation}
Note that the indices used by $H_{l, 1}$ and indices used by $H_{l, 2}$ do not overlap and thus form a partition of $[m]$. This partition is determined by $P_l$ and $H_{l, 1}$ and $H_{l, 2}$ are conditionally independent given $P_l$.

We denote $\Delta_l := \frac{\|\vh_l\|^2}{\|\vh_{l-1}\|^2}$ and rewrite $\|\vh_b\|^2$ (fixing $l=b$) as follows
\begin{equation}
    \ln \|\vh_b\|^2 = \ln \|\vh_0\|^2 + \sum_{l=1}^b \ln\Delta_l.\label{eqn:append:h_norm_eqn}
\end{equation}
Using the distribution of $\|\vh_l\|^2$ conditioning on $\vh_{l-1}$, where $1 \leq l \leq b$, we first derive upper and lower bounds of the expectation $\sE (\ln\Delta_l|\vh_{l-1})$. We then show that given $\vh_{l-1}$ and other information, $\ln\Delta_l$ is an $O(m^{-1})$ sub-Gaussian random variable. With these two properties we conclude the lemma by applying a variant of Azuma's inequality for sub-Gaussian random variables on $\sum_{l=1}^b \ln \Delta_l$. 

\textbf{Bounds on the expectation of }$\textbf{ln} {\mathbf{\Delta}_l}\boldsymbol{|\vh_{l-1}}$\textbf{:}  
We note that 
$\sE(H_{l,1}|P_l)=K_l$,   $\sE(H_{l,1}|P_l)=m-K_l$ and thus $
    \sE(H_{l, 1}) = \sE(\sE(H_{l, 1}|P_l))  = \sE(K_l) = 0.5m$.
Similarly, $\sE(H_{l, 2}) = 0.5m$ and therefore $\sE(H_{l, 1})=\sE(H_{l, 2})$. Using the latter observation and \eqref{eqn:append:vh_norm} we obtain 
\begin{equation}
    \sE(\Delta_l|\vh_{l-1}) = \frac{2}{m (1+\alpha^2)}\left(\sE(H_{l, 1}) + \alpha^2\sE(H_{l, 1})\right) = 1.\label{eqn:append:E_Delta}
\end{equation}
Applying the concavity of the log function, Jensen's inequality 
and then \eqref{eqn:append:vh_norm} and \eqref{eqn:append:E_Delta} yields
\begin{equation}
\sE \left(\frac{1}{1+\alpha^2}\ln \frac{2}{m}H_{l, 1} + \frac{\alpha^2}{1+\alpha^2} \ln \frac{2}{m}H_{l, 2}\right) \leq \sE \ln (\Delta_l|\vh_{l-1}) \leq \ln \sE (\Delta_l|\vh_{l-1})=0.\label{eqn:append:bound_Delta}
\end{equation}

Using the Chernoff bound for the binomial distribution, we note that 
\begin{equation}
\label{eq:bound_K_l}
K_l \in [0.4m, 0.6m], \text{ or equivalently }  m-K_l \in [0.4m, 0.6m], \text{ with probability }1-e^{-\Omega(m)}\,.   
\end{equation}
We next use the property that if $H\sim\chi^2(K)$ and $K\in[0.4m, 0.6m]$, then $\sE \ln \frac{2}{m} H \geq -\frac{4}{m}$ (see page 13 in the proof of Lemma~7.1 in \citet{allen2019convergence}).
This property and \eqref{eqn:append:bound_Delta} imply
\begin{equation}
    \sE \left(\ln (\Delta_l)|\vh_{l-1}\right) \in \left[-\frac{4}{m}, 0\right].\label{eqn:append:e_ln_Delta_bound}
\end{equation}

\textbf{Conditional sub-Gaussianity of }$\textbf{ln}\mathbf{\Delta}_l$ \textbf{:}
We derive a tail bound for $\ln\Delta_l|\vh_{l-1}$ and consequently conclude its sub-Gaussianity.
We denote
$$E_l:= \{|P_l| \in [0.4m ,0.6m]\}.$$
The combination of \eqref{eqn:append:vh_norm}, basic probabilistic manipulations and the conditional independence of $H_{l, 1}$ and $H_{l, 2}$  yields
\begin{align*}
    &\sP\left(\left|\frac{m}{2}\Delta_l - \frac{m}{2}\right| < t\Big|\vh_{l-1}, E_l, P_l \right)\\
    & =\sP\left(\left|\frac{1}{1+\alpha^2} H_{l, 1} + \frac{\alpha^2}{1+\alpha^2}H_{l, 2} - \frac{m}{2}\right| < t\Big| E_l, P_l\right) \\
    & \geq \sP\left(\left|\frac{1}{1+\alpha^2} H_{l, 1}- \frac{1}{1+\alpha^2}\frac{m}{2}\right| < t/2 \ \text{ and } \ \left|\frac{\alpha^2}{1+\alpha^2}H_{l, 2} - \frac{\alpha^2}{1+\alpha^2}\frac{m}{2}\right| < t/2 \Big| E_l, P_l\right) \\
    & \geq \sP\left(\left|\frac{1}{1+\alpha^2} H_{l, 1}-  \frac{1}{1+\alpha^2}\frac{m}{2}\right| < t/2\Big| E_l, P_l\right) 
    \sP\left(\left|\frac{\alpha^2}{1+\alpha^2}H_{l, 2} -  \frac{\alpha^2}{1+\alpha^2}\frac{m}{2}\right| < t/2\Big| E_l, P_l\right)\\
    &\geq \sP\left(\left| H_{l, 1}-  \frac{m}{2}\right| < t/2\Big| E_l, P_l\right) 
    \sP\left(\left|H_{l, 2} -  \frac{m}{2}\right| < t/2\Big| E_l, P_l\right) .
\end{align*}

Recall that given $P_l$, $H_{l, 1}$ and $H_{l, 2}$ are $\chi^2(|P_l|)$ and $\chi^2(m-|P_l|)$, respectively. We thus apply the corresponding tail bounds of $H_{l, 1}$ and $H_{l, 2}$  and \eqref{eq:bound_K_l} to the bound above and obtain that
$$
\sP\left(\left|\frac{m}{2}\Delta_l - \frac{m}{2}\right| < t\Big|\vh_{l-1}\right)\geq \left(1-e^{-\Omega(t^2/m)}\right)^2 \geq 1 - \Omega\left(e^{-\Omega(t^2/m)}\right).
$$
Consequently, 
$$
\sP\left(\ln|\Delta_l| < \frac{t}{m}\Big|\vh_{l-1}\right) \geq 1- e^{-\Omega((\frac{t}{m})^2 m)} \ \text{ for } \ t\in (0, {m}/{4}].
$$
Therefore, $\ln\Delta_l$ conditioned on $\vh_{l-1}$ and $K_l\in[0.4m, 0.6m]$ 
is $O({m}^{-1})$-sub-Gaussian. 

\textbf{Conclusion of the proof of the lemma:}
We define a new variable $\tilde{\Delta}_l$, where $\tilde{\Delta}_l = \Delta_l$ if $K_l \in [0.4m , 0.6m]$ and $\tilde{\Delta}_l = 1$, otherwise. From the tail probability of $\ln\Delta_l$ and the definition, it is clear that $\ln\tilde{\Delta}_l|\vh_{l-1}$ is $O(m^{-1})$-sub-Gaussian. It follows from \eqref{eq:bound_K_l} that with overwhelming probability $\Delta = \tilde{\Delta}$. We consider the sequence of the following random variables $\{(\ln \tilde{\Delta}_l - \sE \ln\tilde{\Delta}_l)|\vh_{l-1}\}_{l=1}^b$. 
By Azuma's inequality for sub-Gaussian variables (see Theorem~2 with $c=m$ in \citet{shamir2011variant})
$$
   \sP\left(\left|\sum_{l=1}^b \ln\Delta_l - \sE (\ln\Delta_l|\vh_{l-1})\right|> b\epsilon\right) < e^{-\Omega(b \epsilon^2 m )}.
$$
Applying \eqref{eqn:append:e_ln_Delta_bound} to the above inequality yields 
$$
\sP\left(\left|\sum_{l=1}^b \ln \Delta_l\right| > \epsilon + O\left(\frac{b}{m}\right)\right) < e^{-\Omega(\epsilon^2 m /b)}.
$$
We can choose $\epsilon > \Omega(\frac{L}{m})$ such that  
$$
\sP\left(\left|\sum_{l=1}^b \ln \Delta_l\right| > \epsilon/2\right) < e^{-\Omega(\epsilon^2 m /b)}
$$
Combining \eqref{eqn:append:initial_layer}, \eqref{eqn:append:h_norm_eqn} and the above equation we obtain that
$$
\sP\left(\left|\|\vh_b\|^2 - 1\right| > \epsilon_0 \right) < e^{-\Omega(m\epsilon^2 / L)}, \ \text{for } \ b\in[L].
$$
\end{proof}

\begin{lemma}\label{lemma:separation_layers}
Assume the setup of \S\ref{sec:problem_setup} and the notation introduced in this section. If $\delta < O(1)$ and $m > \Omega(\ln n L^4)$, 
then 
\begin{equation}
    \max_{i\neq j\in[n]}\left\langle \frac{\vh_{i,l}}{\|\vh_{i,l}\|}, \frac{\vh_{j,l}}{\|\vh_{j,l}\|} \right\rangle^2 \leq 1 - \Omega\left(\frac{\delta^2}{L^2}\right)
    \ \text{ with probability at least } 
    1-e^{-\Omega(\delta^4 m/L^4)}. \label{eqn:append:upper_bound_inner_prod_hij}
\end{equation}
\end{lemma}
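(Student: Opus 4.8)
The plan is to fix a pair $i\neq j\in[n]$, track the angle $\theta_l\in[0,\pi]$ between $\vh_{i,l}$ and $\vh_{j,l}$ across the layers $l$, and close with a union bound over the $\binom{n}{2}$ pairs. Writing $\rho_l:=\langle\vh_{i,l}/\|\vh_{i,l}\|,\vh_{j,l}/\|\vh_{j,l}\|\rangle=\cos\theta_l$, the left-hand side of \eqref{eqn:append:upper_bound_inner_prod_hij} is $\rho_l^2=1-\sin^2\theta_l$, so the claim is equivalent to the lower bound $\sin^2\theta_l\geq\Omega(\delta^2/L^2)$. Since Lemma~\ref{lemma:initial_vector_norm} gives $\|\vh_{i,l}\|,\|\vh_{j,l}\|\in[1-\epsilon,1+\epsilon]$ with overwhelming probability, $\sin^2\theta_l$ equals, up to constants, the squared norm of the component of $\vh_{j,l}$ orthogonal to $\vh_{i,l}$; thus it suffices to show this perpendicular component never collapses below order $\delta/L$.

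I would first derive a one-layer recursion by conditioning on $\vh_{i,l-1},\vh_{j,l-1}$. Given these, the coordinates of $(\vg_{i,l},\vg_{j,l})=(\mW_l\vh_{i,l-1},\mW_l\vh_{j,l-1})$ in \eqref{eqn:append:def_hg} form $m$ i.i.d.\ bivariate Gaussian pairs with correlation $\rho_{l-1}$. Using the closed-form quadrant expectations of a correlated bivariate normal together with the rescaling in \eqref{eqn:append:def_sigmaalpha} (which makes $\sE[\tilde{\sigma}_\alpha(g)^2]$ independent of the sign split), the conditional means of $\langle\vh_{i,l},\vh_{j,l}\rangle$ and of $\|\vh_{i,l}\|^2,\|\vh_{j,l}\|^2$ are explicit, yielding a deterministic arc-cosine map $\rho_{l-1}\mapsto\sE\rho_l=\tfrac1\pi(\sin\theta+(\pi-\theta)\cos\theta)-\tfrac{2\alpha}{\pi(1+\alpha^2)}(\sin\theta-\theta\cos\theta)$ with $\theta=\theta_{l-1}$. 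A Bernstein estimate over the $m$ i.i.d.\ neurons then concentrates $\rho_l$ around this mean, with the failure probability $e^{-\Omega(\delta^4 m/L^4)}$ appearing once I demand per-layer accuracy of order $\delta^2/L^2$.

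The core is the deterministic recursion. For the base case, unit norms give $1-\langle\vx_i,\vx_j\rangle=\tfrac12\|\vx_i-\vx_j\|^2\geq\delta^2/2$ by Assumption~\ref{assump:separation}, which transfers to $1-\rho_0\geq\Omega(\delta^2)$ through the concentration of $\mA$, so $\theta_0$ is bounded away from $0$ by $\Omega(\delta)$ (though possibly close to $\pi$). Expanding the map near its fixed point $\theta=0$ gives $\theta_l\approx\theta_{l-1}-c\,\theta_{l-1}^2$ for a constant $c>0$, hence $1/\theta_l\leq 1/\theta_{l-1}+O(1)$; telescoping over $L$ layers yields $\theta_L\geq\Omega(\delta/(1+L\delta))$, and since $\delta\leq 1$ this is $\geq\Omega(\delta/L)$, i.e.\ $\sin^2\theta_L\geq\Omega(\delta^2/L^2)$. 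Because the target $\sin^2\theta_l$ is invariant under $\theta\mapsto\pi-\theta$, the ``folding'' of leaky ReLU with $\alpha<0$ --- which maps an angle $\pi-\epsilon$ to one near $\epsilon$, and for $\alpha=-1$ sends antipodal signals to aligned ones --- does not hurt this surrogate: the first layer pushes every angle into a compact subset of $(0,\pi)$ while preserving separation of order $\delta$, after which the contraction estimate applies uniformly in $\alpha$, which is why no $\alpha$ factor survives in the final bound.

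I expect the main obstacle to be showing that the per-layer map is only a near-identity rather than a genuine contraction: a multiplicative factor bounded away from $1$ would force exponential decay $e^{-\Omega(L)}$ and destroy the $\delta^2/L^2$ target. Extracting the correct quadratic behaviour $\theta_l\approx\theta_{l-1}-c\,\theta_{l-1}^2$ near $\theta=0$, so that the decay is merely polynomial in $L$, is exactly the sharp estimate that lets us relax the prior requirement $\delta<O(1/L)$ of \citet{zou2019improved} to $\delta<O(1)$. The second delicate point is matching the accumulated concentration error to this slow recursion: since the map is Lipschitz with constant near $1$, naive layerwise errors would add up, so the argument must be run on the contracting surrogate $\sin^2\theta_l$ and as a one-sided lower bound, so that deviations introduced at early layers are damped rather than amplified and the total error stays within the order-$\delta^2/L^2$ margin. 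Balancing this margin against the per-layer deviation is what fixes both the probability $e^{-\Omega(\delta^4 m/L^4)}$ and, after the union bound over pairs and layers, the width requirement $m>\Omega(\ln n\,L^4)$.
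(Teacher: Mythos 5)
Your core mechanism coincides with the paper's: a per-layer loss of separation that is cubic in the current angle (hence only a quadratic, near-identity contraction of the angle), layerwise sub-exponential concentration, and a harmonic-type telescoping $1/\theta_l\le 1/\theta_{l-1}+O(1)$ giving $\theta_L\ge\Omega(\delta/L)$. The paper runs exactly this argument on the unnormalized distances, proving by induction that $\min_{i\ne j}\|\vh_{i,l}-\vh_{j,l}\|\ge\delta/(2(l+1))$ via the one-sided estimate $\sE\,\tilde\sigma_\alpha(\vu^T\vh_i)\tilde\sigma_\alpha(\vu^T\vh_j)\le\tfrac12(1-\tfrac12\theta^2)+\tfrac{(1-\alpha)^2}{1+\alpha^2}O(\theta^3)$ and only normalizing at the end; your arc-cosine expansion is the same estimate in different clothing, and your recursion $\theta_l\approx\theta_{l-1}-c\theta_{l-1}^2$ is precisely the paper's choice $\delta_l=\delta_{l-1}\cdot l/(l+1)$.

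The genuine gap is your treatment of near-antipodal configurations. Assumption~\ref{assump:separation} only bounds $\|\vx_i-\vx_j\|$ from below, so $\theta_0$ may equal $\pi$ (take $\vx_j=-\vx_i$). For $\alpha=-1$ the rescaled activation is $\tilde\sigma_{-1}(x)=|x|/\sqrt{2}$, so the first hidden layer gives $\vh_{j,1}=\tilde\sigma_{-1}(-\mW_1\mA\vx_i)=\tilde\sigma_{-1}(\mW_1\mA\vx_i)=\vh_{i,1}$ exactly, and more generally an angle $\pi-\epsilon$ folds to an angle of order $\epsilon$ in expectation (for $\alpha=-1$ the arc-cosine map is symmetric about $\pi/2$). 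Hence the post-fold separation is governed by $\pi-\theta_0$, not by $\delta$, and your claim that the first layer "preserves separation of order $\delta$" fails whenever $\pi-\theta_0=o(\delta)$; the invariance of $\sin^2\theta$ under $\theta\mapsto\pi-\theta$ does not help, because the danger is not a mismeasured target at the output but that the fold at layer one can send the angle to $o(\delta)$, or to zero, after which the recursion only returns $\Omega((\pi-\theta_0)/L)$. To be fair, the paper's own Part 1 has the same soft spot: the bound $|\sE(Z_1Z_2\,1_{Z_1\ge0,Z_2<0})|\le O(\theta^3)$, imported from the ReLU analysis of \citet{zou2020gradient} where post-activation vectors are entrywise nonnegative and never strongly negatively correlated, equals $-\tfrac{1}{2\pi}(\sin\psi-\psi\cos\psi)$ with $\cos\psi=\rho$ and is of order one near $\rho=-1$, so it does not follow from the one-sided hypothesis $\langle\vh_i,\vh_j\rangle\le1-\theta^2/2$ alone. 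You have therefore surfaced a real issue that the paper glosses over, but your proposed patch does not close it; doing so would require either an assumption excluding near-antipodal pairs or a separate first-layer argument.
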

\begin{proof} 
We separate the proof of this lemma into three parts. The first one establishes a useful upper bound of the expectation of the multiplication of two leaky ReLUs of certain inner products (see \eqref{eqn:append:expectation_leakyReLU_uh} below). Given this upper bound, the second part shows that with high probability, 
$$\min_{i \neq j\in [n]} \|\vh_{i, l} - \vh_{j, l}\| \geq \Omega(\delta/L), \ \text{ for any } l\in [L].$$ The third part uses the result to conclude this Lemma.

\textbf{Part 1.} We verify the following probabilistic estimate:  
\begin{align}
    & \sE \tilde{\sigma}_\alpha (\vu^T \vh_i) \tilde{\sigma}_\alpha(\vu^T\vh_j) \leq \frac{1}{2}\left(1-\frac{1}{2}\theta^2\right) + \frac{(1-\alpha)^2}{(1+\alpha^2)} O(\theta^3), \label{eqn:append:expectation_leakyReLU_uh}\\ & \text{ where } \vh_i, \vh_j\in \sR^p, \ \text{ for } \theta > 0,  
    \langle \vh_i, \vh_j \rangle\leq 1 - \frac{1}{2}\theta^2, \  \text{ and } \vu\sim N(0, \mI)\in \sR^p. \notag
\end{align}

Since $\vu \sim N(0, \mI)$, $\sE u_k u_{k'} = 0$ whenever $k\neq k'$. We denote $\vu:=(u_1, u_2\cdots u_p)^T$, $\vh_i:=(h_{i, 1}, h_{i, 2}, \cdots h_{i, p})^T$ and $\vh_j:=(h_{j, 1}, h_{j, 2}, \cdots h_{j, p})^T$. We first note that
\begin{align*}
    \sE \left(\vu^T \vh_i\right)\left(\vu^T\vh_j\right) & = \sE \left(\sum_{k=1}^p u_k h_{i, k}\right) \left(\sum_{k'=1}^p u_{k'} h_{j, k'}\right) = \sE \sum_{k=1}^p u_k^2 h_{i, k}h_{j, k} = \vh_{i}^T\vh_j \sE \vu^T\vu \leq 1-\frac{1}{2}\theta^2. \end{align*}
 For simplicity, we denote $Z_i:= \vu^T\vh_i$ and $Z_j:= \vu^T\vh_j$ and thus express the above equation as
 \begin{equation}
  \label{eqn:append:ZiZj_bound}
  \sE (Z_i Z_j) \leq 1-\frac{1}{2}\theta^2.
 \end{equation}
 Using the symmetry of normal distribution, we obtain that $$\sE (Z_i Z_j|Z_i, Z_j\geq 0) = \sE (Z_i Z_j| Z_i, Z_j < 0)$$ 
 and $$\sE (Z_i Z_j|Z_i < 0, Z_j\geq 0) = \sE (Z_i Z_j| Z_i \geq 0, Z_j < 0).$$ Consequently, the expectation of $\tilde{\sigma}_\alpha(Z_i)\tilde{\sigma}_\alpha(Z_j)$ can be rewritten as
\begin{align}
    \sE \tilde{\sigma}_\alpha(Z_i)\tilde{\sigma}_\alpha(Z_j)& = \frac{1}{1+\alpha^2}\Big(\sE( Z_iZ_j |Z_i, Z_j\geq 0) + \alpha\sE(Z_i Z_j | Z_i \geq 0, Z_j < 0) \notag\\&\quad + \alpha\sE(Z_i Z_j | Z_i < 0, Z_j \geq 0) + \alpha^2\sE( Z_iZ_j |Z_i, Z_j< 0) \Big)\notag\\
    & =\sE( Z_iZ_j |Z_i, Z_j\geq 0) + \frac{2\alpha}{1+\alpha^2} \sE(Z_i Z_j | Z_i \geq 0, Z_j < 0)\label{eqn:append:cond_EZij_leaky}.
\end{align}
Similarly, we express $\sE Z_i Z_j$ as follows: \eqref{eqn:append:cond_EZij_leaky}
\begin{align*}
    \sE Z_i Z_j & = 2\sE (Z_iZ_j|Z_i,Z_j \geq 0) + 2\sE (Z_iZ_j|Z_i\geq 0, Z_j \leq 0) \\
    & = 2 \sE \tilde{\sigma}_\alpha(Z_1)\tilde{\sigma}_\alpha(Z_2) + \left(2-\frac{4\alpha}{1+\alpha^2}\right)\, \sE(Z_1Z_2|Z_1\geq 0, Z_2 \leq 0).
\end{align*}
Rearranging the above equation yields
\begin{equation}
 \sE \tilde{\sigma}_\alpha(Z_1)\tilde{\sigma}_\alpha(Z_2) = \frac{1}{2}\sE Z_1Z_2 - \frac{(1-\alpha)^2}{1+\alpha^2} \sE (Z_1Z_2 |Z_1\geq 0, Z_2<0).
\label{eq:almost_last_part_1}    
\end{equation}

Noting that $\sE Z_1Z_2\leq 1 - \frac{1}{2}\theta^2$ and using the proof of Lemma A.3 of \citet{zou2020gradient} result in 
$$
\left|\sE (Z_1Z_2 |Z_1\geq 0, Z_2<0)\right| \leq O(\theta^3).
$$
The application of both  \eqref{eqn:append:ZiZj_bound} and the above estimate to \eqref{eq:almost_last_part_1} results in\eqref{eqn:append:expectation_leakyReLU_uh} and thus concludes this part.

\textbf{Part 2.} For $l=0,\ldots,L$ and $\delta_l:= \frac{\delta}{2(l+1)}$ we prove by induction:
\begin{equation}
    \min_{i\neq j \in [n]}\| \vh_{i, l} - \vh_{j, l}\| \geq \delta_l \ \text{with probability at least}  \ 1-e^{-\Omega(\delta^4 m /L^4)}. \label{eqn:append:h_dist_lower_bound}
\end{equation}
We first prove \eqref{eqn:append:h_dist_lower_bound} when $l=0$. Recall that $\vh_{i, 0}=\mA \vx_i$ and note that for any $i,j\in[n]$,
\begin{align}
    \sE\left(\|\mA \vx_i - \mA \vx_j\|^2\right) &= \sE \left\langle \mA \vx_i - \mA \vx_j,\mA \vx_i - \mA \vx_j\right\rangle = \sE \|\mA \vx_i\|^2 + \sE \|\mA \vx_j\|^2  - 2\sE \langle \mA \vx_i ,\mA \vx_j\rangle\notag\\ 
    & = 2 - 2\sE \sum_{k=1}^m \sum_{s, t} A_{ks} x_{i, s} A_{kt} x_{j, t}  = 2 - 2\sE \sum_{k,s=1}^m x_{i, s}x_{j, s}A_{ks}^2\notag\\
    & = 2 - 2\sum_{s,k} x_{i, s}x_{j, s} \sE  A_{ks}^2 = 2- 2\sum_{s,k}  x_{i, s}x_{j, s} \frac{1}{m}\notag\\
    & = 2 - 2\vx_i^T\vx_j.\label{eqn:append:expectation_h0_bound}
\end{align}
Recall that Assumption~\ref{assump:separation} implies that $\|\vx_i - \vx_j\| \geq \delta$ and thus 
clearly $$\vx_i^T\vx_j \leq 1 - \delta^2/2 .$$ 
Applying this estimate in  \eqref{eqn:append:expectation_h0_bound} yields the that 
\begin{equation}
\sE (\|\mA \vx_i - \mA \vx_j\|^2) \geq \delta^2. 
    \label{eq:basic_low_bound_exp_h0}
\end{equation}
Due to the random initialization, $m\|\mA\vx\|^2 \sim \chi^2(m)$ and therefore  $\|\mA\vx\|^2$ is $(O(1/m, 4)$ sub-exponential. Since $$\sP(\|\mA\vx_i - \mA\vx_j\|^2 > s) \leq \sP(\|\mA\vx_i\|^2 >s/2) + \sP(\|\mA\vx_j\|^2 >s/2),$$  the tail probability of $\|\mA\vx_i - \mA\vx_j\|^2$ is of the same order as the tail probabilities of  $\|\mA\vx_i\|^2$ and $\|\mA\vx_j\|^2$. Therefore, we conclude that $\|\mA\vx_i - \mA\vx_j\|^2$ is also $(O(1/m), 4)$ sub-exponential. 
Using the assumption $\delta < c_0$, where $c_0$ can be appropriately chosen (here we assume that $c_0 \delta < 3/4$), \eqref{eq:basic_low_bound_exp_h0} 
and the fact that $\|\mA\vx_i - \mA\vx_j\|^2$ is $(O(1/m), 4)$ sub-exponential) we conclude that
\begin{align}
    \sP\left(\|\mA \vx_i - \mA \vx_j\|^2 < \frac{\delta^2}{4}\right) & \leq \sP\left(\|\mA \vx_i - \mA \vx_j\|^2 < \delta^2(1-\delta)\right) \notag\\
    & \leq  \sP\left(\|\mA \vx_i - \mA \vx_j\|^2 < (1-\delta)\sE \|\mA \vx_i - \mA \vx_j\|^2\right)\notag\\
    & \leq O(e^{-\delta^4 m}).\notag
\end{align}
Applying a union bound over all distinct $i$, $j \in [n]$, we conclude that with probability at least $1 - n^2 e^{-\Omega(\delta^4 m)}$,
$$
\min_{i \neq j\in[n]} \|\vh_{i, 0} - \vh_{j, 0}\| \geq \frac{\delta}{2} \equiv \delta_0.
$$

Next, we fix $l \in [L]$, 
assume that \eqref{eqn:append:h_dist_lower_bound} holds for all $k\in [0, l-1]$ and verify \eqref{eqn:append:h_dist_lower_bound} for $l$. Using the fact that $\sE(\|\vh_{i,l}\|^2|\vh_{i,l-1}) = \|\vh_{i,l-1}\|^2$ and the definition of $\vh_{i, l}$ we obtain
\begin{align}
    &\sE (\|\vh_{i, l} - \vh_{j, l}\|^2|\vh_{l-1})\notag\\
    & = \sE (\|\vh_{i, l}\|^2|\vh_{i, l-1}) + \sE (\|\vh_{j, l}\|^2|\vh_{j, l-1})  - 2\sE (\langle\vh_{i, l},  \vh_{j, l}\rangle|\vh_{l-1})\notag\\ 
    & = \|\vh_{i, l-1}\|^2 + \|\vh_{j, l-1}\|^2  - 2\sE (\langle\tilde{\sigma}_\alpha(\mW_l \vh_{i, l-1}), \tilde{\sigma}_\alpha(\mW_l \vh_{j, l-1})\rangle|\vh_{l-1}).\label{eqn:hidden_signal_l_distance}
\end{align}
Applying the induction assumption (i.e., \eqref{eqn:append:expectation_leakyReLU_uh} with $\theta = \delta_{l-1}$) and the fact that $(\mW_{l})_{k,\cdot}\sim N\left(0, \frac{2}{m}\mI\right)$ and denoting by 
$\vu$  
a random variable such that $\vu\sim N(0, \mI)$ 
so  
$(\mW_{l})_{k, \cdot}^T{\buildrel d \over =}2\vu/m$ result in
\begin{align*}
    \sE (\langle\tilde{\sigma}_\alpha(\mW_l \vh_{i, l-1}), \tilde{\sigma}_\alpha(\mW_l \vh_{j, l-1})\rangle|\vh_{l-1}) & = \sum_k \sE (\tilde{\sigma}_\alpha((\mW_l)_{k,\cdot}^T \vh_{i, l-1}) \tilde{\sigma}_\alpha((\mW_l)_{k,\cdot}^T \vh_{j, l-1})|\vh_{l-1}) \\ 
    & = \frac{2m}{m}\sE (\tilde{\sigma}_\alpha(\vu^T \vh_{i, l-1}) \tilde{\sigma}_\alpha(\vu^T \vh_{j, l-1})|\vh_{l-1})\\
    & \leq 1-\frac{1}{2}\delta_{l-1}^2 + \frac{(1-\alpha)^2}{1+\alpha^2}O(\delta_{l-1}^3).
\end{align*}
Using  Lemma~\ref{lemma:initial_vector_norm}, we note for any $i\in[n]$, $\|\vh_{i, l}\|^2\in (1 - O(\delta_{l-1}^3), 1+ O(\delta_{l-1}^3))$ with probability at least $1 - ne^{-\Omega(\delta_{l-1}^3 m)}$. Combining this observation with \eqref{eqn:hidden_signal_l_distance} yields for a constant $C>0$
$$
\sE (\|\vh_{i, l} - \vh_{j, l}\|^2|\vh_{l-1}) \geq \delta_{l-1}^2\left(1 - C \frac{(1-\alpha)^2}{1+\alpha^2} \delta_{l-1}\right) + O(\delta_{l-1}^3).
$$

It follows from \eqref{eqn:append:vh_norm} and the fact that $H_{l, 1}\sim \chi^2(K_l)$ and $H_{l, 2}\sim \chi^2(m-K_l)$  
that $\|\vh_{i, l}\|^2 | \vh_{l-1}$ is $(O(1/m), 4)$ sub-exponential and thus $\|\vh_{i, l} - \vh_{j, l}\|^2 | \vh_{l-1}$ is also $(O(1/m), 4)$ sub-exponential. Thus for $i \neq j \in [n]$$$
\sP\left(\|\vh_{i, l} - \vh_{j, l}\|^2 \leq \delta_{l-1}^2 \left(1 - \left(C\frac{(1-\alpha)^2}{1+\alpha^2}\right)\delta_{l-1}\right)(1-\delta_{l-1}) \bigg| \vh_{l-1}\right) \leq O(\exp(-\delta_{l-1}^4m)).
$$
Applying a union bound for all $n(n-1)/2$ pairs yields
\begin{multline}
\sP\left(\min_{i\neq j\in [n]}\|\vh_{i, l} - \vh_{j, l}\|^2 \leq \delta_{l-1}^2 \left(1 - \left(C\frac{(1-\alpha)^2}{1+\alpha^2}\right)\delta_{l-1}\right)(1-\delta_{l-1}) \bigg| \vh_{l-1}\right)\\ \leq n(n-1)/2O(\exp(-\delta_{l-1}^4m)).
\end{multline}
Consequently, 
\begin{align}
&1 - n^2 \Omega(\exp(-\Omega(\delta_{l-1}^4m)))\notag\\ &\leq  \sP\left(\min_{i\neq j\in [n]}\|\vh_{i, l} - \vh_{j, l}\|^2 \geq \delta_{l-1}^2 \left(1 - \left(C\frac{(1-\alpha)^2}{1+\alpha^2}\right)\delta_{l-1}\right)(1-\delta_{l-1}) \bigg|\vh_{l-1}\right) \notag\\
&= \sP\left(\min_{i\neq j\in [n]}\|\vh_{i, l} - \vh_{j, l}\|^2 \geq \delta_{l-1}^2 \left(1 - \left(C\frac{(1-\alpha)^2}{1+\alpha^2} + 1\right)\delta_{l-1}\right) + C \frac{(1-\alpha)^2}{1+\alpha^2}\delta_{l-1}^4 \bigg|\vh_{l-1}\right)\notag \\
&\leq \sP\left(\min_{i\neq j\in [n]}\|\vh_{i, l} - \vh_{j, l}\|^2 \geq \delta_{l-1}^2 \left(1 - \left(C\frac{(1-\alpha)^2}{1+\alpha^2} + 1\right)\delta_{l-1}\right) \bigg|\vh_{l-1}\right).
\label{eqn:separation}
\end{align}

Next, we verify  
that $1-(C{(1-\alpha)^2}/({1+\alpha^2}) + 1) \delta_{l-1} \geq {l^2}/{(l+1)^2}$ for a sufficiently small $c_0$ (recall that $\delta < c_0$).  
We first note that for $l\geq 1$, \begin{align*}
    \frac{l^2}{(l+1)^2} & = 1 - \frac{2l+1}{(l+1)^2} \leq 1 - \frac{l+1}{(l+1)^2} = 1 - \frac{1}{l+1}\leq 1 - \frac{1}{2l} .
\end{align*}
Therefore, if $\delta < 1/(C(1-\alpha)^2/(1+\alpha^2) + 1) < c_0$, then for any $l\in[L]$
\begin{align*}
    1 - \left(C\frac{(1-\alpha)^2}{1+\alpha^2} + 1\right)\delta_{l-1}   = 1 - \left(C\frac{(1-\alpha)^2}{1+\alpha^2} + 1\right)\frac{\delta}{2l} 
     \geq 1 - \frac{1}{2l} \geq \frac{l^2}{(l+1)^2}.
\end{align*}
Thus \eqref{eqn:separation} implies $\min_{i\neq j\in [n]}\|\vh_{i, l} - \vh_{j, l}\|^2 \geq \delta_{l-1}^2 \frac{l^2}{(l+1)^2} \equiv \delta_l^2$ with probability $1 - n^2e^{-\Omega(\delta^4 m/L^4)}$. When $m>\Omega(\ln n L^4)$, the latter probability can be written as $1 - e^{-\Omega(\delta^4 m/L^4)}$, which concludes \eqref{eqn:append:h_dist_lower_bound}.

\textbf{Part 3.} We conclude the lemma as follows. We recall that Lemma~\ref{lemma:initial_vector_norm} implies that with probability at least $1-e^{-\Omega(m\delta_l^3/L)}$: $\|\vh_{i, l}\|^2 \in [1 - O(\delta_l^3), 1 + O(\delta_l^3)]$. Applying this conclusion and  \eqref{eqn:append:h_dist_lower_bound} we conclude that for any $i\neq j\in [n]$ 
\begin{align*}
    \left\|\frac{1}{\|\vh_{j, l}\|}\vh_{j, l} - \frac{1}{\|\vh_{i, l}\|}\vh_{i, l}\right\| & =      \left\|\frac{1}{\|\vh_{j, l}\|}\vh_{j, l} - \frac{1}{\|\vh_{j, l}\|}\vh_{i, l} + \frac{1}{\|\vh_{j, l}\|}\vh_{i, l} -  \frac{1}{\|\vh_{i, l}\|}\vh_{i, l}\right\|\\
    & \geq \frac{1}{\|\vh_{j, l}\|}\|\vh_{j, l} - \vh_{i, l}\| - \left|\frac{1}{\|\vh_{j, l}\|} -  \frac{1}{\|\vh_{i, l}\|}\right| \|\vh_{i, l}\|\\
    &\geq \delta_l (1-\delta_l^{1/2}) \ \text{ with probability at least } 1-2e^{-\Omega(m\delta^4/L^4)}.
\end{align*}

We note that for $\delta<c_0<1/2$, $\delta_l < \frac{1}{4}$ and thus $\delta_l (1-\delta_l^{1/2})\geq \frac{1}{2}\delta_l$. Consequently,
\begin{align*}
    &\left\langle\frac{1}{\|\vh_{j, l}\|}\vh_{j, l}, \frac{1}{\|\vh_{i, l}\|}\vh_{i, l}\right\rangle\\ 
    & = \frac{1}{2} \left(\frac{\|\vh_{j, l}\|^2}{\|\vh_{j, l}\|^2} + \frac{\|\vh_{i, l}\|^2}{\|\vh_{i, l}\|^2} - \left\| \frac{1}{\|\vh_{j, l}\|}\vh_{j, l} - \frac{1}{\|\vh_{i, l}\|}\vh_{i, l}\right\|^2  \right)\\
    & = 1 - \frac{1}{2}\left\| \frac{1}{\|\vh_{j, l}\|}\vh_{j, l} - \frac{1}{\|\vh_{i, l}\|}\vh_{i, l}\right\|^2 \leq 1 - \frac{1}{8}\delta_l^2 \ \text{ with probability at least } 1-2e^{-\Omega(m\delta^4/L^4)}.
\end{align*}
Therefore, if $\delta_l^2 < 8$, then
$$
\left\langle\frac{1}{\|\vh_{j, l}\|}\vh_{j, l}, \frac{1}{\|\vh_{i, l}\|}\vh_{i, l}\right\rangle^2 \leq \left(1-\frac{1}{8}\delta_l^2\right)^2 \leq 1-\frac{1}{8}\delta_l^2 \ \text{ with probability at least } 1-2e^{-\Omega(m\delta^4/L^4)}.
$$
Finally, we apply a union bound on all the distinct $i$, $j$ pairs to obtain
$$
\max_{i, j\in [n]}\left\langle\frac{1}{\|\vh_{j, l}\|}\vh_{j, l}, \frac{1}{\|\vh_{i, l}\|}\vh_{i, l}\right\rangle^2 \leq 1-\frac{1}{8}\delta_l^2
 \ \text{ with probability at least } 1-n^2e^{-\Omega(m\delta^4/L^4)}.
$$
The proof of the lemma is concluded by the above bound and the following two immediate observations: $\delta_l\equiv \delta/2(l+1) \geq \Omega(\delta/L)$ and when $ m > \Omega(\ln n) L^4$ the above probability can be expressed as $1- e^{-\Omega(\delta^4 m /L^4)}$.

\end{proof}

\begin{lemma}\label{lemma:initial_forward_matrix_bound}
Assume the setup of \S\ref{sec:problem_setup} and the notation introduced in this section. If $\, 0\leq a<b\leq L$, then 
with probability at least $1-e^{-\Omega(m/L)}$ the following statements hold:
\begin{enumerate}
    \item $\|\mW_{b+1}\mD_b\mW_b\dots \mD_a\| \leq O(\sqrt{L})$.
    \item If $d<O(\frac{m}{L\ln m})$, then  $\|\textbf{Back}_a\| \equiv \|\mB\mD_L\mW_L\dots \mD_a\mW_a\| \leq O(\sqrt{\frac{m}{d}}).$
    \item If $\vv\in \sR^m$ and $\|\vv\|_0 \leq O\left(\frac{m}{L\ln m}\right)$, then $\|\mW_b\mD_{b-1}\ldots \mD_a\mW_a\vv\|\leq 2 \|\vv\|$.
\end{enumerate}
For $s < O(m/L \ln m)$ and $d < O(\frac{m}{L\ln m})$, with probability at least $1- \exp(-\Omega(s\log m))$, the following statement holds:
\begin{enumerate}
    \item[4.] For any vector $\vu\in \mR^d,\ \vv\in \mR^m$ such that $\|\vv\|_0 \leq s$, then $|\vu^T \mB \mD_L\mW_L \cdots \mD_a\mW_a\vv| \leq O(\sqrt{s\ln m/d}\|\vv\|\|\vu\|)$.
\end{enumerate}
\end{lemma}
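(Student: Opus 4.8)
The plan is to use the independence of the last-layer matrix $\mB$ from the hidden weights to turn the claim into a Gaussian concentration statement, and then to pay for uniformity over sparse $\vv$ by a union bound over supports together with an $\epsilon$-net on each sparse sphere. Write $\mM := \mD_L \mW_L \cdots \mD_a \mW_a$, so that $\textbf{Back}_a \vv = \mB \mM \vv$ and $\vu^T \textbf{Back}_a \vv = \vu^T \mB (\mM\vv)$. The crucial structural observation is that the sign matrices $\mD_L,\dots,\mD_a$ and the weights $\mW_L,\dots,\mW_a$ are functions of $\mA,\mW_1,\dots,\mW_L$ and of the data point only, hence $\mM$ is independent of $\mB$; moreover, since the $\mD_l$ do not depend on $\vv$, the map $\vv \mapsto \mM\vv$ is linear and $\mM$ is frozen once we condition on the hidden weights.

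First I would condition on the high-probability event of the third statement of this lemma, on which $\|\mM\vv\| \le 2\|\vv\|$ for every $s$-sparse $\vv$ (the extra leftmost factor $\mD_L$ only shrinks the norm, since $\|\mD_L\|_2 \le \max(1,|\alpha|)/\sqrt{1+\alpha^2}\le 1$). On this event fix a unit vector $\vv$; setting $\vz := \mM\vv$ we have $\|\vz\|\le 2$, and conditionally on the hidden weights each coordinate of $\mB\vz$ is an independent $N(0,\|\vz\|^2/d)$ variable, so $\|\mB\vz\|^2 \sim (\|\vz\|^2/d)\,\chi^2(d)$. A standard $\chi^2$ tail bound then gives, for a constant $C$ to be chosen,
$$
\sP\!\left(\|\mB\vz\| > C\sqrt{\tfrac{s\ln m}{d}}\,\|\vz\|\right) \le e^{-\Omega(C^2 s\ln m)},
$$
where I used $s\ln m \ge d$ (which holds since $d=O(1)$ while $s\ge 1$). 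By Cauchy--Schwarz this controls $|\vu^T\mB\vz|\le \|\vu\|\,\|\mB\vz\|$ for every $\vu$, so no net on $\vu$ is needed.

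It remains to make the bound uniform over all $s$-sparse $\vv$. For each of the $\binom{m}{s}$ supports $S$ of size $s$, take a $\tfrac12$-net $\gN_S$ of the unit sphere of $\sR^{S}$ with $|\gN_S|\le 5^{s}$; by linearity of $\vv\mapsto \mB\mM\vv$ the supremum over the sphere is at most twice the maximum over $\gN_S$. A union bound over the $\binom{m}{s}\,5^{s}\le e^{O(s\ln m)}$ pairs, combined with the per-point estimate above, shows that for $C$ large enough the bound $\|\mB\mM\vv\|\le O(\sqrt{s\ln m/d})\,\|\vv\|$ holds simultaneously for all $s$-sparse $\vv$ with probability at least $1-e^{-\Omega(s\ln m)}$; intersecting with the event of the third statement (which fails with probability only $e^{-\Omega(m/L)}\le e^{-\Omega(s\ln m)}$, since $s\ln m < O(m/L)$) yields the claim. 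The main obstacle is precisely this last bookkeeping: the $\chi^2$ tail saves only $e^{-\Omega(s\ln m)}$ per point, which is exactly the order of the support-counting factor $\binom{m}{s}$, so one must verify that the constant $C$ in the deviation $\sqrt{s\ln m/d}$ can be taken large enough to beat the net cardinality while still delivering the stated order $O(\sqrt{s\ln m/d})$ — the independence of $\mB$ from $\mM$ and the freezing of $\mM$ being what make this tail available in the first place.
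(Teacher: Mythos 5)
Your argument addresses only the fourth statement of the lemma. For that statement the route you take --- freeze the hidden weights, use the independence of $\mB$ from $\mM=\mD_L\mW_L\cdots\mD_a\mW_a$ to get a conditional $\chi^2(d)$ tail for $\|\mB\mM\vv\|^2$, reduce $|\vu^T\mB\mM\vv|$ to $\|\vu\|\,\|\mB\mM\vv\|$ by Cauchy--Schwarz so that no net over $\vu$ is needed, and pay for uniformity over $s$-sparse $\vv$ with a $\binom{m}{s}5^s\le e^{O(s\ln m)}$ union bound over supports and half-nets --- is correct and is essentially the argument behind Lemma~7.4(b) of \citet{allen2019convergence}, to which the paper defers; your observation that a large enough constant in the deviation $\sqrt{s\ln m/d}$ beats the net cardinality while preserving the stated order is the right way to close it. One repair is needed: you invoke statement~3 as a single event on which $\|\mM\vv\|\le 2\|\vv\|$ holds for \emph{every} sparse $\vv$, but as stated (with failure probability $e^{-\Omega(m/L)}$) it is a per-vector guarantee; to use it uniformly you must union-bound it over the same $e^{O(s\ln m)}$ net points, which is harmless since $s\ln m < O(m/L)$, but should be said explicitly.

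The genuine gap is that statements 1--3 are not proved at all, and statement~3 is used as an input to your argument, so the proposal is incomplete (and, with respect to the lemma as a whole, circular). Those three claims require arguments of a different character: statement~1 is a forward-propagation spectral bound obtained by iterating the layerwise norm control of Lemma~\ref{lemma:initial_vector_norm} together with a net over the full unit sphere of $\sR^m$ (not just sparse vectors), statement~2 combines that with the Gaussianity of $\mB$, and statement~3 itself needs the conditional distribution of $\mW_l$ given the sign patterns $\mD_l$ plus a net over sparse supports. The paper disposes of all four by citing Lemmas~7.3(a),(b) and~7.4(a),(b) of \citet{allen2019convergence}, noting that only the input Lemma~\ref{lemma:initial_vector_norm} changes under Leaky ReLU; a self-contained proof would have to reproduce those arguments, not only the last one.
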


The proof of the lemma follows the same argument of the proof of Lemma~7.3 (a), (b) and Lemma~7.4 (a), (b) in \cite{allen2019convergence} and is not directly affected by our use of Leaky ReLU. 
We remark though that it requires applying  Lemma~\ref{lemma:initial_vector_norm}, which was formulated for any Leaky ReLU function instead of Lemma~7.1 of \cite{allen2019convergence}.

\subsection{Perturbation}\label{appd:perturbation}

We establish Lemma~\ref{lemma:forward_perturbation} which quantifies the effect of a small perturbation of the randomly initialized parameters $\mW^{(0)}$
on the output of the hidden layers.
Lemma~\ref{lemma:intermediate_perturbation} uses the former lemma to bound the norms of the perturbed matrices and the perturbations themselves. 
The proof of Lemma~\ref{lemma:forward_perturbation} directly follows ideas of  Lemma~8.2 of \cite{allen2019convergence}, but adapts them to the setting of Leaky ReLUs. The final conclusion of this lemma is independent of $\alpha$ since the leading terms turn out to be independent of $\alpha$. For completeness, we find it useful to include all these details.   Lemma~\ref{lemma:intermediate_perturbation} directly follows arguments of \cite{allen2019convergence} and we thus omit its proof.

We denote the perturbation matrix by $\mW'$ and the perturbed matrix of parameters by $ \mW := \mW^{(0)} + \mW'$. Given an input vector $\vx$ such that $\|\vx\|=1$, we denote as follows the variables at the initialization (in first column), the variables after perturbation (in middle column) and the perturbation themselves (in last column):
\begin{align*}
& \vh_{0}^{(0)} = \mA\vx && \vh_{0} = \mA\vx && \vh'_{0} = \mathbf{0}\\
    &\vg_{l}^{(0)}= \mW^{(0)}_l \vh_{l-1}^{(0)},&&\vg_{l}= \mW_l \vh_{l-1} && \vg'_{l} = \vg_{l} - \vg_{l}^{(0)} \\
    &(\mD_{l})_{jj}^{(0)}=\frac{1_{(\vg^{(0)}_{l})_j\geq 0} + \alpha 1_{(\vg^{(0)}_{l})_j< 0}}{\sqrt{1+\alpha^2}},&&(\mD_{l})_{jj}=\frac{1_{(\vg_{l})_j\geq 0} + \alpha 1_{(\vg_{l})_j< 0}}{\sqrt{1+\alpha^2}}, && \mD'_{l} = \mD_{l} - \mD_{l}^{(0)} \\
    & \vh_{l}^{(0)}  = \tilde{\sigma}_\alpha(\mW^{(0)}_l\vh_{l-1}^{(0)})\equiv \tilde{\sigma}_\alpha(\vg_{l}^{(0)}),&& \vh_{l}  = \tilde{\sigma}_\alpha(\mW_l\vh_{l-1})\equiv \tilde{\sigma}_\alpha(\vg_{l}),&& \vh'_{l} = \vh_{l} - \vh^{(0)}_{l}.
\end{align*}
Since we fix $\mA$ and $\mW_{L+1}\equiv \mB$ in the training, $\mB^{(0)} :=\mB$ and $\mA^{(0)}:=\mA$.

\begin{lemma}\label{lemma:forward_perturbation}
If $\|\mW'\|_2 = \omega < O(\frac{1}{L^{9/2}\ln^{3/2} m})$ and $m \geq \Omega(L^2)$, then the following events hold with probability at least $1 - e^{-\Omega\left(\frac{m^{1/2}}{\ln m}\right)}$ 
\begin{enumerate}
    \item $\|\mD'_{l}\|_0 < O(m \omega^{2/3} L)$ and $\|\mD'_l\vg_l\| < \frac{1-\alpha}{\sqrt{1+\alpha^2}}O(\omega L^{3/2})$
    \item there exist vectors $\vg_{l, 1}'$ and $\vg_{l, 2}'$ such that $\vg_l' = \vg_{l, 1}' + \vg_{l, 2}'$, and $\|\vg_{l, 1}'\| = O(\omega L^{3/2})$ and $\|\vg_{l, 2}'\|_\infty = O\left(\frac{\omega L^{5/2}\sqrt{\ln m}}{\sqrt{m}}\right)$,
    \item $\|\vg'_l\|, \|\vh'_l\| < O\left(\omega L^{5/2}\sqrt{\ln m}\right)$.
\end{enumerate}

\end{lemma}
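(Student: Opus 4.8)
The plan is to establish all three claims simultaneously by induction on the layer index $l$, propagating the bounds forward through the network as in Lemma~8.2 of \citet{allen2019convergence}, but tracking where the parameter $\alpha$ enters. The base case $l=0$ is immediate: since $\mA$ is fixed and the input is unchanged, $\vh_0'=\vg_0'=\vzero$ and $\mD_0'=\mathbf{0}$, so every bound holds trivially. For the inductive step I would start from the two exact identities
$$\vg_l' = \mW_l^{(0)}\vh_{l-1}' + \mW_l'\vh_{l-1}, \qquad \vh_{l-1}' = \mD_{l-1}^{(0)}\vg_{l-1}' + \mD_{l-1}'\vg_{l-1},$$
which follow from $\mW_l=\mW_l^{(0)}+\mW_l'$ and $\vh_{l-1}=\mD_{l-1}\vg_{l-1}$. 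Unrolling the first identity expresses $\vg_l'$ as a sum over $k\le l$ of terms in which either a freshly injected perturbation $\mW_k'\vh_{k-1}^{(0)}$ or a sign-flip contribution $\mW_k^{(0)}\mD_{k-1}'\vg_{k-1}$ is carried forward through products of the form $\mW^{(0)}\mD^{(0)}\cdots\mW^{(0)}\mD^{(0)}$.

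To obtain claim~2 I would split this sum into $\vg_{l,1}'$ and $\vg_{l,2}'$. Into $\vg_{l,1}'$ I collect the contributions controllable in $\ell_2$ at scale $O(\omega L^{3/2})$: the sign-flip terms feed a \emph{sparse} vector $\mD_{k-1}'\vg_{k-1}$ (sparse by the inductive $\|\mD_{k-1}'\|_0$ bound) into the product matrices, so statement~3 of Lemma~\ref{lemma:initial_forward_matrix_bound} bounds their forward image by a factor $2$; the most recently injected term $\mW_l'\vh_{l-1}$ contributes $O(\omega)$; and summing $L$ injected terms propagated with the $O(\sqrt L)$ operator bound of statement~1 of the same lemma gives the total $O(\omega L^{3/2})$. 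Into $\vg_{l,2}'$ I place the injected perturbations from earlier layers that have passed through at least one random matrix $\mW^{(0)}$ and are therefore dense with small entries; using the randomness of $\mW^{(0)}$ and a Gaussian (sub-Gaussian) concentration bound coordinatewise yields $\|\vg_{l,2}'\|_\infty=O(\omega L^{5/2}\sqrt{\ln m}/\sqrt m)$. Claim~3 then follows at once: $\|\vg_l'\|\le\|\vg_{l,1}'\|+\sqrt m\,\|\vg_{l,2}'\|_\infty=O(\omega L^{5/2}\sqrt{\ln m})$, and since $\vh_l'=\mD_l^{(0)}\vg_l'+\mD_l'\vg_l$ with $\|\mD_l^{(0)}\|_2\le 1$, also $\|\vh_l'\|$ is of the same order once claim~1 is available.

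For claim~1 I would use that $(\mD_l')_{jj}\neq0$ only when the preactivation changes sign, which forces $|(\vg_l^{(0)})_j|\le|(\vg_l')_j|\le|(\vg_{l,1}')_j|+|(\vg_{l,2}')_j|$. Since $(\vg_l^{(0)})_j\sim N(0,2\|\vh_{l-1}^{(0)}\|^2/m)$ has variance $\Theta(1/m)$ by Lemma~\ref{lemma:initial_vector_norm}, anti-concentration gives $\#\{j:|(\vg_l^{(0)})_j|\le\xi\}=O(m^{3/2}\xi)$ with high probability. Splitting the flip set at a free threshold $\xi$ bounds it by $O(m^{3/2}\xi)+O(\|\vg_{l,1}'\|^2/\xi^2)=O(m^{3/2}\xi+\omega^2L^3/\xi^2)$; minimizing over $\xi$ selects $\xi=\Theta(\omega^{2/3}L/\sqrt m)$ and produces the exponent $2/3$, i.e.\ $\|\mD_l'\|_0=O(m\omega^{2/3}L)$ (the hypothesis $\omega<O(L^{-9/2}\ln^{-3/2}m)$ is exactly what guarantees $\xi>2\|\vg_{l,2}'\|_\infty$, validating the split). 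For the weighted bound, on every flip coordinate the nonzero entry of $\mD_l'$ equals the derivative gap $\pm(1-\alpha)/\sqrt{1+\alpha^2}$ and $|(\vg_l)_j|\le|(\vg_l')_j|$, so $\|\mD_l'\vg_l\|^2=\frac{(1-\alpha)^2}{1+\alpha^2}\sum_{\text{flip }j}(\vg_l')_j^2$; bounding the sum by separating the two parts of the decomposition (the $\ell_2$ part directly, the $\ell_\infty$ part through $\|\mD_l'\|_0\,\|\vg_{l,2}'\|_\infty^2$) keeps the scale at $O(\omega^2 L^3)$, which yields the stated $\frac{1-\alpha}{\sqrt{1+\alpha^2}}O(\omega L^{3/2})$ rather than the weaker $L^{5/2}\sqrt{\ln m}$ scale.

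The main obstacle is precisely this sparsity estimate together with the sharpened weighted bound: obtaining the exponent $2/3$ and, crucially, keeping $\|\mD_l'\vg_l\|$ at order $L^{3/2}$ (not $L^{5/2}\sqrt{\ln m}$) both require the two-scale decomposition and the threshold optimization above, and it is the $L^{3/2}$ bound that feeds back into the induction and prevents the perturbation from amplifying with depth. Two features specific to Leaky ReLU must be verified throughout. First, under Algorithm~\ref{alg:rescale_initial} the law of $\mW^{(0)}$ is $\alpha$-independent (the factor $1/(1+\alpha^2)$ having been absorbed into $\tilde\sigma_\alpha$) and $\|\mD_l\|_2\le\max(1,|\alpha|)/\sqrt{1+\alpha^2}\le 1$ for every $\alpha<1$, so the forward-norm and concentration estimates inherited from Lemmas~\ref{lemma:initial_vector_norm} and~\ref{lemma:initial_forward_matrix_bound} hold with $\alpha$-uniform constants; this is why the leading terms in claims~2 and~3 are independent of $\alpha$ (the gap $(1-\alpha)/\sqrt{1+\alpha^2}\le\sqrt2$ being bounded, it is harmlessly absorbed into the $O(\cdot)$ there). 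Second, the only place the two slopes of the activation appear is the derivative gap in $\|\mD_l'\vg_l\|$, which is why it is displayed there and nowhere else. Finally, taking a union bound over the anti-concentration and $\ell_\infty$-concentration events across all $l\in[L]$ and all $m$ coordinates gives the overall probability $1-e^{-\Omega(m^{1/2}/\ln m)}$, with the weakest term coming from the dense $\ell_\infty$ estimate.
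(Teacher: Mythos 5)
Your overall architecture matches the paper's: induction on $l$, the same unrolling of $\vg_l'$ into injected-perturbation terms and sign-flip terms, and the same threshold optimization ($\xi_{\min}=\Theta(\omega^{2/3}L/\sqrt{m})$, validated by the hypothesis on $\omega$) to deduce statement 1 from statements 2 and 3; that implication is correct as you present it. The gap is in how you build the decomposition $\vg_l'=\vg_{l,1}'+\vg_{l,2}'$, and it is fatal to the induction. You place each sign-flip contribution $\left(\prod\mW^{(0)}\mD^{(0)}\right)\mW^{(0)}_{l-k}\mD'_{l-k-1}(\vg^{(0)}_{l-k-1}+\vg'_{l-k-1})$ wholly into $\vg_{l,1}'$ and bound it by the factor-$2$ sparse-forward estimate, i.e.\ by $2\|\mD'_{l-k-1}\vg_{l-k-1}\|\le O(\omega L^{3/2})$. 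Summing the $L$ such terms gives $O(\omega L^{5/2})$, not the required $O(\omega L^{3/2})$; since $\|\mD_l'\vg_l\|$ at layer $l$ is controlled by $\|\vg_{l,1}'\|$, the bound then grows by a factor of $L$ per level of the induction and the perturbation amplifies with depth --- exactly the failure you say must be avoided. The paper's fix is a second two-scale split \emph{inside each sign-flip term}: writing it as $\mW_l^{(0)}\vu_k$ with $\vu_k$ sparse and measurable with respect to layers below $l$, it conditions on $\vu_k$, uses the fresh Gaussian rows of $\mW_l^{(0)}$, and shows that the coordinates exceeding the threshold $b=O(\|\vu_k\|\sqrt{\ln m/m})$ carry total $\ell_2$ mass only $O(\|\vu_k\|/\sqrt{m})$. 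That extra $1/\sqrt{m}$ is what lets $L$ such spiky parts sum to $O(\omega L^{5/2}/\sqrt{m})\le O(\omega L^{3/2})$ --- the only place the hypothesis $m\ge\Omega(L^2)$ enters --- while the sub-threshold bulks sum to the stated $\|\vg_{l,2}'\|_\infty=O(\omega L^{5/2}\sqrt{\ln m}/\sqrt{m})$.

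Your assignment of the $\ell_\infty$-controlled piece is also misplaced. You put the propagated $\mW'$-injection terms into $\vg_{l,2}'$ and invoke coordinatewise Gaussian concentration because they have ``passed through at least one random matrix,'' but the vector entering that random matrix is built from $\mW'$, an arbitrary perturbation (in the application, the accumulated gradient steps) that may depend on the entire initialization; there is no fresh randomness to condition on, and the $k=0$ term $\mW_l'(\cdots)$ passes through no random matrix at all and can concentrate mass $\Theta(\omega)$ on a single coordinate, far exceeding $O(\omega L^{5/2}\sqrt{\ln m}/\sqrt{m})$. These injected terms must stay in $\vg_{l,1}'$ and be handled purely by the deterministic operator-norm bound $O(\sqrt{L})$ from Lemma~\ref{lemma:initial_forward_matrix_bound} (which your own $O(\omega L^{3/2})$ count for them already does); the $\ell_\infty$ piece has to come from the sign-flip terms as described above.
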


\begin{proof}
We divide the proof into two steps. First, we show that statements 2 and 3 of the lemma imply statement 1 . We then prove statements  2 and 3 of the lemma using an induction argument  for $l\in \{0, 1, \ldots L\}$.

\textbf{Statements 2 and 3 imply statement 1.}  We fix $l\in \{0, 1, \ldots L\}$.
In view of Lemma \ref{lemma:initial_vector_norm} and the focus on the $l$th layer, we assume that $\vh_{l-1}^{(0)}$ is a fixed vector such that $\|\vh_{l-1}^{(0)}\|\in [0.5, 1.5]$. More precisely, we can condition on $\vh_{l-1}^{(0)}$ and we know that with overwhelming probability  $\|\vh_{l-1}^{(0)}\|\in [0.5, 1.5]$.
We denote $g_{l, j}^{(0)}:=(\vg_l^{(0)})_j$ (note the difference between the vector notation $\vg_{i, l}$ and  the scalar notation $g_{l, j}^{(0)}$). 
We recall that
$$
\vg_l^{(0)} = \mW_l^{(0)}\vh_{l-1}^{(0)} \sim N\left(0, \frac{2\|\vh_{l-1}^{(0)}\|^2}{m}\mI\right) \text{ and thus } \ 
g_{l, j}^{(0)}\sim N\left(0, \frac{2\|\vh_{l-1}^{(0)}\|^2}{m}\right) \text{ for } j\in[m].
$$
We define the following vector $\vd$ and express it using the decomposition $\vg_l'=\vg_{l, 1}' + \vg_{l, 2}'$ in statement 2 of this lemma:
\begin{align*}
\vd &:= \mD_l'(\mW_l^{(0)}\vh_{l}^{(0)} + \vg'_l)
   = \mD_l'(\mW_l^{(0)}\vh_{l}^{(0)} + \vg'_{l,1} + \vg'_{l,2}).
\end{align*}
We denote $D'_{l, jj} := (\mD'_l)_{jj}$, $g_{l, 1, j}' := (\vg'_{l, 1})_j$ and  $g_{l, 2, j}' := (\vg'_{l, 2})_j$. 

To estimate $\|\vd\|$ and $\|\vd\|_0$ we define the following auxiliary sets that partition $\{j\in[m]: d_j\neq 0\}$, $S_1$ and $S_2$. To do this we arbitrarily choose a positive number $\xi >  2\|\vg'_{l, 2}\|_\infty$ and define 
$$S_1 := \{j \in [m]: | g_{l, j}^{(0)}|<\xi, \ d_{j}\neq 0\}$$
and
$$S_2 := \{j: j\in[m]/S_1, d_j\neq 0\}.$$
In the rest of the proof we bound $|S_1|$, $\sum_{j\in S_1} d_j^2$, $|S_2|$  and $\sum_{j\in S_2} d_j^2$. We then use these estimates to bound $\|\vd\|$ and $\|\vd\|_0$.
 
In order to bound $|S_1|$, we first note that 
$$
\sP(| g_{l, j}^{(0)}|<\xi, d_j\neq 0) 
\leq
\sP(| g_{l, j}^{(0)}|<\xi) 
\leq \Theta\left(\xi\sqrt{\frac{m}{\|\vh_{l-1}^{(0)}\|^2}}\right)= \Theta (\xi\sqrt{m}).
$$
Combining a Chernoff bound for the binomial distribution with the above estimate yields 
\begin{equation}
    |S_1| < O(\xi m^{3/2})
    \ \text{ with probability at least } \ 
1-e^{-\Omega(m^{3/2}\xi)}.    
    \label{eqn:appendix:S1bound}
\end{equation}

For $j\in S_1$, we upper bound the coordinate $d_j$ of $\vd$:
\begin{align*}
    |d_j|& \leq  \left|\frac{1-\alpha}{\sqrt{1+\alpha^2}}\right| |g_{l, j}^{(0)} + g_{l, 1, j}' + g_{l, 2, j}'| \leq \left|\frac{1-\alpha}{\sqrt{1+\alpha^2}}\right| (\xi + \|\vg'_2\|_\infty + |g_{l, 1, j}'|).
\end{align*}
For each index $j \in [m]$ such as $D'_{l, jj}\neq 0$ we note from the definition of $\mD'$ that $|D'_{l, jj}| = (1-\alpha)/\sqrt{1+\alpha^2}$.
By squaring both sides of the above inequality, summing over the indices in $S_1$ and applying \eqref{eqn:appendix:S1bound}, we conclude that with probability at least $1-e^{-\Omega(m^{3/2}\xi)}$
\begin{align}
\sum_{j\in S_1} |d_j|^2&\leq 3\sum_{j\in S_1}\frac{(1-\alpha)^2}{1+\alpha^2} (\xi^2 + \|\vg'_{l, 2}\|^2_\infty + |g_{l, 1, j}'|^2) \notag\\ 
& \leq \frac{3(1-\alpha)^2}{1+\alpha^2} |S_1|(\xi^2 + \|\vg'_{l, 2}\|^2_\infty) + \frac{3(1-\alpha)^2}{1+\alpha^2} \|\vg_{l, 1}'\|^2\notag \\
& \leq \frac{3(1-\alpha)^2}{1+\alpha^2} O\left(\xi m^{3/2}\right)(\xi^2 + \|\vg'_{l, 2}\|^2_\infty) + \frac{3(1-\alpha)^2}{1+\alpha^2} \|\vg_{l, 1}'\|^2.\label{eqn:append:d_l2_S1}
\end{align}

We next estimate $|S_2|$. 
The definitions of the diagonal matrices $\mD_l$, $\mD_l^{(0)}$ and $\mD_l'$ imply that if $D'_{jj}\neq 0$, then $g^{(0)}_{l, j}$ and $g_{l, j}$ have opposite signs, or equivalently, $g_{l, j}^{(0)} + g'_{l, j}$ and $g^{(0)}_{i, l}$ have opposite signs, which further implies that $|g'_{l, j}| \geq |g^{(0)}_{l, j}|$. We further note that by the triangle inequality $|g'_{l, j}| \leq |g'_{l, 1, j}| + |g'_{l, 2, j}|$. Combining these two observation and then applying additional basic estimates, we obtain 
\begin{align*}
    |g'_{l, 1, j}| & \geq |g^{(0)}_{l, j}| - |g'_{l, 2, j}| \geq \xi - \|\vg'_{l, 2}\|_\infty \ \text{ for } \ j \in S_2.
\end{align*}
This bound clearly implies 
\begin{align*}
    \|\vg_{l, 1}'\|^2 &\geq \sum_{j\in S_2}|g_{l, 1, j}'|^2\geq |S_2|(\xi - \|\vg'_{l, 2}\|_\infty)^2
\end{align*}
and consequently
\begin{equation}
    |S_2| \leq \frac{\|\vg_{l, 1}'\|^2}{(\xi - \|\vg_{l, 2}'\|_\infty)^2}.\label{eqn:append:S2_bound}
\end{equation}

For $j\in S_2$, we note as above that $g_{l, j}^{(0)}$ and $g_{l, j}'$ have opposite signs and $|g_{l, j}'| > |g_{l, j}^{(0)}|$. The combination of both of these observations imply  $|g^{(0)}_{l, j} + g'_{l, j}| \leq |g'_{l, j}|$. The later observation and the partition of $\vg_l$ according to the second statement of the lemma yield the following bound for $j \in S_2$:
\begin{align}
    |d_j|& = \frac{|1-\alpha|}{\sqrt{1+\alpha^2}} |g_{l, j}^{(0)} + g_{l, j}'|\leq \frac{|1-\alpha|}{\sqrt{1+\alpha^2}}|g_{l,j}'| \\
    &\leq \frac{|1-\alpha|}{\sqrt{1+\alpha^2}} (|g_{l, 1, j}'| + \|\vg_{l,2}'\|_\infty).\label{eqn:exclude_g0}
    \end{align}
Squaring both sides of \eqref{eqn:exclude_g0},  summing over $j \in S_2$  and applying \eqref{eqn:append:S2_bound} yield
    \begin{align}
    \sum_{j\in S_2}|d_j|^2& \leq 2\frac{(1-\alpha)^2}{ 1+\alpha^2}\sum_{j\in S_2}(|g_{l, 1, j}'|^2 + \|\vg_{l,2}'\|_\infty^2)\leq  2\frac{(1-\alpha)^2}{ 1+\alpha^2} (\|\vg_{l, 1}'\|^2 + |S_2|\|\vg_{l, 2}'\|_\infty^2)\notag\\
    & \leq 2\frac{(1-\alpha)^2}{ 1+\alpha^2} \left(\|\vg_{l, 1}'\|^2 + \frac{\|\vg_{l,2}'\|^2_\infty\|\vg_{l, 1}'\|^2}{(\xi - \|\vg_{l, 2}'\|_\infty)^2} \right).\label{eqn:append:d_l2_S2}   
\end{align}

Obtaining these four different estimates we conclude with bounds on $\|\vd\|_0$ and $\|\vd\|$. We first note that \eqref{eqn:appendix:S1bound} and \eqref{eqn:append:S2_bound} yield
\begin{align*}
    \|\vd\|_0 &\leq |S_1| + |S_2| \leq \Theta(\xi m^{3/2}) + \frac{\|\vg_{l, 1}'\|^2}{(\xi - \|\vg_{l, 2}'\|_\infty)^2} \ \text{ with probability at least } \ 1-e^{-\Omega(m^{3/2}\xi)}.
\end{align*}
Since $\xi > 2 \|\vg_{l, 2}'\|_\infty$,  we can obtain the following bound:
$$
\|\vd\|_0 \leq \Theta(\xi m^{3/2}) + \frac{4\|\vg_{l, 1}'\|^2}{\xi^2}.
$$
In order to tighten the above bound, we minimize the right hand side term with respect to $\xi$ and note that its minimal value is $m\|\vg_{l, 1}'\|^{2/3}$ and is obtained at $\xi_{\min} = \Theta\left({\|\vg_{l, 1}'\|^{2/3}}/{m^{1/2}}\right)$. 
We note that the assumed conditions: 
$\omega < O\left(L^{-9/2}(\ln m)^{-3/2}\right)$, $\|\vg_{l, 1}'\| = O(\omega L^{3/2})$ and $\|\vg_{l, 2}'\|_\infty < O(\omega L^{5/2}\sqrt{\ln m}/\sqrt{m})$ imply that $\xi_{\min} > 2\|\vg_{l, 2}'\|_\infty$ so that the minimum is achieved.  
Thus, an upper bound of $\|\vd\|_0$ is obtained as
$$
\|\vd\|_0 \leq O(m\|\vg_{l, 1}'\|^{2/3})\leq O(m\omega^{2/3} L).
$$

Combining \eqref{eqn:append:d_l2_S1} and \eqref{eqn:append:d_l2_S2} yields
\begin{align*}
    \|\vd\|^2 &= \sum_{j=1}^m d_j^2 = \sum_{j\in S_1} d_j^2 + \sum_{j\in S_2} d_j^2\\
    & \leq \frac{3(1-\alpha)^2}{1+\alpha^2}O\left(\xi m^{3/2}\right)(\xi^2 + \|\vg_{l, 2}\|^2_\infty) + \frac{5(1-\alpha)^2}{1+\alpha^2} \|\vg_{l, 1}'\|^2+ 2\frac{(1-\alpha)^2}{ 1+\alpha^2}\frac{\|\vg_{l, 1}'\|^2\|\vg_{l, 2}'\|_\infty^2}{(\xi - \|\vg_{l, 2}'\|_\infty)^2} \\
    & \leq C \frac{(1-\alpha)^2}{ 1+\alpha^2} (\xi^3m^{3/2} + \|\vg_{l, 1}'\|^2_2).
\end{align*}
Plugging in $\xi =\xi_{\min}$ to the above equation and applying the second statement of this lemma result in 
\begin{equation}
    \|\vd\|^2 \leq  O\left(\frac{(1-\alpha)^2}{1+\alpha^2}\|\vg_{l, 1}'\|^2\right) \leq \frac{1-\alpha}{\sqrt{1+\alpha^2}} O(\omega^2 L^3).\label{eqn:append:l2bound_d}
\end{equation}

Consequently, our bounds for $\|\mD_l\|_0$ and  $\|\vd\| = \|\mD'_l\vg_l\|$ are
\begin{align}
    & \|\mD\|_0 \leq \|\vd\|_0 \leq O(m(\omega L^{3/2})^{2/3}) = O(m\omega^{2/3}L)\label{eqn:append:bound_d0},\\
    & \|\mD'_l \vg_l\|=\|\vd\| \leq O(\omega L^{3/2})\label{eqn:append:bound_d2}.
\end{align}

\textbf{Proof of Statements 2 and 3.}
We prove statements 2 and 3 of Lemma~\ref{lemma:forward_perturbation} by induction on $l\in \{0, 1, \cdots L\}$. 
These statements clearly hold at $l=0$ because there is no perturbation at $l=0$ and $\vg'_0 = \vh'_0 = \vzero$.
In view of the previous part of the proof, 
we assume the lemma holds for layers $0 \leq j \leq l-1$ and 
prove that the second and third statements of the lemma hold at layer $l$. 

Following the given definitions, we expand $\vg_l'$ as follows
\begin{align}
    \vg_l' & = \mW_l \mD_{l-1} \vg_{l-1} - \mW_l^{(0)} \mD_{l-1}^{(0)} \vg_{l-1}^{(0)} \notag\\ 
    & = (\mW_l^{(0)} + \mW_l') (\mD_{l-1}^{(0)} + \mD_{l-1}') (\vg_{l-1}^{(0)} + \vg_{l-1}') - \mW_l^{(0)} \mD_{l-1}^{(0)} \vg_{l-1}^{(0)}\notag\\
    & = \mW_l' (\mD_{l-1}^{(0)} + \mD_{l-1}') (\vg_{l-1}^{(0)} + \vg_{l-1}')  + \mW_l^{(0)} \mD_{l-1}'(\vg_{l-1}^{(0)} + \vg_{l-1}') + \mW_l^{(0)}\mD_{l-1}^{(0)}\vg_{l-1}'.
\end{align}
We first expand $\vg_{l-1}'$ in the last term of the above equation. Similarly, we then iteratively expand $\vg_{l-2}'$, $\ldots$, $\vg_{1}'$ and obtain the following expression:
\begin{align*}
    \vg_l' & = \mW_l' (\mD_{l-1}^{(0)} + \mD_{l-1}') (\vg_{l-1}^{(0)} + \vg_{l-1}')  + \mW_l^{(0)} \mD_{l-1}'(\vg_{l-1}^{(0)} + \vg_{l-1}')\\
    & \quad +  \mW_l^{(0)}\mD_{l-1}^{(0)}\big(\mW_{l-1}' (\mD_{l-2}^{(0)} + \mD_{l-2}') (\vg_{l-2}^{(0)} + \vg_{l-2}')  + \mW_{l-1}^{(0)} \mD_{l-2}'(\vg_{l-2}^{(0)} + \vg_{l-2}')\big)\\
    & \quad + \mW_l^{(0)}\mD_{l-1}^{(0)}\mW_{l-1}^{(0)}\mD_{l-2}^{(0)}\vg_{l-2}'\\
    & = \dots\\
    & = \sum_{k=0}^{l-1} \left(\prod_{j=1}^{k} \mW_{l-j+1}^{(0)}\mD_{l-j}^{(0)}\right) \Big(\mW_{l-k}' (\mD_{l-k-1}^{(0)} + \mD_{l-k-1}') (\vg_{l-k-1}^{(0)} + \vg_{l-k-1}') \\ &
    \quad +\mW_{l-k}^{(0)} \mD_{l-k-1}'(\vg_{l-k-1}^{(0)} + \vg_{l-k-1}') \Big)  + \left(\prod_{j=1}^{l-1} \mW_{l-j+1}^{(0)}\mD_{l-j}^{(0)}\right) \vg_{0}'.
\end{align*}
Since $\vg_0' = \mathbf{0}$, the last term is $\vzero$. We consequently express $\vg_l'$ as a sum of the following two terms:
\begin{align}
    \vg_l' & = \sum_{k=0}^{l-1} \left(\prod_{j=1}^{k} \mW_{l-j+1}^{(0)}\mD_{l-j}^{(0)}\right) \left(\mW_{l-k}' (\mD_{l-k-1}^{(0)} + \mD_{l-k-1}') (\vg_{l-k-1}^{(0)} + \vg_{l-k-1}')\right)\label{eqn:perturbation:W_pime}\\
    &\quad  +\sum_{k=0}^{l-1} \left(\prod_{j=1}^{k} \mW_{l-j+1}^{(0)}\mD_{l-j}^{(0)}\right)\left(\mW_{l-k}^{(0)} \mD_{l-k-1}'(\vg_{l-k-1}^{(0)} + \vg_{l-k-1}') \right).\label{eqn:perturbation:W0Dprime}
\end{align}
We estimate with high probability the above first term (right hand side in \eqref{eqn:perturbation:W_pime}) by using the assumption $\|\mW'\| < \omega$ and the first statement in Lemma~\ref{lemma:initial_forward_matrix_bound} (to bound $\|\prod_{j=1}^k \mW_{l-j+1}^{(0)}\mD_{l-j}^{(0)}\|$, $k=0, 1, \ldots l-1$). We thus obtain with probability at least $1-Le^{\Omega(m/L)}$
\begin{align*}
&\left\|\sum_{k=0}^{l-1} \left(\prod_{j=1}^{k} \mW_{l-j+1}^{(0)}\mD_{l-j}^{(0)}\right) \big(\mW_{l-k}' (\mD_{l-k-1}^{(0)} + \mD_{l-k-1}') (\vg_{l-k-1}^{(0)} + \vg_{l-k-1}')\big) \right\|  \\ 
&\leq L  \max_{k}\left\|\prod_{j=1}^{k} \mW_{l-j+1}^{(0)}\mD_{l-j}^{(0)}\right\| \left\| \mW_{l-k}' (\mD_{l-k-1}^{(0)} + \mD_{l-k-1}') (\vg_{l-k-1}^{(0)} + \vg_{l-k-1}')\right\|\\
& \leq L \cdot O(\sqrt{L}) \cdot  \max_{k}\|\mW_{l-k}'\| \cdot \|\mD_{l-k-1}\| \cdot \|\vg_{l-k-1}^{(0)} + \vg_{l-k-1}'\|\\
& \leq L \cdot O(\sqrt{L}) \cdot  \omega \cdot \frac{\max(|\alpha|, 1)}{\sqrt{1+\alpha^2}}
\cdot \max_{k} \|\vg_{l-k-1}^{(0)} + \vg_{l-k-1}'\|\\
& \leq O(\omega L^{3/2}) \max_{k}\|\vg_{l-k-1}^{(0)} + \vg_{l-k-1}'\|.
\end{align*}
We further use Lemma~\ref{lemma:initial_vector_norm} to bound $\|\vg^{(0)}_{l-k-1}\|$, $k\in\{0, 1, \ldots l-1\}$,  by a constant and use the induction assumption to bound $\|\vg'_{l-k-1}\|$, $k\in\{0, 1, \ldots l - 1\}$, by $O(\omega L^{5/2}\sqrt{\ln m})$. With probability at least $1-O(L)e^{-\Omega(m/L)}$, the first term (right hand side in \eqref{eqn:perturbation:W_pime}) is thus bounded by
\begin{equation}
    O(\omega L^{3/2}) (O(1) + O(\omega L^{5/2}\sqrt{\ln m})) = O(\omega L^{3/2}).\label{eqn:append:bound_first_term_l2}
\end{equation}

In order to bound the second term, which appears in \eqref{eqn:perturbation:W0Dprime}, we denote 
$$\vd_k := \mD'_{l-k-1} (\vg_{l-k-1}^{(0)} + \vg_{l-k-1}'), \quad k=0, 1, \dots l-1 
$$
and
$$
\vy_k := \left(\prod_{j=1}^{k} \mW_{l-j+1}^{(0)}\mD_{l-j}^{(0)}\right)\mW_{l-k}^{(0)} \vd_k.
$$
We show it can be decomposed into $\vy_k = \vy_{k, 1} + \vy_{k, 2}$, where with probability at least $1-Le^{-\Omega(m/L)}$, $$\|\vy_{k,1}\|\leq O\left(\frac{(1-\alpha) \omega L^{3/2}}{(1+\alpha^2)^{1/2}\sqrt{m}}\right),\quad \|\vy_{k, 2}\|_\infty \leq O\left(\frac{(1-\alpha)
\omega L^{3/2} \sqrt{\ln m}}{(1+\alpha^2)^{1/2}\sqrt{m}}\right).$$ 
Denoting $\vu_k:= \mD_{l-1}^{(0)}\mW_{l-1}^{(0)},...\mD_{l-k}^{(0)}\mW_{l-k}^{(0)}\vd_k$ and applying the induction assumption we note that $\|\vd_k\|_0<O(m\omega^{2/3}L)$. Next, we apply the third statement of the Lemma~\ref{lemma:initial_forward_matrix_bound} for $\vu_k$ (instead of $\vv$) and obtain that with probability at least $1-e^{-\Omega(m/L)}$
\begin{equation}
    \|\vu_k\|\leq 4 \|\vd_k\|.\label{eqn:append:l2_u}
\end{equation}
We note that $\vy_k=\mW_{l}^{(0)}\vu_k$ and thus $
\vy_k| \vu_k \sim N\left(0, \frac{2\|\vu_k\|^2}{m}\mI\right)$.

We denote $y_{k, j}:=(\vy_k)_j$ and $\sigma^2 := 2\|\vu_k\|^2/m$ and we let $b = O(\|\vu_k\|\sqrt{\ln m/m})$. 
We investigate the tail probability of the Gaussian random variable $y_{k, j}$ conditioned on $\vu_k$. It is clear that 
\begin{equation}
    \sP(|y_{k,j}| \geq b t | \vu_k) \leq \frac{1}{\sqrt{2\pi}bt/\sigma}e^{-b^2t^2/2\sigma^2} \quad \forall t\in\sN.\label{eqn:append:prob_ykj_greater_bt}
\end{equation}

We denote $R_t := \{j: y_{k, j} \geq b t\}\subset[m]$ and $r_t:=\sqrt{m} / ((\ln m)^2 t^2)$. Using the independence of $\{y_{k, j}\}_{j \in [m]}$ given $\vu_k$ and applying a union bound for \eqref{eqn:append:prob_ykj_greater_bt} yield
\begin{align*}
 \sP(|R_t| \geq r_t|\vu_k) &\leq \left(\begin{matrix}
     m\\ r_t
 \end{matrix}\right) \times \left(\frac{1}{\sqrt{2\pi}bt/\sigma}e^{-b^2t^2/2\sigma^2}\right)^{r_t} \\
 &\leq \left(\frac{\|\vu_k\|}{\sqrt{\pi}bt\sqrt{m(1+\alpha^2)}}\right)^{r_t}\left(\frac{me}{r_t}\right)^{r_t}e^{-\Omega(b^2t^2m r_t)}\\
 & \leq O(1) \exp\left(-\Omega(b^2t^2m r_t) + \left(\frac{1}{2} \ln m - \ln b - \Omega(1)\right)r_t\right).
\end{align*} 
Denoting $q:=\sqrt{m}/\ln^2 m$, we simplify the above bound as follows
$$
 \sP(|R_t| \geq q/t^2) \leq e^{-\Omega(b^2 q m)}.
$$

We further denote $Q:=\{0,1, 2, 3, .. \lfloor\frac{1}{2}\log_2 q\rfloor\}$,  $N_Q:=\lfloor\frac{1}{2}\log_2 q\rfloor$ and $T:=\{2^p: p\in Q\}$. We designate the elements in $T$ by $t_p := 2^p$ for $p\in Q$.  
Let $t_{N_Q +1} := 2^{\lfloor\frac{1}{2}\log_2 q\rfloor + 1} \equiv 2^{N_Q + 1}$ and notice that $t_{N_Q + 1}^2 > q$. Thus, applying the above estimate and a union bound over $t\in T$ and $t_{N_Q+1}$ 
$$
    |R_t| < q/t^2, \ \forall t\in T, \ \text{ and } \  |R_{t_{N_Q + 1}}| < 1
    \ \text{ with probability at least } \ 1-(|T| + 1) e^{-\Omega(b^2q m)}.
$$

By definition, we note that when $|R_{t_{N_Q + 1}}| = 0$ and  $|y_{k, j}| < t_{N_Q+1}$ for $j\in R_{t_{N_Q}}$. We also note that for $j\in R_{t_p}\setminus R_{t_{p+1}}$, $|y_{k, j}| < t_{p+1}$. Thus, for $R:= R_1 \equiv \{j: |y_{k, j}| \geq b\}$, we bound $\sum_{j\in R} y_{k, j}^2$ with high probability  as follows
\begin{align*}
    \sum_{j\in R} y_{k, j}^2  & = \sum_{j\in R / R_{t_{N_Q}}} y_{k, j}^2 + \sum_{j\in R_{t_{N_Q}}} y_{k, j}^2 \leq \sum_{j\in R /R_{t_{N_Q}}} y_{k, j}^2  + |R_{t_{N_Q}}|(bt_{N_Q+1})^2\\ 
    & \leq \sum_{j\in R / R_{t_{N_Q}}/ R_{t_{N_Q-1}}} y_{k, j}^2+ 
    \sum_{j\in R_{t_{N_Q-1}}/R_{t_{N_Q}}} y_{k, j}^2  + |R_{t_{N_Q}}|(bt_{N_Q+1})^2 \\
    & \leq \sum_{j\in R / R_{t_{N_Q}}/ R_{t_{N_Q-1}}} y_{k, j}^2+ 
    |R_{t_{N_Q-1}}|(bt_{N_Q})^2 + |R_{t_{N_Q}}|(bt_{N_Q+1})^2 \\
    ...&...\\
    & \leq \sum_{p\in Q}|R_{t_p}| (b2^{p+1})^2\leq \sum_{p\in Q}  q/t_p^2 (b2^{p+1})^2\\
    & = \sum_{p\in Q} qb^2 2^2 = O(q b^2\ln q ) \ \text{ with probability at least } 1-\Omega(|T|) e^{-\Omega(b^2qm)}.
\end{align*}
Since $b = O\left(\|\vu_k\|\sqrt{\ln m/m}\right)$
 and $q = \sqrt{m}/\ln^2 m$, we express the above bound as 
\begin{equation}
    \sum_{j\in R}y_{k, j}^2 \leq O(\|\vu_k\|^2/m) \ \text{ with probability at least } \ 1- e^{-\Omega(\frac{m^{1/2}}{\ln m})}.\label{eqn:append:bound_sum_square_S}
\end{equation}

 We split vector $\vy_k$ into $\vy_k = \vy_{k,1} + \vy_{k, 2}$ using the indices set $R$ as
 \begin{align}
      \vy_{k, 1} = (y_{k, 1} 1_{1\in R}, y_{k, 2} 1_{2\in R}, \ldots, y_{k, m} 1_{m\in R})^T,\label{eqn:append:split_yk1} \\ 
 \vy_{k, 2} = (y_{k, 1} 1_{1\notin R}, y_{k, 2} 1_{2\notin R}, \ldots, y_{k, m} 1_{m\notin R})^T.\label{eqn:append:split_yk2}
 \end{align}
Using \eqref{eqn:append:bound_sum_square_S} and the definition of $R$, and then the induction assumption on the bound of $\|\vd_k\|$ and \eqref{eqn:append:l2_u} yield the following estimates with probability at least $1- e^{-\Omega\left(\frac{m^{1/2}}{\ln m}\right)}$:
\begin{align}
 &   \|\vy_{k, 1}\| \leq O\left((\frac{\|\vu\|}{m^{1/2}}\right) \leq O\left((\frac{(1-\alpha)\omega L^{3/2}}{(1+\alpha^2)^{1/2}m^{1/2}}\right),\label{eqn:append:bound_yk1}
\\
&\|\vy_{k, 2}\|_\infty \leq b = O\left(\frac{\|\vu\|\sqrt{\ln m}}{\sqrt{m}}\right) \leq O\left(\frac{(1-\alpha)\omega L^{3/2}\sqrt{\ln m}}{(1+\alpha^2)^{1/2}\sqrt{m}}\right).\label{eqn:append:bound_yk2}
\end{align}
Following the later decomposition of $\vy_k$ (with the components in \eqref{eqn:append:split_yk1} and \eqref{eqn:append:split_yk2}), we decompose the term in \eqref{eqn:perturbation:W0Dprime} into $\sum_{k=0}^{l-1} \vy_{k, 1}$ and $\sum_{k=0}^{l-1} \vy_{k, 2}$. We denote $\vg_{l, 2}' := \sum_{k=0}^{l-1}\vy_{k, 2}$ and $\vg_{l, 1}' := \vg_{l}' - \vg_{l, 2}'$. We note that $\vg_{l, 1}'$ is the sum of the term in \eqref{eqn:perturbation:W_pime} and $\sum_{k=0}^{l-1}\vy_{k,1}$. By using the bound of \eqref{eqn:perturbation:W_pime} given in \eqref{eqn:append:bound_first_term_l2} and \eqref{eqn:append:bound_yk1}, we bound $\vg_{l, 1}'$ as follows
\begin{align*}
    \|\vg_{l, 1}'\| &\leq \left\|\sum_{k=0}^{l-1} \left(\prod_{j=1}^{k} \mW_{l-j+1}^{(0)}\mD_{l-j}^{(0)}\right) \left(\mW_{l-k}' (\mD_{l-k-1}^{(0)} + \mD_{l-k-1}') (\vg_{l-k-1}^{(0)} + \vg_{l-k-1}')\right)\right\|  \\
    & \quad + \sum_{k=0}^{l-1}\|\vy_{k, 1}\|\\
    &\leq O(\omega L^{3/2}) + \sum_{k=0}^{l-1} \|\vy_{k, 1}\| \\
    &\leq O(\omega L^{3/2}) + L\max_{k\in\{0, 1,\ldots, l-1\}} \|\vy_{k, 1}\|\\
    & \leq  O(\omega L^{3/2}) + L O\left(\frac{(1-\alpha)\omega L^{3/2}}{(1+\alpha^2)^{1/2}m^{1/2}}\right).
\end{align*}
Using the fact that $m\geq \Omega(L^2)$, we show the $\ell_2$ norm for $\vg_{l, 1}'$ in the second statement of this lemma holds:
$$
\|\vg_{l, 1}'\| \leq  O(\omega L^{3/2}) + L O\left(\frac{(1-\alpha)\omega L^{3/2}}{(1+\alpha^2)^{1/2}m^{1/2}}\right)\leq O(\omega L^{3/2}).
$$

Applying the induction assumption, i.e., $\|\vg_{l-k, 1}'\|\leq O(\omega L^{3/2})$ for $k\in \{0, 1, \ldots,l-1 \}$, and \eqref{eqn:append:bound_yk2}, we conclude the second statement of the lemma for layer $l$ as follows
\begin{equation}
    \|\vg_{l, 2}'\|_\infty \leq \sum_{k=0}^{l-1} \|\vy_{k,2}\|_\infty \leq L O\left(\frac{1-\alpha}{(1+\alpha^2)^{1/2}}\frac{\sqrt{\ln m} \omega L^{3/2}}{\sqrt{m}}\right) =  O \left(\frac{\sqrt{\ln m} \omega L^{5/2}}{\sqrt{m}}\right).\label{eqn:append:bound_gl2_inf_norm}
\end{equation}

Finally, we note that  $\|\vg_l'\|\leq \|\vg_{l, 1}'\| + \|\vg_{l,2}'\|$, and thus the first part $\|\vg_{l, 1}'\|$ is bounded by $O(\omega L^{3/2})$. Furthermore, applying \eqref{eqn:append:bound_gl2_inf_norm}, we bound the second part, $\|\vg_{l,2}'\|$, as follows
$$
\|\vg_{l,2}'\| = \sqrt{\sum_{j\in S_2} g_{l, 2, j}^2} \leq \sqrt{m \frac{\ln m\omega^2 L^5}{m}} = \sqrt{\ln m} \omega L^{5/2}.
$$
By definition, $\vh_{l}' = \mD \vg_l' + \mD'\vg^{(0)}_l +\mD'\vg'_l = \mD \vg_l' + \mD'\vg_l$. Applying $\|\mD\| \leq 1$, $\|\vg_l'\|\leq O(\omega L^{5/2}\sqrt{\ln m})$ and $\|\mD'\vg_l\|\leq O(\omega L^{3/2})$, we bound the norm of $\vh_l'$ in the following way
$$\|\vh_l'\|\leq O(1)O(\omega L^{5/2}\sqrt{\ln m}) + O(\omega L^{3/2}) = O(\omega L^{5/2}\sqrt{\ln m}).$$
Thus the third statement of this lemma is concluded for layer $l$. 
\end{proof}

\begin{lemma}\label{lemma:intermediate_perturbation} 
For given integer $a$,$b$ as $1\leq a < b\leq L$, and if $d<O\left(\frac{m}{L\ln m}\right)$, $\|\mW'\|\leq \omega < O\left(\frac{1}{L^{9/2}\ln^{3/2}m}\right)$.
Then we obtain that with probability at least $1-e^{-\Omega(m/L)}$ 
\begin{enumerate}
    \item $\|\mW^{(0)}_b (\mD^{(0)}_{i, b-1} + \mD'_{i, b-1}) \mW^{(0)}_{b-1}... (\mD^{(0)}_{i, a} + \mD'_{i, a}) \mW^{(0)}_a\| \leq O(\sqrt{L})$.
    \item $\|(\mW^{(0)}_b + \mW'_b) (\mD^{(0)}_{i, b-1} + \mD'_{i, b-1}) (\mW^{(0)}_{b-1} + \mW'_{b-1})... (\mD^{(0)}_{i, a} + \mD'_{i, a}) (\mW^{(0)}_a + \mW'_a)\| \leq O(\sqrt{L})$.
    \item $
    \|\mW_{b+1}^{(0)} (\mD^{(0)}_{i, b} + \mD'_{i, b}) (\mW^{(0)}_{b} + \mW'_{b})... (\mD^{(0)}_{i, a} + \mD'_{i, a})  - \mW_{b+1}^{(0)}\mD^{(0)}_{i, b}\mW_b^{(0)}...\mW_{a+1}^{(0)}\mD_{i, a}^{(0)}\|  
    \leq O\left(\frac{1-\alpha}{\sqrt{1+\alpha^2}}L^{3/2}\right).$
\item $\|\mB(\mD^{(0)}_{L} + \mD'_{L}) (\mW^{(0)}_{L} + \mW'_{L})... (\mD^{(0)}_{i, a} + \mD'_{i, a})  - \mB\mD^{(0)}_{L}\mW_L^{(0)}...\mW_{a+1}^{(0)}\mD_{i, a}^{(0)}\| \\
\leq O\left(\frac{1-\alpha}{\sqrt{1+\alpha^2}}\frac{\omega^{1/3}L^2 \sqrt{m\ln m}}{\sqrt{d}}\right).$

\end{enumerate}
\end{lemma}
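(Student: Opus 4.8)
The plan is to establish the four statements in the listed order, since statements~3 and~4 (the difference bounds) will be obtained by a hybrid/telescoping argument that relies on the norm bounds of statements~1 and~2 together with the initialization estimates of Lemma~\ref{lemma:initial_forward_matrix_bound} and the perturbation estimates of Lemma~\ref{lemma:forward_perturbation}. The one algebraic fact I would isolate first is that every nonzero diagonal entry of a perturbation $\mD'_{i,l}$ has magnitude exactly $(1-\alpha)/\sqrt{1+\alpha^2}$ (the Leaky ReLU derivative gap), while $\|\mD^{(0)}_{i,l}+\mD'_{i,l}\|_2=\max(1,|\alpha|)/\sqrt{1+\alpha^2}\le 1$; combined with the sparsity $\|\mD'_{i,l}\|_0=O(m\omega^{2/3}L)$ from Lemma~\ref{lemma:forward_perturbation}, these are the only places where $\alpha$ enters, which is exactly why the bound carries the factor $(1-\alpha)/\sqrt{1+\alpha^2}$.

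For statements~1 and~2 I would adapt the intermediate-perturbation estimates of \citet{allen2019convergence} to Leaky ReLU rather than telescope naively, because a crude term-by-term comparison against the unperturbed product only yields $O(L^{3/2})$. The correct route is to expand the perturbed product over the subsets of layers at which one uses the sparse correction $\mD'$ instead of $\mD^{(0)}$: the empty-subset term is the unperturbed product, bounded by $O(\sqrt L)$ via statement~1 of Lemma~\ref{lemma:initial_forward_matrix_bound}, and each nonempty-subset term is controlled through the sparse-vector bound (statement~3 of Lemma~\ref{lemma:initial_forward_matrix_bound}), which expands a sparse input by only a factor $2$ with no $\sqrt L$ amplification. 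Summing the resulting geometric-type series (whose ratio is forced below $1$ by $\omega<O(L^{-9/2}\ln^{-3/2}m)$, which also guarantees $\|\mD'_{i,l}\|_0=O(m\omega^{2/3}L)\le O(m/(L\ln m))$, the sparsity threshold the sparse bounds require) leaves the empty-subset term dominant, giving $O(\sqrt L)$. Statement~2 is identical with the weights also perturbed; since $\|\mW'_l\|_2\le\omega$ is tiny, the extra factors do not change the leading order.

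With statements~1 and~2 in hand, statement~4 (and statement~3, which is structurally identical but with a fixed $\mW^{(0)}_{b+1}$ in place of $\mB$ and the $s$-independent sparse bound of statement~3 of Lemma~\ref{lemma:initial_forward_matrix_bound} in place of statement~4) follows from the telescoping identity
\[
\prod_k X_k-\prod_k Y_k=\sum_j\Bigl(\textstyle\prod_{k<j}Y_k\Bigr)(X_j-Y_j)\Bigl(\textstyle\prod_{k>j}X_k\Bigr),
\]
where $X_k$ are the perturbed factors and $Y_k$ the unperturbed ones. I would split the $O(L)$ terms into $\mW'$-terms and $\mD'$-terms. Each $\mW'$-term is bounded by (unperturbed left, $O(\sqrt{m/d})$ by statement~2 of Lemma~\ref{lemma:initial_forward_matrix_bound}) $\times\,\omega\,\times$ (perturbed right, $O(\sqrt L)$ by statement~2), giving $O(\omega L^{3/2}\sqrt{m/d})$ after summation, which is negligible against the target by $\omega<O(L^{-9/2}\ln^{-3/2}m)$. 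Each $\mD'$-term is the crux: the perturbed right product sends the unit input to a vector of norm $O(\sqrt L)$, then $\mD'_{i,j}$ scales it by $(1-\alpha)/\sqrt{1+\alpha^2}$ and forces its support to size $s=O(m\omega^{2/3}L)$, and finally the fixed unperturbed backpropagation block $\mB\mD^{(0)}_L\mW^{(0)}_L\cdots$ acts on this $s$-sparse vector, where statement~4 of Lemma~\ref{lemma:initial_forward_matrix_bound} contributes the factor $O(\sqrt{s\ln m/d})=O(\omega^{1/3}\sqrt{mL\ln m/d})$. Multiplying the three factors gives $\frac{1-\alpha}{\sqrt{1+\alpha^2}}O(\omega^{1/3}L\sqrt{m\ln m/d})$ per term, and summing the $O(L)$ terms yields exactly $O\!\bigl(\frac{1-\alpha}{\sqrt{1+\alpha^2}}\frac{\omega^{1/3}L^2\sqrt{m\ln m}}{\sqrt d}\bigr)$.

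The main obstacle is statement~2 (equivalently statement~1): obtaining the tight $O(\sqrt L)$ norm bound for the fully perturbed intermediate product, since a chain of $O(L)$ matrices of spectral norm $O(1)$ is a priori exponential in $L$, and the $O(\sqrt L)$ behavior survives perturbation only because the corrections $\mD'$ are sparse and the sparse-vector bounds avoid per-layer $\sqrt L$ amplification. Making the summation over flip-subsets rigorous — in particular checking that the sparsity threshold $O(m/(L\ln m))$ is met at every layer, so that statements~3 and~4 of Lemma~\ref{lemma:initial_forward_matrix_bound} remain applicable — is where the care lies; the remaining bookkeeping (the union bound over the $O(L)$ layers, keeping the failure probability at $e^{-\Omega(m/L)}$, and the consistent tracking of the factor $(1-\alpha)/\sqrt{1+\alpha^2}$) is routine.
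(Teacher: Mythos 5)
Your proposal is correct and follows essentially the same route the paper takes: the paper defers this lemma to the arguments of Lemmas~8.6 and~8.7 of \citet{allen2019convergence}, adapted via Lemma~\ref{lemma:forward_perturbation} and the identity $\|\mD'\|=(1-\alpha)/\sqrt{1+\alpha^2}$, and your subset-expansion for the $O(\sqrt L)$ bounds, your telescoping split into $\mW'$- and $\mD'$-terms, and your use of the sparse-vector bounds (statements~3 and~4 of Lemma~\ref{lemma:initial_forward_matrix_bound}) to extract the factor $\omega^{1/3}\sqrt{mL\ln m/d}$ are precisely that adaptation. You also correctly pinpoint the only two places where $\alpha$ enters, which is the substantive change from the ReLU case.
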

The proof of this lemma follows the same arguments of the proofs of Lemmas~8.6 and~8.7 in \citet{allen2019convergence}, but uses instead Lemma~\ref{lemma:forward_perturbation} and the fact that $\|\mD'\|=(1-\alpha)/\sqrt{1+\alpha^2}$. 

\subsection{Gradient Bounds and Proof of Lemma~\ref{thm:gradient_bound}}\label{appd:gradient}

We first introduce two lemmas (Lemmas~\ref{lemma:gradient_upper_bound_init} and~\ref{lemma:gradient_lower_bound_init}) that provide upper and lower bounds for the Frobenius norm of a certain matrix-valued function $\mG_{i, l}(\vv; \mW^{(0)})$ with randomly initialized parameters $\mW^{(0)}$. This function, which 
is defined below in \eqref{eqn:append:def_mG_func}  equals the gradient of the loss function when $\vv=\ve_i^{(0)}
\equiv \mB\vh_{L,i}^{(0)} - \vy_i$. At last, we conclude Lemma~\ref{thm:gradient_bound} by applying the perturbation bounds of  Lemmas~\ref{lemma:forward_perturbation} and~\ref{lemma:intermediate_perturbation} in order to show that the order of the bounds in Lemmas~\ref{lemma:gradient_upper_bound_init} and~\ref{lemma:gradient_lower_bound_init} are not affected by a small perturbation $\mW'$ as long as  $\|\mW'\| \leq \omega<O\left(\frac{\delta^{3/2}}{n^{3/2}L^{15/2}\ln^{3/2} m}\right)$.

We remark that the proof of Lemma~\ref{lemma:gradient_upper_bound_init} is straightforward and follows \citet{allen2019convergence}. The proof of Lemma~\ref{lemma:gradient_lower_bound_init} follows ideas of \citet{zou2019improved}, while adapting it to Leaky ReLUs and improving the lower bound of $\|\nabla_\mW\gL(\mW^{(0)})\|_F^2$ by quantifying lower bounds for layers before $L$ instead of only using $\|\nabla_{\mW_L}\gL(\mW^{(0)})\|_F^2$ as done in \citet{zou2019improved}. This improvement reduces a factor $L$ in the lower bound, which will eventually make the learning rate of the desired theory independent of $L$. The idea of concluding Lemma~\ref{thm:gradient_bound} by examining the effect of a small perturbation on the parameter follows \citet{allen2019convergence}.

We define the matrix-valued function, $\mG_{i,l}(\vv; \mW)$, for $l\in[L]$ and $i\in[n]$ and $\vv\in\sR^d$ as follows
\begin{equation}
    \mG_{i,l}(\vv; \mW) := \mD_{i, l} \textbf{Back}^T_{i,l}\vv \vh^T_{i,l-1}=(\textbf{Back}_{i,l}\mD_{i, l})^T\vv \vh^T_{i,l-1}.
    \label{eqn:append:def_mG_func}
\end{equation}
We note that $\mG_{i,l}(\vv; \mW)$ is related to the gradient of the loss function as follows: $$\mG_{i, l}(\ve_i;\mW)\equiv \nabla_{\mW_l} \text{loss}(\vx_i, \vy_i;\mW).$$

\begin{lemma}\label{lemma:gradient_upper_bound_init}
    Assume the setup of \S\ref{sec:problem_setup} with  randomly initialized   $\mW^{(0)}$. If $d \leq O(\frac{m}{L\ln m})$, then with probability at least $1- e^{-\Omega(m/L)}$ 
    \begin{equation}
    \|\mG_{i, l} (\vv;\mW^{(0)})\|_F^2 \leq O\left(\frac{m}{d}\right)\|\vv\|^2.
    \end{equation}
 
    \end{lemma}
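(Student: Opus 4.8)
The plan is to exploit the fact that $\mG_{i,l}(\vv;\mW^{(0)})$ is a rank-one outer product, so that its Frobenius norm factorizes into two factors which I can bound separately using the initialization lemmas already proved. Writing $\vu := \mD_{i,l}\,\textbf{Back}_{i,l}^T\vv \in \sR^m$, the definition \eqref{eqn:append:def_mG_func} reads $\mG_{i,l}(\vv;\mW^{(0)}) = \vu\,\vh_{i,l-1}^T$, and since the Frobenius norm of an outer product is the product of the two vector norms,
\begin{equation*}
\|\mG_{i,l}(\vv;\mW^{(0)})\|_F = \|\vu\|\cdot\|\vh_{i,l-1}\|.
\end{equation*}
It therefore suffices to control $\|\vh_{i,l-1}\|$ and $\|\vu\|$.

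First I would bound the forward signal: applying Lemma~\ref{lemma:initial_vector_norm} with a fixed constant $\epsilon$ gives $\|\vh_{i,l-1}\| \leq 1+\epsilon = O(1)$ with probability at least $1-e^{-\Omega(m/L)}$. Next I would bound $\|\vu\| \leq \|\mD_{i,l}\|_2\,\|\textbf{Back}_{i,l}^T\vv\|$. The diagonal entries of $\mD_{i,l}$ are either $1/\sqrt{1+\alpha^2}$ or $\alpha/\sqrt{1+\alpha^2}$, so $\|\mD_{i,l}\|_2 = \max(1,|\alpha|)/\sqrt{1+\alpha^2}\leq 1$ for every $\alpha$. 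For the remaining factor I would invoke statement~2 of Lemma~\ref{lemma:initial_forward_matrix_bound} with $a=l$, which (using the hypothesis $d \leq O(m/(L\ln m))$) gives $\|\textbf{Back}_{i,l}\|_2 \leq O(\sqrt{m/d})$ with probability at least $1-e^{-\Omega(m/L)}$; since $\|\textbf{Back}_{i,l}^T\vv\| \leq \|\textbf{Back}_{i,l}\|_2\,\|\vv\|$, this yields $\|\vu\| \leq O(\sqrt{m/d})\,\|\vv\|$.

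Combining the two estimates and squaring produces $\|\mG_{i,l}(\vv;\mW^{(0)})\|_F^2 \leq O(m/d)\,\|\vv\|^2$, which is exactly the claim. A union bound over the two events, each failing with probability $e^{-\Omega(m/L)}$, keeps the total failure probability at $e^{-\Omega(m/L)}$, matching the stated bound. I do not expect a genuine obstacle: the analytic substance is entirely carried by the spectral-norm bound on the backpropagation operator supplied by Lemma~\ref{lemma:initial_forward_matrix_bound}, and the only points requiring care are recognizing the outer-product factorization and verifying that $\|\mD_{i,l}\|_2 \leq 1$ holds uniformly in $\alpha$, including the case $|\alpha|>1$ relevant to $\alpha=-2$.
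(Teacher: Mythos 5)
Your proof is correct and takes essentially the same route as the paper's: both exploit the rank-one outer-product structure to factor $\|\mG_{i,l}(\vv;\mW^{(0)})\|_F$ into $\|\mD_{i,l}\textbf{Back}_{i,l}^T\vv\|\cdot\|\vh_{i,l-1}\|$, bound the second factor by $O(1)$ via Lemma~\ref{lemma:initial_vector_norm}, and invoke statement~2 of Lemma~\ref{lemma:initial_forward_matrix_bound} for the $O(\sqrt{m/d})$ spectral bound on the backpropagation operator. Your write-up is in fact slightly more careful than the paper's, which leaves the steps $\|\mD_{i,l}\|_2\leq\max(1,|\alpha|)/\sqrt{1+\alpha^2}\leq 1$ and $\|\vh_{i,l-1}\|=O(1)$ implicit.
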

\begin{proof}
The second statement in Lemma~\ref{lemma:initial_forward_matrix_bound} implies that $\|\textbf{Back}_{i, l}\| < O(\sqrt{\frac{m}{d}})$ with probability at least $1- e^{-\Omega(m/L)}$ and therefore 
\begin{align*}
\|\mG_{i, l}(\vv; \mW^{(0)})\|_F^2
&\leq \| \mD_{i, l}\textbf{Back}_{i, l}^{(0)T}\vv \vh_{i, l-1}^{(0)T}\|_F^2 \\
&\leq \| \mD_{i, l}\textbf{Back}_{i, l}^{(0)T}\vv \|^2 \|\vh_{i, l-1}^{(0)T}\|^2\\
& \leq O\left(\frac{m}{d}\|\vv_i\|^2_2\right).
\end{align*}
\end{proof}

\begin{lemma}\label{lemma:gradient_lower_bound_init}
    Assume the setup of \S\ref{sec:problem_setup} and with randomly initialized $\mW^{(0)}$. For any set of vector $\{\vv_i\}_{i=1}^n  \subset \sR^d$, 
    $$\|\sum_{i=1}^n\mG_{i, l}(\vv_i;\mW^{(0)})\|^2_F \geq \Omega\left(\frac{(1-\alpha)^2}{(1+\alpha^2)}\frac{\delta m}{ndL}\right)\sum_{i=1}^n\|\vv_i\|^2 \ \text{ with probability } \geq 1- e^{-\Omega(m\delta^2)}.
    $$
\end{lemma}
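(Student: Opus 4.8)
The plan is to regard the map $T\colon(\vv_1,\dots,\vv_n)\mapsto\sum_{i=1}^n\mG_{i,l}(\vv_i;\mW^{(0)})$ as a linear operator from $(\sR^d)^n$ into $\sR^{m\times m}$ and to lower bound its smallest singular value, since the claim is exactly that $\|T(\vv_1,\dots,\vv_n)\|_F^2\geq\Omega\!\big(\tfrac{(1-\alpha)^2}{1+\alpha^2}\tfrac{\delta m}{ndL}\big)\sum_i\|\vv_i\|^2$. Writing $\vu_i:=\mD_{i,l}\textbf{Back}_{i,l}^{T}\vv_i\in\sR^m$, each summand is the rank-one matrix $\vu_i\vh_{i,l-1}^{T}$, so
\[
\Big\|\sum_{i=1}^n\mG_{i,l}(\vv_i)\Big\|_F^2=\sum_{i,j}\langle\vu_i,\vu_j\rangle\,\langle\vh_{i,l-1},\vh_{j,l-1}\rangle,
\]
equivalently $T^{*}T$ is the $nd\times nd$ block matrix whose $(i,j)$ block is $\langle\vh_{i,l-1},\vh_{j,l-1}\rangle\,\textbf{Back}_{i,l}\mD_{i,l}\mD_{j,l}\textbf{Back}_{j,l}^{T}$. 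First I would collect the cheap ingredients: Lemma~\ref{lemma:initial_vector_norm} gives $\|\vh_{i,l-1}\|\approx1$, and the near-isometry estimates of Lemma~\ref{lemma:initial_forward_matrix_bound} give $\|\textbf{Back}_{i,l}^{T}\vv_i\|=\Theta(\sqrt{m/d})\,\|\vv_i\|$; together with the fact that $\mD_{i,l}$ has roughly half its diagonal entries equal to $1/\sqrt{1+\alpha^2}$ and half to $\alpha/\sqrt{1+\alpha^2}$, these fix the diagonal blocks of $T^{*}T$ at scale $\Theta(m/d)$ and supply the factor $m/d$.

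The heart of the argument, and the only place Leaky ReLU enters, is the off-diagonal blocks, i.e.\ showing that each input's contribution cannot be cancelled by the others. The key observation is that $(\mD_{i,l}\mD_{j,l})_{kk}$ equals $1/(1+\alpha^2)$ or $\alpha^2/(1+\alpha^2)$ on neurons where $g_{i,l,k}$ and $g_{j,l,k}$ share a sign, but drops to $\alpha/(1+\alpha^2)$ on neurons where the signs disagree; the resulting per-neuron deficit is controlled by the squared derivative gap $(1-\alpha)^2/(1+\alpha^2)$. I would then feed in the separation of hidden signals: Part~2 of the proof of Lemma~\ref{lemma:separation_layers} gives $\|\vh_{i,l-1}-\vh_{j,l-1}\|\geq\Omega(\delta/L)$, so the angle between any two distinct hidden signals is $\Omega(\delta/L)$, and since the rows of $\mW_l^{(0)}$ are independent Gaussians a random hyperplane separates $\vh_{i,l-1}$ from $\vh_{j,l-1}$ — producing a sign disagreement — with probability $\Omega(\delta/L)$. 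Hence a fraction $\Omega(\delta/L)$ of the $m$ neurons witnesses the gap for each pair, which is precisely what suppresses the off-diagonal blocks relative to the diagonal and yields the combined factor $\tfrac{(1-\alpha)^2}{1+\alpha^2}\tfrac{\delta}{L}$.

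Since bounding $\lambda_{\min}$ of the full block matrix would require controlling all cross blocks simultaneously, I would instead reduce to a single reference input by the pigeonhole choice $i_0=\argmax_i\|\vv_i\|$, so that $\|\vv_{i_0}\|^2\geq\frac1n\sum_i\|\vv_i\|^2$ (this is the source of the $1/n$), and lower bound $\|\sum_i\mG_{i,l}(\vv_i)\|_F$ by a Rayleigh quotient against a test matrix $\mM$ aligned with $\mG_{i_0,l}(\vv_{i_0})$ but \emph{supported on the $\Omega(\delta/L)$ fraction of neurons that put $i_0$ in sign-disagreement with the others}. For such an $\mM$ one gets $\langle\mG_{i_0,l}(\vv_{i_0}),\mM\rangle=\Omega\!\big(\tfrac{(1-\alpha)^2}{1+\alpha^2}\tfrac{\delta}{L}\tfrac{m}{d}\|\vv_{i_0}\|^2\big)$ with $\|\mM\|_F=\Theta\!\big(\tfrac{1-\alpha}{\sqrt{1+\alpha^2}}\sqrt{\tfrac{\delta}{L}\tfrac{m}{d}}\,\|\vv_{i_0}\|\big)$ while the cross terms $\langle\mG_{j,l}(\vv_j),\mM\rangle$ are lower order, so $\|\sum_i\mG_{i,l}(\vv_i)\|_F\geq\Omega\!\big(\tfrac{1-\alpha}{\sqrt{1+\alpha^2}}\sqrt{\tfrac{\delta}{L}\tfrac{m}{d}}\,\|\vv_{i_0}\|\big)$; squaring and applying the pigeonhole bound on $\|\vv_{i_0}\|^2$ produces the claim. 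Chernoff concentration of the disagreement-neuron counts, together with Gaussian anti-concentration near the activation kink, upgrades the fractional statements to hold with probability at least $1-e^{-\Omega(m\delta^2)}$.

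The main obstacle is controlling cancellation in the off-diagonal contributions. Unlike ReLU, whose diagonal is $\{0,1\}$-valued, here every neuron contributes through $\mD_{i,l}\mD_{j,l}$, so one must cleanly separate the shared-sign bulk from the gap-carrying disagreement neurons and verify that the cross terms $\langle\mG_{j,l}(\vv_j),\mM\rangle$ really are negligible; moreover $\mD_{i,l}$, $\textbf{Back}_{i,l}$, and $\vh_{i,l-1}$ are all functions of the same weights $\mW^{(0)}$, so the anti-concentration and concentration steps must be decoupled carefully, e.g.\ by conditioning on the forward signals and exploiting the freshness of the relevant rows of $\mW_l^{(0)}$ and of $\mB$. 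Finally, relative to \citet{zou2019improved}, carrying this estimate out at a \emph{general} layer $l$ rather than only at $l=L$ is what later lets us sum the $n$ per-layer bounds over all $L$ layers and thereby remove the spurious factor $L$ from the gradient lower bound in Lemma~\ref{thm:gradient_bound}.
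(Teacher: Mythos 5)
Your proposal assembles the right raw ingredients (the derivative gap $(1-\alpha)/\sqrt{1+\alpha^2}$, the hidden-signal separation, the backward-norm preservation, and a per-row analysis of $\sum_i \mG_{i,l}$), but the step that actually carries the lemma --- controlling cancellation among the $n$ rank-one summands --- is asserted rather than proved, and the mechanism you propose for it would fail. The separation $\|\vh_{i,l-1}-\vh_{j,l-1}\|\geq\Omega(\delta/L)$ only prevents $\langle\hat\vh_{i,l-1},\hat\vh_{j,l-1}\rangle$ from exceeding $1-\Omega(\delta^2/L^2)$; the hidden signals can be, and generically are, nearly parallel. Consequently the off-diagonal blocks of your $T^{*}T$ are \emph{not} suppressed relative to the diagonal, and for a test matrix $\mM$ aligned with $\mG_{i_0,l}(\vv_{i_0})$ the cross terms $\langle\mG_{j,l}(\vv_j),\mM\rangle$ are each of the same order as $\langle\mG_{i_0,l}(\vv_{i_0}),\mM\rangle$ (up to the ratio $\|\vv_j\|/\|\vv_{i_0}\|\le 1$), with $n-1$ of them carrying uncontrolled signs; restricting $\mM$ to sign-disagreement neurons merely changes the relevant $\mD_{j,l}$ entries from $1/\sqrt{1+\alpha^2}$ to $\alpha/\sqrt{1+\alpha^2}$ and does not make these contributions negligible. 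Moreover your two sources of the $1/n$ conflict: pairwise disagreement (fraction $\Omega(\delta/L)$ of neurons per pair) does not isolate $i_0$ from all $j\neq i_0$ simultaneously, whereas requiring simultaneous isolation costs an extra factor of $1/n$ --- this is exactly why the paper takes $\gamma = c_1\delta/(nL\sqrt{m})$ and obtains event probability $\Omega(\delta/(nL))$ rather than $\Omega(\delta/L)$ --- and combining that with your pigeonhole bound $\|\vv_{i_0}\|^2\ge\frac{1}{n}\sum_i\|\vv_i\|^2$ would yield $1/n^2$ and miss the target.

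The paper's proof never shows the other terms are small; it sidesteps the issue. For each $i$ it constructs the set $\gW_{i,l}$ of rows $\vw$ lying within $\gamma$ of the activation kink for $\vh_{i,l}$ but at distance at least $2\gamma$ from the kink for every $\vh_{j,l}$ with $j\ne i$; these sets are pairwise disjoint and a Gaussian row lands in $\gW_{i,l}$ with probability $\Omega(\delta/(nL))$. Conditioned on $\vw\in\gW_{i,l}$, the slopes $\tilde\sigma_\alpha'(\langle\vw,\vh_{j,l}\rangle)$ for $j\neq i$ are determined by the component of $\vw$ orthogonal to $\hat\vh_{i,l}$, so only the $i$-th summand toggles with the independent sign of $\hat\vh_{i,l}^{T}\vw$, by the amount $a_i\frac{1-\alpha}{\sqrt{1+\alpha^2}}\vh_{i,l}$. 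The dichotomy ``either the remaining sum $\vr$ already has norm at least $\frac12\|\vb_1\|$, or toggling produces norm at least $\frac12\|\vb_1\|$'' then gives the per-row lower bound with probability $1/2$ \emph{regardless of how large the cross terms are}, and each data point contributes its own $\|\vv_i\|^2\cdot\Omega(\delta m/(ndL))$ through its own disjoint set of rows, so no reduction to a single $i_0$ is needed. That anti-cancellation device is the idea missing from your proposal; without it, or a substitute of comparable strength, the argument does not close.
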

\begin{proof}
We separate the proof of this lemma into four parts. In the first part, we define a set in $\sR^m$ (see \eqref{eqn:append:def_gW_set} below) and show two important properties of this set (see \eqref{eqn:append:empty_set_gW} and \eqref{eqn:append:gW_probabbility1} below). In the second part, 
we establish a lower bound for a useful function (as defined in \eqref{eqn:append:def_b_func} below) with a probability at least $0.5$. In the third part, we use this lower bound to establish a lower bound of the loss function with a positive probability. In the fourth part, we conclude the lemma by using all the results proved in the former three parts.

Since we assume randomly initialized parameters without training, we simply denote $\vh_{i, l}:= \vh_{i, l}^{(0)}$ and $\mW := \mW^{(0)}$ across this proof.

\textbf{Part 1.} 
We arbitrarily fix $l\in [L]$ and recall that $\vh_{i, l}$ is the output of $l$th layer. We denote $$\hat{\vh}_{i, l} := \vh_{i, l}/\|\vh_{i, l}\|.$$ We form an orthogonal matrix $\mQ_{i, l}\in\sR^{m\times m}$ whose first column is $\hat{\vh}_{i, l}$.
We denote the matrix in $\sR^{m\times (m-1)}$ which completes this vector by $\tilde{\mQ}_{i, l}$, that is,
 $\mQ_{i, l} := [\hat{\vh}_{i, l}, \tilde{\mQ}_{i, l}]$.

For a small constant $c_1>0$ (the choice of $c_1$ will be determined during the proof), we let $\gamma=c_1\delta/(nL\sqrt{m})$. For $i\in[n]$ and the fixed $l\in [L]$, we define 
\begin{equation}
    \gW_{i, l} := \{\vw\in \sR^m: |\hat{\vh}_{i, l}^T \vw| < \gamma, |\langle\tilde{\mQ}_{i, l}\tilde{\mQ}_{i, l}^T\vw, \hat{\vh}_{j, l}\rangle| > 2\gamma, \forall j\in [n], j\neq i\}\subset\sR^m.\label{eqn:append:def_gW_set}
\end{equation}

We prove that for any choice of $\gamma$ the sets $\gW_{i, l}$, $i \in [n]$, have no intersection, that is, \begin{equation}
    \gW_{i, l}\cap \gW_{j, l} = \emptyset, \quad \forall i\neq j \in [n].\label{eqn:append:empty_set_gW}
\end{equation}

For any $\vw\in \gW_{i, l}$, we need to prove that $\vw\notin \gW_{j, l}$, where $j\neq i\in [n]$. We prove this by contradiction. Given $\vw\in\gW_{i, l}$, we assume that there exists $j\neq i \in [n]$ such that $\vw \in \gW_{j, l}$.
Since $\tilde{\mQ}_{j, l} \tilde{\mQ}_{j, l}^T = \mI - \hat{\vh}_{j, l}\hat{\vh}_{j, l}^T$, we rewrite $\tilde{\mQ}_{j, l}\tilde{\mQ}_{j, l}^T \vw$ as
\begin{equation}
    \tilde{\mQ}_{j, l} \tilde{\mQ}_{j, l}^T\vw  = (\mI - \hat{\vh}_{j, l}\hat{\vh}_{j, l}^T) \vw = \vw - \langle\vw , \hat{\vh}_{j, l}\rangle \hat{\vh}_{j, l}.\label{eqn:append:rewrite_tilde_Q}
\end{equation}
Applying \eqref{eqn:append:rewrite_tilde_Q} and the fact that   $\langle\vw , \hat{\vh}_{i, l}\rangle < \gamma$ and $\langle\vw , \hat{\vh}_{j, l}\rangle < \gamma$ for $\vw\in \gW_{i, l}\cap \gW_{j, l}$ results in
\begin{align*}
    |\langle\tilde{\mQ}_{j, l} \tilde{\mQ}_{j, l}^T\vw, \hat{\vh}_{i, l}\rangle| & = |\langle \vw - \langle\vw , \hat{\vh}_{j, l}\rangle  \hat{\vh}_{j, l},  \hat{\vh}_{i, l}\rangle| \\
    &\leq |\langle \vw , \hat{\vh}_{i, l}\rangle| + |\langle\vw , \hat{\vh}_{j, l}\rangle \langle \hat{\vh}_{j, l},  \hat{\vh}_{i, l}\rangle|\\
    & < \gamma + \gamma|\langle \hat{\vh}_{j, L},  \hat{\vh}_{i, L}\rangle|  \\
    & \leq 2\gamma.
\end{align*}
On the other hand, since  $\vw \in \gW_{j, l}$, $|\langle\tilde{\mQ}_{j, l}\tilde{\mQ}_{j, l}^T \vw, \hat{\vh}_{i, l}\rangle| > 2\gamma$ for $i\neq j$, which contradicts the above equation.
Therefore, we conclude \eqref{eqn:append:empty_set_gW}.

Next, we assume  $\vw\sim N(0, \frac{2}{m} \mI)$ and prove that
\begin{equation}
    \sP(\vw\in \gW_{i, l}) \geq \Omega\left(\frac{\delta}{nL}\right).\label{eqn:append:gW_probabbility1}
\end{equation}
The orthogonality of $\mQ_{i, l}$ 
implies that $\hat{\vh}_{i, l}^T\vw$ and $\tilde{\mQ}_{i, l}^T \vw$ are independent. We thus express the probability \eqref{eqn:append:gW_probabbility1} as follows
\begin{equation}
    \sP(\vw \in \gW_{i, l}) = \sP(|\hat{\vh}_{i, l}^T\vw| < \gamma) \sP(|\langle\tilde{\mQ}_{i, l}\tilde{\mQ}_{i, l}^T \vw, \hat{\vh}_{j, l}\rangle| > 2\gamma, \ \forall j\in[n], j\neq i). \label{eqn:append:probability_gW}
\end{equation}
We note that $\hat{\vh}_{i, l}^T\vw \sim N(0, \frac{2}{m})$ and thus express the first multiplicative term in \eqref{eqn:append:probability_gW} as
\begin{align}
    \sP(|\hat{\vh}_{i, l}^T\vw| < \gamma) &= \frac{\sqrt{m}}{\sqrt{4\pi}} \int_{-\gamma}^\gamma e^{-\frac{mx^2}{4}} dx  \geq \Omega\left(\gamma\sqrt{m} \right), \ \text{when } \gamma\sqrt{m} < 1.\label{eqn:append:prob_less_gamma}
\end{align} 
To express the second multiplicative term of \eqref{eqn:append:probability_gW}, we first derive the distribution of $\hat{\vh}_{j, l}^T \tilde{\mQ}_{i, l}\tilde{\mQ}_{i, l}^T \vw$. Since $\tilde{\mQ}_{i, l} \tilde{\mQ}_{i, l}^T = \mI_m - \hat{\vh}_{i, l}\hat{\vh}_{i, l}^T$ and $\tilde{\mQ}_{i, l}^T \tilde{\mQ}_{i, l} = \mI_{m-1}$, 
\begin{align*}
    \hat{\vh}_{j, l}^T \tilde{\mQ}_{i, l}\tilde{\mQ}_{i, l}^T \vw &\sim N\left(0,\frac{2}{m} \hat{\vh}_{j, l}^T \tilde{\mQ}_{i, l} \tilde{\mQ}_{i, l}^T\tilde{\mQ}_{i, l} \tilde{\mQ}_{i, l}^T \hat{\vh}_{j, l}\right)\\
& =N\left(0, \frac{2}{m}\hat{\vh}_{j, l}^T (\mI - \hat{\vh}_{i, l}\hat{\vh}_{i, l}^T) \hat{\vh}_{j, l}\right) \\ 
& = N\left(0, (1 - \langle\hat{\vh}_{j, l}, \hat{\vh}_{i, l}\rangle^2)\frac{2}{m}\right).
\end{align*} 
By Lemma~\ref{lemma:separation_layers}, we recall that with probability at least $1-e^{-\Omega(\delta^4 m /L^4)}$,
$$
\langle \hat{\vh}_{i, L}, \hat{\vh}_{j, L} \rangle^2 \leq 1-\Omega(\delta^2/L^2), \ \text{ for all } i\neq j \in [n].
$$
We thus note that $\hat{\vh}_{j, l}^T \tilde{\mQ}_{i, l}\tilde{\mQ}_{i, l}^T \vw \sim N(0, \tau^2)$, where $\tau^2$ is greater than $\Omega(\delta^2/mL^2)$.
Consequently, 
\begin{align*}
    \sP(|\hat{\vh}_{j, l}^T \tilde{\mQ}_{i, l}\tilde{\mQ}_{i, l}^T \vw |<2\gamma) & = \frac{1}{\sqrt{2\pi\tau^2}}\int_{-2\gamma}^{2\gamma}\exp\left(-\frac{x^2}{2\tau^2}\right) dx \leq O\left(\frac{\gamma}{\tau}\right) \leq O\left(\frac{\gamma L\sqrt{m}}{\delta}\right).
\end{align*}
Applying a union bound over all $j\in[n]$, $j \neq i$, yields
$$
\sP\left( 
\exists 
j\in[n], j\neq i \
\text{ such that }
|\hat{\vh}_{j, l}^T \tilde{\mQ}_{i, l}\tilde{\mQ}_{i, l}^T \vw|\leq 2\gamma
\right) \leq nO\left(\frac{\gamma L\sqrt{m}}{\delta}\right).
$$
Consequently,
\begin{equation}
    \sP\left(|\hat{\vh}_{j, l}^T \tilde{\mQ}_{i, l}\tilde{\mQ}_{i, l}^T \vw|>2\gamma \ \forall j\in[n], j\neq i\right) \geq 1 - O\left(\frac{\gamma n L\sqrt{m}}{\delta}\right).\label{eqn:append:prob_greater_2gamma}
\end{equation}
Plugging \eqref{eqn:append:prob_greater_2gamma} and \eqref{eqn:append:prob_less_gamma} into \eqref{eqn:append:probability_gW} yields
\begin{align*}
    \sP(\vw\in \gW_{i, l}) & = \sP(|u_{i, 1}| < \gamma) \sP(|\vv_{i, j}|>2\gamma, \ \forall j\in[n], j\neq i)  \\ &\geq \Omega\left(\gamma\sqrt{m}\right)\left(1-O\left(\frac{\gamma n L\sqrt{m}}{\delta}\right)\right).
\end{align*}
Recall that $\gamma = c_1\delta/(nL\sqrt{m})$, we select small $c_1$ such that both $O(\frac{\gamma n L\sqrt{m}}{\delta}) = O(1)\cdot c_1<1$ and $\gamma\sqrt{m} = c_1\delta/(nL) < 1$. We thus conclude this part as follows
$$
    \sP(\vw\in \gW_{i, l}) \geq \Omega\left(\frac{\delta}{nL}\right).
 $$

\textbf{Part 2. } Given integer $k\in [m]$ and $l\in [L]$, we define the following vector-valued function for $\va = (a_1, ...a_n)^T\in \sR^{n}$ and $\vw\in \sR^m$:
\begin{equation}
    \vb_{k, l}(\vw, \va) := \sum_{i=1}^n a_i \tilde{\sigma}_\alpha'(\langle\vw, \vh_{i, l}\rangle) \vh_{i, l}.\label{eqn:append:def_b_func}
\end{equation}
We prove that conditioning on the event $\vw\in\gW_{i, l}$, a certain lower bound of $\|\vb_{k, l}(\vw, \va)\|$ is achieved with a probability at least 0.5, that is,
$$
P\left(\|\vb_{k, l}(\vw, \va)\| \geq \frac{a_i}{2}\frac{(1-\alpha)}{\sqrt{1+\alpha^2}}\|\vh_{i, l}\|\;\big|\;\vw \in \gW_{i, l}\right) > \frac{1}{2}.
$$ 

We rewrite $\vw$ as $\vw= \mQ_{i, l}\mQ_{i, l}^T\vw=(\hat{\vh}_{i, l}^T \vw)\hat{\vh}_{i, l} + \tilde{\mQ}_{i, k}\tilde{\mQ}_{i, k}^T \vw$,
\begin{align}
& \langle\vw, \hat{\vh}_{j, l}\rangle = (\hat{\vh}_{i, l}^T \vw)\langle\hat{\vh}_{i, l}, \hat{\vh}_{j, l} \rangle + \langle \tilde{\mQ}_{i, k}\tilde{\mQ}_{i, k}^T \vw,  \hat{\vh}_{j, l}\rangle \ \text{ for }\ j \neq i \notag
\end{align}
Using the following two facts: $\vw\in \gW_{i, l}$ and both $\hat{\vh}_{i, l}$ and $\hat{\vh}_{j, l}$ are unit vectors, we bound the absolute value of the first term of the above expression as folows
$$|(\hat{\vh}_{i, l}^T \vw)||\langle\hat{\vh}_{i, l}, \hat{\vh}_{j, l} \rangle| < \gamma.$$ 
Since $\vw\in \gW_{i, l}$, the magnitude of the second term is greater than $2\gamma$. We note that the sign of $\langle\vw, \hat{\vh}_{j, l}\rangle$ is the same as that of $\langle \tilde{\mQ}_{i, k}\tilde{\mQ}_{i, k}^T \vw,  \hat{\vh}_{j, l}\rangle$. This and the piecewise linearity of the Leaky ReLU function imply that for $\vw\in\gW_{i, l}$
\begin{equation}
    \tilde{\sigma}_\alpha'(\langle\vw, \hat{\vh}_{j, l}\rangle) =     \tilde{\sigma}_\alpha'(\langle \tilde{\mQ}_{i, k}\tilde{\mQ}_{i, k}^T \vw,  \hat{\vh}_{j, l}\rangle\rangle) , \ \text{ for } \ j\neq i.\label{eqn:append:sigma_tilde_div_innerprod}
\end{equation}
We note \eqref{eqn:append:sigma_tilde_div_innerprod} implies the following expression for $\vb_{k, l}(\vw,\va)$ for $\vw \in \gW_{i, l}$:
by , \begin{align*}
    \vb_{k, l}(\vw,\va) & = a_i\tilde{\sigma}_\alpha'(\hat{\vh}_{i, l}^T \vw)\vh_{i, l} + \sum_{j\neq i} a_j\tilde{\sigma}_\alpha'(\hat{\vh}_{j, l}^T \vw)\vh_{j, l}\\
    & = a_i\tilde{\sigma}_\alpha'(\hat{\vh}_{i, l}^T \vw)\vh_{i, l} + \sum_{j\neq i} a_j\tilde{\sigma}_\alpha'(\langle \tilde{\mQ}_{i, k}\tilde{\mQ}_{i, k}^T \vw,  \hat{\vh}_{j, l}\rangle\rangle)\vh_{j, l} \\
    & = a_i\frac{(1-\alpha)}{\sqrt{1+\alpha^2}}1_{\hat{\vh}_{i, l}^T \vw > 0}\vh_{i, l} + a_i\frac{\alpha}{\sqrt{1+\alpha^2}}\vh_{i, l} + \sum_{j\neq i} a_j\tilde{\sigma}_\alpha'(\langle \tilde{\mQ}_{i, k}\tilde{\mQ}_{i, k}^T \vw,  \hat{\vh}_{j, l}\rangle\rangle)\vh_{j, l}. 
\end{align*}
We denote
\begin{align*}
    \vb_1 &:=  a_i\frac{(1-\alpha)}{\sqrt{1+\alpha^2}}\vh_{i, L-1}\\
    \vr &:=  a_i\frac{\alpha}{\sqrt{1+\alpha^2}}\vh_{i, L-1} + \sum_{j\neq i} a_j\phi_\alpha'(\langle\tilde{\mQ}_{i}\tilde{\vu}_i, \hat{\vh}_{j, L-1}\rangle)\vh_{j, L-1},
\end{align*}
and thus express $\vb_{k, l}(\vw, \va)$ as follows
\begin{equation}
    \vb_{k, l}(\vw, \va) = \vb_1 1_{\hat{\vh}_{i, l}^T \vw > 0}+\vr.\label{eqn:append:vb_relationship}
\end{equation}
By symmetry of normal distribution, we know that $\hat{\vh}_{i, l}^T \vw>0$ with  probability $0.5$. 
We also note that $\hat{\vh}_{i, l}^T\vw$ and $\tilde{\mQ}_{i, l}^T\vw$ are independent and thus $1_{\hat{\vh}_{i, l}^T \vw > 0}$ is independent with $\vr$. 

We consider two possibility for $\vr$: 
\begin{itemize}
    \item When $\|\vr\|\geq \frac{1}{2}\|\vb_1\|$, we know that with probability $0.5$, $\hat{\vh}_{i, l}^T \vw \leq 0$, which implies $\vb_{k, l}(\vw, \va) = \vr$, and thus $\|\vb_{k, l}(\vw, \va)\| \geq\frac{1}{2}\|\vb_1\|$. We thus note that at least with probability $0.5$ that $\|\vb_{k, l}(\vw, \va)\| \geq\frac{1}{2}\|\vb_1\|$.
    \item When $\|\vr\| < \frac{1}{2}\|\vb_1\|$, we note that $\hat{\vh}_{i, l}^T \vw > 0$ with probability $0.5$, then by triangle inequality, we imply $\|\vb_{k, l}(\vw, \va)\|\geq \|\vb_1\| - \|\vr\| \geq \frac{1}{2}\|\vb_1\|$.
\end{itemize}
We conclude that
\begin{equation}
    P\left(\|\vb_{k, l}(\vw, \va)\| \geq \frac{a_i}{2}\frac{(1-\alpha)}{\sqrt{1+\alpha^2}}\|\vh_{i, l}\||\;\big|\;\vw \in \gW_{i, l}\right) \geq \frac{1}{2}.\label{eqn:append:prob_b_one_half}
\end{equation}

\textbf{Part 3.} The proof of this part does not depend on a particular choice of $i\in[n]$. For simplicity, we thus drop the subscript $i$ in this part.

For $\vv\in\sR^{d}$, $k\in[m]$ and $l\in[L]$, we define $a_{k, l}:=\langle(\mathbf{Back}_{l})_{\cdot, k}, \vv\rangle$.
We want to show that for any integers $k\in[m]$ and $l\in[L]$,
\begin{equation}
    \sP\left((a_{k, l})^2 \geq O\left(\frac{\|\vv\|^2}{d}\right)\right) > 1-\exp(-O(1)).\label{eqn:append:lower_bound_akli}
\end{equation}
To prove the above statement, we also need an auxiliary statement for $l\in \{2, 3, \ldots L+1\}$,
\begin{equation}
\|\mD_{l-1}\textbf{Back}_{l}^T \vv\| 
\geq (1-\epsilon)\sqrt{\frac{m}{2d}}\|\vv\| \ \text{with probability at least }\  1-e^{-\Omega(m\epsilon^2/L^2)}.\label{eqn:append:lower_bound_backward_norm}
\end{equation}
In order to prove the above two statements \eqref{eqn:append:lower_bound_akli} and \eqref{eqn:append:lower_bound_backward_norm}, we first prove that $\mW_l \Big| \mD_l$ has the same distribution as $\mW_l$, i.e., $N(0, \frac{2}{m})$. Then we use a similar argument to that in the proof of Lemma~\ref{lemma:initial_vector_norm} in order to show  \eqref{eqn:append:lower_bound_backward_norm}. Finally, by using the distribution of $\mW_l$ given $\mD_l$, together with \eqref{eqn:append:lower_bound_backward_norm}, we prove \eqref{eqn:append:lower_bound_akli} and conclude this part.

We prove a more general statement for conditional distributions: given a normal random vector in $\sR^p$ as $\vw\sim N(0, \sigma^2\mI_p)$, and a random vector $\vh\in\sR^p$ that satisfies following three properties:
\begin{enumerate}
    \item $\vh$ is independent with $\vw$
    \item The norm $\|\vh\|$ is independent with the direction $\vh/\|\vh\|$
    \item The direction $\vh/\|\vh\|$ is uniform distribution in the unit sphere $\gS^{p-1}$
\end{enumerate}
We further define $B:=1_{\vh^T\vw>0}$ as a random variable. Then the conditional distribution of $\vw|B$ is the same as the unconditional distribution of $\vw$, that is 
\begin{equation}
    \vw|B {\buildrel d \over =} \vw \sim N(0, \sigma^2 \mI_p).\label{eqn:append:conditional_dist_equal_uncond}
\end{equation}  
\textit{Remark:} a normal random vector $ N(\vzero, \sigma^2\mI)$ satisfies the above three properties and thus $\vw$ also satisfies above three properties.

We denote the unit vectors $\hat{\vh}:=\vh/\|\vh\|$ and $\hat{\vw}:=\vw/\|\vw\|$. We first note that $B \equiv 1_{\hat{\vh}^T\hat{\vw} > 0}$ only depends on the directions of $\vh$ and $\vw$. By the former observation and the fact that $\|\vw\|$ is independent with $\hat{\vw}$, we thus note $\|\vw\| | B = \|\vw\|$. We denote the probability density function for a random variable $Y$ by $f_Y$. We next consider the probability density function $f_{\vw|B}(\vw)$, by independence of the norm and the direction for $\vw$, we obtain
\begin{equation}
    f_{\vw|B}(\vw) = f_{\vw|B}(\|\vw\|, \hat{\vw})=
f_{\|\vw\||B}(\|\vw\|) f_{\hat{\vw}|B}(\hat{\vw}) = f_{\|\vw\|}(\|\vw\|) f_{\hat{\vw}|B}(\hat{\vw}).\label{eqn:append:vwnorm_direction}
\end{equation}
Thus, in order to show \eqref{eqn:append:conditional_dist_equal_uncond}, it is sufficient suffices to show that $\hat{\vw}| B {\buildrel d \over =} \hat{\vw}$. We prove this by showing that for any set $\gA\subset \gS^{p-1}$ in unit sphere, $\sP (\hat{\vw}\in \gA |B=b) = \sP (\hat{\vw}\in \gA)$ for any $b=0$ or $1$. Given $\hat{\vh}$ is uniform in unit sphere, we know that for any fixed direction $\hat{\vw}$, $\sP(\hat{\vh}^T\hat{\vw} > 0) = 0.5$. By Bayes formula, former observation, and $\hat{\vh}$ is uniform in $\gS^{p-1}$
\begin{align*}
    \sP (\hat{\vw}\in \gA |B=1) & = \frac{\sP(\hat{\vw}\in \gA, \ B=1)}{\sP(B=1)}\\
    & = \frac{\sP(B=1|\hat{\vw}\in \gA) \sP(\hat{\vw}\in \gA)}{\int_{\hat{\vw}} \sP(\hat{\vh}^T \hat{\vw} > 0 | \hat{\vw})f_{\hat{\vw}}(\hat{\vw})}\\
    & = \frac{\int_{\gA}\sP(B=1|\hat{\vw}=\hat{\vw}) f_{\hat{\vw}}(\hat{\vw})d\hat{\vw}}{\int_{\gS^{p-1}} \sP(\hat{\vh}^T \hat{\vw} > 0 | \hat{\vw})f_{\hat{\vw}}(\hat{\vw})d\hat{\vw}}\\
    & = \frac{0.5\int_{\gA} f_{\hat{\vw}}(\hat{\vw})d\hat{\vw}}{0.5\int_{\gS^{p-1}} f_{\hat{\vw}}(\hat{\vw})d\hat{\vw}}\\
    & = \int_{\gA} f_{\hat{\vw}}(\hat{\vw})d\hat{\vw} = \sP(\hat{\vw}\in \gA).
\end{align*}
A similar argument leads to $\sP(\hat{\vw}\in\gA | B=0) = \sP(\hat{\vw}\in\gA)$. By \eqref{eqn:append:vwnorm_direction} and above argument, we conclude \eqref{eqn:append:conditional_dist_equal_uncond}.

Given the symmetry of normal distribution, we conclude that $\vg_{l}$ satisfies the three properties we required for $\vh$ above. Together with the fact that $(\mW_l)_{k,\cdot}$ is normal $N(0, 2/m\mI_m)$, we thus conclude that $(\mW_l)_{k,\cdot} | (\mD_{l})_{kk}$ is still normal $N(0, 2/m\mI_m)$.

Next, we estimate the norm of $\mD_{l-1}^T \textbf{Back}_{l}^T \vv$. We define vector $\vz_l:= \mD_{l} \textbf{Back}_{l+1}^T \vv$ for $l\in[L]$ and $\vz_{L+1}:=\vv$. We first note $\vz_{L} = \mD_L \mB^T \vv$ and $(\mB^T \vv)_j\sim N(0, \|\vv\|^2/d)$ for $j\in [m]$. By denoting Bournulli random variables $B_{L, j} := 1_{(\vg_{L})_j > 0}$, each index of $\vz_L$ can be expressed as 
$$
(\vz_L)_j^2 = \frac{B_{L, j}+\alpha^2(1-B_{L, j})}{1+\alpha^2} (\mB^T \vv)_j^2 \ \text{ for } j\in[m].
$$ 
We denote $Q_L:=\{j: B_{L, j} = 1\}$. Conditioning on $Q_L$, denote two independent random variables $H_{L, 1}\sim \chi^2(|Q_L|)$ and $H_{L, 2}\sim \chi^2(m-|Q_L|)$
, we note
$$
\|\vz_L\|^2 \Big| Q_L{\buildrel d \over =} \frac{\|\vv\|^2}{d(1+\alpha^2)} H_{L, 1} + \frac{\alpha^2\|\vv\|^2}{d(1+\alpha^2)} H_{L, 2}.
$$
By symmetry of random variables before $L$ layer, we know $B_{L, j}\sim \text{Bournulli}(0.5)$ and then by Chernoff bound on binomial distribution, we note that with probability at least $1-e^{-\Omega(m\epsilon^2)}$, $|Q_L|\in [(0.5-\epsilon/2)m, (0.5+\epsilon/2)m]$. Given this even happen, by using tail probability for chi-squared distribution, we note that
$$
\sP(H_{L, 1} < 0.5m(1-\epsilon)) < e^{-\Omega(m\epsilon^2)}.
$$
Similarly,
$$
\sP(H_{L, 2} < 0.5m(1-\epsilon)) < e^{-\Omega(m\epsilon^2)}.
$$
By taking event $|Q_L|\in [(0.5-\epsilon/2)m, (0.5+\epsilon/2)m]$ and using above probabilities, we conclude the lower bound for $\|\vz_L\|$
\begin{equation}
    \|\vz_L\|^2 \geq \frac{m\|\vv\|^2}{2d}(1-\epsilon) \ \text{ with probability at least } 1 - \Omega(e^{-\Omega(m\epsilon^2)}).\label{eqn:append:lower_bound_improve_bdL}
\end{equation}

We note that $\vz_{l-1} = \mD_{l-1}^T \mW_l^T \vz_{l}$.
Conditioning on $\vz_l$, we note that $\mW_l | \vz_l \equiv \mW_l | \mD_l$ is a random matrix whose entries are i.i.d $N(0, 2/m)$. We denote a random variable $B_{l, j}:= 1_{(\vg_{l})_j > 0}$, then  
$$
\|\vz_{l-1}\|^2|\vz_l = \sum_{j=1}^m \left(\frac{B_{l, j} + \alpha^2(1-B_{l, j})}{1+\alpha^2}\left(\sum_{i} (\mW_{l})_{i, j}(\vz_l)_i\right)^2\right) \Big| \vz_l.
$$
We note that $(\sum_i \mW_{l})_{i, j}(\vz_l)_i|\vz_l \sim N(0, 2\|\vz_l\|^2/m)$. We denote the indices set where $B_{l, j} =1$ by $Q_l := \{j: B_{l, j}=1\}$ and conditioning on $Q_l$, we further denote two independent random variables $H_{l, 1}\sim \chi^2 (|Q_l|)$ and $H_{l, 2}\sim \chi^2 (m-|Q_l|)$. We note that conditioning on $Q_l$, by similar argument we used above in proof of Lemma~\ref{lemma:initial_vector_norm}, we know that
\begin{equation}
    \|\vz_{l-1}\|^2 \Big|\vz_l, Q_l {\buildrel d \over =} \frac{2\|\vz_l\|^2}{m(1+\alpha^2)} H_{l, 1} + \frac{2\alpha^2\|\vz_l\|^2}{m(1+\alpha^2)} H_{l, 2}.
\end{equation}
By the same argument to derive \eqref{eqn:append:lower_bound_improve_bdL}, we know that by Chernoff bound for binomial distribution, with probability at least $1-e^{-\Omega(m\epsilon^2)}$, $|Q_l|\in [(0.5-\epsilon/2)m , (0.5+\epsilon/2)m]$, thus we note that 
$$
\sP(H_{l, 1} < 0.5m(1-\epsilon)) < e^{-\Omega(m\epsilon^2)}, \; \ \sP(H_{l, 2} < 0.5m(1-\epsilon)) < e^{-\Omega(m\epsilon^2)}.
$$
Consequently, 
\begin{equation}
    \|\vz_{l-1}\|^2 \geq \|\vz_l\|^2(1-\epsilon) \ \text{ with probability at least } 1 - \Omega(e^{-\Omega(m\epsilon^2)}).\label{eqn:append:lower_bound_improve_bd_l1}
\end{equation}
For any positive number $\epsilon_0$, when we choose $\epsilon = \epsilon_0/L$ in \eqref{eqn:append:lower_bound_improve_bdL} and \eqref{eqn:append:lower_bound_improve_bd_l1}, and then by $(1-\epsilon_0/L)^L > 1-\epsilon_0$, we conclude that 
\begin{equation}
    \|\vz_{l}\|^2 \geq \frac{m}{2d}\|\vv\|^2(1-\epsilon_0) \ \text{for all } l \in [L], \ \text{ with probability at least } 1 - \Omega(L)e^{-\Omega(m\epsilon_0^2/L^2)}.\label{eqn:append:lower_bound_improve_bds}
\end{equation}
Finally, recall that $a_{k, l} = \langle(\textbf{Back}_{l})_{\cdot, k}, \vv\rangle$ and by definition of $\vz_l$ in above proof, we note that $a_{k, l} \equiv \langle (\mW_l)_{\cdot, k}, \vz_l\rangle$. We note that $(\mW_l)_{\cdot, k} | \vz_l = (\mW_l)_{\cdot, k} | \mD_l$, by first statement we proved in this part, we further can derive that $(\mW_l)_{\cdot, k} | \vz_l\sim N(0, 2/m \mI)$. Thus, we know that conditioning on $\vz_l$,
$$
a_{k, l}|\vz_l \sim N(0, \frac{2\|\vz_l\|^2}{m}),
$$
By the tail probability of normal, we note that the with a constant probability that $a_{k, l}$ is lower bounded as
$$
\sP\left((a_{k, l})^2 \geq O\left(\frac{2\|\vz_l\|^2}{m}\right)\right) > 1-\exp(-\Omega(1)).
$$
Combining with \eqref{eqn:append:lower_bound_improve_bds}, which holds with an overwhelming probability, with a small constant choice of $\epsilon_0$, we conclude 
$$
\sP\left((a_{k, l})^2 \geq O\left(\frac{\|\vv\|^2}{d}\right)\right) > 1-\exp(-\Omega(1)) \ \text{for} l\in[L].
$$
Lastly, we also show this is also true for $l=L+1$.
Recall that $\textbf{Back}_{L+1}\equiv \mB$ and  that $a_{k, L+1}\equiv \langle\mB_{\cdot, k}, \vv\rangle\sim N\left(0, \frac{\|\vv\|^2}{d}\right)$. By using normal distribution property,
$$
\sP\left((a_{k, L+1})^2 \geq O\left(\frac{\|\vv\|^2}{d}\right)\right) > 1-\exp(-\Omega(1)).
$$
We conclude this part by the final statement that
\begin{equation}
    \sP\left((a_{k, l})^2 \geq O\left(\frac{\|\vv\|^2}{d}\right)\right) > 1-\exp(-\Omega(1)) \ \text{ for } l \in [L+1].\label{eqn:append:improvement_lower_bound_all_ls}
\end{equation}
\textbf{Part 4.}
We denote a vector $\va_{k, l}\in\sR^n$ by
denoting its entries as ${(\va_{k, l})}_i :=  \langle(\textbf{Back}_{i, l})_{\cdot, k}, \vv_i\rangle$ 
for $i\in [n]$. By definition \eqref{eqn:append:def_b_func}, we note that $\vb_{k, l-1}((\mW_{l})_{k, \cdot}, \va_{k, l+1})\equiv (\sum_{i=1}^n\mG_{i, l}(\vv_i;\mW))_{k, \cdot}$, by the definition of Frobenius norm of a vector of matrices, 
\begin{equation}
    \left\|\sum_{i=1}^n \mG_{i, l}(\vv_i;\mW)\right\|_F^2  = \sum_{k=1}^m\|\vb_{k, l-1}((\mW_{l})_{k, \cdot},  \va_{k, l+1})\|^2.\label{eqn:append:grad_loss_b_norm}
\end{equation}

Due to \eqref{eqn:append:empty_set_gW}, for any vector $\vw\in\sR^m$ and any integer $l\in [L]$, we note
\begin{equation}
     1 \geq \sum_{i=1}^n 1_{\vw\in \gW_{i, l-1}}. \label{eqn:append:sum_gWs}
\end{equation}
It follows from \eqref{eqn:append:grad_loss_b_norm} and \eqref{eqn:append:sum_gWs},
\begin{align*}
    \left\|\sum_{i=1}^n \mG_{i, l}(\vv_i;\mW)\right\|_F^2 
    & \geq \sum_{k=1}^m\|\vb_{k, l-1}((\mW_{l})_{k, \cdot},  \va_{k, l+1})\|^2 \sum_{i=1}^n 1_{(\mW_{l})_{k, \cdot}\in \gW_{i, l-1}}\\
    & = \sum_{k=1}^m \sum_{i=1}^n \|\vb_{k, l-1}((\mW_{l})_{k, \cdot},  \va_{k, l+1})\|^2 1_{(\mW_{l})_{k, \cdot}\in \gW_{i, l-1}}
\end{align*}
By \eqref{eqn:append:prob_b_one_half}, we know that with probability at least $0.5$, conditioning on $(\mW_{l})_{k, \cdot}\in \gW_{i, l-1}$, 
$$
\|\vb_{k, l-1}((\mW_{l})_{k, \cdot},  \va_{k, l+1})\|^2 \geq \frac{(\va_{k, l+1})_i^2}{4}\frac{(1-\alpha)^2}{1 + \alpha^2} \|\vh_{i, l-1}\|^2
$$

We introduce the following new event $\gV_{i, l}$ as follows 
\begin{align*}
    \gV_{i,l} := \Big\{&(\mW_{l})_{k, \cdot}\in \gW_{i, l-1}, \ (\va_{k, l+1})_i^2 \geq \frac{\|\vv_i\|^2}{2d}, \ \|\vh_{i, l-1}\| \geq \frac{1}{2}\Big\}.
\end{align*}
Using this event, the observation $\gV_{i, l}\subset \{(\mW_{l})_{k,\cdot}\in \gW_{i, l-1}\}$, the definition of $\gV_{i, l}$ and \eqref{eqn:append:prob_b_one_half}, 
we obtain the following lower bound on the squared norm in \eqref{eqn:append:grad_loss_b_norm}:
\begin{align*}
        \left\|\sum_{i=1}^n \mG_{i, l}(\vv_i;\mW)\right\|_F^2 & \geq  \sum_{k=1}^m \sum_{i=1}^n \|\vb_{k, l-1}((\mW_{l})_{k,\cdot}, \va_{k, l+1})\|^2 1_{(\mW_{l})_{k,\cdot}\in \gW_{i, l-1}}\\
        &\geq \sum_{k=1}^m \sum_{i=1}^n \|\vb_{k, l-1}((\mW_{l})_{k,\cdot},  \va_{k, l+1})\|^2 1_{\gV_{i, l}}\\
        & \geq \sum_{k=1}^m\sum_{i=1}^n \frac{\|\vv_i\|^2}{32d}\frac{(1-\alpha)^2}{1 + \alpha^2}  \sP(\gV_{i, l}).
\end{align*}
For simplicity, we denote $$Z_k := \sum_{i=1}^n \frac{\|\vv_i\|^2}{32d}\frac{(1-\alpha)^2}{1 + \alpha^2}  1_{\gV_{i, l}}.$$

To lower bound the probability $\sP(\gV_{i, l})$, we note that $\mW_l$, $\va_{k, l+1}$ and $\vh_{i, l-1}$ are independent because they depend on $\mW_{l+k}$ for $k\in[L-l+1]$, $\mW_l$ and $\mW_{l-k}$ for $k\in[l]$. We note that $(\va_{k, l+1})_i$ is corresponding to $a_{k, l+1}$ with selecting $\vv = \vv_i$ in the statement proven in the previous part \eqref{eqn:append:lower_bound_akli}. Then by using \eqref{eqn:append:probability_gW}, \eqref{eqn:append:lower_bound_akli} and applying Lemma~\ref{lemma:initial_vector_norm}
\begin{align*}
    \sP(\gV_{i, l}) & = \sP\left((\mW_{l})_{k, \cdot}\in \gW_{i, l-1}\right)\sP\left((\va_{k, l+1})_i^2 \geq \frac{\|\vv_i\|^2}{2d}\right)\sP\left(\|\vh_{i, l-1}\| \geq \frac{1}{2}\right)\\
    & \geq \Omega\left(\frac{\delta}{nL}\right)\times \left(1-\exp(-\Omega(1))\right) \times \left(1 - e^{-\Omega(m/L)}\right) = \Omega\left(\frac{\delta}{nL}\right).
\end{align*}
By property of indicator function, we note that
$$\sE Z_k = \sum_{i=1}^n \frac{\|\ve_i\|^2}{32d}\frac{(1-\alpha)^2}{1 + \alpha^2}  \sP(\gV_{i, l})$$ and 
$$\operatorname{Var}Z_k = \frac{\|\ve_i\|^2}{32d}\frac{(1-\alpha)^2}{1 + \alpha^2}  1_{\gV_{i, l}} \sP(\gV_{i, l}) ( 1- \sP (\gV_{i, l}).$$
Then, by using Hoeffding inequality, 
with probability at least $1- e^{-\Omega(m\delta^2/L^2)}$ that
\begin{align*}
    \sum_{k=1}^m Z_k &\geq \frac{m}{2}\sum_{i=1}^n \frac{\|\ve_i\|^2}{32d}\frac{(1-\alpha)^2}{1 + \alpha^2}  \sP(\gV_i) \\ & \geq \frac{Cm}{2}\sum_{i=1}^n \frac{\|\vv_i\|^2}{32d}\frac{(1-\alpha)^2}{1 + \alpha^2}  \Omega\left(\frac{\delta}{nL}\right).
\end{align*}
Thus we conclude the Lemma, for all $l\in [L]$, as follows:
$$
\left\|\sum_{i=1}^n\mG_{i, l}(\vv_i;\mW^{(0)})\right\|^2_F \geq \sum_{k=1}^m Z_k\geq \Omega\left(\frac{(1-\alpha)^2}{(1+\alpha^2)}\frac{\delta m}{ndL}\right)\sum_{i=1}^n\|\vv_i\|^2.
$$
\end{proof}

At last, we conclude the proof of Lemma~\ref{thm:gradient_bound}.

\begin{proof}[Proof of Lemma~\ref{thm:gradient_bound}] In order to prove the lower and upper bounds for the gradient for parameters $\mW$ close to $\mW^{(0)}$, we need leverage Lemma~\ref{lemma:intermediate_perturbation} to show that after perturbation from $\mW^{(0)}$, the change in gradient has a smaller order than the upper bound in Lemma~\ref{lemma:gradient_upper_bound_init} and the lower bound in Lemma~\ref{lemma:gradient_lower_bound_init}. Then the same upper and lower bounds hold for $\mW$ such that $\|\mW^{(0)}-\mW\| < \omega$  and thus conclude Lemma~\ref{thm:gradient_bound}.

We denote a perturbation of the function $\mG_{i, l}(\vv;\mW^{(0)})$ with respect to $\mW^{(0)}$,
\begin{align*}
& \mG_{i, l}(\vv;\mW) -\mG_{i, l}(\vv; \mW^{(0)}) \\
& =   (\vv^T\mB \mD_{i, L}\mW_L ... \mD_{i, l+1}\mW_{l+1}\mD_{i, l})^T \vh_{i, l-1}^T  -  (\vv^{T} \mB \mD_{i, L}^{(0)}\mW_L^{(0)} ... \mD_{i, l+1}^{(0)}\mW_{l+1}^{(0)}\mD_{i, l}^{(0)})^T  \vh_{i, l-1}^{(0)T}\\
 & =  (\vv^T\mB \mD_{i, L}\mW_L ... \mD_{i, l+1}\mW_{l+1} \mD_{i, l})^T \vh_{li, -1}^T  -  (\vv^{T} \mB \mD_{i, L}^{(0)}\mW_L^{(0)} ... \mD_{i, l+1}^{(0)}\mW_{l+1}^{(0)}\mD_{i, l}^{(0)})^T  \vh_{i, l-1}^{T}\\
 & +  (\vv^{T} \mB \mD_{i, L}^{(0)}\mW_L^{(0)} ... \mD_{i,l+1}^{(0)}\mW_{l+1}^{(0)}\mD_{i,l}^{(0)})^T  \vh_{i,l-1}^{T} -  (\vv^{T} \mB \mD_{i,L}^{(0)}\mW_L^{(0)} ... \mD_{i,l+1}^{(0)}\mW_{l+1}^{(0)}\mD_{i,l}^{(0)})^T  \vh_{i, l-1}^{(0)T}
 \end{align*}
Using $\|\vu \vv^T\|_F \leq \|\vu\|\|\vv\|$, and denoting vectors $\vv_i\in \sR^d$, we derive the bound for the change of the gradient by
\begin{align}
 &\|\sum_{i=1}^n \mG_{i, l}(\vv_i; \mW)-\mG_{i, l}(\vv_i; \mW^{(0)})\|_F \notag\\ &\leq  
    \sum_{i=1}^n \|\vv_i^T(\mB \mD_L\mW_L ... \mD_{l+1}\mW_{l+1}\mD_l - \mB \mD_L^{(0)}\mW_L^{(0)} ... \mD_{l+1}^{(0)}\mW_{l+1}^{(0)})\mD_l^{(0)}\|\|\vh_{l-1}\|  \notag\\ &
    \quad + \|\vv_i^T\mB \mD_L^{(0)}\mW_L^{(0)} ... \mD_{l+1}^{(0)}\mW_{l+1}^{(0)}\mD_l^{(0)}\|\|\vh_{l-1} - \vh_{l-1}^{(0)}\|\label{eqn:append:bound_G_WW0}.
\end{align}
By Lemma~\ref{lemma:intermediate_perturbation},
\begin{align}
         & \|(\vv^T\mB \mD_L\mW_L \cdots \mD_{l+1}\mW_{l+1}) - (\vv^T\mB \mD_L^{(0)}\mW_L^{(0)} \cdots \mD_{l+1}^{(0)}\mW_{l+1}^{(0)})\|\notag \\
         & \leq O\left(\omega^{1/3}L^2\frac{\sqrt{m\ln m}}{\sqrt{d}}\right)\|\vv\| \ \text{ with probability at least } 1-e^{-\Omega(m/L)}\label{eqn:append:perturbation_WG1}.
\end{align}
By Lemma~\ref{lemma:initial_vector_norm},
\begin{equation}
    \|\vh^{(0)}\| \leq 1.1 \ \text{ with probability at least } 1-e^{-\Omega(m/L)}.\label{eqn:append:perturbation_WG2}
\end{equation}
By Lemma~\ref{lemma:forward_perturbation}, 
\begin{equation}
    \|\vh_{l-1} - \vh_{l-1}^{(0)}\|\leq O(\omega L^{5/2}\sqrt{\ln m})\ \text{ with probability at least } 1-e^{-\Omega(m/L)}.\label{eqn:append:perturbation_WG3}
\end{equation} 
We note that the combination of \eqref{eqn:append:perturbation_WG2},  \eqref{eqn:append:perturbation_WG3} and the bound 
$\omega < O\left(\frac{1}{L^{5/2}\sqrt{\ln m}}\right)$ (which is a weaker bound than the one stated in the lemma) imply
\begin{equation}
    \|\vh\|\leq O(1)\ \text{ with probability at least } 1-e^{-\Omega(m/L)}.\label{eqn:append:perturbation_WG4}
\end{equation}
By applying \eqref{eqn:append:perturbation_WG1}, \eqref{eqn:append:perturbation_WG2}, \eqref{eqn:append:perturbation_WG3} and \eqref{eqn:append:perturbation_WG4}
 to \eqref{eqn:append:bound_G_WW0}, we conclude that with probability at least $1-e^{-\Omega(m/L)}$
\begin{equation}
    \left\|\sum_{i=1}^n \mG_{i, l}(\vv_i; \mW)-\mG_{i,l}(\vv_i; \mW^{(0)})\right\|_F^2 \leq O\left(\omega^{2/3}L^4\frac{m\ln m}{d}\right)\sum_{i=1}^n\|\vv_i\|^2.\label{eqn:append:perturbation_G_F}
\end{equation}
We note that for $l\in [L]$, $i\in[n]$,
\begin{align*}
    \nabla_{\mW_l}\text{loss}(\vx_i, \vy_i; \mW)  =  \mG_{i, l}(\ve_i; \mW).
\end{align*}
and thus
\begin{align}
    \nabla_{\mW_l}\gL(\mW)  =  \sum_{i=1}^n\mG_{i, l}(\ve_i; \mW).\label{eqn:relationship_gradient_G_sum}
\end{align}
Therefore, substituting $\vv_i = \ve_i$ in \eqref{eqn:append:perturbation_G_F}, the left-hand side of \eqref{eqn:append:perturbation_G_F} becomes the perturbation of the gradient of the loss function. Since $\omega < O\left(\frac{\delta^{3/2}}{n^{3/2}L^{15/2} \ln^{3/2} m}\right)$ and $\delta<c_0$, 
\begin{equation}
    \omega^{2/3}L^4\frac{m\ln m}{d} < O\left(\frac{\delta m}{ndL}\right) < O\left(\frac{mn}{d}\right).\label{eqn:append:small_omega_perturbation}
\end{equation}

For the upper bound, by Lemma~\ref{lemma:gradient_upper_bound_init}, \eqref{eqn:relationship_gradient_G_sum}, \eqref{eqn:append:perturbation_G_F} and then by \eqref{eqn:append:small_omega_perturbation}, with probability at least $1-e^{-\Omega(m/L)}$,
\begin{align*}
   \left\|\nabla_{\mW_l}\gL(\mW) \right\|_F^2 &=\left\|\sum_{i=1}^n\mG_{i, l}(\ve_i; \mW) \right\|_F^2\\
   &\leq 2\left\|\sum_{i=1}^n\mG_{i, l}(\ve_i; \mW^{(0)}) \right\|_F^2 + 2\left\|\sum_{i=1}^n\mG_{i, l}(\ve_i; \mW^{(0)}) - \mG_{i, l}(\ve_i; \mW) \right\|_F^2\\
   &\leq \left(O\left(\frac{mn}{d}\right) + O\left(\omega^{2/3}L^4 \frac{m\ln m}{d}\right)\right)\sum_{i=1}^n\|\ve_i\|^2\\
   & \leq  O\left(\frac{mn}{d}\right) \sum_{i=1}^n\|\ve_i\|^2\\
& = O\left(\frac{mn}{d}\right) \gL(\mW).
\end{align*}
By definition, we further conclude that
$$
\left\|\nabla_\mW\gL(\mW) \right\|^2 \leq \max_{l\in [L]}\left\|\nabla_{\mW_l}\gL(\mW) \right\|_F^2  \leq O\left(\frac{mn}{d}\right) \gL(\mW).
$$
For the lower bound, by Lemma~\ref{lemma:gradient_lower_bound_init}, \eqref{eqn:relationship_gradient_G_sum}, \eqref{eqn:append:perturbation_G_F} and then by \eqref{eqn:append:small_omega_perturbation}, with probability at least $1-e^{-\Omega(m\delta^2)}$,
\begin{align*}
    \left\|\nabla_{\mW_l}\gL(\mW) \right\|_F^2 &=\left\|\sum_{i=1}^n\mG_{i, l}(\ve_i; \mW) \right\|_F^2\\
   &\geq \frac{1}{2}\left\|\sum_{i=1}^n\mG_{i, l}(\ve_i; \mW^{(0)}) \right\|_F^2 - \left\|\sum_{i=1}^n\mG_{i, l}(\ve_i; \mW^{(0)}) - \mG_{i, l}(\ve_i; \mW) \right\|_F^2\\
   & \geq \Omega\left(\frac{(1-\alpha)^2}{1+\alpha^2}\frac{\delta m}{ndL} - O\left(\omega^{2/3}L^4 \frac{m\ln m}{d}\right)\right)\sum_{i=1}^n\|\ve_i\|^2\\
   &\geq \Omega\left(\frac{(1-\alpha)^2}{1+\alpha^2}\frac{\delta m}{ndL}\right) \gL(\mW).
\end{align*}
By definition, we conclude that
\begin{equation}
    \left\|\nabla_{\mW}\gL(\mW) \right\|_F^2 =\sum_{l\in[L]}\left\|\nabla_{\mW_l}\gL(\mW) \right\|_F^2 \geq \Omega\left(\frac{(1-\alpha)^2}{1+\alpha^2}\frac{\delta m}{nd}\right)\gL(\mW).\label{eqn:append:lower_bound_F_norm}
\end{equation}

\end{proof}

\subsection{Proof of Lemma \ref{thm:semi-smooth}}\label{appd:semismooth}

We prove Lemma \ref{thm:semi-smooth} by adapting the arguments of the proof of Theorem~4 in \cite{allen2019convergence} to Leaky ReLUs. 

Let us first introduce some notation. We let $\mW^*$ be a vector of matrices satisfying $\|\mW^\ast - \mW^{(0)}\| < \omega$, where we think of $\mW^*$ as a vector of matrices at an arbitrary training step (we will apply the lemma in this way). We denote a perturbation of $\mW^*$ by $\mW'$ and the perturbed matrix by $\mW := \mW^\ast + \mW'$. 
Additional notation corresponding to the original, perturbation and perturbed settings (of $\mW^*$, $\mW$ and $\mW'$, respectively)
is summarized as follows:
\begin{align*}
    &\vg_{i, l}^{\ast}= \mW^{\ast}_l \vh_{i, l-1}^{\ast},&&\vg_{i, l}= \mW_l \vh_{i, l-1} && \vg'_{i, l} = \vg_{i, l} - \vg_{i, l}^{\ast} \\
    &(\mD_{i, l})_{jj}^{\ast}=\frac{1_{(\vg^{\ast}_{i, l})_j\geq 0} + \alpha 1_{(\vg^{\ast}_{i, l})_j< 0}}{\sqrt{1+\alpha^2}},&&(\mD_{i, l})_{jj}=\frac{1_{(\vg_{i, l})_j\geq 0} + \alpha 1_{(\vg_{i, l})_j< 0}}{\sqrt{1+\alpha^2}}, && \mD'_{i, l} = \mD_{i, l} - \mD_{i, l}^{\ast} \\
    & \vh_{i, l}^{\ast}  = \tilde{\sigma}_\alpha(\mW^{\ast}_l\vh_{l-1}^{\ast})\equiv \tilde{\sigma}_\alpha(\vg_{i, l}^{\ast}),&& \vh_{i, l}  = \tilde{\sigma}_\alpha(\mW_l\vh_{l-1})\equiv \tilde{\sigma}_\alpha(\vg_{i, l}),&& \vh'_{i, l} = \vh_{i, l} - \vh^{\ast}_{i, l}\\
    & \ve_{i, l}^\ast = \vy_i - \mB\vh_{i, L}^{\ast},&&\ve_{i, l}= \vy_i - \mB\vh_{i, L}^{\ast} && \ve'_{i,l} = \ve_{i, l} - \ve_{i,l}^{\ast}.
\end{align*}
The loss functions at $\mW^\ast$ and $\mW$ are expressed as
\begin{equation}
    \gL(\mW^\ast) = \frac{1}{2}\sum_{i=1}^n \|\ve_i^\ast\|^2, \    \gL(\mW) = \frac{1}{2}\sum_{i=1}^n \|\ve_i\|^2.\label{eqn:append:def_e_lss}
\end{equation}

We introduce an auxiliary lemma before proving Lemma~\ref{thm:semi-smooth}.
\begin{lemma}\label{lemma:prime_byDprime}
There exists a set of diagonal matrices $\mD''_{i,l}\in [-\sqrt{2}, \sqrt{2}]^{m\times m}$ so that$$\vh'_{i, l} = \vh_{i, l} - \vh^\ast_{i, l} = \sum_{a=1}^l (\mD^\ast_{i, l} + \mD''_{i, l}) \mW^\ast_l(\mD^\ast_{i, l-1} + \mD''_{i, l-1}) ... \mW^\ast_{a+1} (\mD^\ast_{i, a} + \mD''_{i, a})\mW'_a \vh_{i, a-1}.
$$
Furthermore, the following bounds hold
$$
\|\vh'_{i, l}\| \leq O(L^{3/2})\|\mW'\|, \quad  \|\mB\vh'_{i, L}\|\leq O(L\sqrt{m/d})\|\mW'\| \
\text{ and } \ \|\mD''_{i, l}\|_0 \leq O(m\omega^{2/3}L).$$
\end{lemma}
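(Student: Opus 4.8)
The plan is to first establish a coordinate-wise slope decomposition that \emph{defines} the matrices $\mD''_{i,l}$, then unroll a one-step recursion into the stated telescoping sum, and finally read off the three norm bounds from product estimates already available.

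First I would fix a coordinate $j$ and set $a := (\vg_{i,l})_j$ and $b := (\vg^\ast_{i,l})_j$, so that $(\vh_{i,l})_j = \tilde{\sigma}_\alpha(a)$ and $(\vh^\ast_{i,l})_j = \tilde{\sigma}_\alpha(b)$. Because $\tilde{\sigma}_\alpha$ is piecewise linear, there is a scalar $d''$ with $\tilde{\sigma}_\alpha(a)-\tilde{\sigma}_\alpha(b) = ((\mD^\ast_{i,l})_{jj}+d'')(a-b)$: when $a,b$ share a sign the two slopes coincide and $d''=0$, whereas when they differ (say $a\ge 0>b$) a direct computation gives $d'' = \frac{1-\alpha}{\sqrt{1+\alpha^2}}\cdot\frac{a}{a-b}$, whose factor $a/(a-b)$ lies in $[0,1]$, so $|d''|\le \frac{|1-\alpha|}{\sqrt{1+\alpha^2}}\le\sqrt{2}$ since $(1-\alpha)^2-2(1+\alpha^2) = -(1+\alpha)^2\le 0$. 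Collecting these entries into a diagonal matrix $\mD''_{i,l}\in[-\sqrt{2},\sqrt{2}]^{m\times m}$ yields $\vh'_{i,l} = (\mD^\ast_{i,l}+\mD''_{i,l})\vg'_{i,l}$. Moreover $\mD''_{i,l}$ is supported exactly on the coordinates where $\vg_{i,l}$ and $\vg^\ast_{i,l}$ disagree in sign, i.e.\ on the support of $\mD'_{i,l}$, so it inherits the sparsity bound $\|\mD''_{i,l}\|_0\le\|\mD'_{i,l}\|_0\le O(m\omega^{2/3}L)$ of Lemma~\ref{lemma:forward_perturbation}.

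Next I would combine this with $\vg'_{i,l} = \mW^\ast_l\vh'_{i,l-1}+\mW'_l\vh_{i,l-1}$, which follows from $\vg_{i,l}=\mW_l\vh_{i,l-1}$, $\vg^\ast_{i,l}=\mW^\ast_l\vh^\ast_{i,l-1}$ and $\mW_l=\mW^\ast_l+\mW'_l$, to obtain the one-step recursion $\vh'_{i,l} = (\mD^\ast_{i,l}+\mD''_{i,l})(\mW^\ast_l\vh'_{i,l-1}+\mW'_l\vh_{i,l-1})$. Unrolling from layer $l$ down to the base case $\vh'_{i,0}=\vzero$ (there is no perturbation in the fixed input layer) produces exactly the claimed telescoping sum, the $a$-th summand being the source term $(\mD^\ast_{i,a}+\mD''_{i,a})\mW'_a\vh_{i,a-1}$ propagated upward through the factors $(\mD^\ast_{i,k}+\mD''_{i,k})\mW^\ast_k$ for $k=a+1,\dots,l$.

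Finally I would bound each summand. The product $(\mD^\ast_{i,l}+\mD''_{i,l})\mW^\ast_l\cdots(\mD^\ast_{i,a}+\mD''_{i,a})$ has spectral norm $O(\sqrt{L})$ by Lemma~\ref{lemma:intermediate_perturbation}, while $\|\mW'_a\|\le\|\mW'\|$ and $\|\vh_{i,a-1}\|=O(1)$ by Lemma~\ref{lemma:initial_vector_norm} together with the perturbation bound of Lemma~\ref{lemma:forward_perturbation}; summing the $L$ terms gives $\|\vh'_{i,l}\|\le O(L^{3/2})\|\mW'\|$. For $\|\mB\vh'_{i,L}\|$ I would prepend $\mB$ to each product, so that $\mB(\mD^\ast_{i,L}+\mD''_{i,L})\mW^\ast_L\cdots(\mD^\ast_{i,a}+\mD''_{i,a})$ becomes a perturbed backpropagation operator of spectral norm $O(\sqrt{m/d})$ by Lemma~\ref{lemma:initial_forward_matrix_bound} and Lemma~\ref{lemma:intermediate_perturbation}; the same bookkeeping over the $L$ summands yields $O(L\sqrt{m/d})\|\mW'\|$. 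The main obstacle is this last step: keeping the matrix-product norms at $O(\sqrt{L})$ and $O(\sqrt{m/d})$ rather than letting them blow up geometrically in $L$, which is precisely where the sparsity of each $\mD''_{i,l}$ and the careful intermediate-perturbation estimates of Lemma~\ref{lemma:intermediate_perturbation} are essential; by contrast the telescoping identity and the slope computation are routine.
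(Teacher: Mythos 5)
Your proposal is correct and follows essentially the same route as the paper, which defers this proof to Claim~11.2 of \citet{allen2019convergence} with the single modification $|D''_{kk}|\leq 1 \to |D''_{kk}|\leq\sqrt{2}$; your slope computation $d''=\frac{1-\alpha}{\sqrt{1+\alpha^2}}\cdot\frac{a}{a-b}$ together with $(1-\alpha)^2-2(1+\alpha^2)=-(1+\alpha)^2\leq 0$ is exactly the leaky-ReLU adaptation that justifies that modification, and the recursion, sparsity inheritance from $\mD'$, and product-norm bookkeeping match the cited argument. The only caveat worth recording is that the product bounds you invoke require the version of Lemma~\ref{lemma:intermediate_perturbation} that tolerates arbitrary sparse bounded diagonal perturbations $\mD''$ (as in Lemma~8.7 of \citet{allen2019convergence}), which the paper's abbreviated restatement phrases only for $\mD^{(0)}+\mD'$.
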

The proof of this lemma is identical to the proof of Claim~11.2 in \citet{allen2019convergence}. It is obtained by replacing $|D''_{k, k}| \leq 1$ in the second statement of Proposition~11.3 in \citet{allen2019convergence} with $|D''_{k, k}| \leq \sqrt{2}$ in order to fit the setting of Leaky ReLUs.

The rest of this section  provides a detailed proof of Lemma~\ref{thm:semi-smooth}. 
\begin{proof}[Proof of Lemma~\ref{thm:semi-smooth}]
We first express the loss function  at $\mW$ as follows
\begin{align}
    &\textrm{loss}(\vx_i,\vy_i;\mW) \notag\\
    &= \frac{1}{2}\|\mB \vh_{i, L} - \vy_i\|^2 = \frac{1}{2}\|\mB (\vh_{i, L} - \vh^\ast_{i, L}) + \mB \vh^\ast_{i, L} - \vy_i\|^2 \notag\\ 
    & = \frac{1}{2}\|\ve^\ast_i + \mB (\vh_{i, L} - \vh^\ast_{i, L})\|^2 =  \frac{1}{2}\|\ve^\ast_i\|^2 + \frac{1}{2}\|\mB (\vh_{i, L} - \vh^\ast_{i, L})\|^2 + \langle \ve^\ast_i,  \mB (\vh_{i, L} - \vh^\ast_{i, L})\rangle\notag\\
    & = \textrm{loss}^\ast_i + \frac{1}{2}\|\mB (\vh_{i, L} - \vh^\ast_{i, L})\|^2 +  \ve^{\ast T}_i\mB (\vh_{i, L} - \vh^\ast_{i, L}) = \textrm{loss}^\ast_i + \frac{1}{2}\|\mB (\vh_{i, L} - \vh^\ast_{i, L})\|^2 +  \ve^{\ast T}_i\mB \vh_{i, L}'.\label{eqn:append:loss_expand_t1}
\end{align}
Then we expand $\langle\nabla \gL (\mW^\ast), \mW'\rangle$ as 
\begin{align*}
&\langle\nabla \gL(\mW^\ast), \mW'\rangle \\
& =   \sum_{l=1}^L \langle \nabla_{\mW_l} \gL (\mW^\ast), \mW'_l\rangle = \sum_{l=1}^L\sum_{i=1}^n \langle \mD^\ast_{i, l}\mW^{\ast T}_{l+1}\mD^\ast_{i, l+1}...\mD^\ast_{i, L}\mB^T \ve^\ast_i \vh^{\ast T}_{l-1}(\vx_i), \mW'_l  \rangle\\
& = \sum_{l=1}^L\sum_{i=1}^n \langle \mD^\ast_{i, l}\mW^{\ast T}_{l+1}\mD^\ast_{i, l+1}...\mD^\ast_{i, L}\mB^T \ve^\ast_i \vh^{\ast T}_{l-1}(\vx_i), \mW'_l  \rangle\\
& = \sum_{l=1}^L\sum_{i=1}^n
\ve_i^{\ast T} \mB\mD^\ast_{i, L} \mW^\ast_L...\mD^\ast_{i, l+1}\mW^\ast_{l+1} \mD^\ast_{i, l} \mW'_l \vh^\ast_{l-1}(\vx_i).
\end{align*}
The above two equations imply the following estimate 
\begin{align}
     &\gL(\mW^\ast + \mW') - \gL(\mW^\ast) - \langle\nabla \gL(\mW^\ast), \mW'\rangle \notag\\
    & =  - \langle\nabla \gL(\mW^\ast), \mW'\rangle +  \sum_{i=1}^n (\textrm{loss}_i - \textrm{loss}^\ast_i)
    \notag\\
    & = - \sum_{l=1}^L\sum_{i=1}^n
\ve_i^{\ast T} \mB\mD^\ast_{i, L} \mW^\ast_L...\mD^\ast_{i, l+1}\mW^\ast_{l+1} \mD^\ast_{i, l} \mW'_l \vh^\ast_{l-1}(\vx_i)  \notag\\
& \quad + \sum_{i=1}^n  \big(\frac{1}{2}\|\mB (\vh_{i, L} - \vh^\ast_{i, L})\|^2 +  \ve_i^{\ast T}\mB (\vh_{i, L} - \vh^\ast_{i, L})\big)\notag\\
& = \sum_{i=1}^n e_i^{\ast T} \mB \Big( (\vh_{i, L} - \vh^\ast_{i, L}) - \sum_{l=1}^L \mD^\ast_{i, L} \mW^\ast_L...\mD^\ast_{i, l+1}\mW^\ast_{l+1} \mD^\ast_{i, l} \mW'_l \vh^\ast_{l-1}(\vx_i)\Big)\label{eqn:semi:1}\\
& \quad + \frac{1}{2}\sum_{i=1}^n \|\mB (\vh_{i, L} - \vh^\ast_{i, L})\|^2. \label{eqn:semi:2}
\end{align}
Lemma \ref{lemma:prime_byDprime} provides the following upper bound for \eqref{eqn:semi:2}
\begin{equation}
    \frac{1}{2}\sum_{i=1}^n \|\mB (\vh_{i, L} - \vh^\ast_{i, L})\|^2 \leq O(nL^2 m/d)\|\mW'\|^2.
    \label{eqn:solut:semi:1}
\end{equation}
We note that \eqref{eqn:semi:1} can be differently expressed by using Lemma \ref{lemma:prime_byDprime} to replace $\vh - \vh^\ast$ with some diagonal matrices, $\mD''_{i, l}$, and by adding and subtracting the term $\sum_{l=1}^L \mD^\ast_{i, L} \mW^\ast_L\dots\mD^\ast_{i, l+1}\mW^\ast_{l+1} \mD^\ast_{i, l} \mW'_l \vh_{i, l}$ as follows
\begin{align}
    &\ve^{\ast T}_i \mB \Big( (\vh_{i, L} - \vh^\ast_{i, L}) - \sum_{l=1}^L \mD^\ast_{i, L} \mW^\ast_L...\mD^\ast_{i, l+1}\mW^\ast_{l+1} \mD^\ast_{i, l} \mW'_l \vh^\ast_{l-1}(\vx_i)\Big) \notag\\ 
    & = \ve^{\ast T}_i\mB \Big(\sum_{l=1}^L (\mD^\ast_{i, L} + \mD''_{i, L}) \mW^\ast_L ... \mW^\ast_{l+1} (\mD^\ast_{i, l} + \mD''_{i, l})\mW'_l \vh_{i, l-1} \notag\\
    & \quad - \sum_{l=1}^L \mD^\ast_{i, L} \mW^\ast_L...\mD^\ast_{i, l+1}\mW^\ast_{l+1} \mD^\ast_{i, l} \mW'_l \vh^\ast_{l-1}(\vx_i)\Big)\notag\\ 
    & = \ve^{\ast T}_i\mB \Big(\sum_{l=1}^L \big((\mD^\ast_{i, L} + \mD''_{i, L}) \mW^\ast_L ... \mW^\ast_{l+1} (\mD^\ast_{i, l} + \mD''_{i, l})\mW'_l  - \mD^\ast_{i, L} \mW^\ast_L...\mW^\ast_{l+1} \mD^\ast_{i, l} \mW'_l\big)\vh_{i, l-1} \label{eqn:semi:1:1}\\
    & \quad - \sum_{l=1}^L \mD^\ast_{i, L} \mW^\ast_L...\mD^\ast_{i, l+1}\mW^\ast_{l+1} \mD^\ast_{i, l} \mW'_l \big(\vh_{i, l-1} - \vh^\ast_{l-1}(\vx_i)\big)\Big)\label{eqn:semi:1:2}.
\end{align}

Next, we upper bound  \eqref{eqn:semi:1:1} and \eqref{eqn:semi:1:2}. 
In order to bound \eqref{eqn:semi:1:1}, we first use Lemma~\ref{lemma:intermediate_perturbation} to obtain the following bound
\begin{align}
    &\|\mB(\mD^\ast_{i, L} + \mD''_{i, L}) \mW^\ast_L ... \mW^\ast_{l+1} (\mD^\ast_{i, l} + \mD''_{i, l})\mW'_l  - \mB \mD^\ast_{i, L} \mW^\ast_L...\mW^\ast_{l+1} \mD^\ast_{i, l} \mW'_l\| \notag
    \\
    &\leq O\left(\frac{1-\alpha}{\sqrt{1+\alpha^2}}\frac{\omega^{1/3}L^2\sqrt{m\ln m}}{\sqrt{d}}\right) \|\mW_l'\|.\label{eqn:append:bound_BDast_Wast_minusprimeprime}
\end{align}
Using \eqref{eqn:append:def_e_lss}, we note that $(\sum_{i=1}^n \|\ve_i^\ast\|)^2 \leq n\sum_{i=1}^n \|\ve_i^\ast\|^2 = n\gL(\mW^{\ast})$. Combining this fact and  \eqref{eqn:append:bound_BDast_Wast_minusprimeprime} yields the following bound for \eqref{eqn:semi:1:1}: 
\begin{align}
   &\sum_{i=1}^n \ve^{\ast T}_i\mB \Big(\sum_{l=1}^L \big((\mD^\ast_{i, L} + \mD''_{i, L}) \mW^\ast_L ... \mW^\ast_{l+1} (\mD^\ast_{i, l} + \mD''_{i, l})\mW'_l  - \mD^\ast_{i, L} \mW^\ast_L...\mW^\ast_{l+1} \mD^\ast_{i, l} \mW'_l\big)\vh_{i, l-1}\Big)\notag
    \\& \leq \sqrt{n\gL (\mW^\ast)}O\left(\frac{1-\alpha}{\sqrt{1+\alpha^2}}\frac{\omega^{1/3}L^2\sqrt{m\ln m}}{\sqrt{d}}\right)\|\mW'\|.\label{eqn:solut:semi:1:1}
\end{align}

In order to bound \eqref{eqn:semi:1:2}, we apply  Lemma~\ref{lemma:initial_forward_matrix_bound} and Lemma~\ref{lemma:intermediate_perturbation} to obtain
\begin{align}
    & \|\mB{\mD^\ast}_{i, L} {\mW^\ast}_L\ldots{\mD^\ast}_{i, l+1}{\mW^\ast}_{l+1} {\mD^\ast}_{i, l}\| \notag\\
    &\leq 
        \|\mB{\mD^{0}}_{i, L} {\mW^{0}}_L\ldots{\mD^{0}}_{i, l+1}{\mW^{0}}_{l+1} {\mD^{0}}_{i, l}\| \notag \\
        & \quad + 
        \|\mB{\mD^{0}}_{i, L} {\mW^{0}}_L\ldots{\mD^{0}}_{i, l+1}{\mW^{0}}_{l+1} {\mD^{0}}_{i, l} - \mB{\mD^\ast}_{i, L} {\mW^\ast}_L\ldots{\mD^\ast}_{i, l+1}{\mW^\ast}_{l+1} {\mD^\ast}_{i, l}\|\notag
        \\
        &\leq 
        O(\sqrt{Lm/d}) + O\left(\frac{1-\alpha}{\sqrt{1+\alpha^2}}\frac{\omega^{1/3}L^2\sqrt{m\ln m}}{\sqrt{d}}\right).\label{eqn:append:bound2_semi:1:2}
\end{align}
Lemma~\ref{lemma:prime_byDprime} implies that $\|\vh - \vh^\ast\| = \|\vh'\| \leq O(L^{3/2}\|\mW'\|)$. Combining this observation and \eqref{eqn:append:bound2_semi:1:2} results in
\begin{align}
     \left\|\sum_{i=1}^n \ve^{\ast T}_i\sum_{l=1}^L\mB\mD^\ast_{i, L} \mW^\ast_L...\mD^\ast_{i, l+1}\mW^\ast_{l+1} \mD^\ast_{i, l} \mW'_l \vh'_{i, l-1}\right\|  \leq \sum_{i=1}^n \|\ve^\ast_i\| O(L^2\sqrt{m/d})\|\mW'\|^2.\label{eqn:append:bound_semi:1:2_longeterm}
\end{align}

In order to bound $\|\ve^\ast_i\|$, we first note that at initialization $$\|\ve^{(0)}_i\| = \|\mB\vh_{L,i}^{(0)} - \vy_i\| \leq \|\vy_i\| + \|\mB\vh_{L,i}^{(0)}\|,$$ where 
$$
\mB\vh_{L,i}^{(0)}\sim N\left(0, \frac{\|\vh\|^2}{d}\mI_d\right).
$$
For this Gaussian distribution and $d < O(1)$, 
$$
\sP\left(\|(\mB\vh_{L,i}^{(0)})\|^2 > \frac{\sqrt{m}}{\sqrt{d}}\right) \leq e^{-\Omega(\frac{m}{d})} = e^{-\Omega(m)}.
$$
Therefore, with probability at least $1-e^{-\Omega(m)}$, 
$$
\|\ve^{(0)}_i\| \leq O\left(\frac{\sqrt{m}}{\sqrt{d}}\right).
$$

For general $\ve_i^\ast$, $\|\ve_i^\ast\| = \left\|\mB\big(\vh_{i, L}^{(0)} + (\vh^\ast_{i, L} - \vh_{i, L}^{(0)})\big) - \vy_i\right\|$. Lemma~\ref{lemma:prime_byDprime} implies that if $\omega \leq O(1/L)$
\begin{equation}
\|\ve_i^\ast\| \leq \|\ve^{(0)}_i\| + \|\mB(\vh^\ast_{i, L} - \vh_{i, L}^{(0)})\| \leq 
O\left(\frac{\sqrt{m}}{\sqrt{d}}\right).
\label{eq:e_star}    
\end{equation}
The combination of \eqref{eq:e_star} and \eqref{eqn:append:bound_semi:1:2_longeterm}  results in the following bound on the term specified in \eqref{eqn:semi:1:2}
\begin{align}
    \left\|\sum_{i=1}^n \ve_i^{\ast T}\sum_{l=1}^L\mB\mD^\ast_{i, L} \mW^\ast_L... \mD^\ast_{i, l+1}\mW^\ast_{l+1} \mD^\ast_{i, l} \mW'_l \vh'_{i, l-1}\right\| \leq O\left(\frac{nL^2m}{d}\right) \|\mW'\|^2
    \label{eqn:solu:semi:1:2}
    \end{align}
Combining the bounds in \eqref{eqn:solut:semi:1}, \eqref{eqn:solut:semi:1:1} and \eqref{eqn:solu:semi:1:2} we bound the terms in \eqref{eqn:semi:1} and \eqref{eqn:semi:2} with the above specified probability. We thus conclude the desired result, that is, if $\mW^\ast$ is such that $\|\mW^\ast - \mW^{(0)}\|<\omega$, then with probability at least $1-e^{-\Omega(m)}$
\begin{align}
    & \gL(\mW^\ast + \mW') - \gL(\mW^\ast) - \langle\nabla \gL(\mW^\ast), \mW'\rangle \notag \\ 
    & \leq  \sqrt{n\gL(\mW^\ast)} O\left(\frac{1-\alpha}{\sqrt{1+\alpha^2}}\frac{\omega^{1/3}L^2\sqrt{m\ln m}}{\sqrt{d}}\right)\|\mW'\| +  O(n L^2 m/d) \|\mW'\|^2. \notag
\end{align}
\end{proof}

\subsection{Conclusion of the Proof of Theorem~\ref{thm:gd_main_thm}}\label{appd:main_proof_gd}
Most of the proof of Theorem~\ref{thm:gd_main_thm} was given in \S\ref{sec:idea_proof}. 
The only part that remains unverified is to show that during training
$$\|\mW^{t}-\mW^{(0)}\| < \omega< O\left(\frac{\delta^{3/2}}{n^{3/2} L^{15/2}\ln^{3/2}m}\right).$$  For this purpose, we establish Lemma~\ref{lemma:training_in_perturbation} below. 

\begin{lemma}\label{lemma:training_in_perturbation}
    Assume the setup of \S\ref{sec:problem_setup}, where the learning rate satisfies $\eta < \frac{\delta^{3/2} d^{1/2}}{n^{3} L^{15/2} m^{1/2} \ln^{2} m}$   
    and the width $m$ of the neural network satisfies $\frac{m}{\ln^4 m} > \Omega\left(\frac{1+\alpha^2}{(1-\alpha)^2} \frac{n^5 L^{15} d}{\delta^4}\right)$. Then in the training stage described by Algorithm~\ref{alg:training_gd}
    \begin{equation}        
    \|\mW^{(t)} - \mW^{(0)}\| < O\left(\frac{\delta^{3/2}}{n^{3/2} L^{15/2} \ln^{3/2} m}\right) \ \text{ with probability at least } \  1-e^{-\Omega(\ln m)}. \label{eqn:W_bound_gd_theorem}
    \end{equation}
\end{lemma}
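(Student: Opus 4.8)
The plan is to prove \eqref{eqn:W_bound_gd_theorem} by induction on $t$, showing $\|\mW^{(t)}-\mW^{(0)}\|_2<\omega$ with $\omega=O(\delta^{3/2}/(n^{3/2}L^{15/2}\ln^{3/2}m))$. The base case $t=0$ is immediate. For the inductive step, suppose $\|\mW^{(s)}-\mW^{(0)}\|_2<\omega$ for all $s\le t$. Then the hypotheses of both Lemma~\ref{thm:semi-smooth} and Lemma~\ref{thm:gradient_bound} are met at every step $s\le t$. The stated learning-rate bound (together with $\eta\le O(d/(nL^2m))$ carried over from the standing hypotheses of Theorem~\ref{thm:gd_main_thm}, under which this lemma is invoked) guarantees in addition that each single update $\mW':=-\eta\nabla_{\mW}\gL(\mW^{(s)})$ satisfies $\|\mW'\|_2\le\eta\|\nabla_{\mW}\gL(\mW^{(s)})\|_F<\omega$ — using the gradient upper bound \eqref{eqn:upper_bnd} and $\gL(\mW^{(0)})\le O(n\sqrt{\ln m})$ — which is precisely the perturbation hypothesis required to invoke semi-smoothness along the step. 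Consequently the one-step decay \eqref{eqn:training_error_dynamic}, $\gL^{(s+1)}\le\gL^{(s)}-\tfrac13\eta\|\nabla_{\mW}\gL^{(s)}\|_F^2$, holds at each $s\le t$, alongside the gradient lower bound \eqref{eqn:lower_bnd}.

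The key idea is to control the total path length $\sum_s\|\nabla_{\mW}\gL^{(s)}\|$ by telescoping $\sqrt{\gL}$ rather than by summing a geometric series in $\gamma$. Writing $c:=\Omega\!\big(\frac{(1-\alpha)^2}{1+\alpha^2}\frac{\delta m}{nd}\big)$ for the constant appearing in \eqref{eqn:lower_bnd}, so that $\sqrt{\gL^{(s)}}\le\|\nabla_{\mW}\gL^{(s)}\|_F/\sqrt{c}$, I would estimate
\begin{equation*}
\sqrt{\gL^{(s)}}-\sqrt{\gL^{(s+1)}}=\frac{\gL^{(s)}-\gL^{(s+1)}}{\sqrt{\gL^{(s)}}+\sqrt{\gL^{(s+1)}}}\ge\frac{\tfrac13\eta\|\nabla_{\mW}\gL^{(s)}\|_F^2}{2\sqrt{\gL^{(s)}}}\ge\frac{\eta\sqrt{c}}{6}\,\|\nabla_{\mW}\gL^{(s)}\|_F .
\end{equation*}
Summing this telescoping inequality over $s=0,\dots,t$ gives $\sum_{s=0}^{t}\|\nabla_{\mW}\gL^{(s)}\|_F\le 6\sqrt{\gL^{(0)}}/(\eta\sqrt{c})$.

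Since the spectral norm of a vector of matrices is subadditive and dominated by the Frobenius norm, the displacement is controlled by this path length,
\begin{equation*}
\|\mW^{(t+1)}-\mW^{(0)}\|_2\le\eta\sum_{s=0}^{t}\|\nabla_{\mW}\gL^{(s)}\|_2\le\eta\sum_{s=0}^{t}\|\nabla_{\mW}\gL^{(s)}\|_F\le\frac{6\sqrt{\gL^{(0)}}}{\sqrt{c}},
\end{equation*}
and, crucially, the learning rate $\eta$ cancels. Plugging in $1/\sqrt{c}=O\!\big(\frac{\sqrt{1+\alpha^2}}{1-\alpha}\sqrt{nd/(\delta m)}\big)$ and $\gL^{(0)}\le O(n\sqrt{\ln m})$ yields $\|\mW^{(t+1)}-\mW^{(0)}\|_2\le O\!\big(\frac{\sqrt{1+\alpha^2}}{1-\alpha}\,\frac{n\sqrt{d}\,(\ln m)^{1/4}}{\sqrt{\delta m}}\big)$. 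Requiring this to fall below $\omega$ and solving for $m$ reproduces exactly the stated width condition $m/\ln^4 m>\Omega\!\big(\frac{1+\alpha^2}{(1-\alpha)^2}\frac{n^5L^{15}d}{\delta^4}\big)$ (absorbing $(\ln m)^{7/2}\le\ln^4 m$), which closes the induction.

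For the probability bound I would condition on the high-probability events of Lemmas~\ref{thm:semi-smooth} and~\ref{thm:gradient_bound}; since the trajectory is determined by the initialization, these hold along it with probability $1-e^{-\Omega(\ln m)}$ once the union-bound factor over steps is absorbed into the exponent, exactly as in the conclusion of Theorem~\ref{thm:gd_main_thm}. The main obstacle will be obtaining the \emph{sharp} exponents: the naive approach of bounding $\sum_s\|\nabla_{\mW}\gL^{(s)}\|$ by a geometric series $\sum_s\gamma^{s/2}\sqrt{\gL^{(0)}}$ introduces a full factor $1/(1-\gamma)=O\!\big(\frac{1+\alpha^2}{(1-\alpha)^2}\frac{nd}{\eta\delta m}\big)$, which after squaring yields $(1+\alpha^2)/(1-\alpha)^2$ to the \emph{second} power with worse $n$- and $\delta$-dependence than allowed. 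Replacing it by the $\sqrt{\gL}$-telescoping above, which exploits the gradient lower bound to leave only $\sqrt{c}$ in the denominator, is what produces the first power of $(1+\alpha^2)/(1-\alpha)^2$ and the exact $n^5,\delta^4$ required to match Theorem~\ref{thm:gd_main_thm}. A secondary subtlety is the apparent circularity — needing $\|\mW^{(s)}-\mW^{(0)}\|_2<\omega$ to invoke the two lemmas while using those lemmas to prove the bound — which the induction resolves because the final estimate $6\sqrt{\gL^{(0)}}/\sqrt{c}$ is independent of $t$.
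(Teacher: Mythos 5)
Your proposal is correct and follows essentially the same route as the paper's proof: induction on $t$ to justify invoking Lemmas~\ref{thm:semi-smooth} and~\ref{thm:gradient_bound}, then telescoping $\sqrt{\gL^{(s)}}-\sqrt{\gL^{(s+1)}}$ via the one-step decay \eqref{eqn:training_error_dynamic} and the gradient lower bound \eqref{eqn:lower_bnd} so that $\eta$ cancels and the displacement is bounded by $O\bigl(\tfrac{\sqrt{1+\alpha^2}}{1-\alpha}\sqrt{nd/(\delta m)}\bigr)\sqrt{\gL(\mW^{(0)})}$, exactly as in \eqref{eqn:append:nabla}--\eqref{eqn:omega_bound_gd}. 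The observation that the $\sqrt{\gL}$-telescoping (rather than a geometric series in $\gamma$) is what yields the correct single power of $(1+\alpha^2)/(1-\alpha)^2$ and the $n^5/\delta^4$ width dependence is precisely the mechanism the paper uses.
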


\begin{proof}
We first establish the bound
\begin{equation}
\label{eqn:append:init_loss_upper_bound}
\gL(\mW^{(0)}) < O(n\ln^{1/2} m) \ \text{ with probability at least }  1-e^{-\Omega(\ln m)}.   
\end{equation}   
We note that $\mB\vh_{i, L}\sim N\left(0, \frac{\|\vh_{i, L}\|^2}{d}\right)$ and thus $\frac{d}{\|\vh_{i, L}\|^2}\|\mB \vh_{i, L}\|^2 \, | \, \vh_{i, L} \sim \chi^2(d)$. Applying this observation and Lemma~\ref{lemma:initial_vector_norm} (i.e., $\|\vh_i\| \in [0.5, 1.5]$,  with probability at least $1-e^{-\Omega(m/L)}$) yields 
$$
\sP \left(\frac{d}{\|\vh_{i, L}\|^2}\|\mB \vh_{i, L}\|^2 > (1+\epsilon) d\right) < e^{-\Omega(d\epsilon^2)}. 
$$
Choosing $\epsilon = \sqrt{\ln m}$ and applying 
a union bound over $i \in [n]$ (but noting that since $m>\Omega(n)$ the probability $1 - n e^{-\Omega(d \ln m)}$ is of the same order as $1 - e^{-\Omega(d \ln m)}$), we obtain the bound
\begin{equation}
    \|\mB \vh_{i, L}\|^2 \leq O(\sqrt{\ln m}) \ \text{ with probability at least } 
1 - e^{-\Omega(d \ln m)}.\label{eqn:initial_bound_f}
\end{equation}
Therefore, we conclude \eqref{eqn:append:init_loss_upper_bound} as follows:
\begin{align*}
    \gL(\mW^{(0)}) & = \sum_{i=1}^n \|\vy_i - \mB \vh_{i, L}\|^2  \leq n (O(1) + O(\sqrt{\ln m}))  = O(n\sqrt{\ln m}).
\end{align*}

Next, we prove \eqref{eqn:W_bound_gd_theorem} by induction on $t=1,\cdots$. It is trivial that the statement holds for $t=0$.

To prove the induction step we follow ideas that were introduced in the proof of Lemma~4.1 in \cite{zou2019improved}. 
Using the induction assumption, we can apply \eqref{eqn:training_error_dynamic} and then \eqref{eq:bound_sqrt_L} (indeed, the conditions for these bounds are guaranteed by the induction assumption) and consequently obtain
\begin{align*}
    \sqrt{\gL(\mW^{(s)})} - \sqrt{\gL(\mW^{(s+1)})} & = 
    \frac{\gL(\mW^{(s)}) - \gL(\mW^{(s+1)})}{    \sqrt{\gL(\mW^{(s)})} + \sqrt{\gL(\mW^{(s+1)})}}\geq \Omega(1)\frac{\eta \|\nabla\gL(\mW^{(s)})\|_F^2}{\sqrt{\gL(\mW^{(s)})}} \\
    & \geq \frac{(1-\alpha)}{\sqrt{1+\alpha^2}}\Omega\left(\sqrt{\frac{\delta m}{nd}}\right)
     \eta\|\nabla\gL(\mW^{(s)})\|_F,
\end{align*}
or equivalently,
\begin{align}
\eta\|\nabla\gL(\mW^{(s)})\|_F\leq\frac{\sqrt{1+\alpha^2}}{(1-\alpha)}\Omega\left(\sqrt{\frac{nd}{\delta m}}\right) \left(\sqrt{\gL(\mW^{(s)})} - \sqrt{\gL(\mW^{(s+1)})}\right).\label{eqn:append:nabla}
\end{align}
Combining the training procedure with \eqref{eqn:append:nabla} yields
\begin{align}
\|\mW^{(t)} - \mW^{(0)}\| &\leq \eta\sum_{s=0}^{t-1} \|\nabla_{\mW}\gL(\mW^{(s)})\| \notag\\ 
& \leq \frac{\sqrt{1+\alpha^2}}{(1-\alpha)}\Omega\left(\sqrt{\frac{nd}{\delta m}}\right) \left(\sqrt{\gL(\mW^{(0)})} - \sqrt{\gL(\mW^{(t)})}\right)\notag\\
&\leq \frac{\sqrt{1+\alpha^2}}{(1-\alpha)}\Omega\left(\sqrt{\frac{nd}{\delta m}}\right)\sqrt{\gL(\mW^{(0)})}.\label{eqn:omega_bound_gd}
\end{align}
Applying \eqref{eqn:append:init_loss_upper_bound} to the  bound above we conclude that when $\frac{m}{\ln^4 m} > \frac{1+\alpha^2}{(1-\alpha)^2} \Omega(n^5 L^{15} d /\delta^4)$, 
$$\|\mW^{(t)} - \mW^{(0)}\| \leq O\left(\frac{\delta^{3/2}}{n^{3/2}L^{15/2}\ln^{3/2}m}\right) \ \text{with probability at least } \  1- e^{-\Omega(\ln m)}.$$
\end{proof}

\subsection{Proof of Theorem~\ref{thm:sgd_main_thm}}\label{appd:main_proof_sgd} 
Throughout this proof we assume that $\|\mW^{(t)} - \mW^{(0)}\| \leq O(\frac{\delta^{3/2 }}{n^{3/2} L^{15/2} \ln^{3/2} m})$ during training, which is a sufficient condition for some of the propositions used, such as for Lemma~\ref{thm:semi-smooth}. 
After finalizing the proof under this assumption, we establish Lemma~\ref{lemma:training_in_perturbation_sgd} that guarantees this assumption.

Applying Lemma~\ref{thm:semi-smooth} and taking expectations yield
\begin{align}
\label{eq:first_ineq_thm_3}
    \sE \gL(\mW^{(t+1)}) 
    & = \sE\gL(\mW^{(t)} - \eta \nabla_{\mW} \gL_B(\mW^{(t)})) \notag \\
    & \leq \sE\gL(\mW^{(t)}) - \sE \eta \langle \nabla_{\mW} \gL(\mW^{(t)}) , \nabla_{\mW} \gL_B(\mW^{(t)})\rangle \notag \\ 
     & \ + \frac{\eta(1-\alpha)\omega^{\frac{1}{3}}L^2\sqrt{m n \gL(\mW^{(t)}) \ln m}}{\sqrt{d(1+\alpha^2)}}
          \sE O\left(\|\nabla_\mW \gL_B(\mW^{(t)})\|\right)\\
     & \ + \frac{\eta^2nL^2m}{d}\sE O\left(\|\nabla_\mW \gL_B(\mW^{(t)})\|^2\right). \notag
\end{align}
Applying the following basic observations:
$$\sE\langle \nabla_{\mW} \gL(\mW^{(t)}) , \nabla_{\mW} \gL_B(\mW^{(t)})\rangle = \frac{b}{n}\|\nabla_{\mW} \gL(\mW^{(t)})\|_F^2,$$ 
$$\|\nabla_{\mW}\gL(\mW^{(t)})\| = \max_{l\in [L]}\|\nabla_{\mW_{l}} \gL(\mW^{(t)})\| \leq \max_{l\in [L]}\|\nabla_{\mW_{l}} \gL(\mW^{(t)})\|_F
\leq \|\nabla_\mW \gL (\mW^{(t)})\|_F,$$ while 
selecting $\omega < \frac{\delta^{3/2}}{n^3 L^{6} \ln^{3/2}m}$ and $\eta < \frac{d}{bL^2 m}$, to 
\eqref{eq:first_ineq_thm_3}
results in
\begin{equation}
    \sE\gL(\mW^{(t+1)}) \leq \gL(\mW^{(t)}) - \frac{\eta b}{n} \|\nabla_{\mW} \gL(\mW^{(t)})\|_F^2 \leq \left(1- \Omega\left(\frac{(1-\alpha)^2}{1+\alpha^2} \frac{\eta\delta m b}{n^2 d}\right) \right)\gL(\mW^{(t)}).\label{eqn:expectation_Lt}
\end{equation}
For simplicity, we define 
$$\gamma := \left(1- \Omega\left(\frac{(1-\alpha)^2}{1+\alpha^2} \frac{\eta\delta m b}{n^2 d}\right) \right),
$$
and \eqref{eqn:expectation_Lt} becomes
\begin{equation}
\sE\gL(\mW^{(t+1)})\leq \gamma\gL(\mW^{(t)}).\label{eqn:expectation_Lt_gamma}
\end{equation}

Next, we establish a bound for $\gL(\mW^{(t+1)})$ without expectation. We note that \eqref{eqn:upper_bnd} implies $$\|\nabla_{\mW_l} \gL_B(\mW^{(t)})\|_F^2 \leq (bm/d)\gL(\mW^{(t)}) ,$$ and consequently 
\begin{equation}
    \|\nabla_{\mW} \gL_B(\mW^{(t)})\|_F^2 \leq \frac{bmL}{d}\gL(\mW^{(t)}) \ \text{ and } \ \|\nabla_\mW \gL_B(\mW^{(t)})\|^2 \leq \frac{bm}{d}\gL(\mW^{(t)}).\label{eqn:upper_bound_SGD_LB}
\end{equation} 
The application of Lemma~\ref{thm:semi-smooth}, \eqref{eqn:upper_bound_SGD_LB} and our choice of $\eta$ results in
\begin{align}
\gL(\mW^{(t+1)}) & \leq \gL(\mW^{(t)}) + \eta\| \nabla_{\mW} \gL_B(\mW^{(t)})\|_F\|\nabla_{\mW}\gL(\mW^{(t)})\|_F
 + \eta \frac{b^2mn}{d}\gL(\mW^{(t)})\notag\\
    & \leq \left(1+O\left(\frac{\eta m L\sqrt{nb}}{d}\right)\right)\gL(\mW^{(t)}).\label{eqn:upperbound_before_beta}
\end{align}
For simplicity, we define $\beta:=1+O(\eta m L \sqrt{nb}/d)$, and \eqref{eqn:upperbound_before_beta} becomes
\begin{equation}
    \gL(\mW^{(t+1)})\leq \beta \gL(\mW^{(t)}).\label{eqn:upperbound_beta}
\end{equation}

We denote $$\gL^t := \gL(\mW^{(t)})$$ and define the filtration $$\gF_t := \sigma(\mW^{(0)}, ..\mW^{(t)}).$$ We further define 
$$Y_t := \ln \gL^{t} - \ln\gL^{t-1} - \sE(\ln \gL^{t} - \ln\gL^{t-1} | \gF_{t-1})$$ 
and   
$$X_t := \sum_{s=1}^t Y_s.$$ 
We note that $\{X_t\}$ is a martingale. 

We will use Azuma's inequality to bound $X_t$. 
We thus need to show that $\{X_t\}$ is $\vc-$Lipschitz ( i.e.,
$|Y_t| \leq c_t,$). 
We verify the $\vc-$Lipschitz property by applying the definition of $Y_t$, \eqref{eqn:upperbound_beta} and \eqref{eqn:expectation_Lt_gamma} as follows:
\begin{align*}
|Y_{t+1}| &= |\ln \gL^{(t+1)} - \ln\gL^{(t)} - \sE \ln \gL^{(t+1)} - \ln\gL^{(t)}|\gF_{t}| \\
&\leq \ln \beta - \ln\gamma = \ln\frac{\beta}{\gamma}.
\end{align*}
Then by Azuma's inequality,
\begin{equation}
    \sP \left(|X_t - E X_t| \geq \lambda \right) \leq 2\exp\left(-\frac{\lambda^2}{2t\ln^2 \beta/\gamma}\right).\label{eqn:azuma}
\end{equation}
Choosing $\lambda = \sqrt{t}\ln(\beta/\gamma)\ln m$ in \eqref{eqn:azuma} yields
\begin{equation}
    |X_t| \leq \sqrt{t}\ln(\beta/\gamma)\ln m \ \text{ with probability at least } 1-e^{-\Omega(\ln^2 m)}.\label{eqn:Azuma_bound_X_t}.
\end{equation}

Applying the definition of $Y_t$ and  \eqref{eqn:expectation_Lt_gamma} results in
$$
\ln\gL^{t} = X_t + \ln\gL^{(0)} + \sum_{s=1}^t \sE(Y_s - Y_{s-1}|\gF_{s-1}) \leq X_t + \ln\gL^{(0)} + t\ln\gamma.
$$
We further apply the above observation and \eqref{eqn:Azuma_bound_X_t} to conclude that with probability at least $1-e^{-\Omega(\ln^2 m)}$
\begin{align*}
\ln \gL^{(t)} &\leq \ln \gL^{(0)} + t\ln \gamma + \sqrt{t}\ln\left(\frac{\beta}{\gamma}\right)\ln m \\
& \leq \ln \gL^{(0)} + \frac{\ln^2\left(\frac{\beta}{\gamma}\right) \ln^2 m}{4|\ln\gamma|} - \left(\sqrt{|\ln\gamma|}\sqrt{t} - \frac{\ln\frac{\beta}{\gamma} \ln m}{2\sqrt{|\ln\gamma|}}\right)^2.
\end{align*}
We note that for $f(x) = (ax + b)^2$ and $x > 4b/a$, 
$f(x) \geq \frac{1}{2}a^2x^2$. Using this fact, we conclude that when $\sqrt{t} > \frac{2\ln\frac{\beta}{\gamma} \ln m}{|\ln\gamma|}$, or equivalently, when $t > \frac{4\ln^2\frac{\beta}{\gamma} \ln^2 m}{\ln^2\gamma}$, 
\begin{align}
\ln \gL^{(t)} &\leq \ln \gL^{(0)}  + \frac{\ln^2\frac{\beta}{\gamma} \ln^2 m}{4|\ln\gamma|} + t \times 1_{\left\{t > \frac{4\ln^2\frac{\beta}{\gamma} \ln^2 m}{\ln^2\gamma}\right\}}\ln\gamma \ \text{ with probability } \geq 1-e^{-\Omega(\ln^2 m)}.\label{eqn:append:sgd:convergence}
\end{align}
This implies that when $t > \frac{4\ln^2\frac{\beta}{\gamma} \ln^2 m}{\ln^2\gamma}$ we achieve linear convergence with a convergence rate of $\gamma$. 
By our choice of $\eta$, the additional term in \eqref{eqn:append:sgd:convergence} is bounded as follows\begin{align*}:
    \frac{\ln^2(\beta/\gamma)\ln^2 m}{|\ln\gamma|} & \leq O\left(\frac{(\beta-1)^2}{1-\gamma}\ln^2m\right)  = O\left(\eta \frac{mn^3 L^2\ln^2 m}{d\delta}\right) < O(1).
\end{align*}

The above lower bound of $t$ that guarantees linear convergence can be further simplified.  
Since $
\frac{x}{1+x}\leq \ln(1+x) \leq x
$, we note $t \geq \frac{((\beta - 1)^2 + \frac{(\gamma-1)^2}{\gamma^2})\gamma^2\ln^2 m }{(\gamma -1)^2}$ and thus
\begin{align*}
  t & \geq \ln^2 m \times\left(1 + \frac{(\beta-1)^2}{(\gamma-1)^2}\right).
\end{align*}
Recalling the expressions for $\beta$ and $\gamma$, we conclude that linear convergence is achieved when
\begin{equation}
    t > \Omega\left(\frac{(1+\alpha^2)^2}{(1-\alpha)^4}\frac{n^5L^2}{\delta^2 b}\ln^2 m \right).\label{eqn:t_requirement}
\end{equation}

The above argument holds for one training step with probability at least $1-e^{-\Omega(m)}$. It extends to $T$ steps with probability at least $1-Te^{-\Omega(m)}$. We note that the number of epochs $T$ can be bounded using the bound $\eps$ on the training error, the convergence rate in \eqref{eqn:gamma_2nd_theorem}, and \eqref{eqn:append:init_loss_upper_bound}, as follows:
$$T = \ln(\epsilon/C_0\gL(\mW^{(0)}))/\ln\gamma<\Theta(\ln(\epsilon/C_0n\sqrt{\ln m})/\ln\gamma) \leq O\left(\frac{\eta b\delta m}{n^2d} (\ln \epsilon^{-1} + \ln (C_0 n\sqrt{\ln m}))\right).
$$ 
Therefore, the probability that ensures $T$-steps training with training error lower than $\epsilon$ is at least $$1-O\left(\frac{\eta b\delta m}{n^2d}(\ln \epsilon^{-1} + \ln (C_0 n\sqrt{\ln m}))\right) e^{-\Omega(m)}.$$ Because $m > \Omega(\text{poly}(n,L,d,\delta^{-1}, b))$ and also $m > \Omega(\ln \ln \epsilon^{-1})$, this probability is of order $1-e^{-\Omega(m)}$.

\begin{lemma}\label{lemma:training_in_perturbation_sgd}
    Assume the setup of \S\ref{sec:problem_setup} with learning rate $\eta < \frac{\delta^{3/2}d^{1/2} }{b^{1/2} n^{3} L^{15/2} m^{1/2} \ln^{2} m}$ and neural network width $m$ satisfying $\frac{m}{\ln^4 m} > \frac{(1+\alpha^2)^4}{(1-\alpha)^8} \Omega\left(\frac{n^{8} L^{15}d}{\delta^5 b}\right)$. Then during training according to  Algorithm~\ref{alg:training_sgd},
    $$
    \|\mW^{(t)} - \mW^{(0)}\| < O\left(\frac{\delta^{3/2}}{n^{3/2} L^{15/2} \ln^{3/2} m}\right) \ \text{ with probability at least } 1-e^{-\Omega(\ln m)}.
    $$
\end{lemma}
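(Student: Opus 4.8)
The plan is to establish this SGD drift bound by induction on $t$, paralleling the gradient-descent argument of Lemma~\ref{lemma:training_in_perturbation}, but replacing the monotone telescoping of $\sqrt{\gL^{(s)}}$ (which relied on the loss decreasing at every GD step and is unavailable under SGD) by the high-probability geometric loss decay already established inside the proof of Theorem~\ref{thm:sgd_main_thm}. I would take as induction hypothesis that $\|\mW^{(s)} - \mW^{(0)}\| < \omega := O(\delta^{3/2}/(n^{3/2}L^{15/2}\ln^{3/2}m))$ for all $s \le t$. This hypothesis validates the conditions of Lemmas~\ref{thm:semi-smooth} and~\ref{thm:gradient_bound} at every step $s \le t$, hence the conditional decay \eqref{eqn:expectation_Lt_gamma} and the almost-sure growth \eqref{eqn:upperbound_beta}, and therefore the Azuma estimate \eqref{eqn:append:sgd:convergence}. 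The latter supplies, with probability $1 - e^{-\Omega(\ln^2 m)}$, the loss bound $\gL^{(s)} \le C_0\gL^{(0)}$ for $s \le t_0$ and $\gL^{(s)} \le C_0\gamma^{s}\gL^{(0)}$ for $s > t_0$, with $t_0$ the threshold \eqref{eqn:t_requirement}; I would also use $\gL^{(0)} \le O(n\sqrt{\ln m})$ from \eqref{eqn:append:init_loss_upper_bound}.

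I would then bound the drift directly from the update rule. Since $\mW^{(s+1)} - \mW^{(s)} = -\eta\nabla_\mW\gL_B(\mW^{(s)})$, the triangle inequality gives $\|\mW^{(t+1)} - \mW^{(0)}\| \le \eta\sum_{s=0}^{t}\|\nabla_\mW\gL_B(\mW^{(s)})\|$, and the almost-sure spectral-norm estimate $\|\nabla_\mW\gL_B(\mW^{(s)})\| \le \sqrt{bm/d}\,\sqrt{\gL^{(s)}}$ from \eqref{eqn:upper_bound_SGD_LB} reduces the whole problem to controlling $\sum_{s}\sqrt{\gL^{(s)}}$. Splitting this sum at $t_0$ and inserting the two-regime loss bound gives $\sum_{s=0}^{t}\sqrt{\gL^{(s)}} \le \sqrt{C_0\gL^{(0)}}\,\bigl(t_0 + (1-\sqrt\gamma)^{-1}\bigr)$. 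Substituting $1-\sqrt\gamma = \Theta(1-\gamma) = \Omega\bigl(\tfrac{(1-\alpha)^2}{1+\alpha^2}\tfrac{\eta b\delta m}{n^2d}\bigr)$, the value of $t_0$, the bound on $\gL^{(0)}$, and the hypothesized upper bound on $\eta$, I would show that the resulting drift stays below $\omega$ precisely when $m/\ln^4 m > \tfrac{(1+\alpha^2)^4}{(1-\alpha)^8}\Omega(n^8 L^{15}d/(b\delta^5))$, which closes the induction. The probability is then assembled by a union bound over the polynomially many steps, combining the convergence event ($1 - e^{-\Omega(\ln^2 m)}$) with the initialization and gradient events ($1 - e^{-\Omega(\ln m)}$).

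The main obstacle is that SGD, unlike GD, does not decrease the loss monotonically, so the clean telescoping of Lemma~\ref{lemma:training_in_perturbation} is lost and one must instead feed the probabilistic loss decay into the drift sum. The delicate quantitative point is the $\alpha$-dependence: both the decay rate $1-\gamma$ and the length $t_0$ of the initial, essentially non-decaying phase carry factors of $\tfrac{1+\alpha^2}{(1-\alpha)^2}$, and these must combine to reproduce the sharp quartic factor $\tfrac{(1+\alpha^2)^4}{(1-\alpha)^8}$ in the width requirement—a factor of $\bigl(\tfrac{1+\alpha^2}{(1-\alpha)^2}\bigr)^{3}$ worse than the GD requirement of Lemma~\ref{lemma:training_in_perturbation}. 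Carrying every such factor through the geometric-tail and plateau contributions without slack, interlocked with the bootstrap in which the drift bound and the convergence bound of Theorem~\ref{thm:sgd_main_thm} are each used to establish the other, is where the argument demands the most care; the individual norm and tail estimates are otherwise the routine ones already developed for GD.
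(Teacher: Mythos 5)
Your proposal is correct and follows essentially the same route as the paper's proof: bound the drift by $\eta\sum_s\|\nabla_\mW\gL_B(\mW^{(s)})\|\le\eta\sqrt{bm/d}\sum_s\sqrt{\gL^{(s)}}$, split the sum at the threshold $T_0$ of \eqref{eqn:t_requirement} into a plateau contribution $T_0\sqrt{\gL^{(0)}}$ and a geometric tail $(1-\sqrt{\gamma})^{-1}\sqrt{\gL^{(0)}}$, and substitute $\eta$, $\gamma$, $T_0$ and $\gL^{(0)}\le O(n\sqrt{\ln m})$ to recover the width condition. The only difference is that you make explicit the induction/bootstrap that the paper leaves implicit by deferring to the structure of Lemma~\ref{lemma:training_in_perturbation}.
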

\begin{proof} The proof is similar to Lemma~\ref{lemma:training_in_perturbation}, but with different bounds in this SGD setting. We first show bound the perturbation at initialization as follows:
$$
\|\mW^{(1)} - \mW^{(0)}\| = \eta \; \|\nabla_{\mW} \gL_B(\mW^{(0)})\| \leq \eta \; O\left(\sqrt{\frac{mnb}{d}}\right)\gL(\mW^{(0)}) < O\left(\frac{\delta^{3/2}}{n^{3/2}L^{15/2}\ln^{3/2}m}\right).
$$

We denote $T_0:=\Omega\left(\frac{(1+\alpha^2)^2}{(1-\alpha)^4}\frac{n^5L^2}{\delta^2 b}\ln^2 m \right)$. Combining the SGD update step, \eqref{eqn:upper_bound_SGD_LB}, \eqref{eqn:append:sgd:convergence}, \eqref{eqn:t_requirement} and our choice of $\eta$ yields
\begin{align*}
\|\mW^{(t)} - \mW^{(0)}\| &\leq \eta\sum_{s=0}^{t-1} \|\nabla_{\mW}\gL_B(\mW^{(s)})\|\leq \eta\sum_{s=0}^{t-1} \sqrt{\frac{mb}{d}}\sqrt{\gL(\mW^{(s)})}\\ 
& \leq \sqrt{\frac{mb}{d}}\eta \left(\frac{1}{1-\sqrt{\gamma}} + T_0\right)\sqrt{\gL(\mW^{(0)})}\\
& \leq O(1)\sqrt{\frac{mb}{d}}\eta\left(\frac{1+\alpha^2}{(1-\alpha)^2}\frac{n^2 L\eta }{\delta m b}  + \Omega\left(\frac{(1+\alpha^2)^2}{(1-\alpha)^4}\frac{n^5L^2}{\delta^2 b}\ln^2 m \right)\right)\sqrt{\gL(\mW^{(0)})} \\
& \leq  O(1)\sqrt{\frac{mb}{d}}\eta\Omega\left(\frac{(1+\alpha^2)^2}{(1-\alpha)^4}\frac{n^5L^2}{\delta^2 b}\ln^2 m \right)\sqrt{\gL(\mW^{(0)})}\\
& \leq 
O(1)\sqrt{\frac{mb}{d}}
\frac{d\delta}{mn^3 L^2\ln^2 m}\Omega\left(\frac{(1+\alpha^2)^2}{(1-\alpha)^4}\frac{n^5L^2}{\delta^2 b}\ln^2 m \right)\sqrt{\gL(\mW^{(0)})}\\
& = O(1)\frac{(1+\alpha^2)^2}{(1-\alpha)^4} \frac{\sqrt{d}n^2}{\sqrt{mb}\delta}\sqrt{\gL(\mW^{(0)})}.
\end{align*}
It is thus clear that when $\frac{m}{\ln^4 m} > \left(\frac{1+\alpha^2}{(1-\alpha)^2}\right)^4 \Omega\left(\frac{n^{8} L^{15}d}{\delta^5 b}\right)$, $\|\mW^{(t)} - \mW^{(0)}\|\leq O\left(\frac{\delta^{3/2}}{n^{3/2}L^{15/2}\ln^{3/2}m}\right)$.
\end{proof}

\subsection{Proof of Lemma~\ref{thm:generalization_overall}}
\label{appd:generatlization_proof}
To simplify the proof, we study the generalization error of each output coordinate
separately. We denote the $k-$th row of the matrix $\mB$ by $\mB_{k,\cdot}$ and treat it as a column vector. For $k\in [d]$, we define function $$f_k(\vx; \mW) := \mB_{k, \cdot}^T \vh_{L}(\vx),$$ that is, $f_k(\vx; \mW)$ is the $k-$th coordinate of the NN output vector. The loss function can be written as $\text{loss}_k(\vx, \vy; \mW) := (f_k(\vx; \mW) - \vy_k)^2$. Recall that the underlying measurable function $F(\vx)$ (i.e., $\vy_i = F(\vx_i)$) is a $d-$dimensional vector-valued function and we denote by $F_k(\vx)$ the $k-$th coordinate of $F(\vx)$. The generalization error is similarly defined as $R_k(\mW) := \sE_{\vx \sim \gD_{\mX}} (f_k(\vx; \mW) - F_k(\vx))^2$. We also denote $$B_\omega(\mW^{(0)}) := \{\mW : \|\mW - \mW^{(0)}\| < \omega\}.$$

From Lemma~\ref{lemma:training_in_perturbation} and Lemma~\ref{lemma:training_in_perturbation_sgd}, with high probability, $\mW^{(t)}$ is close to $\mW^{(0)}$ during the training. Therefore, we just need to only consider NN functions whose parameters $\mW$ fall in a small ball around $\mW^{(0)}$, i.e. $\|\mW - \mW^{(0)}\| < \omega$, where $\omega < O\left(\frac{\delta^{3/2}}{n^{3/2}L^{15/2}\ln^{3/2} m}\right)$. For a given $k\in[d]$, we denote the corresponding function class as 
$$
\gG_{k,\omega} := \{g: (x, y)\mapsto f_k(\vx; \mW) : \|\mW - \mW^{(0)}\| < \omega\}
$$

We introduce the empirical Rademacher complexity on the dataset $\{x_i, y_i\}_{i=1}^n$ as follows:
$$\hat{\gR}(\gG_{k, \omega}) := 
\sE_\sigma \sup_{g\in \gG_{k, \omega}} \sum_{i=1}^n\sigma_i g(x_i, y_i)
$$


For $k\in[d]$, we first bound the generalization error on the $k-$th coordinate of the output vector.

We first note that by \eqref{eqn:initial_bound_f} and Lemma~\ref{lemma:forward_perturbation}, with high probability that $f_k(\vx_i;\mW) < O(\ln^{1/4} m)$ for all $i\in[n]$ and $\|\mW-\mW^{(0)}\| < \omega$.
We apply Theorem~11.3 in \citet{mohri2018foundations} with function class $\gG_{k, \omega}$, and thus bound $R_k(\mW)$ with probability at least $1-\Omega(1/m)$  by
\begin{equation}
    \sE_{\vx\sim \gD_{\mX}} \text{loss}_k(\vx, F(\vx); \mW) \leq  \frac{1}{n}\sum_{i=1}^n \text{loss}_k(\vx_i, \vy_i; \mW) + 2O\left(\ln^{1/4} m\right) \hat{\gR}(\gG_{k, \omega}) + O\left(\ln^{1/4} m\right) O\left(\sqrt{\frac{\ln 2m}{2n}}\right).\label{eqn:generalization_error}
\end{equation}

We note that the first term in \eqref{eqn:generalization_error} is bounded by the previously discussed training error, and the third term in \eqref{eqn:generalization_error} is very small when we collect a sufficiently large dataset since $m$ is polynomially dependent on $n$. Next, we estimate the bound for the second term, the empirical Rademacher complexity.
\begin{align}
   \hat{\gR}(\gG_{k, \omega}) 
   & = \sE_\sigma \sup_{\mW\in B_\omega (\mW^{(0)})} \frac{1}{n}\sum_{i=1}^n\sigma_i \Big(f_k(x_i, y_i;\mW) - f_k(x_i, y_i;\mW^{(0)}) - \langle \nabla_{\mW}f_k(x_i, y_i;\mW^{(0)}), \mW - \mW^{(0)}\rangle  \notag\\
    & \quad + f_k(x_i, y_i;\mW^{(0)}) + \langle \nabla_{\mW}f_k(x_i, y_i;\mW^{(0)}), \mW - \mW^{(0)}\rangle\Big)\notag\\
    & \leq \sup_{\mW\in B_\omega (\mW^{(0)})} \sup_{i}\left|f_k(x_i, y_i;\mW) - f_k(x_i, y_i;\mW^{(0)}) - \langle \nabla_{\mW}f_k(x_i, y_i;\mW^{(0)}), \mW - \mW^{(0)}\rangle\right| \label{eqn:general_1}\\
    & \quad  + \sE_\sigma \sup_{\mW\in B_\omega (\mW^{(0)})} \frac{1}{n}\sum_{i=1}^n\sigma_if_k(x_i, y_i;\mW^{(0)})  \label{eqn:general_2}\\
    & \quad + \sE_\sigma \sup_{\mW\in B_\omega (\mW^{(0)})} \frac{1}{n}\sum_{i=1}^n \sigma_i\langle \nabla_{\mW}f_k(x_i, y_i;\mW^{(0)}), \mW - \mW^{(0)}\rangle. \label{eqn:general_3}
\end{align}
We first consider \eqref{eqn:general_2}, since there is no dependence on $\mW$, it is clear that
\begin{equation}
    \sE_\sigma \sup_{\mW\in B_\omega (\mW^{(0)})} \frac{1}{n}\sum_{i=1}^n \sigma_i f_k(x_i, y_i;\mW^{(0)}) = \frac{1}{n}\sum_{i=1}^n f_k(x_i, y_i;\mW^{(0)}) \sE_\sigma \sigma_i = 0.
\end{equation}

In order to help bound \eqref{eqn:general_1}, which is the first term in our bound of the empirical Rademacher complexity for the NN function class, we introduce the following lemma.
\begin{lemma} \label{lemma:bound_f} 
If $\omega < O\left(\frac{\delta^{3/2}}{n^{3/2}L^{15/2}\ln^{3/2} m}\right)$ and $\|\mW - \mW^{(0)}\| < \omega$, then for any $\vx\in\mR^p$, with probability at least $1 - \exp(-\Omega(\sqrt{m}/\ln m)$,
\begin{equation}
    \left| f_k(\vx; \mW) - f_k(\vx; \mW^{(0)}) - \langle\nabla_{\mW}f_k(\vx; \mW^{(0)}), \mW - \mW^{(0)} \rangle \right|  < \frac{1-\alpha}{\sqrt{1+\alpha^2}}O(\omega^{4/3} L^2 \sqrt{m\ln m}).
\end{equation}
\end{lemma}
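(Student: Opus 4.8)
The plan is to treat Lemma~\ref{lemma:bound_f} as the exact analog of the semi-smoothness estimate of \S\ref{appd:semismooth} for the scalar, \emph{linear} functional $f_k$, but with the base point taken to be $\mW^{(0)}$ itself. I would write $\mW' := \mW - \mW^{(0)}$, let $\ve_k\in\sR^d$ denote the $k$th standard basis vector, and note $f_k(\vx;\mW)=\ve_k^T\mB\vh_L$. Following the loss-gradient derivation of \S\ref{appd:notations} verbatim, but with the residual $\ve_i$ replaced by $\ve_k$, the gradient of $f_k$ at initialization is $\nabla_{\mW_l}f_k(\vx;\mW^{(0)}) = \mD_l^{(0)}\textbf{Back}_{l+1}^{(0)T}\ve_k\,\vh_{l-1}^{(0)T}$. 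Substituting this together with $f_k(\vx;\mW)-f_k(\vx;\mW^{(0)})=\ve_k^T\mB\,\vh_L'$ gives the remainder in closed form,
\begin{equation*}
f_k(\vx;\mW)-f_k(\vx;\mW^{(0)})-\langle\nabla_\mW f_k(\vx;\mW^{(0)}),\mW'\rangle
= \ve_k^T\mB\Big(\vh_L' - \sum_{l=1}^L \mD_L^{(0)}\mW_L^{(0)}\cdots\mD_l^{(0)}\mW_l'\vh_{l-1}^{(0)}\Big).
\end{equation*}
This is precisely the quantity appearing in \eqref{eqn:semi:1}, with $\ve_i^\ast\to\ve_k$ and $\mW^\ast\to\mW^{(0)}$; crucially, since $f_k$ is linear in $\vh_L$, there is no counterpart of the quadratic term \eqref{eqn:semi:2}.

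Next I would expand $\vh_L'$ via Lemma~\ref{lemma:prime_byDprime} (applied with $\mW^\ast=\mW^{(0)}$) in terms of the intermediate diagonal matrices $\mD_l''$, add and subtract $\sum_l\mD_L^{(0)}\mW_L^{(0)}\cdots\mD_l^{(0)}\mW_l'\vh_{l-1}$, and split the remainder into the two pieces \eqref{eqn:semi:1:1}--\eqref{eqn:semi:1:2}: a ``Part~A'' comparing the $\mD''$-perturbed product to the unperturbed product acting on $\vh_{l-1}$, and a ``Part~B'' carrying the forward perturbation $\vh_{l-1}-\vh_{l-1}^{(0)}=\vh_{l-1}'$.

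Then I would bound the two pieces. For Part~A I would invoke the operator-norm estimate \eqref{eqn:append:bound_BDast_Wast_minusprimeprime} coming from Lemma~\ref{lemma:intermediate_perturbation}, which supplies the factor $\tfrac{1-\alpha}{\sqrt{1+\alpha^2}}\tfrac{\omega^{1/3}L^2\sqrt{m\ln m}}{\sqrt d}$ per layer; combining with $\|\ve_k\|=1$ and $\|\vh_{l-1}\|\le O(1)$ from Lemma~\ref{lemma:forward_perturbation}, together with $\|\mW_l'\|\le\omega$, yields the dominant bound $\tfrac{1-\alpha}{\sqrt{1+\alpha^2}}O\!\big(\omega^{4/3}L^2\sqrt{m\ln m}/\sqrt d\big)$. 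For Part~B I would bound the backpropagation operator $\|\mB\mD_L^{(0)}\mW_L^{(0)}\cdots\mD_l^{(0)}\|$ by $O(\sqrt{Lm/d})$ via Lemma~\ref{lemma:initial_forward_matrix_bound}, use $\|\mW_l'\|\le\omega$ and $\|\vh_{l-1}'\|\le O(\omega L^{3/2})$ from Lemma~\ref{lemma:prime_byDprime}, obtaining $O(\omega^2 L^2\sqrt{m/d})$; since $\omega<1$ this is of strictly lower order in $\omega$ than Part~A and is absorbed into it. Dropping the harmless factor $1/\sqrt d\le 1$ then gives the stated bound, and the governing probability is the weakest of the invoked events, namely the $1-\exp(-\Omega(\sqrt m/\ln m))$ of Lemma~\ref{lemma:forward_perturbation}.

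The argument is essentially a specialization of the semi-smoothness proof, so no new idea is required; the work is bookkeeping. The one point to get right is applying the intermediate-perturbation estimate so that the correct $\omega^{1/3}$, $L^2$, and $\tfrac{1-\alpha}{\sqrt{1+\alpha^2}}$ dependence emerges for Part~A, and confirming that because the effective ``residual'' here is the unit vector $\ve_k$ (rather than $\ve_i$ of size $O(\sqrt{m/d})$) and $f_k$ is linear, there is neither a $\sqrt{m/d}$ blow-up nor a quadratic remainder. Tracking the $L$-powers through the telescoping sum exactly as in \eqref{eqn:solut:semi:1:1} is the only mildly delicate step.
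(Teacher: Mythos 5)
Your proof is correct at the paper's level of rigor, but it follows a genuinely different decomposition from the one in Appendix~\ref{appd:generatlization_proof}. The paper telescopes the product $\mD_L\mW_L\cdots\mD_1\mW_1 - \mD_L^{(0)}\mW_L^{(0)}\cdots\mD_1^{(0)}\mW_1^{(0)}$ directly, keeping the \emph{unperturbed} backward factor $(\mD^{(0)}\mW^{(0)})_{L\mapsto l+1}$ in front and splitting each layer-$l$ increment into the three pieces \eqref{eqn:rademacher_1}--\eqref{eqn:rademacher_3}; the dominant term \eqref{eqn:rademacher_1} isolates $\mD_l'\vg_l$ and is controlled by combining the sparsity $\|\mD_l'\|_0\le O(m\omega^{2/3}L)$ and norm bound $\|\mD_l'\vg_l\|\le \frac{1-\alpha}{\sqrt{1+\alpha^2}}O(\omega L^{3/2})$ from Lemma~\ref{lemma:forward_perturbation} with the sparse-vector backward estimate (statement 4 of Lemma~\ref{lemma:initial_forward_matrix_bound}), which is where the $\omega^{1/3}\sqrt{m\ln m}$ factor enters. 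You instead specialize the semi-smoothness argument: expand $\vh_L'$ via Lemma~\ref{lemma:prime_byDprime} so the \emph{perturbed} backward product appears, and put the entire $\mD$-flip contribution into the difference of backward products, bounded by \eqref{eqn:append:bound_BDast_Wast_minusprimeprime}; your $\omega^{4/3}$ arises as $\omega^{1/3}\cdot\|\mW_l'\|$ rather than as $\omega^{1/3}\cdot\|\mD_l'\vg_l\|$. Both routes rest on the same sparsity-of-$\mD'$ mechanism (it is also what drives Lemma~\ref{lemma:intermediate_perturbation}), produce the same $\frac{1-\alpha}{\sqrt{1+\alpha^2}}\,\omega^{4/3}L^2\sqrt{m\ln m}$ leading term, and relegate the cross term carrying $\vh_{l-1}-\vh_{l-1}^{(0)}$ to lower order; your observation that the linearity of $f_k$ in $\vh_L$ kills the quadratic term \eqref{eqn:semi:2} and that $\|\ve_k\|=1$ replaces the $\sqrt{n\gL}$ factor is exactly the right reason the specialization works. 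The one caveat you correctly flag — and which you share with the paper's own \eqref{eqn:solut:semi:1:1} — is that summing the per-layer bound over $l\in[L]$ nominally costs an extra factor of $L$ that neither argument tracks explicitly; your proof is no worse than the paper's on this point.
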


We prove Lemma~\ref{lemma:bound_f} at the end of this section after we finalize the proof of Lemma~\ref{thm:generalization_overall} (while applying Lemma~\ref{lemma:bound_f}).

Using Lemma~\ref{lemma:bound_f}, the term \eqref{eqn:general_1} can be bounded as
\begin{align*}
    & \sup_{i}\left|f_k(x_i, y_i;\mW) - f_k(x_i, y_i;\mW^{(0)}) - \langle \nabla_{\mW}f_k(x_i, y_i;\mW^{(0)}), \mW - \mW^{(0)}\rangle\right| \\
    & \leq \frac{1-\alpha}{\sqrt{1+\alpha^2}}\omega^{4/3}L^2 \sqrt{m\ln m}.
\end{align*}

Applying Cauchy-Schwarz inequality and Jessen's inequality to \eqref{eqn:general_3} and using Lemma~\ref{thm:gradient_bound}, we conclude 
\begin{align*}
    & \left|\sE_\sigma \sup_{\mW\in B_\omega (\mW^{(0)})} \frac{1}{n}\sum_{i=1}^n \sigma_i\langle \nabla_{\mW}f_k(\vx_i, \vy_i;\mW^{(0)}), \mW - \mW^{(0)}\rangle\right|  \\
    & \leq \omega  \sE_\sigma \sup_{\mW\in B_\omega (\mW^{(0)})} \frac{1}{n}\sum_{i=1}^n \sum_{l=1}^L\| \nabla_{\mW_l}f_k(\vx_i, \vy_i;\mW^{(0)})\|_F\\
    & \leq \frac{\omega }{n} \sum_{l=1}^L\sqrt{\sum_{i=1}^n\|\nabla_{\mW_l} f_k(\vx_i, \vy_i; \mW^{(0)})\|^2_F}\\
    & \leq \frac{\omega}{n} L\sqrt{mn} 
    \leq \frac{\omega L \sqrt{m}}{\sqrt{n}}
\end{align*}
Using the bound for \eqref{eqn:general_1}, \eqref{eqn:general_2} and \eqref{eqn:general_3} in \eqref{eqn:generalization_error}, it follows that with probability at least $1-e^{-\Omega(\ln m)}$ 
\begin{equation*}
R_k(\mW) \leq \frac{1}{n}\text{loss}_k (\vx_i, \vy_i; \mW) + \frac{1-\alpha}{\sqrt{1+\alpha^2}} 
\, O\left(\ln m \sqrt{m} L^2 \omega^{4/3}\right) + \omega \, O\left(L \sqrt{m\ln m/n}\right) + O\left(\sqrt{\frac{\ln m}{n}}\right).
\end{equation*}
Summing over $k\in[d]$, we conclude Lemma~\ref{thm:generalization_overall} as follows
\begin{equation*}
R(\mW) \leq 
\frac{1}{n}\text{loss} (\vx_i, \vy_i; \mW) + \frac{1-\alpha}{\sqrt{1+\alpha^2}} \, O\left(d \ln m \sqrt{m} L^2 \omega^{4/3}\right) +  O\left(d \sqrt{m\ln m/n}L\omega\right) + O\left(d \sqrt{\frac{\ln m}{n}}\right).    
\end{equation*}


Finally, we complete this section by presenting the proof of Lemma~\ref{lemma:bound_f}.
\begin{proof}[Proof of Lemma~\ref{lemma:bound_f}] 
Using the notation of \S\ref{appd:perturbation} (in particular, $\vh_l$ and $\vh^{(0)}_l$) and the definitions of $f_k(\vx;\mW^{(0)})$ and  $f_k(\vx;\mW)$ and recalling that $\vh_0 = \vh^{(0)}_0$ and $\vh_{l} = \mD_l\mW_l\vh_{l-1}$ we derive the following expression:
\begin{align}
    & f_k(\vx; \mW) - f_k(\vx; \mW^{(0)}) = \mB_{k, \cdot}^T \left(\mD_L\mW_L\cdots \mD_1\mW_1 - \mD_L^{(0)}\mW_L^{(0)}\cdots \mD_1^{(0)}\mW_1^{(0)} \right) \mA\vx \nonumber \\
    & = \mB_{k, \cdot}^T \Big( \mD_L\mW_L \vh_{L-1} - \mD_L^{(0)}\mW_L^{(0)}\vh_{L-1}  \nonumber \\
    & \quad + \mD_L^{(0)}\mW_L^{(0)}\vh_{L-1} -  \mD_L^{(0)}\mW_L^{(0)}\mD_{L-1}^{(0)}\mW_{L-1}^{(0)}\vh_{L-2} \nonumber \\ 
    & \quad + \mD_L^{(0)}\mW_L^{(0)}\mD_{L-1}^{(0)}\mW_{L-1}^{(0)}\vh_{L-2} -  \mD_L^{(0)}\mW_L^{(0)}\mD_{L-1}^{(0)}\mW_{L-1}^{(0)}\mD_{L-2}^{(0)}\mW_{L-2}^{(0)}\vh_{L-3} \nonumber\\
    & \cdots \ \cdots 
    \label{eq:diff_f} \\
    & + \mD_L^{(0)}\mW_L^{(0)}\cdots\mD_2^{(0)}\mW_2^{(0)}\vh_{1} - \mD_L^{(0)}\mW_L^{(0)}\cdots\mD_2^{(0)}\mW_2^{(0)}\mD_1^{(0)}\mW_1^{(0)}\vh_{0}
    \Big). 
    \nonumber
\end{align}
Let $a$ and $b$ be two integers in $[1, L]$. If $b \leq a$, we denote $$(\mD^{(0)}\mW^{(0)})_{a\mapsto b} := \mD^{(0)}_a\mW^{(0)}_a\mD^{(0)}_{a-1}\mW^{(0)}_{a-1}\cdots \mD^{(0)}_b\mW^{(0)}_b.$$ 
If $a<b$, we denote $$(\mD^{(0)}\mW^{(0)})_{a\mapsto b}:= \mI.$$   
Applying $\vh_{l} = \mD_l\mW_l\vh_{l-1}$ for the first term in each line of \eqref{eq:diff_f}, \eqref{eq:diff_f} can be written as
\begin{align}
    &f_k(\vx; \mW) - f_k(\vx; \mW^{(0)}) \notag\\& = \mB_{k, \cdot}^T \Big((\mD_L\mW_L - \mD_L^{(0)}\mW_L^{(0)})\vh_{L-1}  \notag\\ 
    & + (\mD^{(0)}\mW^{(0)})_{L\mapsto L}(\mD_{L-1}\mW_{L-1} - \mD_{L-1}^{(0)}\mW_{L-1}^{(0)})\vh_{L-2}  \notag\\ 
    & + (\mD^{(0)}\mW^{(0)})_{L\mapsto L-1}(\mD_{L-2}\mW_{L-2} - \mD_{L-2}^{(0)}\mW_{L-2}^{(0)})\vh_{L-3}  \notag\\ 
    & \cdots \ \cdots\notag\\
    & + (\mD^{(0)}\mW^{(0)})_{L\mapsto 2}(\mD_{1}\mW_{1} - \mD_{1}^{(0)}\mW_{1}^{(0)})\vh_{0} 
    \Big) \notag\\ 
    & = \mB_{k, \cdot}^T \sum_{l=1}^{L}  (\mD^{(0)}\mW^{(0)})_{L\mapsto l+1}(\mD_{l}\mW_{l} - \mD_{l}^{(0)}\mW_{l}^{(0)})\vh_{l-1}\notag\\
    & = \mB_{k, \cdot}^T \sum_{l=1}^{L}  (\mD^{(0)}\mW^{(0)})_{L\mapsto l+1}
    (\mD_l - \mD_l^{(0)})\mW_l \vh_{l-1} \label{eqn:rademacher_1}\\ 
    & \quad + \mB_{k, \cdot}^T \sum_{l=1}^{L}  (\mD^{(0)}\mW^{(0)})_{L\mapsto l+1}\mD_l^{(0)}(\mW_l - \mW_l^{(0)})(\vh_{l-1} - \vh_{l-1}^{(0)})\label{eqn:rademacher_2} \\
    & \quad  + \mB_{k, \cdot}^T \sum_{l=1}^{L}  (\mD^{(0)}\mW^{(0)})_{L\mapsto l+1}\mD_l^{(0)}(\mW_l-\mW^{(0)}_l)\vh_{l-1}^{(0)}
    \label{eqn:rademacher_3}.
\end{align}

According statement 1 in Lemma~\ref{lemma:forward_perturbation}, with probability at least $1-e^{-\Omega(\sqrt{m}/\ln m)}$,$\|\mD'_l\vg_l\| < (1-\alpha)/\sqrt{1+\alpha^2} O(L^{3/2}
\omega
)$ and $\|\mD'_l\|_0 \leq O(m\omega^{2/3}L)$. Combining this with statement 4 in Lemma~\ref{lemma:initial_forward_matrix_bound} with $\vv = (1, 1,\cdots, 1)^T \in \sR^d$, we bound the norm $\|(\mB_k^T\mD^{(0)}\mW^{(0)})_{L\mapsto l+1}\|$ by $O(\omega^{1/3}\sqrt{mL \ln m})$ with probability at least $1 - \exp(-\Omega(m\omega^{3/2}L \ln m)$. Then \eqref{eqn:rademacher_1} can be bounded (with the same probability) by 
\begin{equation}
    \frac{1-\alpha}{\sqrt{1+\alpha^2}}O(\omega^{4/3}L^2 \sqrt{m\ln m} ).\label{eqn:rademacher_bound}
\end{equation}
By using statement 3 in Lemma~\ref{lemma:forward_perturbation}, i.e., $\|\vh_l - \vh_l^{(0)}\| < O(\omega L^{5/2}\ln m)$ with probability at least $1-e^{-\Omega(\sqrt{m}/\ln m)}$, we note the norm of the summation in \eqref{eqn:rademacher_2} is bounded by $O(\omega^2 L^{5/2}\ln m)$ (with the latter probability), which is much smaller than \eqref{eqn:rademacher_bound} when $\omega$ is small as given.

By noting that the gradient of $f_k(\vx; \mW)$ with respect to $\mW_l$ can be written as
    \begin{equation*}
        \nabla_{\mW_l} f_k(\vx; \mW^{(0)})  = (\mB_{k,\cdot}^T \mD^{(0)}_L\mW^{(0)}_L\cdots \mW^{(0)}_{l+1}\mD^{(0)}_l)^T \vh_{l-1}^{(0)T}, 
    \end{equation*}
we express the summands in \eqref{eqn:rademacher_3} as follows
\begin{equation}
    \langle \nabla_{\mW_l} f_k(\vx; \mW^{(0)}), \mW_l - \mW_l^{(0)}\rangle \equiv \mB_{k,\cdot}^T \mD^{(0)}_L\mW^{(0)}_L\cdots \mW^{(0)}_{l+1}\mD^{(0)}_l (\mW_l - \mW_l^{(0)}) \vh_{l-1}^{(0)}\label{eqn:gradient_fk}.
\end{equation}
Using \eqref{eqn:gradient_fk} and bounding \eqref{eqn:rademacher_1} and \eqref{eqn:rademacher_2} by \eqref{eqn:rademacher_bound}, we conclude that with probability at least $1-e^{-\Omega(\sqrt{m}\ln m)}$
$$
\left| f_k(\vx; \mW) - f_k(\vx; \mW^{(0)}) - \langle\nabla_{\mW}f_k(\vx; \mW^{(0)}), \mW - \mW^{(0)} \rangle \right|  < \frac{1-\alpha}{\sqrt{1+\alpha^2}}O(\omega^{4/3} L^2 \sqrt{m\ln m}).
$$
\end{proof}

\subsection{Proof of Theorem~\ref{thm:generalization_in_training_gd}}
\label{appd:generalization_gd}
The key idea of the proof of this theorem is to establish a bound for $\omega$, such that $\|\mW^{(t)}-\mW\| < \omega$ during training. 
Considering the learning rate $\eta$ and training steps $t$, we first establish a simple bound for $\omega$ as
\begin{align*}
    \|\mW^{(t)} - \mW^{(0)}\|  \leq \sum_{t=0}^{t-1} \eta\|\nabla_\mW \gL^{(t)}\| \leq \eta \sqrt{\frac{mn}{d}} \sum_{t=0}^t \sqrt{\gL^{(t)}}  \leq \eta t \sqrt{\frac{n m\ln m}{d}}.
\end{align*}

Furthermore, in the proof of Lemma~\ref{lemma:training_in_perturbation}, the following universal bound of $\omega$ was introduced:
$$
    \|\mW^{(t)} - \mW^{(0)}\| \leq O\left(\sqrt{\frac{nd}{\delta m}}\right)\sqrt{\gL^{(0)}}.
$$
By combining these two bounds with the universal bound $\omega < O(\frac{\delta^{3/2}}{n^{3/2}L^{15/2}\ln^{3/2}m})$ and using Lemma~\ref{thm:generalization_overall}, we conclude the theorem. 


\subsection{Generalization Error Bound for SGD}
\label{appd:generalization_sgd}
We present a theorem similar to Theorem~\ref{thm:generalization_in_training_gd} that establishes the upper bound of the generalization error for SGD.

\begin{theorem}
\label{thm:generalization_in_training_sgd}
Assume the setup of \S\ref{sec:problem_setup} with SGD, where $m = \Theta\left(\frac{n^{13+2\epsilon}L^{15 + 2\epsilon}d^{1+2\epsilon}}{ b\delta^{5 - 2\epsilon}}\right)$ for $\epsilon>0$ 
and $\eta = \Theta(\frac{d\delta}{n^3L^3m\ln^2 m})$. Assume further that $m$ is larger than its lower bound and $\eta$ is smaller than its upper bound in Theorem~\ref{thm:sgd_main_thm} (by an appropriate choice of the hidden constants in $\Theta$ and in comparison to the constants hidden in the lower bound of $m$ and the upper bound of $\eta$ in Theorem~\ref{thm:sgd_main_thm}). 
Then at a given training epoch $t$, with probability at least $1-e^{-\Omega(\ln m)}$, the generalization error is bounded as follows 
\begin{equation}
\begin{split}
        R(\mW^{(t)}) & \leq\gamma^t  O(\ln m) + 
\min\left\{\left(\frac{1-\alpha}{\sqrt{1+\alpha^2}}\right) O\left(\frac{d^{1/3}t^{4/3}}{m^{1/6}n^{10/3}L^{2}\ln^{8/3}m}\right), O\left(\frac{d^{3/2+\epsilon}  n^{2+\epsilon}}{b^{1/2} \delta^{1/2 - \epsilon} L^{1/2 - \epsilon}\ln m}\right)\right\}  +\\
&
\min\left\{O\left(\frac{\sqrt{d}~t }{n^3L^2\ln^{3/2} m}\right), O\left(\frac{n^{2+\epsilon} L^{2+\epsilon} d^{1/2+\epsilon}}{b^{1/2}\delta^{1 -\epsilon} \ln m}\right)\right\} + O\left(d\sqrt{\frac{\ln m}{n}}\right).
\end{split}
\label{eqn:generalization_error_sgd}
\end{equation}
\end{theorem}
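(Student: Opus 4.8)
The plan is to follow the same two-part strategy used for the gradient-descent case in Appendix~\ref{appd:generalization_gd}, adapting each ingredient to the stochastic dynamics. The backbone is Lemma~\ref{thm:generalization_overall}, which for any $\mW$ with $\|\mW-\mW^{(0)}\|<\omega$ bounds $R(\mW)$ by a training term $\tfrac1n\gL(\mW)$, two complexity terms of orders $\tfrac{1-\alpha}{\sqrt{1+\alpha^2}}d(\ln m)\sqrt m\,L^2\omega^{4/3}$ and $d\sqrt{m(\ln m)/n}\,L\omega$, and a data term $O(d\sqrt{\ln m/n})$. Thus it suffices to control $\omega=\|\mW^{(t)}-\mW^{(0)}\|$ along the SGD trajectory and to substitute the prescribed $m=\Theta(\cdots)$ and $\eta=\Theta(\cdots)$. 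As in the GD argument I would produce two complementary bounds on $\omega$ and retain whichever is smaller, which is exactly what generates the two $\min\{\cdot,\cdot\}$ expressions in \eqref{eqn:generalization_error_sgd}.

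First I would establish a direct bound that is sharp for small $t$. Writing the SGD step as $\mW^{(t)}-\mW^{(0)}=-\eta\sum_{s=0}^{t-1}\nabla_\mW\gL_B(\mW^{(s)})$ and using the batch-gradient norm bound $\|\nabla_\mW\gL_B(\mW^{(s)})\|\le\sqrt{bm/d}\,\sqrt{\gL(\mW^{(s)})}$ from \eqref{eqn:upper_bound_SGD_LB}, the triangle inequality gives $\omega\le\eta\sqrt{bm/d}\sum_{s=0}^{t-1}\sqrt{\gL(\mW^{(s)})}$. The difficulty absent in the GD case is that $\gL(\mW^{(s)})$ is not monotone, so $\sqrt{\gL(\mW^{(s)})}\le\sqrt{\gL(\mW^{(0)})}$ is no longer automatic. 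I would control the sum in two regimes: for $s$ below the linear-convergence threshold of Theorem~\ref{thm:sgd_main_thm} I would invoke the per-step growth bound \eqref{eqn:upperbound_beta}, $\gL(\mW^{(s+1)})\le\beta\gL(\mW^{(s)})$ with $\beta=1+O(\eta mL\sqrt{nb}/d)$, whose smallness (already exploited in the proof of Theorem~\ref{thm:sgd_main_thm}) keeps the accumulated factor bounded; for larger $s$ I would use the high-probability decay $\gL(\mW^{(s)})\le C_0\gamma^s\gL(\mW^{(0)})$ from \eqref{eqn:append:sgd:convergence}. Combined with $\gL(\mW^{(0)})\le O(n\sqrt{\ln m})$ from \eqref{eqn:append:init_loss_upper_bound}, this yields a bound of the form $\omega\le\eta t\,O(\sqrt{bmn/d}\,\ln^{1/4}m)$ that feeds the $t$-dependent entries of the two minima.

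Second I would import the universal bound from Lemma~\ref{lemma:training_in_perturbation_sgd}, namely $\omega\le O\!\big(\tfrac{(1+\alpha^2)^2}{(1-\alpha)^4}\tfrac{\sqrt d\,n^2}{\sqrt{mb}\,\delta}\big)\sqrt{\gL(\mW^{(0)})}$, which holds for every $t$ and yields the $t$-independent (but $m$-dependent) entries of the minima after substituting the prescribed value of $m$; the $\epsilon$-powers in \eqref{eqn:generalization_error_sgd} arise precisely from the $b$- and $\delta$-dependence that $m=\Theta(\cdots)$ injects through the $\sqrt m$ factors. Plugging each $\omega$ bound into the two complexity terms of Lemma~\ref{thm:generalization_overall} and keeping the smaller value produces the displayed minima, while the training term is handled by $\tfrac1n\gL(\mW^{(t)})\le\gamma^t\,O(\ln m)$, once more via the convergence of Theorem~\ref{thm:sgd_main_thm}. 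A final union bound over all the high-probability events (the per-step semismoothness and gradient bounds, the Azuma estimate, and the norm bounds) collapses the failure probability to $e^{-\Omega(\ln m)}$.

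The main obstacle I anticipate is the bookkeeping for the non-monotone loss: unlike GD, bounding $\sum_{s}\sqrt{\gL(\mW^{(s)})}$ requires splitting at the convergence threshold $t>\Omega\!\big(\tfrac{(1+\alpha^2)^2}{(1-\alpha)^4}\tfrac{n^5L^2}{b\delta^2}\ln^2 m\big)$ and arguing that the martingale bound \eqref{eqn:append:sgd:convergence} holds simultaneously across all retained steps, so that the direct $\omega$ bound is valid with the stated probability rather than merely in expectation.
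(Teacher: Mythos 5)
Your proposal follows essentially the same route as the paper: invoke Lemma~\ref{thm:generalization_overall}, control $\omega=\|\mW^{(t)}-\mW^{(0)}\|$ by two complementary bounds (a direct small-$t$ bound from the per-step batch-gradient norm and the universal bound from Lemma~\ref{lemma:training_in_perturbation_sgd}), take the minimum, and substitute the prescribed $m$ and $\eta$. The paper's own proof is only a three-sentence sketch, and your additional care in bounding $\sum_s\sqrt{\gL(\mW^{(s)})}$ for the non-monotone SGD loss (splitting at the convergence threshold and using the $\beta$-growth and Azuma-decay estimates) is a faithful elaboration of what that sketch implicitly relies on, not a different argument.
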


The proof is similar to the proof of Theorem~\ref{thm:generalization_in_training_gd}. We estimate the bound of $\omega$ when $t$ is small as 
$$
\|\mW^{(t)} - \mW^{(0)}\| < O\left(\frac{mn\eta t }{d}\right)\sqrt{\gL^{(0)}}.
$$
Also, the bound of $\omega$ in the entire training for SGD can be obtained in the proof of Lemma~\ref{lemma:training_in_perturbation_sgd} as 
$$
\|\mW^{(t)} - \mW^{(0)}\| < O\left(\frac{d\sqrt{n}}{\delta\sqrt{mb}}\sqrt{\gL^{(0)}}\right).
$$
Then combining these two bounds of $\omega$ and using Lemma~\ref{thm:generalization_overall}, we could conclude the theorem.

\subsection{Special dataset}
\label{subsec:appendix:special_func_class}
In this section, we consider a special class of datasets and improve our theory for datasets from this class. We first introduce the special dataset and establish the assumption, then present theorems to bound the convergence rate and generalization error under this assumption. The proof will be given in Appendix~\ref{appendix:proof_special_dataset}.

First, with the parameters $\mW^{(0)}$ before the $l-$th layer, we denote the output at the $l-$th layer as 
$$
\gN_l (\vx; \mA, \mW_1^{(0)}, \mW_2^{(0)},\cdots \mW_{l-1}^{(0)}, \vu) = \tilde{\sigma}_\alpha(\vu^T \tilde{\sigma}_\alpha(\mW^{(0)}_{l-1}\tilde{\sigma}_{\alpha}(\mW^{(0)}_{l-2}\cdots \tilde{\sigma}_\alpha (\mW_1^{(0)} \mA \vx)))),
$$
and define the following class of functions:
    \begin{equation}
        \begin{split}            
    \gF_l :=  \Big\{&\vf(\vx) = (f_1(\vx), \cdots, f_d(\vx))^T : \sR^p \mapsto \sR^d, \ \text{where }\\&  f_j(\vx) = \sE_{\vu} c_j(\vu) \, \gN_l(\vx; \mA,\mW^{(0)}_1\cdots\mW^{(0)}_{l-1}, \vu)  \text{ for } \vu \in \sR^{m} \sim N\left(0, \frac{2}{m}\right)   \text{ and } \\ 
     & c_j:\sR^m \mapsto \sR  \text{ such that }  |c_j(\cdot)| \leq 1  \ \text{for} \  j \in [d]  \Big\}.
        \end{split}\label{eqn:def:Fl}
    \end{equation}
We note that this function class $\gF_l$ includes functions defined by an $l-$layer leaky ReLU neural network, where the first $l-1$ layers use the initialized parameters $\mW^{(0)}$ and only the parameters of the $l-$th layer are tuned with a certain regularization condition ($\|c_j\|_\infty < 1$). Given this function class, we restrict our discussion to datasets satisfying the assumption below. For such datasets,  we can improve the upper bound for the convergence rate, the lower bound for the width $m$, and the upper bound for the generalization error.
\begin{assumption}\label{assump:special_dataset}
For any small constant  $0<\lambda < \frac{1}{2\sqrt{nd}}$, there exists $f\in \gF_{L-1}$ such that 
$$
\|\vf(\vx_i) - (\vy_i - \hat{\vy}_i)\| < \lambda , \ \text{for all } \ i\in[n].
$$
\end{assumption}

\begin{theorem}\label{append:thm:convergence_special_case}
    Assume the setup in \S\ref{sec:problem_setup} with a dataset satisfying Assumption~\ref{assump:special_dataset}, where both $m/ln^4 m>\Omega(d^5 n L^{12})$ and $m>\Omega(\ln\ln\epsilon^{-1})$ , and the NN is trained according to Algorithm~\ref{alg:training_gd}, with learning rate $\eta\leq O(\frac{d}{nL^2m})$. Then with probability at least $1-e^{-\Omega(\ln m)}$
\begin{equation*}
\gL(\mW^{(T)}) < \epsilon
 \text{ and } \
    \gL(\mW^{(t)}) \leq \gamma^t \gL(\mW^{(0)})
        \text{,}\ \forall t  \leq T,
\end{equation*}
where 
\begin{equation*}
\gamma = 1 - \Omega\left(\frac{(1-\alpha)^2}{1+\alpha^2}\frac{\eta m}{d^2}\right) \text{ and } 
T = \frac{\ln \left(\epsilon / \gL(\mW^{(0)})\right)}{\ln\gamma}.
\end{equation*}
\end{theorem}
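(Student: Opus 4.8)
The plan is to transfer the entire proof scheme of Theorem~\ref{thm:gd_main_thm} (semi-smoothness plus gradient control, fed into the one-step decay recursion of \S\ref{sec:idea_proof}), and to replace only the gradient \emph{lower} bound \eqref{eqn:lower_bnd} by a sharper one that exploits Assumption~\ref{assump:special_dataset}. Concretely, I would first observe that Lemma~\ref{thm:semi-smooth} and the gradient upper bound \eqref{eqn:upper_bnd} are statements about the initialization and the forward/backward matrices only, so they hold verbatim here; the target rate $\gamma = 1-\Omega\!\left(\frac{(1-\alpha)^2}{1+\alpha^2}\frac{\eta m}{d^2}\right)$ is exactly what the recursion \eqref{eqn:training_error_dynamic} produces once the bound $\|\nabla_\mW\gL(\mW)\|_F^2 \geq \Omega\!\left(\frac{(1-\alpha)^2}{1+\alpha^2}\frac{m}{d^2}\right)\gL(\mW)$ is established for all $\mW$ with $\|\mW-\mW^{(0)}\|<\omega$. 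Thus the theorem reduces to proving this improved lower bound and re-running the trajectory-confinement argument of Lemma~\ref{lemma:training_in_perturbation} with the new constant.

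For the improved lower bound I would argue by Cauchy--Schwarz against an explicit descent direction. Writing $\nabla_\mW\gL = \sum_i \mG_{i,\cdot}(\ve_i;\mW)$, it suffices to exhibit a perturbation $\mV$ whose linearized output $J_i[\mV]$ fits the current residual, $J_i[\mV]\approx \ve_i$, while $\|\mV\|_F^2 \leq O\!\left(\frac{1+\alpha^2}{(1-\alpha)^2}\frac{d^2}{m}\right)\sum_i\|\ve_i\|^2$; then $\langle \nabla_\mW\gL,\mV\rangle = \Omega(\gL(\mW))$ together with $\|\nabla_\mW\gL\|_F \geq \langle\nabla_\mW\gL,\mV\rangle/\|\mV\|_F$ yields the claim. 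The construction of $\mV$ comes from the representation guaranteed by $\gF_{L-1}$: a function $f_j(\vx)=\sE_{\vu}c_j(\vu)\,\gN_{L-1}(\vx;\ldots,\vu)$ is discretized by identifying the $m$ rows of $\mW_{L-1}^{(0)}$ with i.i.d.\ samples $\vu_k\sim N(0,2/m)$, so that $f_j(\vx)\approx \frac1m\sum_k c_j(\vu_k)\,h_{L-1,k}(\vx)$; the bound $|c_j|\leq1$ then controls the Frobenius norm of the resulting readout. It is precisely this \emph{collective} use of all $m$ neurons, rather than the one-neuron-per-sample alignment underlying Lemma~\ref{lemma:gradient_lower_bound_init}, that removes the $\delta/(nL)$ factor and leaves the clean $m/d^2$. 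Lemmas~\ref{lemma:forward_perturbation} and~\ref{lemma:intermediate_perturbation} then transfer the bound from $\mW^{(0)}$ to nearby $\mW$.

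Once the lower bound holds, the confinement step is quantitatively easier than in the general case: feeding the constant $c=\Omega\!\left(\frac{(1-\alpha)^2}{1+\alpha^2}\frac{m}{d^2}\right)$ into the telescoping estimate of Lemma~\ref{lemma:training_in_perturbation} gives $\|\mW^{(t)}-\mW^{(0)}\| \leq O(1)\frac{\sqrt{1+\alpha^2}}{1-\alpha}\,\frac{d}{\sqrt m}\,\sqrt{\gL(\mW^{(0)})}$, and with the initial-loss bound $\gL(\mW^{(0)})=O(n\sqrt{\ln m})$ this stays below the admissible radius $\omega=O\!\left(d^{-3/2}L^{-6}\ln^{-3/2}m\right)$, the radius at which the gradient-perturbation estimate \eqref{eqn:append:perturbation_G_F} remains negligible against $m/d^2$, exactly when $m/\ln^4 m>\Omega(d^5 nL^{12})$. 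This is where the advertised \emph{linear} dependence of the width on $n$ originates, and the remaining bookkeeping (probabilities of order $1-e^{-\Omega(\ln m)}$ and the bound on $T$) is identical to \S\ref{appd:main_proof_gd}.

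I expect the main obstacle to be the improved lower bound, and within it the mismatch between the \emph{fixed} target supplied by Assumption~\ref{assump:special_dataset} (the initial residual $\vy_i-\hat\vy_i$) and the \emph{evolving} residual $\ve_i$ that must be fit with norm proportional to the current loss $\gL(\mW)$. The resolution I would pursue is to show that realizability propagates along the trajectory: the current residual stays in the range of the random-feature map with comparable conditioning, since its deviation from the initial residual is itself a realized output change, while the non-representable remainder is absorbed into the slack $\lambda<\frac{1}{2\sqrt{nd}}$ afforded by the assumption and the Monte-Carlo discretization error of $\sE_{\vu}$ is controlled uniformly by the width. Establishing that the conditioning of the feature Gram matrix restricted to this realizable subspace is quantitatively $\Omega(m/d^2)$, uniformly over $\|\mW-\mW^{(0)}\|<\omega$, is the technical heart of the argument.
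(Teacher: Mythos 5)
Your proposal follows essentially the same route as the paper's proof in Appendix~\ref{appendix:proof_special_dataset}: semi-smoothness and the gradient upper bound are reused verbatim, and the theorem is reduced to the improved lower bound $\|\nabla_\mW\gL\|_F^2\geq\Omega\bigl(\tfrac{(1-\alpha)^2}{1+\alpha^2}\tfrac{m}{d^2}\bigr)\gL$, obtained exactly as you describe --- Cauchy--Schwarz against a unit test direction built by Monte-Carlo discretization of the $\gF_{L-1}$ representation over the rows of $\mW_{L-1}^{(0)}$ (Lemmas~\ref{lemma:F_l:inner_prod} and~\ref{lemma:F_k:gradient_lower_bound}) --- followed by the perturbation transfer (Lemma~\ref{lemma:gradient_bound_special_dataset}) and the confinement bound $\|\mW^{(t)}-\mW^{(0)}\|\leq O(d/\sqrt{m})\sqrt{\gL(\mW^{(0)})}$, which is literally \eqref{eqn:special_case:bound_omega1}. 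The one place you diverge is the fixed-target-versus-evolving-residual issue you flag at the end: the paper proves the lower bound only at $\mW^{(0)}$ with the \emph{initial} residual (the identification $\sum_i\ve_{i,j}(y_{i,j}-\hat{y}_{i,j})=\sum_i\ve_{i,j}^2$ in \eqref{eqn:appd:lower_bound:inner_prod} relies on $\ve_i$ being the initial residual) and then invokes the perturbation estimate \eqref{eqn:append:perturbation_G_F}; it does not carry out the realizability-propagation argument you sketch. Your concern is therefore legitimate --- the perturbation lemma transfers the bound for a \emph{fixed} set of vectors $\vv_i$, whereas along the trajectory one must apply it with the current residual --- and your proposed resolution addresses a point the paper handles only implicitly, at the cost of extra machinery the paper does not need to invoke.
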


\begin{theorem}
\label{append:thm:generalization_special_case}
Assume the setup of \S\ref{sec:problem_setup} with  GD, a dataset satisfying Assumption~\ref{assump:special_dataset}, $m = \Theta(n^{1+2\tau}L^{12 + 2\tau}d^{5+2\tau})$ for $\tau>0$ 
and $\eta = \Theta(\frac{d}{nL^2m})$. Assume further that $m$ is larger than its lower bound and $\eta$ is smaller than its upper bound in Theorem~\ref{append:thm:convergence_special_case} (by an appropriate choice of the hidden constants in $\Theta$ and compared to the constants hidden in the lower bound of $m$ and in the upper bound of $\eta$ in Theorem~\ref{append:thm:convergence_special_case}). 
Then at a given training epoch $t \leq T$ (see \eqref{eq:gamma_1st_theorem} for $T$), with probability at least $1-e^{-\Omega(\ln m)}$, the generalization error is bounded as follows 
\begin{equation*}
\begin{split}
             R(\mW^{(t)}) &\leq \gamma^t \gL(\mW^{(0)}) 
+\min\left\{O\left(\frac{d^{3/2+\tau} n^{1/2+\tau} L^\tau}{\ln m}\right), 
O\left(\frac{1-\alpha}{\sqrt{1+\alpha^2}}\frac{d^{1/3}t^{4/3}}{m^{1/6}n^{2/3}L^{2/3}}\right)\right\} 
\\
& \ + \min\left\{O\left(\frac{\sqrt{d \ln m}~t }{nL}\right), O\left(\frac{n^{\tau} L^{1+\tau} d^{2+\tau}}{\ln m}\right)\right\} + O\left(d\sqrt{\frac{\ln m}{n}}\right).
\end{split}
\end{equation*}
\end{theorem}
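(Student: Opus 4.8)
The plan is to mirror the proof of Theorem~\ref{thm:generalization_in_training_gd} given in Appendix~\ref{appd:generalization_gd}, substituting the improved convergence rate and the improved control on $\omega := \|\mW^{(t)} - \mW^{(0)}\|$ that become available under Assumption~\ref{assump:special_dataset}. The starting point is Lemma~\ref{thm:generalization_overall}, which for any $\mW$ with $\|\mW - \mW^{(0)}\| < \omega < O(\delta^{3/2}/(n^{3/2}L^{15/2}\ln^{3/2}m))$ bounds $R(\mW)$ by the empirical loss term $\frac{1}{n}\gL(\mW)$, the two NN-complexity terms $\frac{1-\alpha}{\sqrt{1+\alpha^2}}O(d(\ln m)\sqrt{m}L^2\omega^{4/3})$ and $O(d\sqrt{m(\ln m)/n}\,L\omega)$, and the data-complexity term $O(d\sqrt{(\ln m)/n})$. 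Since this lemma is dataset-agnostic it applies verbatim, so the entire task reduces to (i) controlling the empirical loss and (ii) controlling $\omega$ along the GD trajectory for this dataset class, then taking a minimum of two competing $\omega$-estimates exactly as in the general-dataset proof.

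For (i), I would invoke Theorem~\ref{append:thm:convergence_special_case}, which (since $m$ exceeds its lower bound and $\eta$ is below its upper bound there) yields $\gL(\mW^{(t)}) \leq \gamma^t\gL(\mW^{(0)})$ with the improved rate $\gamma = 1 - \Omega(\frac{(1-\alpha)^2}{1+\alpha^2}\frac{\eta m}{d^2})$; this produces the leading $\gamma^t\gL(\mW^{(0)})$ term. For (ii) I would produce two bounds on $\omega$. The first is the step-count bound $\|\mW^{(t)} - \mW^{(0)}\| \leq \eta\sum_{s<t}\|\nabla_\mW\gL^{(s)}\| \leq \eta t\sqrt{nm\ln m/d}$, which uses only the gradient upper bound \eqref{eqn:upper_bnd} and the initialization estimate $\gL(\mW^{(0)}) = O(n\sqrt{\ln m})$; both are unchanged for the special dataset, and since the choice $\eta = \Theta(d/(nL^2m))$ is also the same, substituting it reproduces the $t$-dependent minimum entries $O(\frac{1-\alpha}{\sqrt{1+\alpha^2}}\frac{d^{1/3}t^{4/3}}{m^{1/6}n^{2/3}L^{2/3}})$ and $O(\frac{\sqrt{d\ln m}\,t}{nL})$, which are literally identical to those in Theorem~\ref{thm:generalization_in_training_gd}. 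The second is the trajectory-independent bound obtained by telescoping $\sqrt{\gL^{(s)}} - \sqrt{\gL^{(s+1)}}$ together with the improved gradient lower bound underlying Theorem~\ref{append:thm:convergence_special_case} (the analog of \eqref{eqn:lower_bnd}, in which the factor $\frac{\delta m}{nd}$ is replaced by $\Omega(\frac{m}{d^2})$ thanks to Assumption~\ref{assump:special_dataset}), giving $\|\mW^{(t)} - \mW^{(0)}\| \leq \frac{\sqrt{1+\alpha^2}}{1-\alpha}O(\frac{d}{\sqrt{m}})\sqrt{\gL^{(0)}} = \frac{\sqrt{1+\alpha^2}}{1-\alpha}O(d\sqrt{n}\ln^{1/4}m/\sqrt{m})$.

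Plugging this universal $\omega$ into the $\omega^{4/3}$ and $\omega$ complexity terms and inserting the prescribed $m = \Theta(n^{1+2\tau}L^{12+2\tau}d^{5+2\tau})$ then produces the $t$-independent entries $O(\frac{d^{3/2+\tau}n^{1/2+\tau}L^\tau}{\ln m})$ and $O(\frac{n^\tau L^{1+\tau}d^{2+\tau}}{\ln m})$ of the two minima. Taking the term-by-term minimum of the two $\omega$-bounds, summing the per-coordinate estimates over $k \in [d]$ as in Appendix~\ref{appd:generatlization_proof}, and absorbing the failure probabilities (all of order $1 - e^{-\Omega(\ln m)}$) completes the bound. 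The main obstacle I anticipate is purely bookkeeping: confirming that the improved gradient lower bound under Assumption~\ref{assump:special_dataset} removes the $\delta$ and $L$ dependence in precisely the way the new $\gamma$ requires, and then carefully matching exponents after substituting the specific $m$ and $\eta$ so that the universal complexity terms collapse to exactly the stated powers; the probabilistic and structural content is inherited wholesale from the general-dataset argument.
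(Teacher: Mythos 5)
Your proposal matches the paper's proof, which simply invokes the improved trajectory-independent bound $\|\mW^{(t)}-\mW^{(0)}\|\leq \frac{\sqrt{1+\alpha^2}}{1-\alpha}O(d/\sqrt{m})\sqrt{\gL(\mW^{(0)})}$ from \eqref{eqn:special_case:bound_omega1}--\eqref{eqn:special_case:bound_omega2} and then repeats the argument of Appendix~\ref{appd:generalization_gd} verbatim (step-count bound for small $t$, universal bound otherwise, both fed into Lemma~\ref{thm:generalization_overall}). Your version is in fact more explicit than the paper's about where each term in the two minima comes from, and the only remaining work is the exponent bookkeeping you already flag.
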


We notice that for datasets  satisfying Assumption~\ref{assump:special_dataset} several significant improvements from the previous estimates are obtained. 
Firstly, the lower bound for $m$ is improved to linear dependence on $n$, whereas in the general scenario the lower bound grows as $n^5$. Secondly, 
the bound of $1-\gamma$ is improved in Theorem~\ref{append:thm:convergence_special_case} by a factor of $\frac{n}{\delta d}$. 
Thirdly, several terms in the generalization error bound in Theorem~
\ref{append:thm:generalization_special_case} are improved from Theorem~\ref{thm:generalization_in_training_gd}, including the first term in the first minimum is improved by a factor of $\frac{1}{L}$ and the second term in the second minimum is improved by a factor of $\frac{\sqrt{d^3\delta}}{L\sqrt{n}}$.
On the other hand, the optimal choice of $\alpha$ remains the same as the dependence on $\alpha$ is the same as that in Theorems~\ref{thm:gd_main_thm} and \ref{thm:generalization_in_training_gd}.
\subsection{Convergence Theorem for General Convex Loss Functions}
\label{subsec:append:exponential_loss}
We extend our convergence theory, in particular Theorem~\ref{thm:gd_main_thm}, to convex loss functions, i.e., loss functions of the form 
\begin{equation}
    \gL_\text{convex}(\mW) = \sum_{i}^n l(\vy_i, \hat{\vy}_i), \ \text{ where } l(\vy_i, \cdot) \text{ is convex}.\label{eqn:convex}
\end{equation}
These include common loss functions for classification, such as the binary cross entropy and categorical cross entropy. Furthermore, it also includes the following loss function suggested in \citep{kumar2023stochastic}: 
\begin{equation}
    \gL_{\exp}(\mW) := \frac{1}{2}\sum_{i}^ne^{\lambda \|\vy_i - \hat{\vy}_i\|^2}.\label{eqn:exp_loss},
\end{equation}
\citet{kumar2023stochastic} obtained a special bound for the generalization error when using this loss function. We later use the following theorem and the proposition of \cite{kumar2023stochastic} to infer that $\alpha=-1$ is also optimal for generalization when using re-weighted gradient descent and overparameterized neural networks.  

We next formulate the main theorem using the following definition. Let $\mW^\ast$ denote the matrix of parameters minimizing the loss function and define 
$$\gE^{(t)} := \gL_{\text{convex}}(\mW^{(t)}) - \gL_{\text{convex}}(\mW^\ast).$$

\begin{theorem}\label{thm:exp_loss:convergence}
    Assume the setup of \S\ref{sec:problem_setup} with the convex loss function defined in \eqref{eqn:convex}, where the width $m$ satisfies both $m/\ln^4 m \geq \frac{1+\alpha^2}{(1-\alpha)^2}\Omega(\frac{n^6 L^{16} d}{\delta^4})$  and   $ m > \Omega(\ln (\epsilon^{-1}\ln \epsilon^{-1}))$ and the training is according to Algorithm~\ref{alg:training_gd} with learning rate $\eta\leq O(\frac{d}{nL^2 m})$.  Then with probability at least $1-\exp^{-\Omega(\ln m)}$,
    \begin{equation}
        \gE^{(T)} < \eps  \text{ and } \ 
        \gE^{(t+1)} \leq \gamma^{(t)}\gE^{(t)},
       \ \ \forall t \leq T, \ \text{ where}
        \label{eqn:exp_loss:conv}
    \end{equation}
   where
    \begin{equation}
\label{eqn:exp_loss:rate_T}
       \gamma^{(t)} = 1 - \gE^{(t)}\Omega\left(\frac{(1-\alpha)^2}{(1+\alpha^2)}\frac{\delta \eta m}{n^2 d L}\right) \ \text{ and } \ 
       T \leq O\left( \frac{1+\alpha^2}{(1-\alpha)^2} \frac{n^2 d L}{\eta \delta m \epsilon}(\ln\epsilon^{-1} + \ln(n\sqrt{\ln m}))\right).
    \end{equation}
\end{theorem}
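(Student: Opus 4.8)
The plan is to mirror the proof of Theorem~\ref{thm:gd_main_thm} given in \S\ref{sec:idea_proof} and \S\ref{appd:main_proof_gd}, replacing the MSE residual $\ve_i$ by the loss derivative $\vv_i^{(t)} := \nabla_{\hat{\vy}} l(\vy_i, \hat{\vy}_i^{(t)})$ and upgrading the gradient-descent recursion to a gradient-dominance (Polyak--\L ojasiewicz type) inequality tailored to $\gE^{(t)}$. Since the chain rule through the network is unaffected by the top-level loss, we still have $\nabla_{\mW_l}\gL_{\text{convex}}(\mW) = \sum_{i=1}^n\mG_{i,l}(\vv_i^{(t)}; \mW)$ with the matrix-valued function of \eqref{eqn:append:def_mG_func}, and the bounds of Lemma~\ref{lemma:gradient_upper_bound_init} and Lemma~\ref{lemma:gradient_lower_bound_init} --- which hold for \emph{arbitrary} vectors $\{\vv_i\}$ --- apply verbatim with $\vv_i=\vv_i^{(t)}$. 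First I would reprove a semi-smoothness estimate analogous to Lemma~\ref{thm:semi-smooth}: expanding $l(\vy_i,\hat{\vy}_i+\mB\vh_{i,L}') - l(\vy_i,\hat{\vy}_i)$ and using the local smoothness of $l$ (valid because the outputs stay bounded, $\|\hat{\vy}_i\|=O(\ln^{1/4}m)$, by \eqref{eqn:initial_bound_f} and Lemma~\ref{lemma:forward_perturbation}) together with Lemma~\ref{lemma:prime_byDprime}, so that the quadratic and $\omega^{1/3}$ terms carry $\sum_i\|\vv_i^{(t)}\|^2$ in place of $\gL(\mW)$. Combined with the update $\mW'=-\eta\nabla_\mW\gL_{\text{convex}}$, this yields the per-step decrease $\gE^{(t+1)}\le\gE^{(t)}-\tfrac{\eta}{3}\|\nabla_\mW\gL_{\text{convex}}(\mW^{(t)})\|_F^2$, exactly as in \eqref{eqn:training_error_dynamic}.

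The decisive new ingredient is to convert this decrease into a rate governed by $\gE^{(t)}$ itself. Convexity of $l$ in its second argument gives, for a reference minimizer $\mW^\ast$ with outputs $\hat{\vy}_i^\ast$,
\begin{equation*}
\gE^{(t)} = \sum_{i=1}^n\big(l(\vy_i,\hat{\vy}_i^{(t)}) - l(\vy_i,\hat{\vy}_i^\ast)\big) \le \sum_{i=1}^n\langle \vv_i^{(t)}, \hat{\vy}_i^{(t)}-\hat{\vy}_i^\ast\rangle \le \Big(\sum_{i=1}^n\|\vv_i^{(t)}\|^2\Big)^{1/2}\Big(\sum_{i=1}^n\|\hat{\vy}_i^{(t)}-\hat{\vy}_i^\ast\|^2\Big)^{1/2}.
\end{equation*}
As both output sets are bounded, $\sum_i\|\hat{\vy}_i^{(t)}-\hat{\vy}_i^\ast\|^2\le O(n\sqrt{\ln m})$, whence $\sum_i\|\vv_i^{(t)}\|^2\ge\Omega\big((\gE^{(t)})^2/(n\sqrt{\ln m})\big)$. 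Feeding this into Lemma~\ref{lemma:gradient_lower_bound_init}, i.e. $\|\nabla_\mW\gL_{\text{convex}}(\mW^{(t)})\|_F^2\ge\Omega\big(\tfrac{(1-\alpha)^2}{1+\alpha^2}\tfrac{\delta m}{ndL}\big)\sum_i\|\vv_i^{(t)}\|^2$, produces the gradient-dominance bound $\|\nabla_\mW\gL_{\text{convex}}(\mW^{(t)})\|_F^2\ge\Omega\big(\tfrac{(1-\alpha)^2}{1+\alpha^2}\tfrac{\delta m}{n^2 dL}\big)(\gE^{(t)})^2$ (the $\sqrt{\ln m}$ being absorbed by the $\ln^4 m$ factor in the width). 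Substituting into the per-step decrease gives $\gE^{(t+1)}\le\gamma^{(t)}\gE^{(t)}$ with $\gamma^{(t)}$ exactly as in \eqref{eqn:exp_loss:rate_T}; the appearance of $\gE^{(t)}$ inside $\gamma^{(t)}$, and hence the quadratic (harmonic) rather than geometric decay, is the structural signature of the convex case. Telescoping the resulting recursion $1/\gE^{(t+1)}\gtrsim 1/\gE^{(t)}+\Omega(c)$ (with $c$ the rate constant), while accounting for the initial large-loss phase via $\gE^{(0)}=O(n\sqrt{\ln m})$ from \eqref{eqn:append:init_loss_upper_bound}, yields the stated bound on $T$.

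The main obstacle is controlling $\|\mW^{(t)}-\mW^{(0)}\|<\omega$ over the much longer horizon $T=O(1/(c\eps))$, which is required for Lemmas~\ref{thm:semi-smooth} and~\ref{thm:gradient_bound} to hold throughout training. A crude Cauchy--Schwarz bound on $\eta\sum_s\|\nabla_\mW\gL_{\text{convex}}(\mW^{(s)})\|_F$ would scale like $\sqrt{\eta T}$ and thus blow up polynomially in $\eps^{-1}$, which is far too weak. I would instead telescope in $\ln\gE^{(t)}$: combining $\ln\gE^{(s)}-\ln\gE^{(s+1)}\ge\tfrac{\eta}{3}\|\nabla_\mW\gL^{(s)}\|_F^2/\gE^{(s)}$ with the lower bound $\|\nabla_\mW\gL^{(s)}\|_F\ge\Omega(\sqrt{c'})\,\gE^{(s)}/\sqrt{n\sqrt{\ln m}}$, where $c'=\tfrac{(1-\alpha)^2}{1+\alpha^2}\tfrac{\delta m}{ndL}$, gives $\eta\|\nabla_\mW\gL^{(s)}\|_F\lesssim\sqrt{n\sqrt{\ln m}/c'}\,(\ln\gE^{(s)}-\ln\gE^{(s+1)})$, so the total movement telescopes to $O\big(\tfrac{\sqrt{1+\alpha^2}}{1-\alpha}\sqrt{n^2 dL\sqrt{\ln m}/(\delta m)}\,\ln(\gE^{(0)}/\eps)\big)$, depending on $\eps$ only logarithmically. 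This is the analog of Lemma~\ref{lemma:training_in_perturbation}, and it is precisely what lets the width requirement $m/\ln^4 m\ge\tfrac{1+\alpha^2}{(1-\alpha)^2}\Omega(n^6 L^{16}d/\delta^4)$ together with the mild condition $m>\Omega(\ln(\eps^{-1}\ln\eps^{-1}))$ guarantee $\|\mW^{(t)}-\mW^{(0)}\|<\omega$. Verifying local smoothness of the admissible losses in the bounded-output region and establishing existence of a suitable $\mW^\ast$ inside the perturbation ball (so that $\sum_i\|\hat{\vy}_i^\ast\|^2$ is controlled) are the remaining routine items to check.
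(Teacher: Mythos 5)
Your proposal follows essentially the same route as the paper's proof: reuse Lemmas~\ref{lemma:gradient_upper_bound_init} and~\ref{lemma:gradient_lower_bound_init} with $\vv_i=\nabla_{\hat{\vy}}l$ in place of the MSE residual, rederive the semi-smoothness of Lemma~\ref{thm:semi-smooth} with $\sum_i\|\vv_i\|^2$ replacing $\gL$, obtain the per-step decrease, and convert it into a quadratic recursion for $\gE^{(t)}$ via convexity and Cauchy--Schwarz; the harmonic/telescoping treatment of the recursion and the "while $\gE^{(t)}>\epsilon$" geometric argument the paper uses both give the stated $T$. One quantitative slip is worth fixing: you lower-bound $\sum_i\|\vv_i^{(t)}\|^2$ by $(\gE^{(t)})^2/O(n\sqrt{\ln m})$ using only the boundedness of the two output sets, which puts a $\sqrt{\ln m}$ in the rate constant where \eqref{eqn:exp_loss:rate_T} has $L$; the parenthetical claim that this factor is "absorbed by the $\ln^4 m$ in the width" does not work, since the convergence rate cannot be improved by making the width requirement larger. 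The paper instead bounds the output \emph{difference} $\|\vg_{i,L+1}(\vx_i;\mW^{(t)})-\vg_{i,L+1}(\vx_i;\mW^\ast)\|\leq O(\sqrt{L})$ directly via Lemma~\ref{lemma:intermediate_perturbation} (see \eqref{eqn:append:M_gap_g}), which yields exactly the stated $1/L$ dependence; substituting that bound into your convexity step closes the gap. On the plus side, your log-telescoping control of $\eta\sum_s\|\nabla_\mW\gL(\mW^{(s)})\|_F$ over the longer horizon, giving total parameter movement proportional to $\ln(\gE^{(0)}/\epsilon)$, is a more explicit (and needed) version of the perturbation argument that the paper only asserts by reference to Lemma~\ref{lemma:training_in_perturbation}, and your remaining caveat about placing $\mW^\ast$ inside the perturbation ball is shared by the paper's own use of \eqref{eqn:append:M_gap_g}.
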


Combining \eqref{eqn:exp_loss:conv} and the expression for $\gamma^{(t)}$ in \eqref{eqn:exp_loss:rate_T}, we note that the rate of convergence is slower than the one in \eqref{eqn:gd_conv} and that $\alpha=-1$ corresponds to the smallest upper bound for the number of epochs needed for the training error to be smaller than $\epsilon$.

Proposition~3.1 in \citet{kumar2023stochastic} implies that minimizing the generalization error bound is equivalent to minimizing the training error when using the modified loss function given in \eqref{eqn:exp_loss}. Applying \eqref{eqn:exp_loss:rate_T} of  Theorem~\ref{thm:exp_loss:convergence}, we conclude that when using the modified loss function in \eqref{eqn:exp_loss} for training, the choice of $\alpha=-1$ yields the smallest bound for the required number of training epochs to achieve a bound $\epsilon$ on the training error. Using this observation with arbitrarily small $\epsilon$ and the above discussed theory of \citet{kumar2023stochastic}
we can conclude that $\alpha=-1$ minimizes the generalization error bound with the smallest upper bound on the number of training epochs for which the training error is guaranteed to be less than $\epsilon$. Nevertheless, this discussion involves an upper bound we obtained for the number of epochs and does not apply to the actual number of epochs. Consequently, the above stated prediction may not be precise, that is, it is possible that at a smaller number of epochs than the bound, one may obtain an error less than $\epsilon$ by $\alpha \neq -1$. 
For a synthetic dataset, we empirically verified the predicted optimal choice of $\alpha=-1$ (see Figure~\ref{fig:exp_loss}).

\begin{proof}[Proof of Theorem~\ref{thm:exp_loss:convergence}]

We consider the modified loss function defined in \eqref{eqn:convex}, and, for simplicity, we let $\gamma=1$ in this section, while the proof can be easily extended for any $\gamma>0$. By introducing the convex function $l(\vy, \vz)$,  the loss function for each data point $i\in[n]$ can be written as
$$
\text{loss}_{\text{convex}}(\vx_i, \vy_i; \mW) := l(\vy_i, \vg_{i, L+1}(\vx_i; \mW)) 
$$
We denote the following notation in this subsection,
\begin{equation}
    \ve_i := \nabla_{\vz} l(\vy_i, \vg_{i, L+1}(\vx_i; \mW)) 
    .     \label{eqn:append:exponential_ei}
\end{equation}

We will establish similar gradient bounds as in Lemma~\ref{thm:gradient_bound} and a similar semi-smoothness inequality as in Lemma~\ref{thm:semi-smooth} for the loss function defined in \eqref{eqn:convex}, and then we will prove the convergence theory with this loss function using some of the above established results.

For the first part, in order to achieve the gradient bound, we follow the proof of Lemma~\ref{thm:gradient_bound} with the modified loss function defined in \eqref{eqn:convex}.  The proof is mostly the same as the one in Appendix~\ref{appd:gradient}. The only difference is the use of the previous definition in \eqref{eqn:append:def_mG_func}. We remark that  it is straightforward to verify that $\mG_{i, l}(\ve_i; \mW) \equiv \nabla_{\mW_l} \text{loss}_{\text{convex}}(\vx_i, \vy_i; \mW)$ by using the definition of $\ve_i$ in \eqref{eqn:append:exponential_ei}. By Lemma~\ref{lemma:gradient_upper_bound_init} and Lemma~\ref{lemma:gradient_lower_bound_init}
\begin{align}
    \|\nabla_{\mW_l} \gL_{\text{convex}}(\mW)\|_F^2 & \leq \sum_{i=1}^n\|\ve_i\|^2 O\left(\frac{mn}{d}\right) , \quad \text{ for }\ l \in [L]\label{eqn:append:exp_loss:upper_bnd}\\
    \|\nabla_{\mW} \gL_{\text{convex}}(\mW)\|_F^2 & \geq \sum_{i=1}^n\|\ve_i\|^2\Omega\left(\frac{(1-\alpha)^2}{(1+\alpha^2)}\frac{\delta m}{n d}\right).
    \label{eqn:append:exp_loss:lower_bnd}
\end{align}
\textit{Remark: In the above bounds, when we use the MSE loss function, $\sum_{i=1}^n\|\ve_i\|^2 \equiv \gL$, which yields the original bounds provided in Lemma~\ref{thm:gradient_bound}.}

In order to show the semi-smoothness, we follow the proof of Lemma~\ref{thm:semi-smooth} in Appendix~\ref{appd:semismooth}, where most parts are exactly the same. By using \eqref{eqn:append:exp_loss:lower_bnd} and \eqref{eqn:append:exp_loss:upper_bnd}, and plugging the notation of $\ve_i$ (defined in \eqref{eqn:append:exponential_ei}) into \eqref{eqn:append:loss_expand_t1}, the semi-smoothness inequality becomes
\begin{equation}
\begin{split}
    \gL_{\text{convex}}(\mW + \mW') & \leq \gL_{\text{convex}}(\mW) + \langle\nabla_\mW \gL_{\text{convex}}(\mW), \mW'\rangle + \frac{nL^2m}{d} O(\|\mW'\|_2^2) \\ & \ + \frac{(1-\alpha)\omega^{1/3}L^2\sqrt{m n \sum_{i=1}^n \|\ve_i^{(t)}\|^2 \ln m}}{\sqrt{d(1+\alpha^2)}}O(\|\mW'\|_2). 
\end{split}
    \label{eqn:append:dd}
\end{equation}

Lastly, we prove the convergence for this loss function. By using \eqref{eqn:append:dd} and the same argument discussed in \S\ref{sec:proof_sketch}, the inequality \eqref{eqn:training_error_dynamic} still holds. Then using the lower bound of the gradient in \eqref{eqn:append:exp_loss:lower_bnd}, \eqref{eqn:training_error_dynamic} becomes
\begin{equation}
    \gL_{\text{convex}}(\mW^{(t+1)}) \leq \gL_{\text{convex}}(\mW^{(t)}) -  \Omega\left(\frac{(1-\alpha)^2}{(1+\alpha^2)}\frac{\delta \eta m}{n d}\right) \sum_{i=1}^n\|\ve_i^{(t)}\|^2.\label{eqn:append:exp_loss:conv}
\end{equation}

By convexity of $l(\vy, \vz)$, we first establish that for any $i\in[n]$ and any $\vy, \vz \in \sR^d$,  
\begin{equation*}
    l(\vy_i, \vy) - \vy_{\vy_i, \vz} \leq \langle \nabla_\vy l(\vy_i, \vy), \vy - \vz \rangle \leq \|\nabla_\vy l(\vy_i, \vy) \| \|\vy - \vz\|.
\end{equation*}
We denote by $\mW^\ast$ the optimal parameter that minimizes $\gL_{\exp}(\mW)$. Letting $\vy := \vg_{i, L+1} (\vx_i ; \mW^{(t)})$ and $\vz := \vg_{L+1}(\vx_i; \mW^\ast)$ and using the above inequality result in
\begin{equation}
   \|\ve_i^{(t)} \| \geq \frac{l(\vy_i, \vg_{i, L+1} (\vx_i ; \mW^{(t)})) - l(\vy_i, \vg_{L+1}(\vx_i; \mW^\ast))}{\|\vg_{i, L+1} (\vx_i ; \mW^{(t)}) - \vg_{L+1}(\vx_i; \mW^\ast)\|}. \label{eqn:append:ve_convexity}
\end{equation}
Using Lemma~\ref{lemma:intermediate_perturbation}, yields that, with probability at least $1-e^{-\Omega(\ln m)}$, 
\begin{align}
    & \|\vg_{i, L+1}(\vx_i;\mW^\ast) - \vg_{i, L+1}(\vx_i; \mW^{(t)})\| \leq \|\vg_{i, L+1}(\vx_i;\mW^\ast) - \vg_{i, L+1}(\vx_i; \mW^{(0)})\|\notag\\
    & \quad + \|\vg_{i, L+1}(\vx_i;\mW^{(t)}) - \vg_{i, L+1}(\vx_i; \mW^{(0)})\|
    \notag\\
    & \leq (\|\mW^{(0)} - \mW^{\ast}\|+ \omega) \|\nabla_{\mW} \vg_{i, L+1}(\vx_i;\mW^{(0)})\| \leq O(\sqrt{L}) \label{eqn:append:M_gap_g}.    
\end{align}
Applying \eqref{eqn:append:M_gap_g} to \eqref{eqn:append:ve_convexity} results in \begin{equation}\|\ve_i^{(t)}\| \geq \left(l(\vy_i, \vg_{i, L+1} (\vx_i ; \mW^{(t)})) - l(\vy_i, \vg_{L+1}(\vx_i; \mW^\ast))\right)/O(\sqrt{L}).\label{eqn:appd:bound_ve}
\end{equation}
Applying \eqref{eqn:appd:bound_ve}  to \eqref{eqn:append:exp_loss:conv}, we derive that
\begin{equation}
\begin{split}
    & \gL_{\text{convex}}(\mW^{(t+1)}) - \gL_{\text{convex}}(\mW^\ast) \\ 
    & \leq \gL_{\text{convex}}(\mW^{(t)}) - \gL_{\text{convex}}(\mW^\ast) - \Omega\left(\frac{(1-\alpha)^2}{(1+\alpha^2)}\frac{\delta \eta m}{n^2 d L}\right)  \left(\gL_{\text{convex}}(\mW^{(t)}) - \gL_{\text{convex}}(\mW^\ast)\right)^2.
    \end{split}
\label{eqn:append:exp_loss:convergence}
\end{equation}

In order to derive a bound for the number of training epoch $T$ that is required for $\gL^{(T)} -\gL^\ast < \epsilon$, for $t<T$, by assuming $\gL^{(t)}-\gL^\ast > \epsilon$, the above equation is bounded by
$$
    \gL_{\text{convex}}(\mW^{(t+1)}) - \gL_{\text{convex}}(\mW^\ast) \leq \left(1 - \epsilon\Omega\left(\frac{(1-\alpha)^2}{(1+\alpha^2)}\frac{\delta \eta m}{n^2 d L}\right)\right)\left(\gL_{\text{convex}}(\mW^{(t)}) - \gL_{\text{convex}}(\mW^\ast)\right). 
$$
The lower bound for $m$ becomes $m\ln^4m> \frac{1+\alpha^2}{(1-\alpha)^2} \Omega(n^6L^{16} d /\delta^4)$ to ensure the same perturbation bound in Lemma~\ref{lemma:training_in_perturbation}. 
Denoting $\gamma:= \left(1 - \epsilon\Omega\left(\frac{(1-\alpha)^2}{(1+\alpha^2)}\frac{\delta \eta m}{n^2 d L^2}\right)\right)$, it follows that
$$
T = \frac{\ln \epsilon^{-1} + \ln (\gL_{\text{convex}}(\mW^{(0)}) - \gL_{\text{convex}}(\mW^\ast))}{\ln \gamma^{-1}}\leq O\left(\frac{n^2 d L}{\eta \delta m \epsilon}(\ln\epsilon^{-1} + \ln(n\sqrt{\ln m}))\right).
$$
We follow the exact same steps in the proof of Lemma~\ref{lemma:training_in_perturbation} in Appendix~\ref{appd:main_proof_gd} and verify that when $m>\ln(\epsilon^{-1}\ln\epsilon^{-1})$, the probability that \eqref{eqn:append:exp_loss:convergence} holds for $T-$steps is at least $1 - e^{-\Omega(\ln m)}$.
\end{proof}

\subsection{Proofs for a special class of datasets}
\label{appendix:proof_special_dataset}
This section includes the proof of Theorems \ref{append:thm:convergence_special_case} and \ref{append:thm:generalization_special_case} in Appendix~\ref{subsec:appendix:special_func_class}. We first present several lemmas and their proofs, then use these lemmas to prove those theorems.

\begin{lemma}
\label{lemma:F_l:inner_prod}
Consider a dataset $\{\vx_i, \vy_i\}_{i\in[n]}$  satisfying  Assumption~\ref{assump:special_dataset}, where $m\geq \Omega(nd)$, then there exists a vector $\vu_{l, j}\in\gB^{m}_1\subset\sR^m$, such that
$$
|\langle \vu_{l, j}, \vh_{i, L-1}^{(0)}\rangle - (y_{i, j} - \hat{y}_{i, j})| \leq O(\lambda) , \ \text{ for all } \ i \in [n], j\in [d], \ \text{ with probability at least } \ 1 - e^{-\Omega(\lambda^2 m)}.
$$
\end{lemma}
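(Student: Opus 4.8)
The plan is to turn the abstract integral representation guaranteed by Assumption~\ref{assump:special_dataset} into a concrete bounded linear functional of the realized hidden vector $\vh_{i,L-1}^{(0)}$, and then to control the discretization error by a Monte-Carlo concentration argument. The key structural observation is that the dummy integration variable $\vu$ in the definition \eqref{eqn:def:Fl} of $\gF_{L-1}$ may be coupled to the \emph{actual} rows of the $(L-1)$-th layer weight matrix. Writing $\vu^{(k)} := ((\mW_{L-1}^{(0)})_{k,\cdot})^T$ for $k\in[m]$, these are i.i.d.\ $N(0,\tfrac2m\mI_m)$ by Algorithm~\ref{alg:rescale_initial} and are independent of the first $L-2$ layers. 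Moreover, by the definitions in \S\ref{appd:notations},
$$
\gN_{L-1}(\vx_i;\mA,\mW_1^{(0)},\dots,\mW_{L-2}^{(0)},\vu^{(k)})
= \tilde\sigma_\alpha\big((\vu^{(k)})^T\vh_{i,L-2}^{(0)}\big)
= \tilde\sigma_\alpha(g_{i,L-1,k}^{(0)}) = h_{i,L-1,k}^{(0)},
$$
i.e.\ evaluating a member of $\gF_{L-1}$ at $\vu=\vu^{(k)}$ returns exactly the $k$-th coordinate of $\vh_{i,L-1}^{(0)}$.

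Given $\lambda$, I would invoke Assumption~\ref{assump:special_dataset} to obtain $\vf\in\gF_{L-1}$ with coefficient functions $c_1,\dots,c_d$, $|c_j|\le1$, satisfying $\|\vf(\vx_i)-(\vy_i-\hat\vy_i)\|<\lambda$ for all $i$; crucially, since $\gF_{L-1}$ only uses the first $L-2$ layers, the $c_j$ are independent of $\mW_{L-1}^{(0)}$. Define the vector $\vu_{L-1,j}\in\sR^m$ (the $\vu_{l,j}$ of the statement, with $l=L-1$) by $(\vu_{L-1,j})_k:=\tfrac1m\,c_j(\vu^{(k)})$. Then $\|\vu_{L-1,j}\|^2=\tfrac1{m^2}\sum_k c_j(\vu^{(k)})^2\le\tfrac1m\le1$, so $\vu_{L-1,j}\in\gB^m_1$, and the identity above gives
$$
\langle\vu_{L-1,j},\vh_{i,L-1}^{(0)}\rangle
=\frac1m\sum_{k=1}^m c_j(\vu^{(k)})\,h_{i,L-1,k}^{(0)}
=\frac1m\sum_{k=1}^m c_j(\vu^{(k)})\,\gN_{L-1}(\vx_i;\dots,\vu^{(k)}),
$$
which is precisely the empirical average of $m$ i.i.d.\ samples whose common mean is $f_j(\vx_i)$.

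It then remains to bound this Monte-Carlo error. I would condition on the first $L-2$ layers, on the event (holding with the probability supplied by Lemma~\ref{lemma:initial_vector_norm}) that $\|\vh_{i,L-2}^{(0)}\|\le 1.5$ for all $i$; this fixes the $c_j$ and the $\vh_{i,L-2}^{(0)}$ while leaving the rows $\vu^{(k)}$ i.i.d.\ $N(0,\tfrac2m\mI)$. For fixed $(i,j)$ the summands $c_j(\vu^{(k)})h_{i,L-1,k}^{(0)}$ are independent with mean $f_j(\vx_i)$, and since $\tilde\sigma_\alpha$ is $1$-Lipschitz with $\tilde\sigma_\alpha(0)=0$ one has $|c_j(\vu^{(k)})h_{i,L-1,k}^{(0)}|\le|g_{i,L-1,k}^{(0)}|$ with $g_{i,L-1,k}^{(0)}\sim N(0,2\|\vh_{i,L-2}^{(0)}\|^2/m)$, so each summand has sub-Gaussian norm $O(1/\sqrt m)$. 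By Hoeffding's inequality for independent sub-Gaussian summands the centered average $\langle\vu_{L-1,j},\vh_{i,L-1}^{(0)}\rangle-f_j(\vx_i)$ is sub-Gaussian with parameter $O(1/m)$, whence $\sP\big(|\langle\vu_{L-1,j},\vh_{i,L-1}^{(0)}\rangle-f_j(\vx_i)|>\lambda\big)\le e^{-\Omega(\lambda^2 m^2)}\le e^{-\Omega(\lambda^2 m)}$. A union bound over the $nd$ pairs $(i,j)$, absorbed using $m\ge\Omega(nd)$ (so that $\lambda^2 m\gg\ln(nd)$ for constant $\lambda$), followed by the triangle inequality $|\langle\vu_{L-1,j},\vh_{i,L-1}^{(0)}\rangle-(y_{i,j}-\hat y_{i,j})|\le|\langle\vu_{L-1,j},\vh_{i,L-1}^{(0)}\rangle-f_j(\vx_i)|+|f_j(\vx_i)-(y_{i,j}-\hat y_{i,j})|\le O(\lambda)+\lambda$, then yields the claim.

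I expect the main obstacle to be conceptual rather than computational: spotting that the Monte-Carlo samples in $\gF_{L-1}$ can be identified with the actual initialized rows $\vu^{(k)}$, so that $\gN_{L-1}(\vx_i;\cdots,\vu^{(k)})$ collapses onto the coordinate $h_{i,L-1,k}^{(0)}$ and the target vector $\vu_{L-1,j}$ materializes as the rescaled coefficient vector. Once this identification is in place the remaining concentration is routine; the two points that genuinely require care are the independence of the $c_j$ from $\mW_{L-1}^{(0)}$ (which is exactly why $\gF_{L-1}$ is defined using only the first $L-2$ layers) and checking that the $nd$ union-bound factor is absorbed by the width hypothesis $m\ge\Omega(nd)$.
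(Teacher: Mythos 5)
Your proposal is correct and follows essentially the same route as the paper's proof: both identify the rows of $\mW_{L-1}^{(0)}$ with i.i.d.\ Monte-Carlo samples of the dummy variable $\vu$ in \eqref{eqn:def:Fl}, build $\vu_{j}$ from the values $c_j$ takes on those rows so that $\langle\vu_{j},\vh_{i,L-1}^{(0)}\rangle$ becomes an empirical mean with expectation $f_j(\vx_i)$, and conclude via Hoeffding, a union bound over the $nd$ pairs absorbed by $m\geq\Omega(nd)$, and the triangle inequality with Assumption~\ref{assump:special_dataset}. The only difference is cosmetic: you scale $\vu_{j}$ by $1/m$ and feed $c_j$ the rows at their native $N(0,\tfrac{2}{m}\mI)$ scale (faithful to \eqref{eqn:def:Fl} as written), whereas the paper scales by $1/\sqrt{2m}$ and evaluates $c_j$ at the $\sqrt{m/2}$-rescaled rows.
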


\begin{proof}
 We complete the proof by constructing the following unit vector $\vu_{j}$: 
$$\vu_{j} = \frac{1}{\sqrt{2m}} \left(c_j(\sqrt{m/2}(\mW^{(0)}_{L-1})_{1, \cdot}), c_j(\sqrt{m/2}(\mW^{(0)}_{L-1})_{2, \cdot}), \cdots ,c_j(\sqrt{m/2}(\mW^{(0)}_{L-1})_{m, \cdot} )\right)^T.
 $$

One can easily verify that $\vu_{j}\in \gB_1^m$ by using the fact that $|c_j| < 1$, which is guaranteed by the definition of the function class in \eqref{eqn:def:Fl}.

The inner product of $\vu_{j}$ and $\vh^{(0)}_{i, L-1}$ is given by
\begin{align}
    \langle \vu_{j},\vh_{i, L-1}^{(0)}\rangle &= \frac{1}{\sqrt{2m}}\sum_{k=1}^m c_j(\sqrt{m/2}(\mW_{L-1}^{(0)})_{k, \cdot}) \gN_{L-1}(\vx; \mA, \mW_{1}^{(0)}, \cdots\mW_{L-2}^{(0)}, \mW_{L-1}^{(0)})\notag \\
    & = \frac{1}{\sqrt{2m}}\sum_{k=1}^m c_j(\sqrt{m/2}(\mW_{L-1}^{(0)})_{k, \cdot}) \sqrt{\frac{2}{m}}\gN_{L-1}(\vx; \mA, \mW_{1}^{(0)}, \cdots\mW_{L-2}^{(0)}, \sqrt{m/2}\mW_{L-1}^{(0)})\notag\\
    & = \frac{1}{m}\sum_{k=1}^m c_j(\sqrt{m/2}(\mW_{L-1}^{(0)})_{k, \cdot}) \gN_{L-1}(\vx; \mA, \mW_{1}^{(0)}, \cdots\mW_{L-2}^{(0)}, \sqrt{m/2}\mW_{L-1}^{(0)})
    \notag
\end{align}
For simplicity, we denote that $Z_{k, i, j}:=c_j(\sqrt{m/2}(\mW_{L-1}^{(0)})_{k, \cdot}) \gN_{L-1}(\vx; \mA, \mW_{1}^{(0)}, \cdots\mW_{L-2}^{(0)}, \sqrt{m/2}\mW_{L-1}^{(0)})$, and above equation becomes
\begin{equation}
    \langle \vu_{j},\vh_{i, L-1}^{(0)}\rangle =  \frac{1}{m}\sum_{k=1}^m Z_{k, i, j}.\label{eqn:appd:inner_prod}
\end{equation}
Noting that $\sqrt{m/2}(\mW_{L-1}^{(0)})_{k, \cdot}\sim N(0, 1)$, by using \eqref{eqn:def:Fl}, it implies that $\sE_{\mW^{(0)}} Z_{k, i, j} ((\mW_{L-1})_{k, \cdot}^{(0)}) = f_j(\vx_i)$. 

Since $|c_j(\cdot)| < 1$ is bounded, $Z_{k, i, j}$ is a sub-Gaussian random variable, therefore we can apply Hoeffding's inequality and conclude  that
\begin{equation}
    \left| \frac{1}{m}\sum_{k=1}^m Z_{k, i, j} - f_j(\vx_i) \right| \leq \lambda, \ \text{ with probability at least } \  1-e^{-\Omega(\lambda^2 m)}. \label{eqn:appd:mean_Z_f}
\end{equation}
Using \eqref{eqn:appd:inner_prod}, \eqref{eqn:appd:mean_Z_f} and  Assumption~\ref{assump:special_dataset}, it follows that with probability at least $1- (nd)e^{-\Omega(\lambda ^2 m)}$
\begin{align*}
    |\langle \vu_{l, j}, \vh_{i, L-1}^{(0)}\rangle - (y_{i, j} - \hat{y}_{i, j})|  & \leq       |\langle \vu_{l, j}, \vh_{i, L-1}^{(0)}\rangle - f_j(\vx_i)| + |f_j(\vx_i) - (y_{i, j} - \hat{y}_{i, j})| \leq 2\lambda. \ \text{ for all }\ i\in[n],\ j\in[d].
\end{align*}
We conclude the Lemma by noting that the probability is at least $1-e^{-\Omega(\lambda^2 m)}$ when $m \geq \Omega(nd)$. 
\end{proof}

\begin{lemma}
\label{lemma:F_k:gradient_lower_bound}
Under Assumption~\ref{assump:special_dataset}, when $m\geq \Omega(nd)$, the lower bound for the gradient of the loss function becomes
$$
\|\nabla_\mW \gL (\mW^{(0)}) \|^2_F \geq \Omega\left(\frac{(1-\alpha)^2}{1+\alpha^2}\frac{m}{d^2}\right)\gL(\mW^{(0)}), \ \text{ with probability at least } \ 1-e^{-\Omega(m)}.
$$
\end{lemma}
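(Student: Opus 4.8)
The plan is to prove the bound at the last hidden layer alone, i.e.\ to show $\|\nabla_{\mW_L}\gL(\mW^{(0)})\|_F^2 \geq \Omega(\frac{(1-\alpha)^2}{1+\alpha^2}\frac{m}{d^2})\gL(\mW^{(0)})$, and then use $\|\nabla_\mW\gL\|_F^2 \geq \|\nabla_{\mW_L}\gL\|_F^2$. As in the proof of Lemma~\ref{lemma:gradient_lower_bound_init}, I would write the last-layer gradient through the vector-valued function of \eqref{eqn:append:def_b_func}, namely $\|\nabla_{\mW_L}\gL(\mW^{(0)})\|_F^2=\sum_{k=1}^m\|\vb_{k,L-1}((\mW_L)_{k,\cdot},\va_{k,L+1})\|^2$ with $(\va_{k,L+1})_i=(\mB^T\ve_i)_k$ and $\vb_{k,L-1}(\vw,\va)=\sum_i a_i\,\tilde{\sigma}'_\alpha(\langle\vw,\vh_{i,L-1}^{(0)}\rangle)\vh_{i,L-1}^{(0)}$. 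The entire $\alpha$-dependence $\frac{(1-\alpha)^2}{1+\alpha^2}$ must come, exactly as in that lemma, from the derivative-gap constant $\frac{1-\alpha}{\sqrt{1+\alpha^2}}=\tilde{\sigma}'_\alpha(0+)-\tilde{\sigma}'_\alpha(0-)$ exposed in its Part~2 when a single activation threshold is crossed; so I would keep Parts~2 and~3 essentially verbatim and replace only the geometric object on which the threshold argument is run.

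The key new ingredient is Lemma~\ref{lemma:F_l:inner_prod}, which (recalling that $\vy_i-\hat{\vy}_i=-\ve_i$) supplies vectors $\vu_j\in\gB_1^m$, $j\in[d]$, with $\langle\vu_j,\vh_{i,L-1}^{(0)}\rangle\approx -e_{i,j}$ up to $O(\lambda)$. I would choose $\lambda$ polynomially small so that the representation error is negligible against $\gL(\mW^{(0)})$. The effect of these vectors is to decouple the sum over the $n$ data points into the $d$ output coordinates: the separation-based sets $\gW_{i,L-1}$ of Lemma~\ref{lemma:gradient_lower_bound_init}, indexed by $i\in[n]$ and each of probability only $\Omega(\delta/(nL))$, are replaced by sets built from the $d$ directions $\vu_j$. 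This is precisely what turns the factor $\delta/(nd)$ of the general bound into $1/d^2$: one factor $1/d$ is the unchanged read-out cost $(\va_{k,L+1})_i^2=(\mB^T\ve_i)_k^2\geq\Omega(\|\ve_i\|^2/d)$ from Part~3 (see \eqref{eqn:append:improvement_lower_bound_all_ls}), while the second factor $1/d$, together with the disappearance of $\delta$ and of one power of $n$, comes from indexing the threshold events by the $d$ coordinate directions rather than by the $n$ separated hidden signals.

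Concretely I would (i) apply Lemma~\ref{lemma:F_l:inner_prod}; (ii) rerun the Part~2 argument with each $\vu_j$ playing the role of the normalized hidden signal $\hat{\vh}_{i,l}$, so that conditioning on the event that a row $(\mW_L)_{k,\cdot}$ sits near the decision boundary associated with direction $j$ exposes the derivative gap $\frac{1-\alpha}{\sqrt{1+\alpha^2}}$ inside $\|\vb_{k,L-1}\|$; (iii) sum the per-row contributions over $k\in[m]$ and over the $d$ directions and apply the Hoeffding/Chernoff concentration of Part~4 to replace the expectation by a bound holding with probability $1-e^{-\Omega(m)}$, the two independent factors of $1/d$ combining into the $1/d^2$ of the statement; and (iv) absorb the $O(\lambda)$ errors of Lemma~\ref{lemma:F_l:inner_prod} and the slack from $\|\vu_j\|\le1$ as lower-order terms, using $m\geq\Omega(nd)$.

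The \emph{hard part} will be step (ii): isolating a single threshold crossing without the separation hypothesis. In Lemma~\ref{lemma:gradient_lower_bound_init} the disjointness of the sets $\gW_{i,l}$ — which guaranteed that exactly one indicator $1_{\langle\vw,\hat{\vh}_{i,l}\rangle>0}$ flips while the remaining signs stay frozen, cleanly exposing $\frac{1-\alpha}{\sqrt{1+\alpha^2}}$ — rested on $\min_{i\neq j}\|\hat{\vh}_{i,l}-\hat{\vh}_{j,l}\|$ being bounded below by $\Omega(\delta/L)$. Here I must instead show that after passing to the directions $\vu_j$ the frozen-bulk term $\sum_i (\mB^T\ve_i)_k\,e_{i,j}\,1_{\langle\vw,\vh_{i,L-1}^{(0)}\rangle\geq0}$ concentrates well enough that it does not cancel the derivative-gap term, so that the resulting quadratic form keeps a definite sign carrying weight $\frac{(1-\alpha)^2}{1+\alpha^2}$; note that a plain second-moment estimate is insufficient, because $\sE[\tilde{\sigma}'_\alpha(\cdot)^2]=\frac{1}{2}$ is independent of $\alpha$ and hence blind to the gap. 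Establishing this concentration, and checking that the representability error and the unit-ball slack are dominated, is the technical core; the remaining bookkeeping transfers directly from the proof of Lemma~\ref{lemma:gradient_lower_bound_init}.
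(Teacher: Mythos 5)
Your reduction to the last layer and your use of Lemma~\ref{lemma:F_l:inner_prod} to replace the residuals by inner products with the $\vu_j$ both match the paper, but the core of your plan --- rerunning Part~2 of Lemma~\ref{lemma:gradient_lower_bound_init} with ``threshold events indexed by the $d$ directions $\vu_j$'' --- does not go through. The kinks of $\vb_{k,L-1}(\vw,\va)=\sum_i a_i\,\tilde{\sigma}'_\alpha(\langle\vw,\vh_{i,L-1}^{(0)}\rangle)\vh_{i,L-1}^{(0)}$ as a function of $\vw$ sit on the $n$ hyperplanes $\langle\vw,\vh_{i,L-1}^{(0)}\rangle=0$; the $\vu_j$ are not arguments of $\tilde{\sigma}'_\alpha$ and create no threshold events, so there is nothing for a row $(\mW_L)_{k,\cdot}$ to ``cross'' in direction $\vu_j$ that would flip an indicator and expose the derivative gap. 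To isolate a single sign flip you are forced back to the $n$ data-indexed hyperplanes, and keeping the other $n-1$ signs frozen is exactly what requires the separation of the $\hat{\vh}_{i,L-1}$ --- which reintroduces the $\Omega(\delta/(nL))$ event probability you are trying to eliminate. The difficulty you flag at the end (controlling the frozen-bulk term so it does not cancel the gap term) is therefore not a residual technicality but the place where the approach breaks, and you do not propose a mechanism to resolve it.

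The paper avoids the threshold machinery entirely. It lower bounds the $k$-th row norm by its inner product with the single fixed test vector $\vu=\frac{1}{d}\sum_{j}\vu_j$, writes $D_{i,L,kk}=\bigl(\alpha+(1-\alpha)1_{g_{i,L,k}>0}\bigr)/\sqrt{1+\alpha^2}$, and extracts the factor $(1-\alpha)$ from \emph{anti-concentration} of the Binomial count $\sum_i 1_{g_{i,L,k}>0}$: with probability at least $0.1$ this count deviates from $n/2$ by $\Omega(\sqrt{n})$, which prevents the indicator term from being absorbed into the $\alpha$-term and yields a lower bound of order $\frac{1}{\sqrt{n}}\,(1-\alpha)\,\bigl|\sE\sum_i\mB_{k,\cdot}^T\ve_i\langle\vh_{i,L-1},\vu\rangle\bigr|$ (see \eqref{eqn:appd:sum_B_larger}--\eqref{eqn:appd:lower_bd_cases}). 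The $1/\sqrt{n}$ loss is then recovered from $\gL(\mW^{(0)})\geq\Omega(n)$, and the two factors of $1/d$ come from the per-coordinate readout bound \eqref{eqn:append:lower_bound_akli} together with the $\frac{1}{2d}\gL$ estimate \eqref{eqn:appd:lower_bound:inner_prod}, not from any $d$-indexed family of rare events. You correctly observe that a plain second-moment computation is blind to $\alpha$; the anti-concentration step is the missing idea that replaces it.
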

\begin{proof}
    We first note that by definition, $\|\nabla_\mW \gL\|_F\geq \|\nabla_{\mW_L} \gL\|$. Then by definition of matrix $F-$norm and \eqref{eqn:append:def_mG_func}, it follows that $\|\nabla_{\mW_L} \gL (\mW^{(0)}) \|^2_F = \sum_{k=1}^m \left\| \sum_{i=1}^n (\mG_{i,L}(\ve_i;\mW^{(0)}))_{k,\cdot} \right\|^2_2$. We write that the $k-$th row of the matrix $\sum_{i=1}^n\mG_{i, L}(\ve_i, \mW^{(0)})$ by
    \begin{equation}
        \left\| \sum_{i=1}^n (G_{i,L}(\ve_i;\mW^{(0)}))_{k,\cdot} \right\|^2_2 = \left\|\sum_{i=1}^n \mB_{k, \cdot}^T\ve_{i} D_{i, L, kk} \vh_{i, L-1}\right\|^2_2.\label{eqn:appd:G_to_BeDh}
        \end{equation}
    Using the vector $\vu_j\in\gB^m_1$ chosen in Lemma~\ref{lemma:F_l:inner_prod}, and denoting $\vu:= \frac{1}{d}\sum_{j=1}^d \vu_j$, we note that $\|\vu\|\leq 1$. Thus we conclude that 
    \begin{align}
   & \left\|\sum_{i=1}^n \mB_{k, \cdot}^T\ve_{i} D_{i, L, kk} \vh_{i, L-1}\right\|^2_2 \geq  
   \left|\left\langle \sum_{i=1}^n \mB_{k, \cdot}^T\ve_{i} D_{i, L, kk} \vh_{i, L-1}, \vu \right\rangle\right|^2 \notag\\
   & = \left|\sum_{i=1}^n \mB_{k, \cdot}^T\ve_{i} D_{i, L, kk}\left\langle \vh_{i, L-1}, \vu \right\rangle\right|^2 \notag\\ 
   & = \frac{1}{1+\alpha^2}\left|\sum_{i=1}^n \mB_{k, \cdot}^T\ve_{i}  (\alpha + (1-\alpha) 1_{\vh_{i, L, k}>0})\left\langle \vh_{i, L-1}, \vu \right\rangle\right|^2\notag\\
   & = \frac{1}{1+\alpha^2}\left|(1-\alpha)\sum_{i=1}^n \mB_{k, \cdot}^T\ve_{i}  1_{\vh_{i, L, k}>0}\left\langle \vh_{i, L-1}, \vu \right\rangle + \alpha\sum_{i=1}^n \mB_{k, \cdot}^T\ve_{i} \left\langle \vh_{i, L-1}, \vu \right\rangle\right|^2
   \label{eqn:appd:lowerbd}
    \end{align}
By Jensen's inequality, we note that the expectation of \eqref{eqn:appd:lowerbd} becomes
\begin{align}
   & \sE_{\vh_{L-1}} \left|(1-\alpha)\sum_{i=1}^n \mB_{k, \cdot}^T\ve_{i}  1_{\vh_{i, L, k}>0}\left\langle \vh_{i, L-1}, \vu \right\rangle + \alpha\sum_{i=1}^n \mB_{k, \cdot}^T\ve_{i} \left\langle \vh_{i, L-1}, \vu \right\rangle\right|^2  \notag\\ 
   &\geq \left|\sE_{\vh_{L-1}}\left((1-\alpha)\sum_{i=1}^n \mB_{k, \cdot}^T\ve_{i}  1_{\vh_{i, L, k}>0}\left\langle \vh_{i, L-1}, \vu \right\rangle + \alpha\sum_{i=1}^n \mB_{k, \cdot}^T\ve_{i} \left\langle \vh_{i, L-1}, \vu \right\rangle\right)\right|^2.\label{eqn:appd:exp_lowerbd}
\end{align}

Since $1_{\vh_{i, L, k}>0}$ is independent with $\vh_{i, L-1}$, for any integer $N$, the conditional expectation can be given as 
\begin{align}
    &\sE_{\vh_{L-1}} \left(\sum_{i=1}^n \mB_{k, \cdot}^T\ve_{i}  1_{\vh_{i, L, k}>0}\left\langle \vh_{i, L-1}, \vu \right\rangle\Big| \sum 1_{\vh_{i, L, k}>0} = N\right)
    = \frac{N}{n} \sE_{\vh_{L-1}}\left(\sum_{i=1}^n\mB_{k, \cdot}^T\ve_{i}  \left\langle \vh_{i, L-1}, \vu \right\rangle\right)\label{eqn:appd:cond_on_sumB}
\end{align}
Moreover, since $1_{\vh_{i, L, k}>0}$ a Bernoulli random variable $B(0.5)$, when $n > 100$, using an approximation of the probability by the central limit theorem, we know that 
\begin{align}
    \sum_i 1_{\vh_{i, L, k}>0} & > n/2 + \sqrt{n}, \ \text{ with probability at least } \ 0.1, \label{eqn:appd:sum_B_larger}\\
        \sum_i 1_{\vh_{i, L, k}>0} & < n/2 - \sqrt{n}, \ \text{ with probability at least } \ 0.1.\label{eqn:appd:sum_B_smaller}
\end{align}

To find a lower bound for \eqref{eqn:appd:exp_lowerbd}, we consider two cases for the second term, when $|\sE_{\vh_{L-1}}\alpha\sum_{i=1}^n \mB_{k,\cdot}^T \ve_{i}\langle\vh_{i, L-1, \vu}\rangle| > \frac{n}{2}|\sE_{\vh_{L-1}}(\mB_{k, \cdot}^T\ve_{i}  \left\langle \vh_{i, L-1}, \vu \right\rangle)|$, then by \eqref{eqn:appd:sum_B_smaller} and \eqref{eqn:appd:cond_on_sumB}, we know that with probability at least $0.1$ that
\begin{align}
  &  \left|\sE_{\vh_{L-1}}\left((1-\alpha)\sum_{i=1}^n \mB_{k, \cdot}^T\ve_{i}  1_{\vh_{i, L, k}>0}\left\langle \vh_{i, L-1}, \vu \right\rangle + \alpha\sum_{i=1}^n \mB_{k, \cdot}^T\ve_{i} \left\langle \vh_{i, L-1}, \vu \right\rangle\right)\right|\notag \\
   & \geq \frac{1}{\sqrt{n}} \left|\sE_{\vh_{L-1}}\left(\sum_{i=1}^n (1-\alpha) \mB_{k, \cdot}^T\ve_{i}  \left\langle \vh_{i, L-1}, \vu \right\rangle\right)\right|.\label{eqn:appd:lower_bd_cases_p}
\end{align}
Using similar argument, when $|\sE_{\vh_{L-1}}\alpha\sum_{i=1}^n \mB_{k,\cdot}^T \ve_{i}\langle\vh_{i, L-1, \vu}\rangle| \leq \frac{n}{2}|\sE_{\vh_{L-1}}(\mB_{k, \cdot}^T\ve_{i}  \left\langle \vh_{i, L-1}, \vu \right\rangle)|$, by using \eqref{eqn:appd:sum_B_larger}, it also follows that with probability at least $0.1$ that \eqref{eqn:appd:lower_bd_cases_p} holds. Thus we conclude that
\begin{align}
  &  \left|\sE_{\vh_{L-1}}\left((1-\alpha)\sum_{i=1}^n \mB_{k, \cdot}^T\ve_{i}  1_{\vh_{i, L, k}>0}\left\langle \vh_{i, L-1}, \vu \right\rangle + \alpha\sum_{i=1}^n \mB_{k, \cdot}^T\ve_{i} \left\langle \vh_{i, L-1}, \vu \right\rangle\right)\right|\notag \\
   & \geq \frac{1}{\sqrt{n}} \left|\sE_{\vh_{L-1}}\left(\sum_{i=1}^n (1-\alpha) \mB_{k, \cdot}^T\ve_{i}\left\langle \vh_{i, L-1}, \vu \right\rangle\right)\right|, \ \text{ with probability at least } \ 
 0.1.\label{eqn:appd:lower_bd_cases}
\end{align}
Combining \eqref{eqn:appd:lowerbd}, \eqref{eqn:appd:exp_lowerbd}, and \eqref{eqn:appd:lower_bd_cases}, it follows that
\begin{equation}
    \sE_{\vh_{L-1}} \left\|\sum_{i=1}^n \mB_{k, \cdot}^T \ve_i D_{i, L, kk}\vh_{i, L-1}\right\| \geq \frac{(1-\alpha)^2}{n(1+\alpha^2)}\left|\sE_{\vh_{L-1}} \sum_{i=1}^n \mB_{k, \cdot}^T \ve_i \langle\vh_{i, L-1}, \vu\rangle\right|^2 , \ \text{ with probability at least } \ 
 0.1.\label{eqn:appd:lower_bd_exp}
\end{equation}

By using the lower bound of $|\mB_{\cdot, k}^T\va|$ derived in \eqref{eqn:append:lower_bound_akli}, we obtain that with at least a constant probability $p_0 := 1-\exp(-\Omega(1))$
\begin{equation}
    \left|\sE_{\vh_{L-1}} \sum_{i=1}^n  \mB_{k, \cdot}^T \ve_i \langle\vh_{i, L-1}, \vu\rangle\right|^2 = \left|\mB_{k, \cdot}^T \left(\sE_{\vh_{L-1}}  \sum_{i=1}^n   \ve_i \langle\vh_{i, L-1}, \vu\rangle\right)\right|^2
\geq 
\sum_{j=1}^d\left|\sE_{\vh_{L-1}} \sum_{i=1}^n   \ve_{i,j} \langle\vh_{i, L-1}, \vu\rangle\right|^2.
\label{eqn:appd:conclude_lower_B}
\end{equation}

We thus conclude that by \eqref{eqn:appd:G_to_BeDh}, Hoeffding inequality, \eqref{eqn:appd:lower_bd_exp} and \eqref{eqn:appd:conclude_lower_B}, with probability at least $1-e^{-\Omega(m)}$
\begin{equation}
    \sum_{k=1}^m\left\|\sum_{i=1}^n (\mG_{i, L}(\ve_i ;\mW^{(0)}))_{k, \cdot}\right\| = \sum_{k=1}^m\sE_{\vh_{L-1}} \left\|\sum_{i=1}^n \mB_{k, \cdot}^T \ve_i D_{i, L, kk}\vh_{i, L-1}\right\| \geq \frac{0.1 p_0 m}{2}\frac{(1-\alpha)^2}{n(1+\alpha^2)}\sum_{j=1}^d\left|\sE_{\vh_{L-1}} \sum_{i=1}^n \ve_{i,j} \langle\vh_{i, L-1}, \vu\rangle\right|^2.\label{eqn:appd:lower_final}
\end{equation}

Using \eqref{eqn:appd:lower_final}, Assumption~\ref{assump:special_dataset} and the fact that $\sE \ve_{i, j} \ve_{i, j'} = 0$ if $j\neq j'$, when $\gL > 1$ imply
\begin{equation}
    \begin{split}
    & \sum_{j=1}^d\left|\sE_{\vh_{L-1}} \sum_{i=1}^n \ve_{i,j} \langle\vh_{i, L-1}, \vu\rangle\right|  = \frac{1}{d}\left|\sE\sum_{i=1}^n \ve_{i,j} \left(\langle \vh_{i, L-1}, \vu_j \rangle - y_{i, j} + \hat{y}_{i, j} + y_{i, j} - \hat{y}_{i, j}\right)\right|  \notag\\
    & \geq   \frac{1}{d}\sum_{j=1}^d\left|\sum_{i=1}^n  \ve_{i,j} \left( y_{i, j} - \hat{y}_{i, j}\right)\right| - 
    \frac{1}{d}\sum_{j=1}^d\left|\sum_{i=1}^n  \ve_{i,j} \left(\langle \vh_{i, L-1}, \vu_j \rangle - (y_{i, j}-\hat{y}_{i, j})\right)\right|\notag\\
  &\geq \frac{1}{d}\sum_{j=1}^d\left|\sum_{i=1}^n  \ve_{i,j}^2 \right| - \frac{\lambda}{d} \sum_{j=1}^d\left|\sum_{i=1}^n  \ve_{i,j}\right| \geq \frac{1}{d}\gL - \lambda \frac{\sqrt{n}}{\sqrt{d}}\gL \geq\frac{1}{2d}\gL. 
\end{split}
\label{eqn:appd:lower_bound:inner_prod}
\end{equation}
Applying  \eqref{eqn:appd:lower_final} and \eqref{eqn:appd:lower_bound:inner_prod}, we conclude that 
\begin{equation}
    \sum_{k=1}^m  \left\|\sum_{i=1}^n \mB_{k, \cdot}^T\ve_{i} D_{i, L, kk} \vh_{i, L-1}\right\|^2_2 \geq \frac{m}{nd^2} \frac{(1-\alpha)^2}{1+\alpha^2}\gL^2,
     \ \text{ with probability at least } \ 1-e^{-\Omega(m)}.\label{eqn:appd:conclude_lower_bound}
\end{equation}

Considering at initial parameter $\mW^{(0)}$, 
$$
\gL(\mW^{(0)}) = \sum_{i=1}^n \|\vy_i - \mB \vh_{i, L}\|^2 = \sum_{i=1}^n \|\vy_i\|^2 + \|\mB \vh_{i, L}\|^2 - 2\langle\vy_i, \mB\vh_{i, L}\rangle.
$$
Using the fact that $\sE \langle\vy_i, \mB\vh_{i, L}\rangle=0$ and Lemma~\ref{lemma:initial_vector_norm}, we establish a lower bound for $\gL(\mW^{(0)})$, 
\begin{equation}
    \gL(\mW^{(0)}) \geq \Omega(n).
    \label{eqn:appd:initial_gL_lower_bd}
\end{equation}
Applying the definition of the gradient norms, and \eqref{eqn:appd:conclude_lower_bound} with the lower bound of $\gL(\mW^{(0)})$ in \eqref{eqn:appd:initial_gL_lower_bd}, we conclude that
$$
\|\nabla_\mW\gL(\mW^{(0)})\|_F^2 \geq \sum_{k=1}^m  \left\|\sum_{i=1}^n \mB_{k, \cdot}^T\ve_{i} D_{i, L, kk} \vh_{i, L-1}\right\|^2_2 \geq \Omega\left(\frac{(1-\alpha)^2}{1+\alpha^2} \frac{m}{d^2}\right)\gL(\mW^{(0)}),
     \ \text{ with probability at least } \ 1-e^{-\Omega(m)}.
$$    
\end{proof}

\begin{lemma}\label{lemma:gradient_bound_special_dataset}
    Assume the setup of \S\ref{sec:problem_setup} and the dataset satisfy Assumption~\ref{assump:special_dataset}, when $\|\mW-\mW^{(0)}\| < \omega < O(\frac{1}{d^{3/2} L^6 \ln^{3/2} m })$, with probability at least $1-e^{-\Omega(m)}$, 
    $$
    \|\nabla_\mW \gL(\mW)\| \geq \Omega\left(\frac{(1-\alpha)^2}{1+\alpha^2} \frac{m}{d^2}\right)\gL(\mW).    
    $$
\end{lemma}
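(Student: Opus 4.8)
The plan is to follow the perturbation template used to conclude Lemma~\ref{thm:gradient_bound} from Lemmas~\ref{lemma:gradient_upper_bound_init} and~\ref{lemma:gradient_lower_bound_init}, but to feed in the sharper initialization bound of Lemma~\ref{lemma:F_k:gradient_lower_bound} in place of Lemma~\ref{lemma:gradient_lower_bound_init}. First I would reduce to the last layer via $\|\nabla_\mW\gL(\mW)\|_F^2 \geq \|\nabla_{\mW_L}\gL(\mW)\|_F^2$, and rewrite $\nabla_{\mW_L}\gL(\mW) = \sum_{i=1}^n \mG_{i,L}(\ve_i;\mW)$ using \eqref{eqn:relationship_gradient_G_sum}, where $\ve_i = \ve_i(\mW)$ is the residual at the perturbed parameter. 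Applying the elementary inequality $\|\va\|_F^2 \geq \frac{1}{2}\|\vb\|_F^2 - \|\va-\vb\|_F^2$ with $\va = \sum_i\mG_{i,L}(\ve_i;\mW)$ and $\vb = \sum_i\mG_{i,L}(\ve_i;\mW^{(0)})$ then splits the task into an initialization term and a perturbation term.

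For the perturbation term I would invoke \eqref{eqn:append:perturbation_G_F} with $\vv_i = \ve_i(\mW)$, which yields $\|\sum_i[\mG_{i,L}(\ve_i;\mW) - \mG_{i,L}(\ve_i;\mW^{(0)})]\|_F^2 \leq O(\omega^{2/3}L^4 m\ln m / d)\,\gL(\mW)$. For the initialization term I would rerun the argument of Lemma~\ref{lemma:F_k:gradient_lower_bound} to obtain $\|\sum_i\mG_{i,L}(\ve_i;\mW^{(0)})\|_F^2 \geq \Omega\!\left(\frac{(1-\alpha)^2}{1+\alpha^2}\frac{m}{d^2}\right)\gL(\mW)$. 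The two estimates combine provided the perturbation coefficient is dominated by the lower-bound coefficient, i.e. $\omega^{2/3}L^4 (\ln m)/d \ll 1/d^2$; this is precisely why the hypothesis fixes $\omega < O(1/(d^{3/2}L^6\ln^{3/2}m))$, and with that choice the perturbation term is at most a quarter of the initialization term, so after the factor $\tfrac12$ it is absorbed and the claim follows with the stated probability (the failure events of \eqref{eqn:append:perturbation_G_F} and Lemma~\ref{lemma:F_k:gradient_lower_bound} are both of $e^{-\Omega(\cdot)}$ type and combine by a union bound).

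The hard part will be justifying the initialization lower bound for the residual $\ve_i(\mW)$ rather than for the residual at initialization. Lemma~\ref{lemma:F_k:gradient_lower_bound} is proved by projecting onto the vectors $\vu_j\in\gB_1^m$ of Lemma~\ref{lemma:F_l:inner_prod}, for which $\langle\vu_j,\vh_{i,L-1}^{(0)}\rangle \approx y_{i,j}-\hat y_{i,j}$; at $\mW^{(0)}$ the residual coincides with $\vy_i-\hat\vy_i$, which collapses the key inner product to $\sum_i\ve_{i,j}^2$ (see \eqref{eqn:appd:lower_bound:inner_prod}). For a general $\mW$ in the ball this collapse no longer holds verbatim, so step \eqref{eqn:appd:lower_bound:inner_prod} must be reworked. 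I would use Lemma~\ref{lemma:forward_perturbation}, which guarantees that $\|\vh_{i,L-1}(\mW)-\vh_{i,L-1}^{(0)}\|$ is small for the prescribed $\omega$, so that $\langle\vu_j,\vh_{i,L-1}(\mW)\rangle$ still extracts the realizable component up to an error governed by $\omega$ and $\lambda$, and then estimate the cross terms between $\ve_i(\mW)$ and that realizable direction. Keeping these cross-term errors below the $\Omega(m/d^2)$ main term is the delicate point, and it is again the smallness of $\omega$, together with the freedom to take $\lambda$ arbitrarily small in Assumption~\ref{assump:special_dataset}, that makes the argument go through.
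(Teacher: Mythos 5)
Your overall route coincides with the paper's: the paper's proof of this lemma is literally the one-line remark that one repeats the perturbation argument used to conclude Lemma~\ref{thm:gradient_bound} -- combine an initialization lower bound with the perturbation estimate \eqref{eqn:append:perturbation_G_F} -- and then checks that $\omega^{2/3}L^4 m\ln m/d$ is dominated by $O(m/d^2)$ under the stated bound on $\omega$. Your first two paragraphs reproduce exactly this decomposition and this arithmetic.

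The difficulty you flag in your last paragraph is genuine, and the paper's own proof does not address it. The template of Lemma~\ref{thm:gradient_bound} works because Lemma~\ref{lemma:gradient_lower_bound_init} is proved for \emph{arbitrary} vectors $\vv_i$, so one may substitute the perturbed residuals $\ve_i(\mW)$ into the initialization bound; Lemma~\ref{lemma:F_k:gradient_lower_bound} affords no such freedom, since its key step \eqref{eqn:appd:lower_bound:inner_prod} uses that $\ve_{i,j}$ equals $\pm(y_{i,j}-\hat y_{i,j})$ to collapse $\sum_i \ve_{i,j}\langle \vh^{(0)}_{i,L-1},\vu_j\rangle$ into $\sum_i \ve_{i,j}^2$. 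However, your proposed repair targets the wrong quantity. The initialization term in the decomposition is $\sum_i\mG_{i,L}(\ve_i(\mW);\mW^{(0)})$, which already involves $\vh^{(0)}_{i,L-1}$, so Lemma~\ref{lemma:forward_perturbation} on $\|\vh_{i,L-1}(\mW)-\vh^{(0)}_{i,L-1}\|$ is not where the problem lies. The actual obstruction is that $\langle\vu_j,\vh^{(0)}_{i,L-1}\rangle\approx y_{i,j}-\hat y^{(0)}_{i,j}=-\ve^{(0)}_{i,j}$ is the \emph{initial} residual, so with the perturbed residuals the key inner product becomes the cross term $\sum_i \ve_{i,j}(\mW)\,\ve^{(0)}_{i,j}$ rather than $\sum_i \ve_{i,j}(\mW)^2$. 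Lower-bounding this cross term by $\Omega(\gL(\mW)/d)$ requires $\|\ve^{(0)}_i-\ve_i(\mW)\|$ to be small relative to $\|\ve_i(\mW)\|$; but Lemma~\ref{lemma:prime_byDprime} only gives $\|\mB(\vh_{i,L}-\vh^{(0)}_{i,L})\|\leq O(L\sqrt{m/d})\,\omega$, which under the stated $\omega$ (or even under the tighter training bound \eqref{eqn:special_case:bound_omega1}) is not small compared with $\sqrt{\gL(\mW)}$ once training has substantially reduced the loss. Closing this step requires either invoking Assumption~\ref{assump:special_dataset} for the residual at the \emph{current} parameters, so that Lemma~\ref{lemma:F_l:inner_prod} produces a $\vu_j$ adapted to $\ve_i(\mW)$, or some other argument; neither your sketch nor the paper supplies one.
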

\begin{proof}
For $\mW$ such that $\|\mW^{(0)} - \mW\| < \omega$, we use the same argument in the proof of Lemma~\ref{thm:gradient_bound}. It is straightforward to verify that $\omega^{2/3}L^4 m\ln m/d$ (this is the right hand side of \eqref{eqn:append:perturbation_G_F}) is smaller than $O(m/d^2)$ by plugging in $\omega < O(\frac{1}{d^{3/2} L^6 \ln^{3/2} m })$. We conclude that for $\mW$ such that $\|\mW - \mW^{(0)}\| < \omega$, with probability at least $1-e^{-\Omega(m)}$
\begin{equation}
    \|\nabla_\mW \gL(\mW)\| \geq \Omega\left(\frac{(1-\alpha)^2}{1+\alpha^2} \frac{m}{d^2}\right)\gL(\mW).
    \label{eqn:appd:gradient_specialcase_final_lower_bd}
\end{equation}
\end{proof}

Compared to the conclusion in Lemma~\ref{thm:gradient_bound} with Lemma~\ref{lemma:gradient_bound_special_dataset}, the lower bound is improved by a factor of $\frac{n}{\delta d}$ when the dataset satisfies Assumption~\ref{assump:special_dataset}.

\begin{proof}[Proof of Theorem~\ref{append:thm:convergence_special_case}]
    We follow the exact same proof of Theorem~\ref{thm:gd_main_thm}, with the lower bound of the gradient given in Lemma~\ref{lemma:gradient_bound_special_dataset}. It is straight-forward to derive the same inequality \eqref{eqn:training_error_dynamic}, and by the lower bound in \eqref{eqn:appd:gradient_specialcase_final_lower_bd}, we obtain that with probability $1-e^{-\Omega(m)}$
    $$
    \gL^{(t+1)} \leq \left(1 - \Omega(\frac{(1-\alpha)^2}{1+\alpha^2} \frac{\eta m}{d^2})\right)\gL^{(t)}.
    $$

    We conclude the theorem by verifying that during the training process, we always have $|\mW^{(0)} - \mW^{(0)}| < \omega < O(\frac{1}{d^{3/2} L^6 \ln^{3/2} m })$, which satisfies the condition for both Lemmas~\ref{lemma:gradient_bound_special_dataset} and~\ref{thm:semi-smooth}. Following the same argument that derives \eqref{eqn:omega_bound_gd} and using the lower bound in \eqref{eqn:appd:gradient_specialcase_final_lower_bd}, we achieve that
\begin{equation}
    \|\mW^{(t)} - \mW^{(0)}\| \leq \frac{\sqrt{1+\alpha^2}}{1-\alpha}\Omega\left(\frac{d}{\sqrt{m}}\right)\sqrt{\gL^{\mW^{(0)}}}.\label{eqn:special_case:bound_omega1}
\end{equation}
By further applying \eqref{eqn:append:init_loss_upper_bound}, we verify that when $m/\ln^4 m > \frac{1+\alpha^2}{(1-\alpha)^2}\Omega(d^5 n L^{12})$, we can derive that 
\begin{equation}
\|\mW^{(t)} - \mW^{(0)}\| \leq O\left(\frac{1}{d^{3/2} L^6 \ln^{3/2} m }\right).\label{eqn:special_case:bound_omega2}
\end{equation}
\end{proof}

\begin{proof}[Proof of Theorem~\ref{append:thm:generalization_special_case}]
    The universal bound of $\|\mW^{(t)}-\mW^{(0)}\|$ is improved as shown in \eqref{eqn:special_case:bound_omega1} and \eqref{eqn:special_case:bound_omega2}. Then, we can conclude the theorem by following exactly the same as the proof of Theorem~\ref{thm:generalization_in_training_gd} which is shown in Appendix~\ref{appd:generalization_gd}. 
\end{proof}

\section{Supplemental numerical experiments and details for the previous experiments}\label{appd:numeric}

Section \ref{appd:architecture_of_nns} provides the full details of implementation for both the previous and the new experiments. Section 
\ref{appd:additional_numerical_results} describes new numerical experiments.

\subsection{Details of Implementation}\label{appd:architecture_of_nns}

We provide some general implementation details and also details specific to the different datasets. Two datasets are new to this section. For completeness, we repeat some information that was provided in Section \ref{sec:numeric_setup}.

\textbf{General implementation details:} 
Throughout the numerical experiments, we applied Algorithm~\ref{alg:rescale_initial} to initialize the parameters of the neural networks. In order to implement the rescaled leaky ReLU as given in \eqref{eqn:leaky_relu_rescaled}, we introduce a $\textrm{MULTIPLIER}(c)$ layer, which simply does element-wise multiplication with a given constant $c$. 
By combining $\textrm{Leaky ReLU}(\alpha)$ and $\textrm{MULTIPLIER}\left(1/\sqrt{1+\alpha^2}\right)$, we replicate the rescaled Leaky ReLU with parameter $\alpha$.

In the experiments, we train the NN on the training set and report the error on a reserved testing set (we view it as an approximation for the generalization error). For the synthetic dataset, we generated additional $500$ synthetic data points for the testing set. For the real dataset, we performed a standard training-testing split for each dataset.

\textbf{Synthetic dataset:}
The architecture of the NNs that we used for the synthetic dataset is shown in Table~\ref{tab:synthetic_nn}. We generate 1,000 data points as the training dataset and 500 data points as the testing dataset (following the model and sampling procedure described in the main text). 
We train the NN with GD and a learning rate of $10^{-4}$.

\textbf{California housing:}
We use an updated version of the California housing dataset, which can be downloaded from Kaggle  (\url{https://www.kaggle.com/datasets/camnugent/california-housing-prices}) and is licensed by CC0. 
This dataset was drawn from the 1990 U.S. Census and contains 20,640 observations with 10 different characteristics. Nine of them are numerical ones (e.g., the median income for households and the median value of the houses within a block) and are given in the original dataset \cite{KelleyPace1997}. An additional categorical characteristic is the ocean proximity. 
\citet{borisov2021deep} used this dataset as a benchmark for regression, where one needs to predict the value of the house given the other numerical characteristics. The last characteristic, which is the median house value for households within a block, provides labels for the dependent variable. We follow a similar setting of regression, but we also use the categorical feature of the updated dataset.
We standardize the 9 numerical characteristics using the respective means and standard deviations of the training data. We generated a one-hot coding vector for the feature "ocean proximity", including 5 categories, $<$1H OCEAN, INLAND, NEAR OCEAN, NEAR BAY, and ISLAND. In total, the input $\vx$ is a $13-$dimensional vector. The training data contains 15,480 data points and the testing data contains 5,160 data points.

We built NNs to predict the median housing value in the dataset. The architecture of the NNs is given in Table~\ref{tab:CH_nn_ar}. We applied Algorithm~\ref{alg:training_sgd} with a batch size of $512$ and a learning rate of $10^{-5}$ to train the NNs.

\begin{table}[ht]
\caption{Architecture of the NNs with Leaky ReLU parameter $\alpha$ used for the synthetic dataset.}
\label{tab:synthetic_nn}
\vskip 0.15in
\begin{center}
\begin{small}
\begin{sc}
\begin{tabular}{llc}
\toprule
\multicolumn{2}{c}{Layer}                                                                                        & Parameter             \\ \midrule
\multicolumn{1}{l}{}                                                                                & Linear     & (5, 5000)             \\ \midrule
\multicolumn{1}{c}{\multirow{3}{*}{\begin{tabular}[c]{@{}l@{}}Repeat \\ $\ \ 5$ \\ times\end{tabular}}} & Linear     & (5000, 5000)          \\ \cline{2-3} 
\multicolumn{1}{l}{}                                                                                & Leaky Relu & $\alpha$              \\ \cline{2-3} 
\multicolumn{1}{l}{}                                                                                & Multipler  & $1/(1+\alpha^2)^{1/2}$ \\ \midrule
\multicolumn{1}{l}{}                                                                                & Linear     & (5000, 1)                \\ \bottomrule
\end{tabular}
\end{sc}
\end{small}
\end{center}
\vskip -0.1in
\end{table}

\begin{table}[!t]
\caption{Architecture of the NNs with Leaky ReLU parameter $\alpha$ used for California housing.}
\label{tab:CH_nn_ar}
\vskip 0.15in
\begin{center}
\begin{small}
\begin{sc}
\begin{tabular}{llc}
\toprule
\multicolumn{2}{c}{Layer}                                                                                        & Parameter             \\ \midrule
\multicolumn{1}{l}{}                                                                                & Linear     & $(13, 5000)$             \\ \midrule
\multicolumn{1}{c}{\multirow{3}{*}{\begin{tabular}[c]{@{}l@{}}Repeat \\ $\ \ 7$ \\ times\end{tabular}}} & Linear     & $(5000, 5000) $         \\ \cline{2-3} 
\multicolumn{1}{l}{}                                                                                & Leaky Relu & $\alpha$              \\ \cline{2-3} 
\multicolumn{1}{l}{}                                                                                & Multipler  & $1/(1+\alpha^2)^{1/2}$ \\ \midrule
\multicolumn{1}{l}{}                                                                                & Linear     & $(5000, 1)$                \\ \bottomrule
\end{tabular}
\end{sc}
\end{small}
\end{center}
\vskip -0.1in
\end{table}

\textbf{MNIST:} 
We used the MNIST dataset of 28 $\times$ 28 images of handwritten digits in order to classify  handwritten digits. This dataset is licensed by CC BY-SA 3.0. We flattened each image to a vector of length $784$. We randomly sample 2,100 data points from the training set of MNIST as our training set, and use the rest as our testing set. We normalized the training data with 0.5 mean and 0.5 standard deviation. We applied SGD with batch size 64 and a learning rate of $10^{-3}$. The architecture of the NN is presented in Table~\ref{tab:M_nn_ar} and we use leaky ReLUs with $\alpha\in\{-2,-1,0,0.01,0.05\}$. 
MNIST was also used to test the Transformer networks. In this case, we normalized the training set with 0.1307 mean and 0.3081 standard deviation. Furthermore, we used  the Vision Transformer (ViT) \citep{dosovitskiy2020image} architecture and applied SGD with batch size 100 and learning rate $10^{-4}$. The details of this architecture are shown in Table \ref{tab:M_tf_nn_ar}.

\begin{table}[!t]
\caption{Architecture of the neural networks with $\alpha$ Leaky ReLU parameter used for MNIST.}
\label{tab:M_nn_ar}
\vskip 0.15in
\begin{center}
\begin{small}
\begin{sc}
\begin{tabular}{lc}
\toprule
Layer & Parameter \\
\midrule
Linear & (784, 2000) \\
\hline
Linear & (2000, 2000) \\
\hline
Leaky ReLU & $\alpha$ \\
\hline
Multiplier & $\frac{1}{\sqrt{1+\alpha^2}}$ \\
\hline
Linear & (2000, 2000) \\
\hline
Leaky ReLU & $\alpha$ \\
\hline
Multiplier & $\frac{1}{\sqrt{1+\alpha^2}}$ \\
\hline
Linear & (2000, 10) \\
\bottomrule
\end{tabular}
\end{sc}
\end{small}
\end{center}
\vskip -0.1in
\end{table}

\begin{table}[!t]
\caption{Architecture of the Transformer neural networks with $\alpha$ Leaky ReLU parameter used for IMDB movie reviews.}
\label{tab:M_tf_nn_ar}
\vskip 0.15in
\begin{center}
\begin{small}
\begin{sc}
\begin{tabular}{lc}
\toprule
Layer & Parameter \\
\midrule
Positional Embedding & (49, 64) \\
\hline
& head dim = 64, \\
Transformer & output dim = 64,\\
Encoder & number of heads = 8 \\
 & number of layers = 6 \\
 & mlp dim = 8192 \\
\hline
Linear & (64,8192)\\
\hline
Leaky ReLU & $\alpha$ \\
\hline
Multiplier & $\frac{1}{\sqrt{1+\alpha^2}}$ \\
\hline
Linear & (8192, 10) \\
\bottomrule
\end{tabular}
\end{sc}
\end{small}
\end{center}
\vskip -0.1in
\end{table}

\textbf{F-MNIST:} 
We used the F-MNIST dataset of 28 $\times$ 28 images of clothing or accessory items for classification. This dataset is licensed by MIT. We flattened each image to vectors of length $784$. We randomly sample 3000 data points from the training set of F-MNIST as the training set, and use the rest for testing. We normalized the training data with 0.5 mean and 0.5 standard deviation. We applied SGD with batch size 64 and a learning rate of $10^{-5}$. The architecture of the NN is presented in Table \ref{tab:FM_nn_ar} and we use leaky ReLUs with $\alpha\in\{-2,-1,0,0.01,0.05\}$.

\begin{table}[!t]
\caption{Architecture of the neural networks with $\alpha$ Leaky ReLU parameter with width of $m$ and depth of $L$ used for Fashion MNIST.}
\label{tab:FM_nn_ar}
\vskip 0.15in
\vskip 0.15in
\begin{center}
\begin{small}
\begin{sc}
\begin{tabular}{llc}
\toprule
\multicolumn{2}{c}{Layer}                                                                                        & Parameter             \\ \midrule
\multicolumn{1}{l}{}                                                                                & Linear     & $(784, m)$             \\ \midrule
\multicolumn{1}{c}{\multirow{3}{*}{\begin{tabular}[c]{@{}l@{}}Repeat \\ $\ \ L$ \\ times\end{tabular}}} & Linear     & $(m, m) $         \\ \cline{2-3} 
\multicolumn{1}{l}{}                                                                                & Leaky Relu & $\alpha$              \\ \cline{2-3} 
\multicolumn{1}{l}{}                                                                                & Multipler  & $1/(1+\alpha^2)^{1/2}$ \\ \midrule
\multicolumn{1}{l}{}                                                                                & Linear     & $(m, 10)$                \\ \bottomrule
\end{tabular}
\end{sc}
\end{small}
\end{center}
\vskip -0.1in
\vskip -0.1in
\end{table}

\textbf{CIFAR-10:} 
We used the CIFAR-10 dataset of $32 \times 32$ RGB images with 10 categories for classification. This dataset is licensed by MIT. We randomly sample 2560 data points from the training set of CIFAR-10 as our training set, and randomly sample 2560 data points from the testing set of CIFAR-10 as our testing set. We normalized the training data with 0.5 mean and 0.5 standard deviation. Then we applied SGD with batch size 64 and a learning rate of $10^{-6}$. The architecture of the NN is presented in Table \ref{tab:CF10_nn_ar}.

\begin{table}[!t]
\caption{Architecture of the neural networks with $\alpha$ Leaky ReLU parameter used for CIFAR-10.}
\label{tab:CF10_nn_ar}
\vskip 0.15in
\begin{center}
\begin{small}
\begin{sc}
\begin{tabular}{lc}
\toprule
Layer & Parameter \\
\midrule
CNN & conv3-512 \\
\hline
Leaky ReLU & $\alpha$ \\
\hline
Multiplier & $\frac{1}{\sqrt{1+\alpha^2}}$ \\
\hline
CNN & conv3-512 \\
\hline
Leaky ReLU & $\alpha$ \\
\hline
Multiplier & $\frac{1}{\sqrt{1+\alpha^2}}$ \\
\hline
Max Pooling & $2 \times 2$ \\
\hline
CNN & conv3-512 \\
\hline
Leaky ReLU & $\alpha$ \\
\hline
Multiplier & $\frac{1}{\sqrt{1+\alpha^2}}$ \\
\hline
CNN & conv3-512 \\
\hline
Leaky ReLU & $\alpha$ \\
\hline
Multiplier & $\frac{1}{\sqrt{1+\alpha^2}}$ \\
\hline
Max Pooling & $2\times 2$ \\
\hline
Flatten & \\
\hline
Linear & (32,768, 512) \\
\hline
Leaky ReLU & $\alpha$ \\
\hline
Multiplier & $\frac{1}{\sqrt{1+\alpha^2}}$ \\
\hline
Linear & (512, 10) \\
\bottomrule
\end{tabular}
\end{sc}
\end{small}
\end{center}
\vskip -0.1in
\end{table}

\textbf{IMDB movie reviews dataset:} This is a dataset of highly popular movie review paragraphs and it is used for positive or negative sentiment classification \citep{maas-EtAl:2011:ACL-HLT2011(imdb_sentiment_analysis)}. We downloaded it from the following URL: 
\url{http://ai.stanford.edu/~amaas/data/sentiment/}. 
We randomly sample 5,000 data points from the IMDB movie reviews as the training dataset and use the rest as the testing dataset. 
We processed the data as follows. We first recorded the words that appeared at least once in the training dataset. 
For each word, a unique integer was assigned to index it. Then we mapped each paragraph to a vector, whose $i$-th entry is the assigned index of the $i$-th word of the paragraph. Finally, we padded each vector with zeros, so that each vector was of the same length. We used the zero-padded vectors as the input of our neural network.
After preprocessing, we applied SGD with batch size 50 and learning rate $10^{-5}$ with an LSTM network, whose architecture is presented in Table \ref{tab:IMDB_nn_ar}.

\begin{table}[!t]
\caption{Architecture of the neural networks with $\alpha$ Leaky ReLU parameter used for IMDB movie reviews dataset}
\label{tab:IMDB_nn_ar}
\vskip 0.15in
\begin{center}
\begin{small}
\begin{sc}
\begin{tabular}{lc}
\toprule
Layer & Parameter \\
\midrule
Embedding & (1000, 64) \\
\hline
& input dim = 64, \\
LSTM & hidden dim = 256,\\
 & number of layers = 2 \\
\hline
Dropout & 0.3 \\
\hline
Linear & (256,4096)\\
\hline
Leaky ReLU & $\alpha$ \\
\hline
Multiplier & $\frac{1}{\sqrt{1+\alpha^2}}$ \\
\hline
Linear & (4096, 1) \\
\bottomrule
\end{tabular}
\end{sc}
\end{small}
\end{center}
\vskip -0.1in
\end{table}
\subsection{Additional numerical results}
\label{appd:additional_numerical_results}

We describe here two additional experiments.

\textbf{Additional experiments using MNIST and California housing:}
We ran the same experiments done in Section \ref{sec:numeric_setup}, but with MNIST and California housing.
Figure~\ref{fig:errors_ch_mnist} demonstrates the training errors (top) and testing errors (bottom) for the two datasets. For MNIST (left) we used the cross entropy loss and for California housing (right) the MSE loss. For the training errors, we note that the convergence rate is the fastest at $\alpha=-1$ for both datasets and this observation aligns with our theoretical prediction and the previous experiments. 
The testing errors are rather similar for different choices of $\alpha$. 
For the California housing dataset, we note that $\alpha=-1$ achieves the smallest testing error at an early epoch (about $t=20$), but the advantage is marginal compared to other $\alpha$'s. For MNIST, we note that $\alpha=-1$ gets to the same level of testing error as the other $\alpha$s from a much larger initial testing error. We note that $\alpha=-1$ is not optimal for the testing error. This might happen because the number of samples and the depth are not sufficiently large enough.

\textbf{Experiments with Long Short-Term Memory (LSTM) and Transformer networks:} We ran the same experiment done in Section \ref{sec:numeric_exp} on MNIST and IMDB with Transformer and LSTM networks, respectively. These architectures are described in Section \ref{appd:architecture_of_nns}. Figure \ref{fig:errors_lstm_tf} demonstrates the training errors (top) and testing errors (bottom). For MNIST (right) we used the negative log likelihood loss and for IMDB (left) we used the binary cross entropy loss. For both algorithms, the training errors converge fastest with $\alpha = -1$. This observation agrees with our theoretical findings and previous experiments. The testing error for IMDB decreases during the first 100 epochs and then increases for the rest of the training. 
This is because of severe overfitting that is due to the following property of IMDB: the training dataset is small (we used randomly sampled 5,000 data points to be able to deal with sufficiently large widths for overparameterization) compared to the input data dimension (we have 1,000 unique words).
The testing error on MNIST, on the other hand, is also the lowest for $\alpha = -1$, but there's no overfitting phenomenon since MNIST is a simple dataset.

\begin{figure*}[ht]
\vskip -0.04in
\begin{center}
\includegraphics[width=0.4\columnwidth]{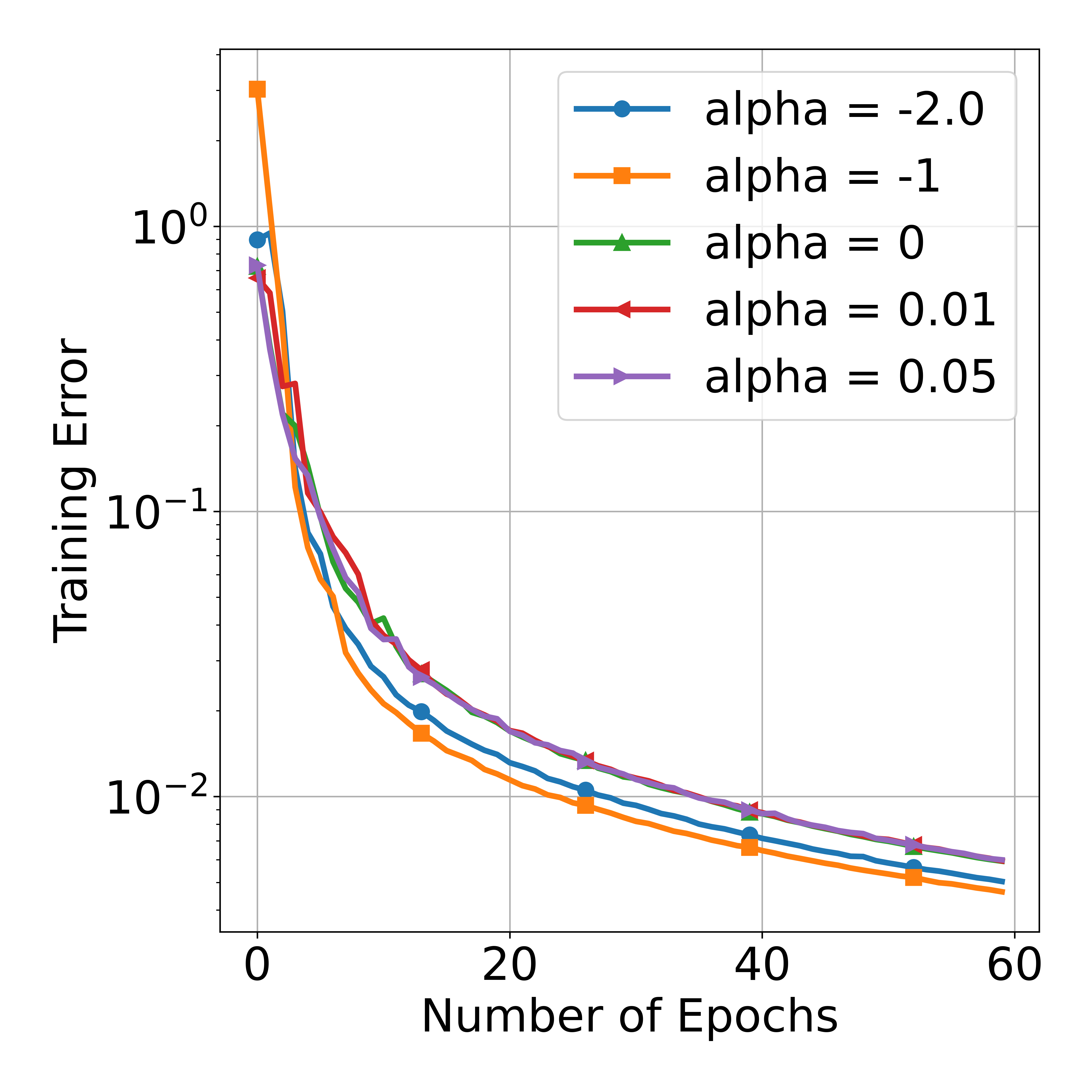}
\includegraphics[width=0.39\columnwidth]{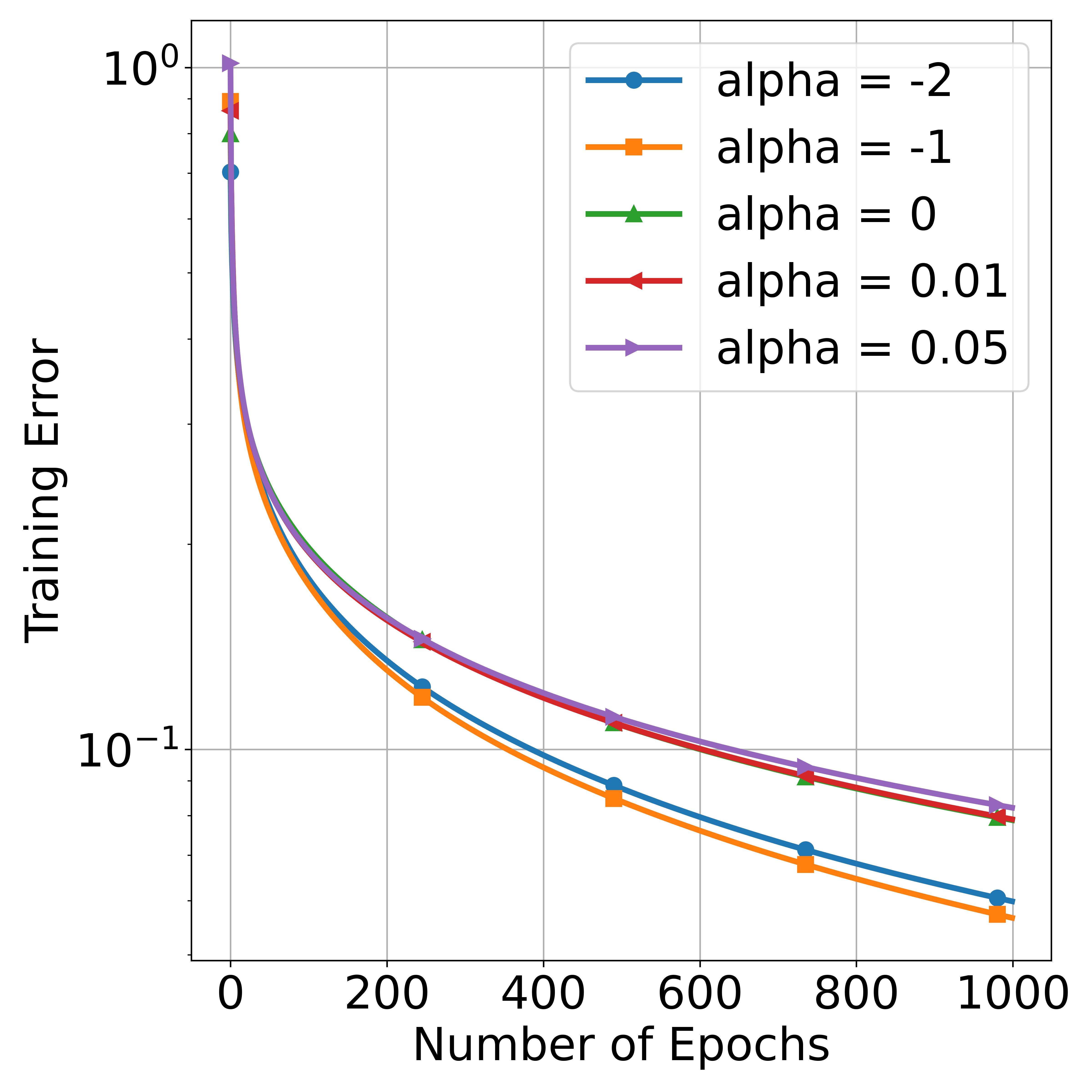}
\includegraphics[width=0.4\columnwidth]{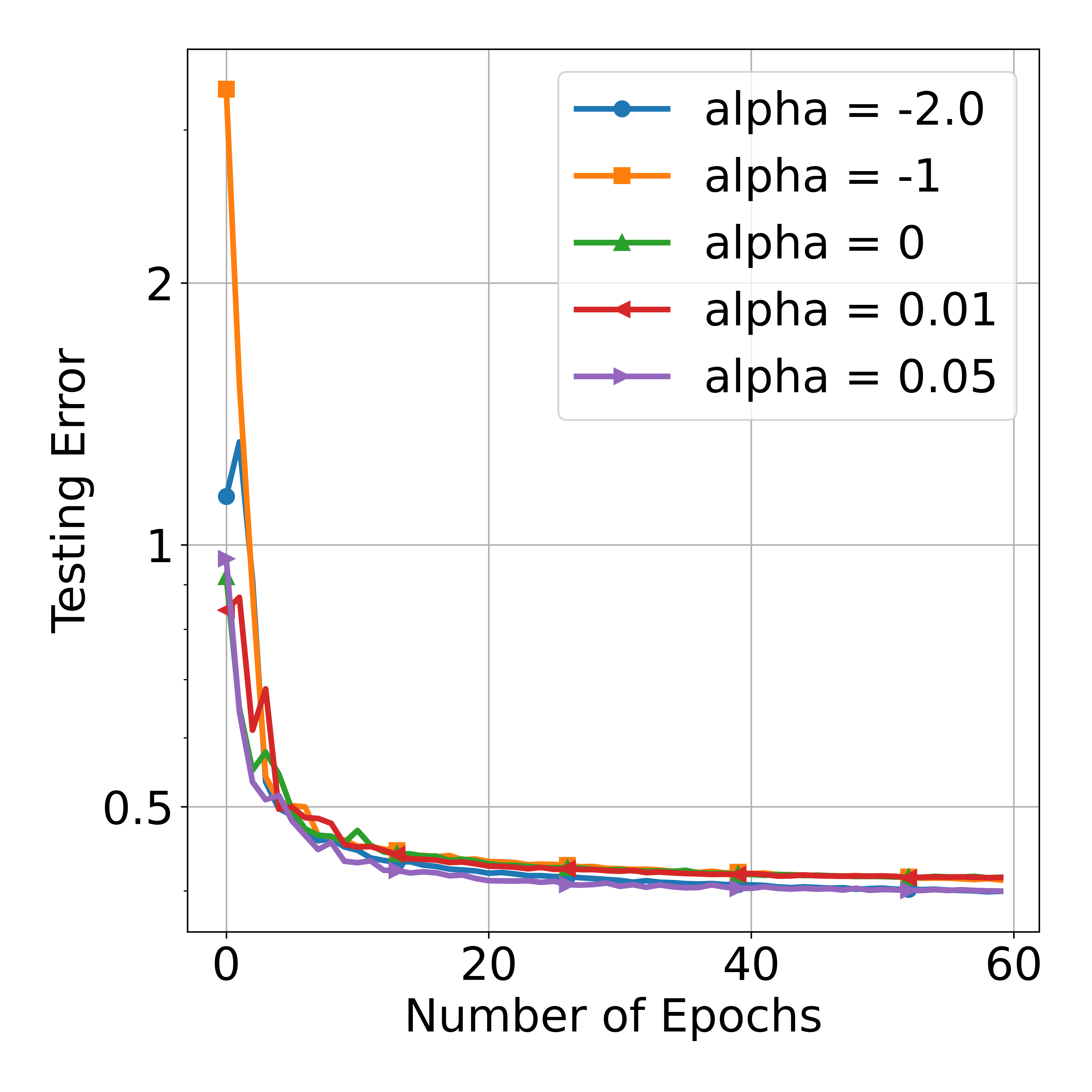}
\includegraphics[width=0.39\columnwidth]{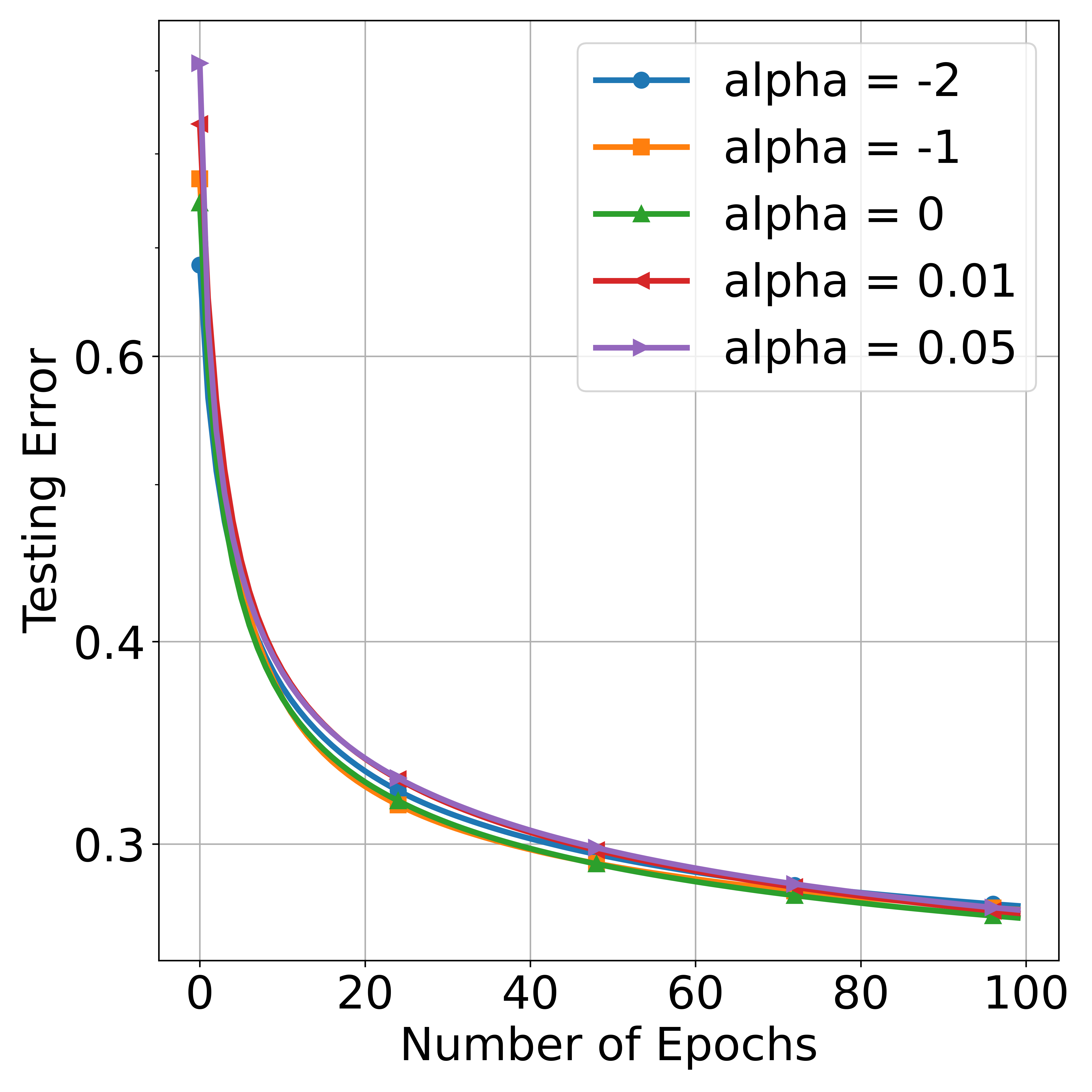}
\caption{Log-scale training and testing errors using different datasets and different $\alpha$'s. Left: cross entropy errors for MNIST; Right: MSE for California housing. Top row: training errors. Bottom row: testing errors.}
\label{fig:errors_ch_mnist}
\end{center}
\vskip -0.2in
\end{figure*}

\begin{figure*}[ht]
\vskip -0.04in
\begin{center}
\includegraphics[width=0.4\columnwidth]{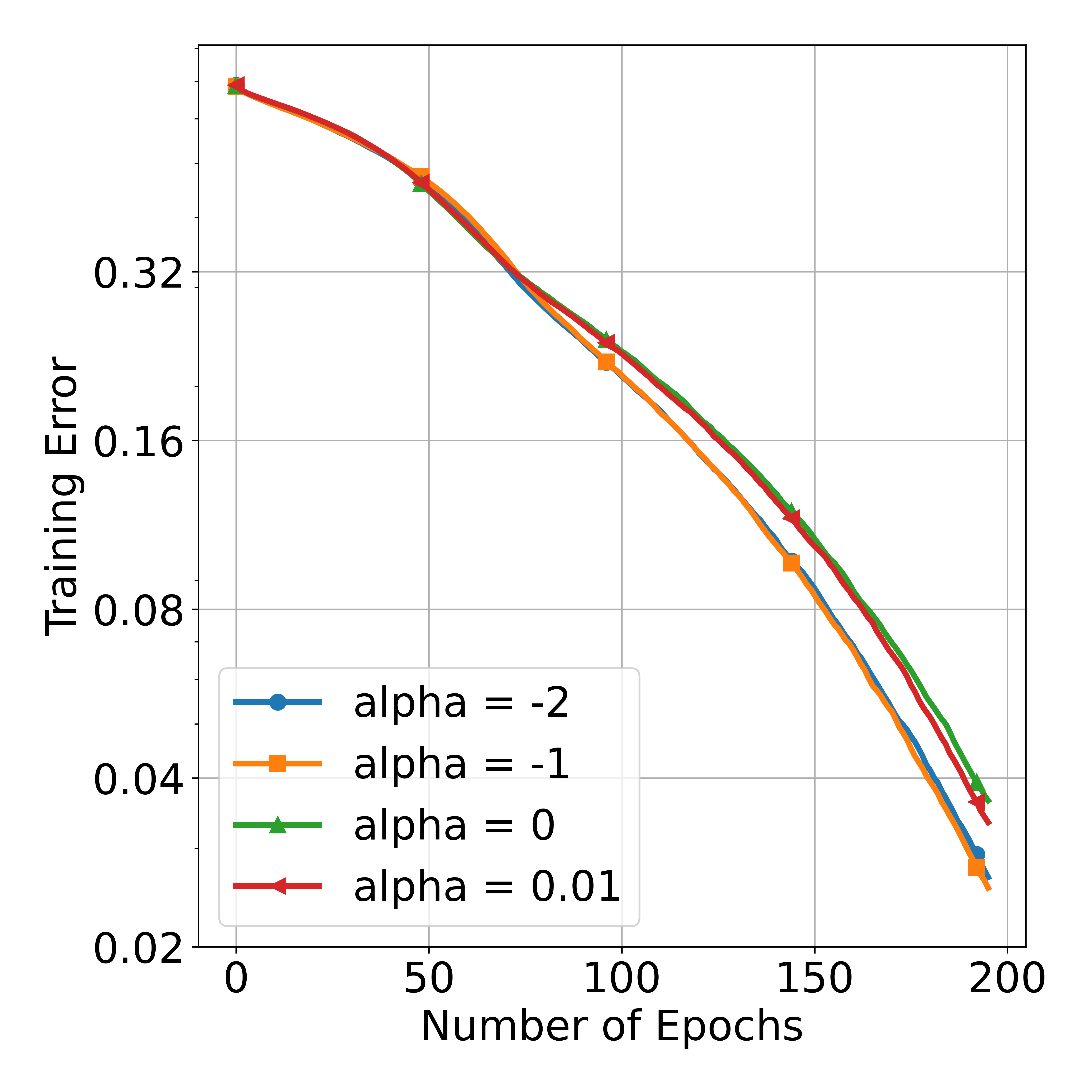}
\includegraphics[width=0.4\columnwidth]{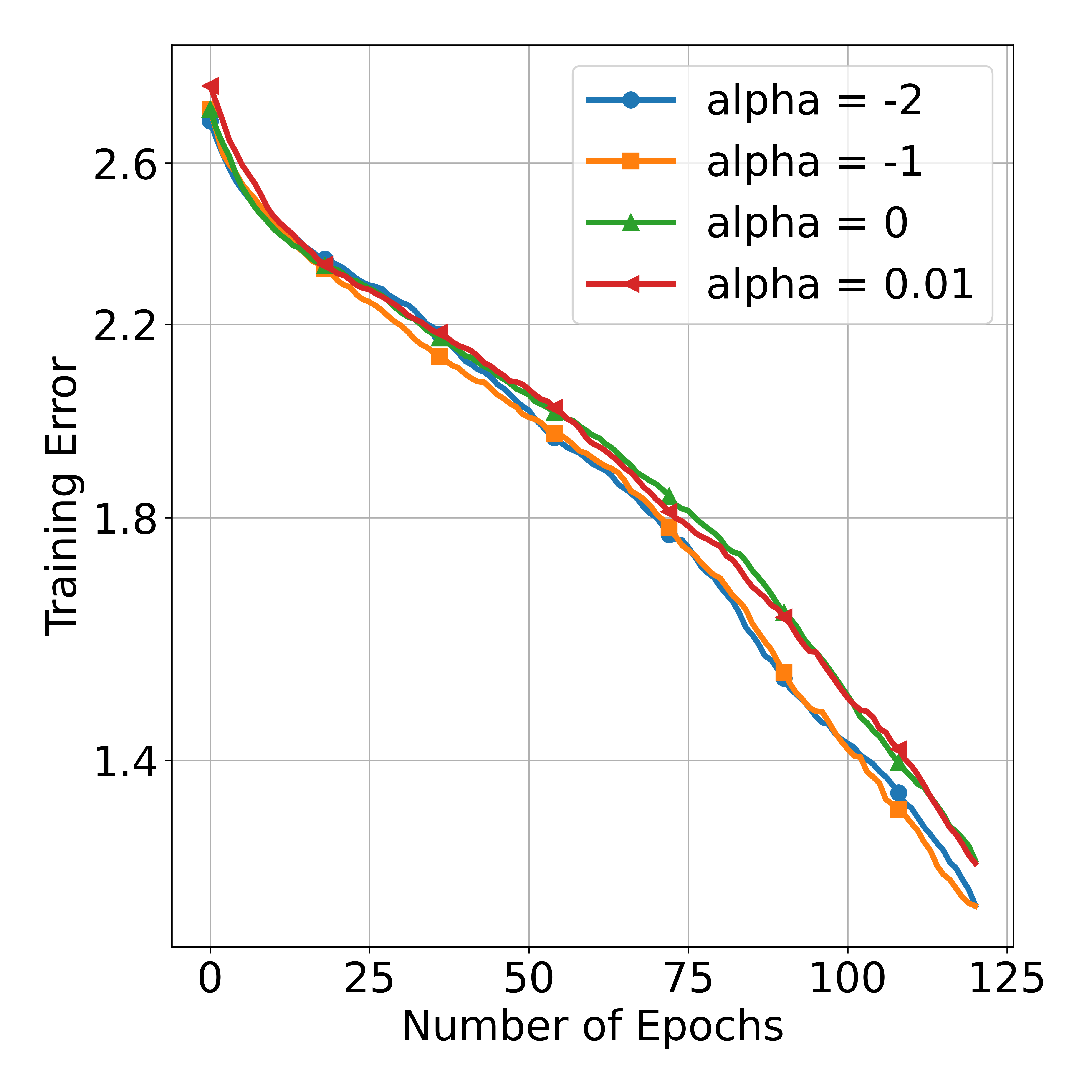}
\includegraphics[width=0.4\columnwidth]{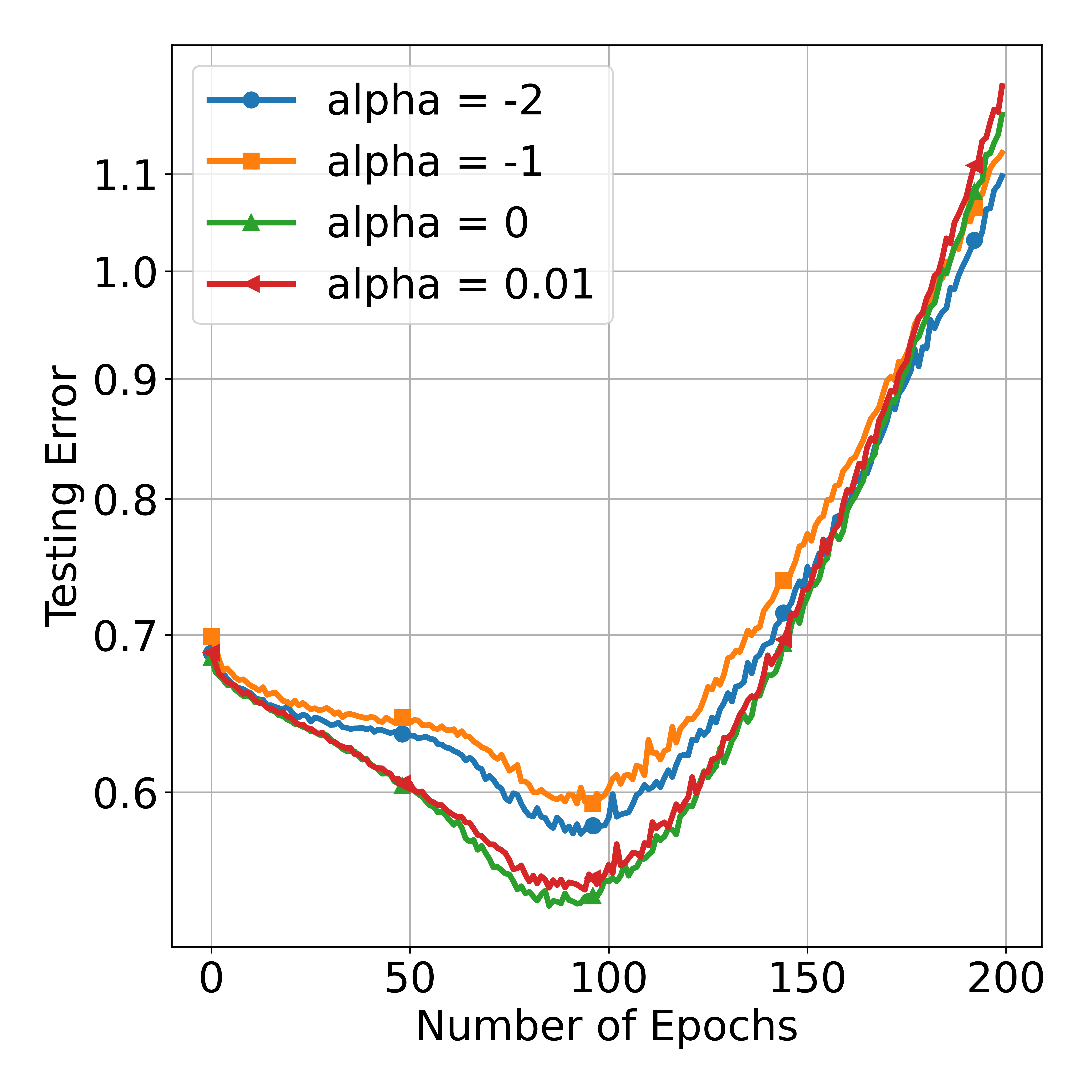}
\includegraphics[width=0.4\columnwidth]{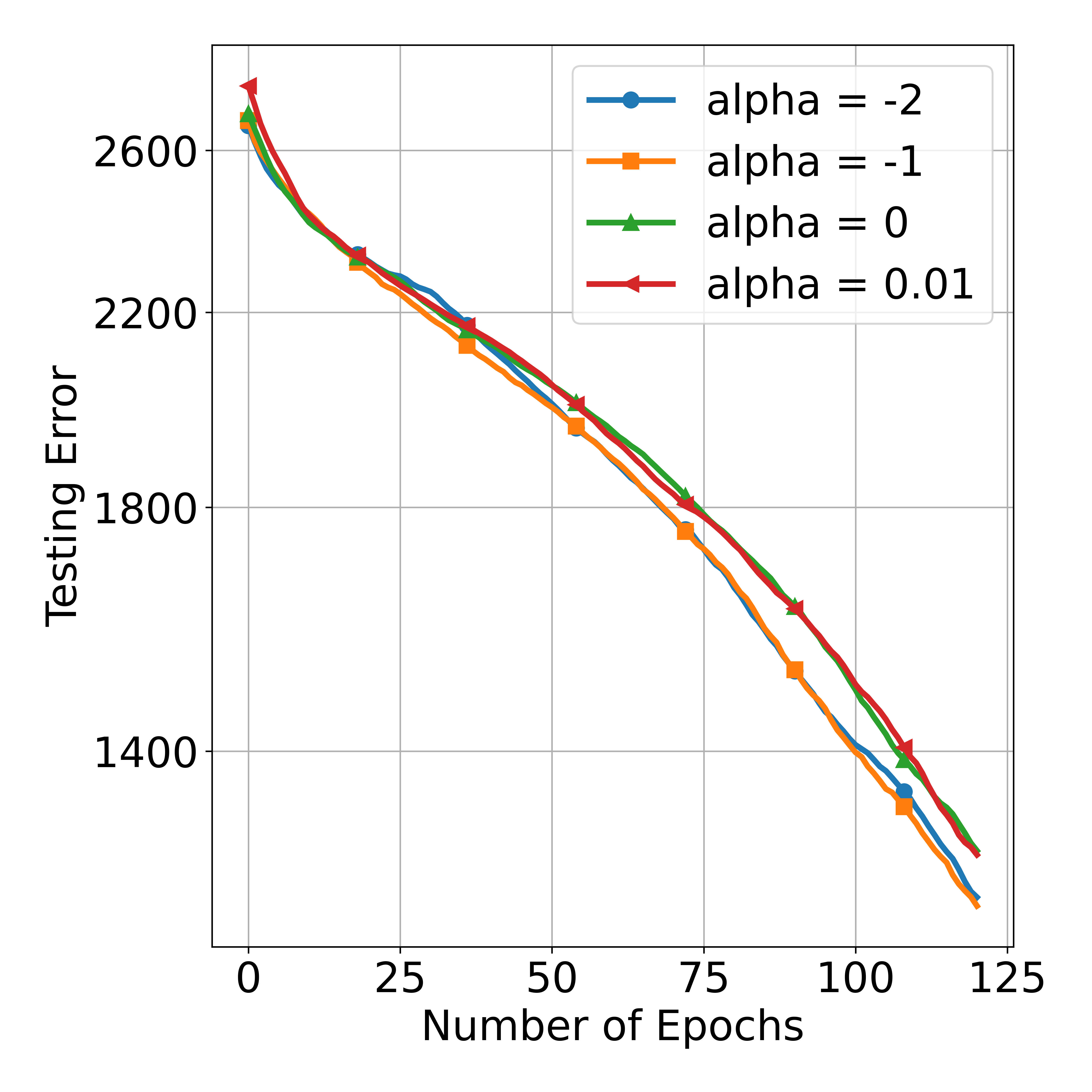}
\caption{Log-scale training and testing errors using different datasets and different $\alpha$'s. Left: binary entropy errors for IMDB; Right: negative log likelihood errors for Transformer on MNIST. Top row: training errors. Bottom row: testing errors.}
\label{fig:errors_lstm_tf}
\end{center}
\vskip -0.2in
\end{figure*}

\textbf{Dependence on $\boldsymbol{m}$ and $\boldsymbol{L}$:}
We demonstrate the dependence of the training error on $m$ and the testing error on $L$.
We thus ran additional experiments on F-MNIST with different choices of $L$ and $m$ and $\alpha\in\{-2,-1,0,0.01,0.05\}$. First, we fixed $L=2$  and tested $m\in\{1,000, 2,000, 5,000, 10,000\}$. Next,  we fixed $m=5,000$ and tested $L\in \{2, 3, 4\}$. The architectures of thes NNs were given in Table~\ref{tab:FM_nn_ar}, but with the latter choices of $m$ and $L$.

Figure~\ref{fig:L_m_dependence} shows the dependence of training errors on $m$ and the dependence of testing errors on $L$. We note that the training error is monotonically decreasing w.r.t.~the width $m$. We also note that the minimal training error is always achieved at $\alpha=-1$ for different choices of $m$. This matches our theoretical predictions. 
We note that the testing error is decreasing for $L \le 4$. This aligns with equation \eqref{eqn:generalization_error_sgd} when $t$ is small. Moreover, when $L=4$, we observe that the minimal testing error is achieved at $\alpha=-1$.

\begin{figure*}[ht]
\vskip -0.04in
\begin{center}
\includegraphics[width=0.4\columnwidth]{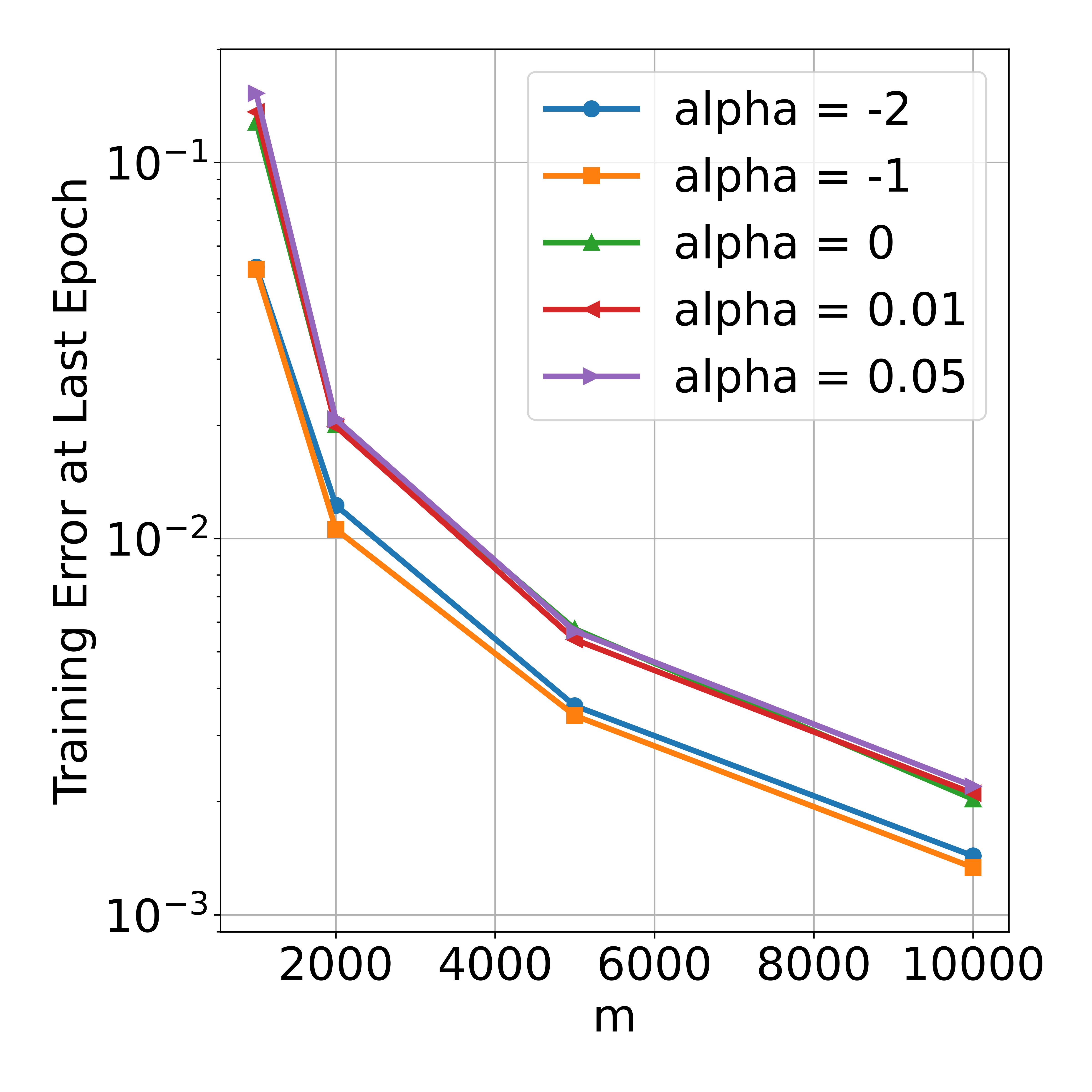}
\includegraphics[width=0.4\columnwidth]{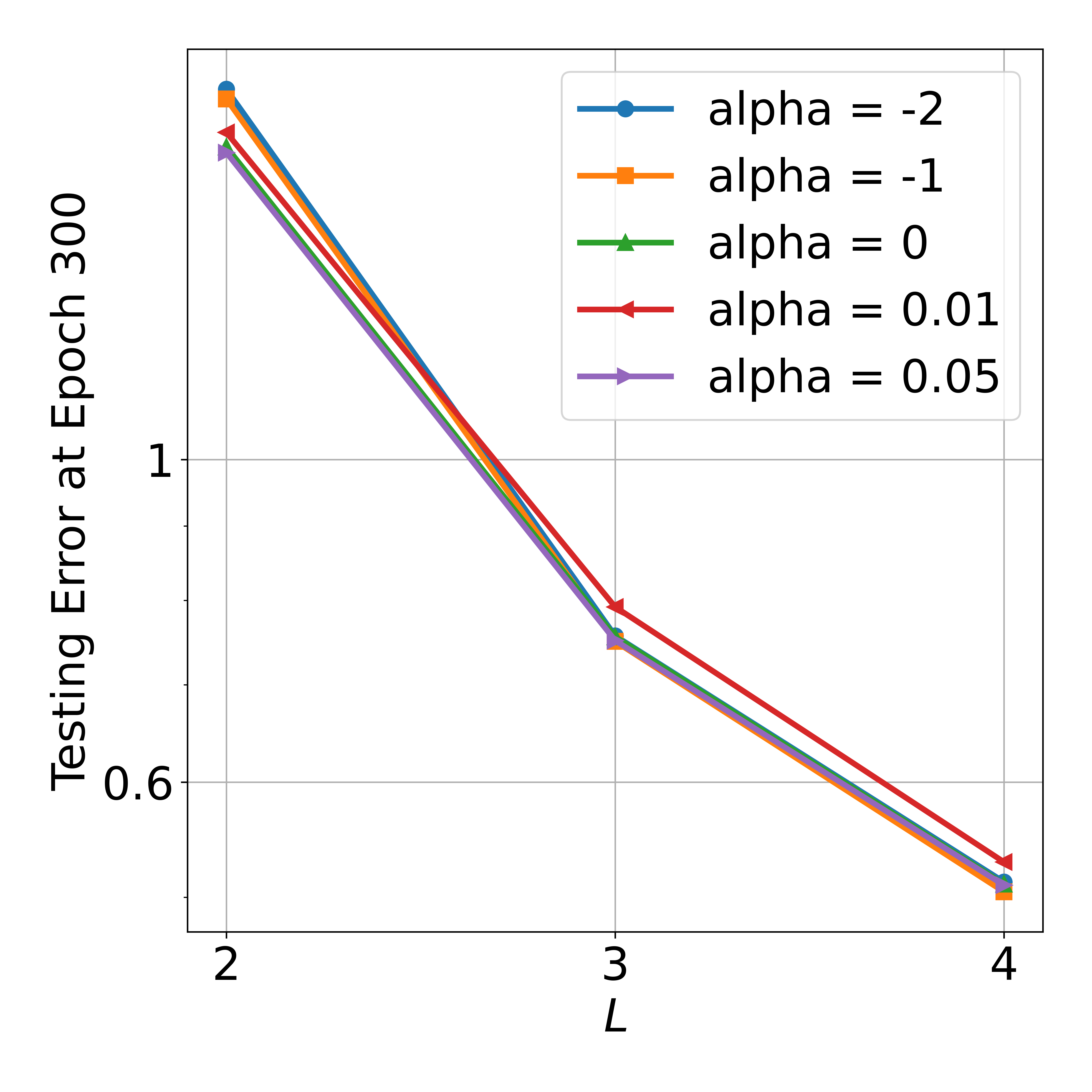}
\caption{Log-scale errors on F-MNIST with different $\alpha$'s. Left: training errors at the last epoch with $L=2$ and different widths ($m$s); Right: testing errors at the epoch $t=300$ with $m=5000$ and different depths ($L$s).}
\label{fig:L_m_dependence}
\end{center}
\end{figure*}

\textbf{Results using the loss function given in \eqref{eqn:exp_loss}:} We perform numerical experiments using the exponential loss function in the two datasets for the regression task, the synthetic dataset and the California housing dataset. Figure~\ref{fig:exp_loss} reports the results.  We observe that $\alpha=-1$ achieves both the optimal training error and the optimal generalization error in the synthetic dataset. 
Furthermore, $\alpha=-1$ achieves optimal training error and  second optimal generalization error for the California housing dataset, though the difference in the testing error is small.

\begin{figure*}[ht]
\vskip -0.04in
\begin{center}
\includegraphics[width=0.4\columnwidth]{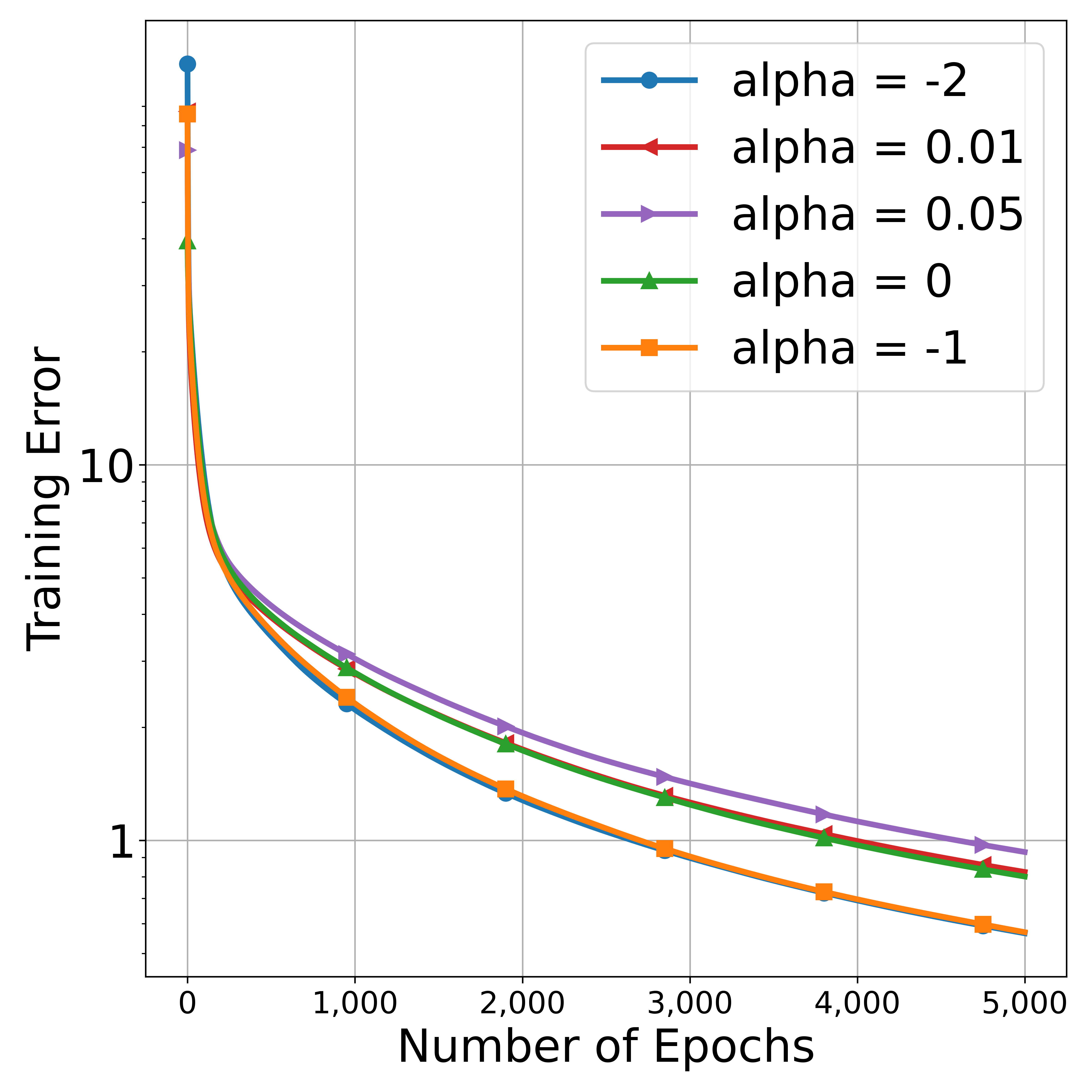}\includegraphics[width=0.4\columnwidth]{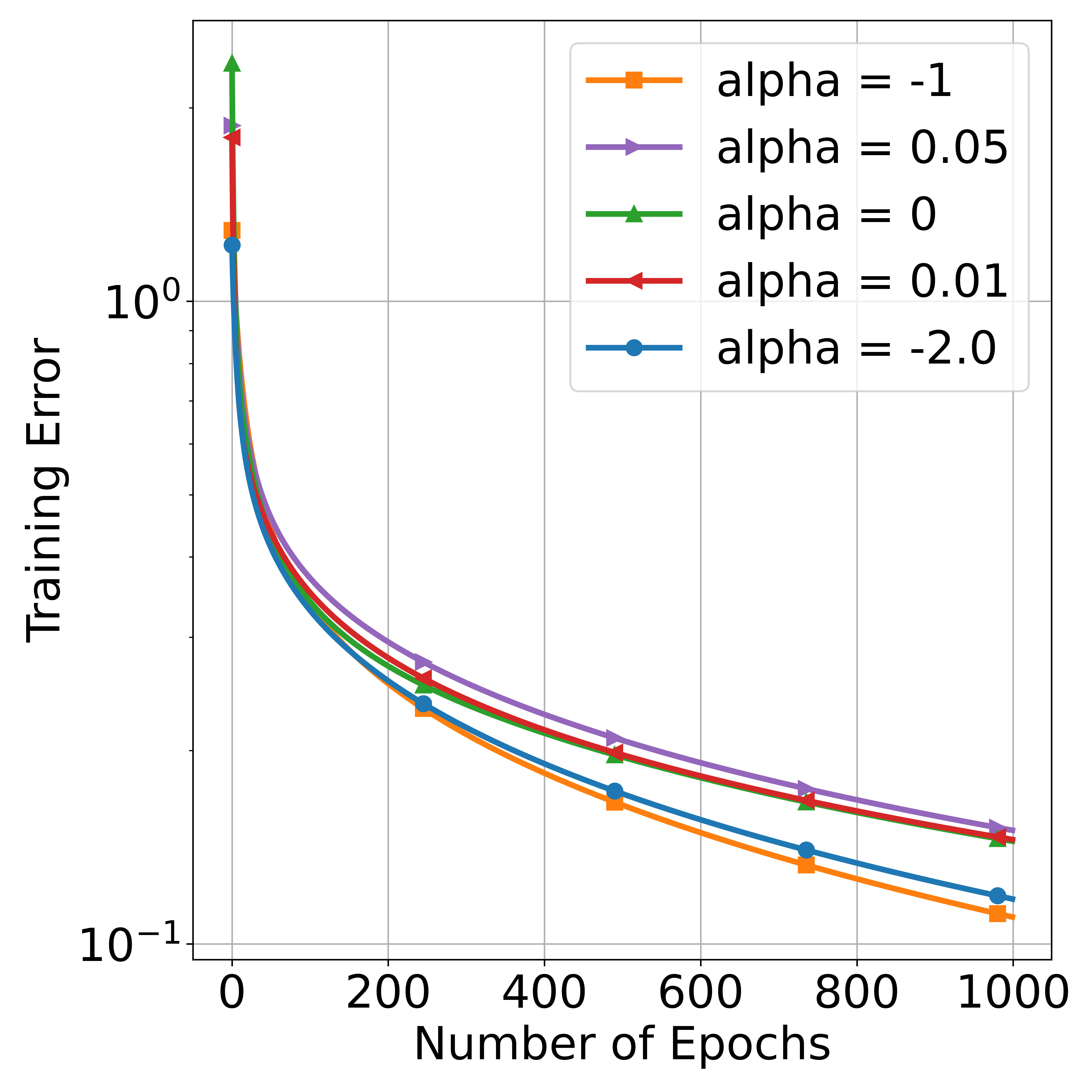}
\includegraphics[width=0.4\columnwidth]{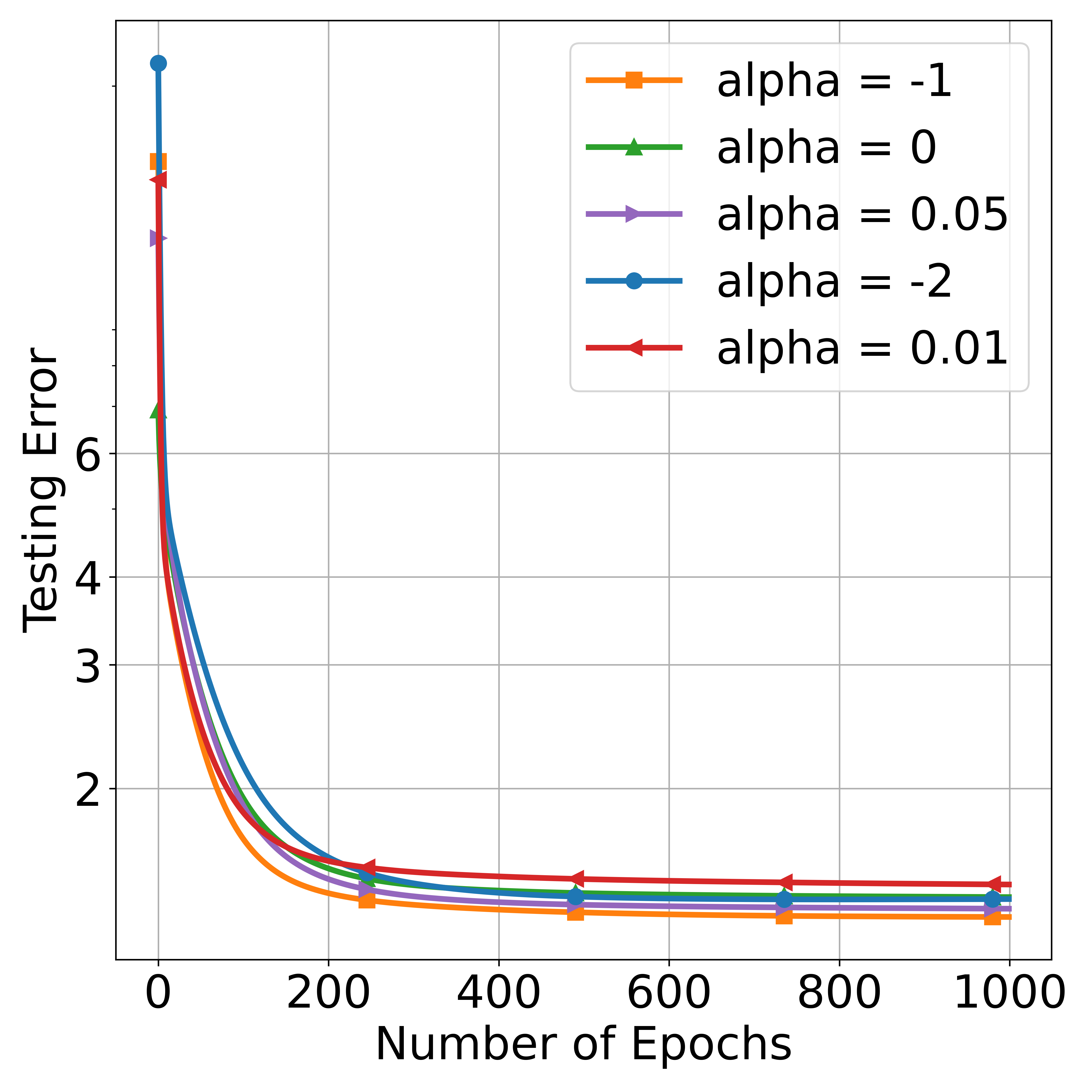}
\includegraphics[width=0.4\columnwidth]{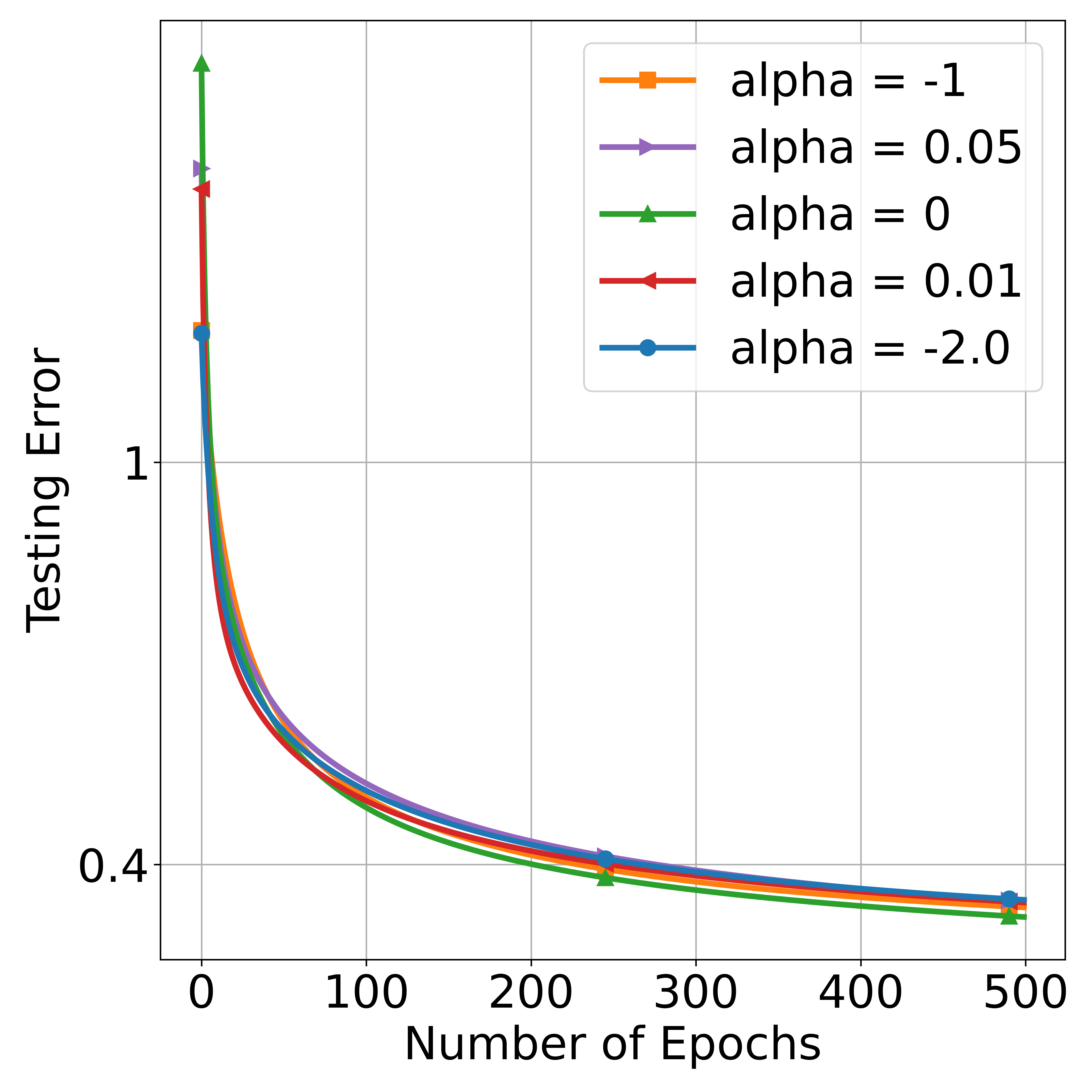}
\caption{Log-scaled mean square errors for the synthetic dataset with different $\alpha$'s using the loss function of \eqref{eqn:exp_loss}. Left: MSE for the synthetic dataset. Right: MSE for California housing dataset. Top row: training errors. Bottom row: testing errors.}
\label{fig:exp_loss}
\end{center}
\end{figure*}

\end{document}